\documentclass[10pt]{article} % For LaTeX2e
% \usepackage{tmlr}
% If accepted, instead use the following line for the camera-ready submission:
\usepackage[accepted]{tmlr}
% To de-anonymize and remove mentions to TMLR (for example for posting to preprint servers), instead use the following:
%\usepackage[preprint]{tmlr}

% Optional math commands from https://github.com/goodfeli/dlbook_notation.
%%%%% NEW MATH DEFINITIONS %%%%%

\usepackage{amsmath,amsfonts,bm}

% Mark sections of captions for referring to divisions of figures

% Highlight a newly defined term

% Figure reference, lower-case.

% Figure reference, capital. For start of sentence

% Section reference, lower-case.

% Section reference, capital.

% Reference to two sections.

% Reference to three sections.

% Reference to an equation, lower-case.
\def\eqref#1{equation~\ref{#1}}
% Reference to an equation, upper case

% A raw reference to an equation---avoid using if possible

% Reference to a chapter, lower-case.

% Reference to an equation, upper case.

% Reference to a range of chapters

% Reference to an algorithm, lower-case.

% Reference to an algorithm, upper case.

% Reference to a part, lower case

% Reference to a part, upper case

\def\1{\bm{1}}

% Random variables

% rm is already a command, just don't name any random variables m

% Random vectors

% Elements of random vectors

% Random matrices

% Elements of random matrices

% Vectors

% Elements of vectors

% Matrix

% Tensor
\DeclareMathAlphabet{\mathsfit}{\encodingdefault}{\sfdefault}{m}{sl}
\SetMathAlphabet{\mathsfit}{bold}{\encodingdefault}{\sfdefault}{bx}{n}

% Graph

% Sets

% Don't use a set called E, because this would be the same as our symbol
% for expectation.

% Entries of a matrix

% entries of a tensor
% Same font as tensor, without \bm wrapper

% The true underlying data generating distribution

% The empirical distribution defined by the training set

% The model distribution

% Stochastic autoencoder distributions

 % Laplace distribution

% Wolfram Mathworld says $L^2$ is for function spaces and $\ell^2$ is for vectors
% But then they seem to use $L^2$ for vectors throughout the site, and so does
% wikipedia.

 % See usage in notation.tex. Chosen to match Daphne's book.

\usepackage{hyperref}
\usepackage{url}

\usepackage{nicefrac}       % compact symbols for 1/2, etc.
\usepackage{microtype}      % microtypography
\usepackage{xcolor}         % colors
\usepackage{graphicx}
\usepackage{amsmath}
\usepackage{amssymb}
\usepackage{amsfonts} % Added for \mathbb
\usepackage{mathtools}
\usepackage{amsthm}
\usepackage{subcaption}
\usepackage{booktabs}
\usepackage{multirow}
\usepackage{rotating}
\usepackage{array}
\usepackage{float}
\usepackage{dsfont}
\usepackage{adjustbox}
\usepackage{cleveref}       % smart cross-referencing

\usepackage{color} % 或 \usepackage{xcolor}
\usepackage{xcolor}

%%%%%%%%%%%%%%%%%%%%%%%%%%%%%%%%
% THEOREMS
%%%%%%%%%%%%%%%%%%%%%%%%%%%%%%%%
\theoremstyle{plain}
\newtheorem{theorem}{Theorem}[section]
\newtheorem{proposition}[theorem]{Proposition}

\theoremstyle{definition}
\newtheorem{definition}[theorem]{Definition}
\newtheorem{assumption}[theorem]{Assumption}
\theoremstyle{remark}

\title{ViTime: Foundation Model for Time Series Forecasting Powered by Vision Intelligence}

% Authors must not appear in the submitted version. They should be hidden
% as long as the tmlr package is used without the [accepted] or [preprint] options.
% Non-anonymous submissions will be rejected without review.

\author{\name Luoxiao Yang \email luoxiyang2-c@my.cityu.edu.hk  \\
      \addr School of Automation and Information Engineering, Xi'an University of Technology, Xi'an, China \\
      Electrical and Computer Engineering, Technion – Israel Institute of Technology, Israel \thanks{Work performed while at XAUT and Technion.}
      \AND
      \name Yun Wang \email wangyunjeff@gmail.com \\
      \addr Electrical and Computer Engineering, Technion – Israel Institute of Technology, Israel
      \AND
      \name Xinqi Fan \email x.fan@mmu.ac.uk\\
      \addr Department of Computing and Mathematics, Manchester Metropolitan University, UK
      \AND
      \name Israel Cohen \email icohen@ee.technion.ac.il\\
      \addr Electrical and Computer Engineering, Technion – Israel Institute of Technology, Israel
      \AND
      \name Jingdong Chen \email jingdongchen@ieee.org\\
      \addr Department of Information and Communication Engineering, Northwest Polytechnical University, Xi'an, China
      \AND
      \name Zijun Zhang \thanks{Corresponding author} \email zijzhang@cityu.edu.hk\\
      \addr Department of Data Science,City University of Hong Kong, Hong Kong SAR
      }

% The \author macro works with any number of authors. Use \AND
% to separate the names and addresses of multiple authors.

  % Insert correct month for camera-ready version
 % Insert correct year for camera-ready version

% Insert correct link to OpenReview for camera-ready version

\begin{document}

\maketitle

\begin{abstract}
Time series forecasting (TSF) possesses great practical values in various fields, including power and energy, transportation, etc. TSF  methods have been studied based on knowledge from classical statistics to modern deep learning. Yet, all of them were developed based on one fundamental concept, the numerical data fitting. Thus, the models developed have long been known to be problem-specific and lacking application generalizability. Practitioners expect a TSF foundation model that serves TSF tasks in different applications. The central question is then how to develop such a TSF foundation model. This paper offers one pioneering study in the TSF foundation model development method and proposes a vision intelligence-powered framework, ViTime, for the first time. ViTime fundamentally shifts TSF from numerical fitting to operations based on a binary image-based time series metric space and naturally supports both point and probabilistic forecasting. We also provide rigorous theoretical analyses of ViTime, including quantization-induced system error bounds and principled strategies for optimal parameter selection. Furthermore, we propose RealTS, an innovative synthesis algorithm generating diverse and realistic training samples, effectively enriching the training data and significantly enhancing model generalizability. Extensive experiments demonstrate ViTime's state-of-the-art performance. In zero-shot scenarios, ViTime outperforms TimesFM by 9-15\%. With just 10\% fine-tuning data, ViTime surpasses both leading foundation models and fully-supervised benchmarks, a gap that widens with 100\% fine-tuning. ViTime also exhibits exceptional robustness, effectively handling missing data and outperforming TimesFM by 20-30\% under various data perturbations, validating the power of its visual space data operation paradigm.
\end{abstract}

\section{Introduction}
\label{sec:intro}

Time series forecasting (TSF) is a classic but challenging topic that has been vigorously discussed in various application fields, including power and energy \citep{bib1}, environmental studies \citep{bib2}, transportation studies \citep{bib3}, weather forecasting \citep{bib4}, stock market analysis \citep{bib5}, public healthcare \citep{bib6}. Although new heights of accuracies were repeatedly refreshed by new studies \citep{bib11,bib12,bib13,bib14,bib15,bib16}, most reported methods predominantly relied on a numerical fitting based modeling paradigm so that models were often dataset- or problem-specific and lack of application generalizability. The need to repeatedly train models for various TSF tasks has been the critical barrier of promoting applications of learning-based TSF methods in practice, especially ones with sophisticated mechanisms. Developing a TSF foundation model capable of serving diverse TSF tasks across different applications is thus of great practical value. The central question then becomes: \textit{how can we develop such a TSF foundation model?}

Studying the TSF foundation model is still in its early stages, and existing efforts observed in literature are mainly devoted to exploring Large Language Model (LLM)-based and numerical fitting-based models. The LLM-based model leverages the inference capabilities of LLMs for zero-shot TSF tasks, including TimeGPT-1 \citep{bib17} and TIME-LLM \citep{bib18}. However, the prediction accuracy of LLM-based models heavily depends on the underlying capabilities of LLM, and to achieve optimal performance, the competent large language models, such as GPT-4 or Claude 3.5 \citep{bib19}, are usually employed. Meanwhile, in fine-tuning LLM-based TSF foundation models for handling various downstream tasks demanding higher precision, the computational complexity becomes prohibitively expensive, resulting in a large, redundant, less precise, and cost-ineffective paradigm for the TSF foundation model \citep{tan2024language}.

% \begin{figure}[ht]
% \centering
% \includegraphics[width=1\linewidth]{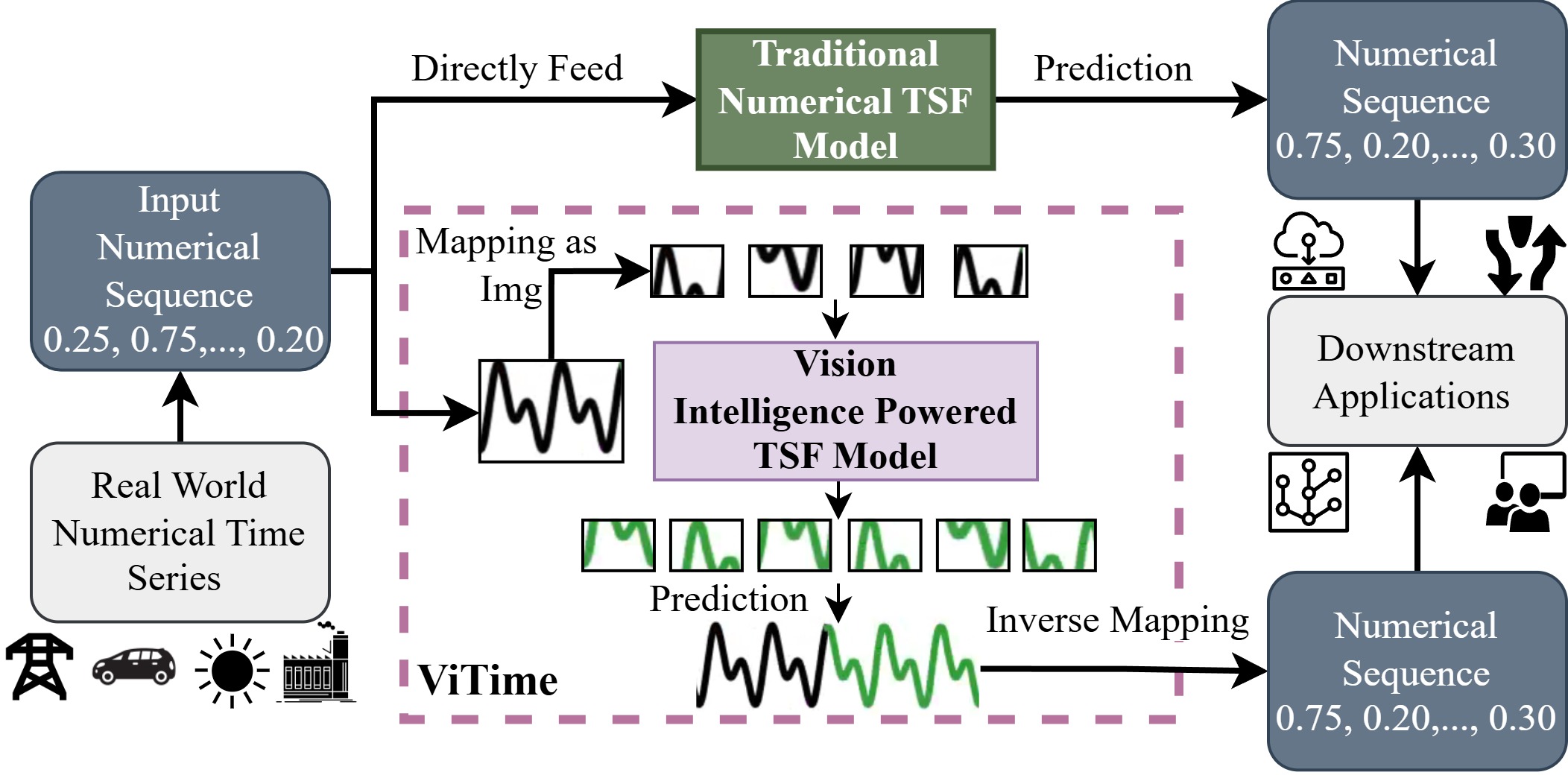}
% \caption{The pipeline comparison of the proposed ViTime with traditional numerical TSF model. It is observable that the proposed ViTime adopts a paradigm shift in TSF by processing time series in the binary image space, which aligns with human cognitive processes in time series analysis.}
% \label{fig:pipline}
% \end{figure}

The dominant paradigm in Time Series Forecasting (TSF) is currently centered on numerical fitting-based models, such as TimesFM \citep{bib20} and ForecastPFN \citep{bib21}. These models operate by directly learning the numerical correlations along the temporal dimension of the data. However, this purely numerical-driven approach diverges from human cognitive processes. Studies in cognitive science indicate that for tasks like trend conjecture and forecasting, humans preferentially process and remember correlations between visual representations rather than directly handling abstract numerical sequences \citep{bib22, bib23}. For instance, Dondis \citep{bib23} noted that the visual cortex is adept at rapidly identifying patterns, shapes, and colors, making the processing of visual information more efficient than that of text or numbers.

Inspired by this cognitive paradigm, the research community has begun to explore the potential of applying vision intelligence to TSF. Early attempts involved transforming time series into images, either through direct plotting for image-to-image forecasting \citep{sood2021visual} or via multi-view visual encodings like Gramian Angular Fields (GAF), Markov Transition Fields (MTF), and Recurrence Plots \citep{wang2015encoding, eckmann1987recurrence}. More recently, researchers have repurposed powerful vision backbones, such as Vision Transformers and Masked Autoencoders, for the TSF domain \citep{du2024swin4ts, chen2024visionts}. While these works have shown the promising potential, they remain largely heuristic. They often lack a rigorous theoretical framework for visual quantization and metrics, and the exploration of foundation-model paradigms within this context remains limited. These observations culminate in a fundamental question: \textit{On the path toward a TSF foundation model, could leveraging vision intelligence as a core modeling paradigm, alongside conventional numerical methods, offer a more promising avenue?}

In addition, training data of TSF tasks typically consist of large-scale real-world datasets \citep{bib20}, raising a critical question: \textit{Can real-world datasets comprehensively capture the diverse range of universal time series patterns?} Specifically, what kind of foundational capabilities should a TSF foundation model possess to address a universal spectrum of time series problems?

To tackle these challenges, this paper develops a novel vision intelligence-based TSF foundation model, a Visual Time Foundation Model (ViTime), aiming to pioneer a new computational paradigm of building the TSF foundation model from the perspective of vision intelligence. Regarding the computational principle innovation aspect, ViTime operates by transforming numerical time series into binary images, converting numerical temporal correlations into pixel spatial patterns, and solving TSF tasks, including both point and probabilistic forecasting, in binary image space. We provide detailed theoretical analyses of quantization-induced errors and establish principled guidelines for optimal parameter settings, ensuring precise control over the trade-off between computational complexity and prediction accuracy. To offer a large volume of sufficiently diverse samples for training ViTime, an innovative time-series-data generation method, Real-Time Series (RealTS), is proposed. RealTS categorizes foundational knowledge of time series analysis into "trend" and "periodicity" and synthesizes training data during the training of ViTime, ensuring it captures essential time series characteristics. Experimental results demonstrate that ViTime can achieve SOTA performance across  diverse scenarios, including zero-shot generalization, fine-tuning with limited data,  and robustness to data perturbations.

% three key scenarios. First, in zero-shot generalization, ViTime surpasses TimesFM by 9\% in forecasting accuracy. Second, with only 10\% training data, ViTime outperforms SOTA fully-supervised models in fine-tuning scenarios. Third, ViTime exhibits superior robustness against data perturbations compared to numerical fitting-based methods, validating its effectiveness as a foundation model for time series forecasting.

The main contributions of this work are listed as follows:

% \begin{itemize}
%      \item \textbf{A Novel Vision Intelligence-Based Foundation Model for TSF.} We introduce ViTime, a pioneering TSF foundation model with detailed theoretical analysis that leverages vision intelligence by processing time series in the binary image space. This enables the model to exploit pixel spatial correlation instead of numerical temporal correlation, aligning with human visual cognitive processes.
%     \item \textbf{RealTS: Advanced Data Generation and Augmentation for TSF Foundation Modeling.} To address the training-data sample diversity challenge in developing a TSF foundation model, the RealTS, a sophisticated time-series data generation method that synthesizes diverse and high-quality training data, is designed to ensure ViTime can generalize to a wide range of time series patterns.
%     \item \textbf{Superior Performance.} Comparisons with SOTA TSF models demonstrate ViTime's superior performance in zero-shot generalization, fine-tuning study, and robustness against data perturbations.
% \end{itemize}

\begin{itemize}
     \item \textbf{Novel Theoretical Framework for Vision Intelligence Powered TSF.} We introduce ViTime, a pioneering TSF foundation model grounded in a novel theoretical framework that shifts from conventional numerical fitting to operations within a \textbf{formally defined binary image-based time series metric space}. 

   \item \textbf{RealTS: Advanced Data Generation and Augmentation for TSF Foundation Modeling.} To address the training-data sample diversity challenge in developing a TSF foundation model, the RealTS, a sophisticated time-series data generation method that synthesizes diverse and high-quality training data, is designed to ensure ViTime can generalize to a wide range of time series patterns.
   
    \item \textbf{Empirical Validation of Theoretical Advantages and SOTA Performance.} The efficacy of ViTime's theoretically-grounded visual intelligence paradigm is extensively validated. ViTime significantly outperforms existing foundation models and supervised benchmarks in both zero-shot point forecasting (e.g., 9-15\% improvement over TimesFM) and zero-shot probabilistic forecasting, few-shot fine-tuning, and robustness against diverse data perturbations (e.g., 20-30\% better than TimesFM with missing data/perturbations), confirming the practical benefits of our theoretical contributions.
\end{itemize}
\section{Related Work}
\label{sec:formatting}

\subsection{Problem-specific Model for TSF}
\label{sec:TSF}
The problem-specific TSF methods adopt a fully supervised learning paradigm, where specific models are trained on particular datasets. Early discussions on problem-specific TSF modeling were mainly conducted on classical statistical and machine learning models, such as autoregressive (AR) models and AR variants \citep{bib7}, Splines and their extensions \citep{bib8}, linear regressors \citep{bib9}, support vector regressor \citep{bib9}, neural network based regressor \citep{bib9}, etc. In comparison, the latest TSF studies have shed light on modern deep learning methods, such as recurrent neural network (RNN) and RNN variants \citep{bib10}, transformer and various transformer-based models \citep{bib11,bib12,bib13,liu2023itransformer}, Dlinear \citep{bib14}, TimeMixer \citep{wang2024timemixer}, Mamba based method \citep{bib15} etc.

\subsection{Foundation Model for TSF}
Inspired by recent breakthroughs of pretrained foundation models in natural language processing and computer vision, the TSF community has actively explored developing domain-general foundation models capable of forecasting across diverse datasets and scenarios. Current TSF foundation model studies in general fall into three categories: LLM-based, numerical-data-based, and the emerging vision-based approaches.

\textbf{LLM-based Models:} Several recent studies have directly adapted LLMs to forecasting tasks. Methods such as PromptCast \citep{xue2023promptcast}, TIME-LLM \citep{bib18}, GPT4TS \citep{zhou2023onefitsall}, TimeGPT-1 \citep{bib17}, and LLM4TS \citep{chang2025llm4ts} recast numerical forecasting into text-based prompting or embedding alignment tasks. Despite their promising zero-shot forecasting capabilities, these models suffer from inherent limitations, including high computational costs, inefficiency, and domain adaptation complexity arising from fundamental discrepancies between linguistic structures and numerical temporal patterns \citep{tan2024language}.

\textbf{Numerical-data-based Models:} To address these limitations, another prevalent research direction exploits large-scale collections of real-world numerical time series to train foundation models. Representative methods include TimesFM \citep{bib20}, Moirai \citep{woo2024unified}, Chronos \citep{ansari2024chronos}, Moment \citep{goswami2024moment}, Lag-Llama \citep{rasul2023lag}, GTT \citep{feng2024only}, and TSMamba \citep{ma2024mamba}. Although these real-data-based models significantly enhance zero-shot generalization, their performance heavily depends on the quality, diversity, and representativeness of available real datasets. Moreover, they typically suffer substantial performance degradation when encountering data perturbations, missing values, or unseen temporal patterns. Furthermore, the reliance on extensive real-world datasets inherently risks test set leakage, as partial segments of test data may inadvertently appear during training, undermining true generalization evaluation. Recognizing these inherent limitations of real-world numerical data, recent work has explored alternative data sources. ForecastPFN \citep{bib21} trains Transformer-based models purely on synthetic numerical data generated from predefined trend and seasonality components, demonstrating limited but promising zero-shot forecasting abilities. However, due to the uncontrolled or oversimplified synthesis patterns, these synthetic-data-based methods often fail to capture the richness and complexity of real-world scenarios, thereby limiting forecasting accuracy and robustness.

\textbf{Vision-based Models:} The idea of representing time series as images to leverage powerful vision models has been explored, but its application as a primary forecasting paradigm is an area of growing interest. Early methods, such as Gramian Angular Fields (GAF) \citep{wang2015encoding}, Markov Transition Fields (MTF), and Recurrence Plots \citep{eckmann1987recurrence}, transformed time series into images primarily for classification tasks, demonstrating the potential of image representations to reveal patterns not obvious in the 1D domain. Recently, this concept has been extended to forecasting. One intuitive approach is direct plotting, e.g., VisualAE \citep{sood2021visual} pioneered this by treating TSF as an image-to-image regression task, where a line plot is converted by a convolutional autoencoder. Other works focus on adapting vision architectures, e.g., Swin4TS \citep{du2024swin4ts} reshaped the time series into 2D patches to apply the Swin Transformer. To enrich the input, models like LDM4TS \citep{ruan2025ldm4ts} employed a multi-view strategy, converting a time series into multiple images using techniques like segmentation and GAFs. Recently, VisionTS \citep{chen2024visionts} proposed repurposing pretrained vision models (specifically, masked autoencoders trained on ImageNet) for TSF by reformulating forecasting as an image reconstruction problem. Nevertheless, directly reusing models pretrained on natural images introduces a significant domain mismatch that the visual features learned from natural images may not optimally represent temporal structures inherent in numerical time series. Furthermore, all these existing vision-based methods still fundamentally rely on numerical-space analyses and empirical mappings (e.g., standard plotting libraries or heuristic reshaping), lacking a rigorous theoretical framework explicitly tailored for visual representation and quantization of numerical sequences.

\textbf{Our Contribution in Context:} In contrast to aforementioned paradigms, our proposed ViTime framework introduces two fundamental shifts in the TSF foundation model design:

Firstly, recognizing intrinsic limitations of numerical-space-based forecasting, such as limited generalization across scales and sensitivity to data perturbations, ViTime explicitly advocates modeling time series directly in a principled visual representation space. Where prior visual methods use heuristic transformations, ViTime pioneering in rigorously defining a dedicated visual metric space for numerical time series, providing theoretical analysis of quantization-induced errors, and offering principled guidance for optimal parameter selection. We also proved that this rigorous visual modeling framework can significantly enhance the signal-to-noise ratio (SNR) of time series and improve forecasting accuracy and interpretability.

Secondly, given the inherent challenges of relying on real-world numerical datasets (limited diversity, data leakage risks), we propose, RealTS, a controlled data synthesis strategy focusing on fundamental time series components (trend, periodicity) to generate structurally sound training data. The RealTS substantially mitigates data leakage risks and enriches training data diversity, enabling ViTime to generalize robustly across diverse real-world scenarios. As demonstrated by extensive experiments, ViTime sets new SOTA zero-shot and limited-data forecasting benchmarks, significantly outperforming existing foundation models across diverse evaluation settings.

\section{Method}

\begin{figure*}[!h]
    \centering
    \includegraphics[width=0.8\textwidth]{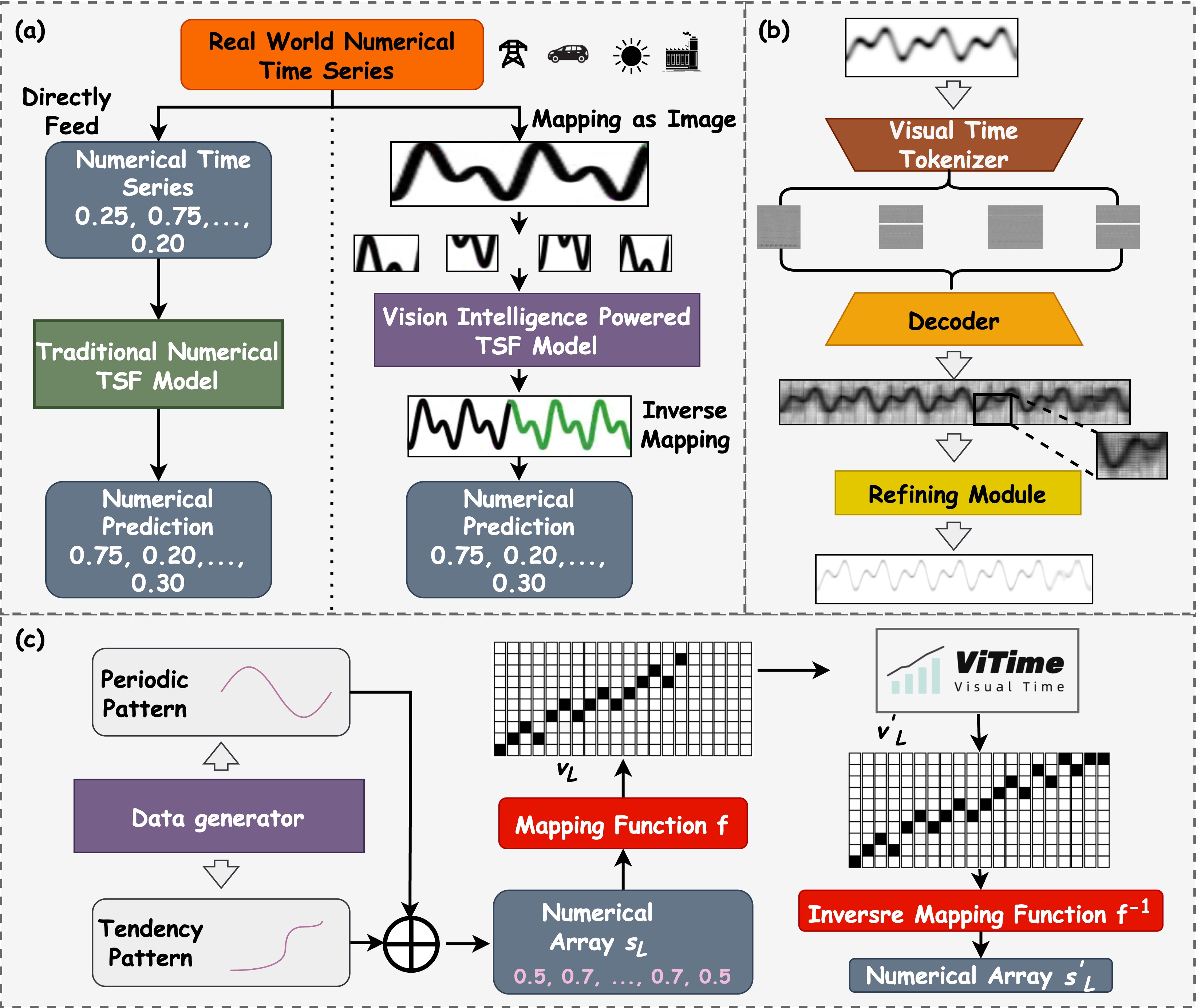}
    \caption{ViTime architecture overview. (a) Pipeline comparison between ViTime and traditional numerical TSF models, showing ViTime's paradigm shift to binary image space processing. (b) ViTime network with three modules: Visual Time Tokenizer, Decoder, and Refining Module. (c) Complete architecture: RealTS synthesis for diverse training samples, mapping function for numerical-to-binary conversion, ViTime model for visual pattern learning, and inverse mapping for prediction output, enabling zero-shot generalization across real-world time series tasks.}
    \label{fig:architecture}
\end{figure*}

\subsection{Overall Architecture}

The overall framework of ViTime, schematically illustrated in Fig. \ref{fig:architecture} (c), comprises four key modules: the RealTS synthesis module, the mapping function, the proposed ViTime model, and the inverse mapping function. To address the dataset challenge of training a robust TSF foundation model, RealTS synthesizes a vast and diverse set of training samples by categorizing foundational knowledge of time series analysis into "trend" and "periodicity" patterns, which ensures ViTime captures essential time series characteristics across a wide range of scenarios. The core innovation of ViTime lies in its computational principle of mapping numerical time series into binary images. This approach allows ViTime to remember temporal pattern correlations through ordered pixel coordinates while maintaining the ability to convert results back to numerical format. The visual modeling process of ViTime learns to extract relevant features and patterns from the time series visual representation, utilizing the historical distributions of the generated binary images to predict future trends. Finally, the inverse mapping function is employed to convert the predicted image back into numerical time series data for further analysis.

In the following sections, we will introduce each component of ViTime in detail: RealTS, mapping \&  inverse mapping function, and ViTime Model.

\subsection{Real-Time Series Synthesis}

In this paper, we hypothesize that a robust foundation model for TSF should integrate two essential types of time series fluctuation knowledge, the periodic and trend patterns, which encompass the inherent patterns and directional changes in time series data. Real-world datasets, however, often lack representation of the full spectrum of these periodic and trend-based fluctuations, limiting the ability of the model to generalize across different scenarios and effectively learn underlying dynamics.

To address this challenge, we propose a novel time series generation algorithm, RealTS. RealTS systematically generates a large volume of synthetic time series data that exhibit diverse periodic and trend characteristics. The proposed RealTS can facilitate more comprehensive training of foundation models, exposing them to various patterns and improving their ability to generalize to unseen real-world data.

The RealTS algorithm probabilistically selects between generating periodic or trend-based time series. Given the total length $L$ of the synthesized time series, the algorithm determines the data prior hypothesis between periodic $\varphi_p$ and trend-based $\varphi_t$ patterns with probability $(\alpha)$. The distribution of generated time series $P(D)$ is defined as follows:

\begin{equation}
\begin{split}
& \mathbf{s}_{\mathbf{L}} \sim P(D) = P\left( \mathbf{s}_{\mathbf{L}} | L \right) \\
&= \alpha\int P\left( \mathbf{s}_{\mathbf{L}} | L, B_{p} \right)P\left( B_{p} | \varphi_{p} \right)P\left( \varphi_{p} \right)d\varphi_{p} + (1 - \alpha)\int P\left( \mathbf{s}_{\mathbf{L}} | L, B_{t} \right)P\left( B_{t} | \varphi_{t} \right)P\left( \varphi_{t} \right)d\varphi_{t}
\end{split}
\label{eq:3}
\end{equation}
where $\mathbf{s}_{\mathbf{L}}$ is the synthesized time series with length $L$; $P(\varphi)$ represents the prior probability of hypothesis $\varphi$; $P(B|\varphi)$ is the likelihood of observing the data behavior $B$ under hypothesis $\varphi$. Data behavior $B$ is introduced to further detail the generation behavior within different data modes. RealTS employs two data behavior modes for periodic hypothesis and three for trend hypothesis as follows:
\begin{itemize}
     \item \textbf{Periodic Hypothesis:} Inverse Fast Fourier Transform Behavior (IFFTB) and Periodic Wave Behavior (PWB).
    \item \textbf{Trend Hypothesis:} Random Walk Behavior (RWB), Logistic Growth Behavior (LGB) and Trend Wave Data Behavior (TWDB)
\end{itemize}
Detailed formulas for each behavior mode and illustrative examples are provided in Supplementary \Cref{sec:RealTS}.

\subsection{Binary Image-based Time Series Metric Space}
\label{definationofViTime}
In ViTime, time series are fed and operated with a binary image form, leveraging a binary image-based time series metric space, as described in Definition \ref{definition}.

\begin{definition}[Binary image-based time series metric space]
\label{definition}
    The binary image-based time series metric space is defined as a group $(V,d)$, where $V$ is a set of elements defined in \Cref{eq:9}:

\begin{equation}
\begin{split}
\mathcal{V} = \bigg\{ & v \in \mathbb{R}^{c \times h \times L} \, \bigg| \,  v_{i,j,k} \in \{0,1\}, i \in [c], j \in [h], k \in [L], \sum_{j = 1}^{h} v_{i,j,k} = 1 \bigg\}
\end{split}
\label{eq:9}
\end{equation}
where $d: V \times V \rightarrow \mathbb{R}$ is a distance function based on the Earth Mover's Distance (EMD), as defined in \Cref{eq:10}:
\begin{equation}
\begin{split}
 d\left( v_{1}, v_{2} \right)  = \int_{i = 1}^{c} \int_{k = 1}^{L} \inf_{\gamma \in \prod\left( \mathbf{v}_{\mathbf{1}}^{\mathbf{i,1:h,k}}, \mathbf{v}_{\mathbf{2}}^{\mathbf{i,1:h,k}} \right)} \mathbb{E}_{x,y\sim\gamma} \left\| x - y \right\|_{1} dk di
\end{split}
\label{eq:10}
\end{equation}
where $c$ represents the number of variates, $L$ is the length of the time series, and $h$ is the resolution of $V$.
\end{definition}

% \textbf{Definition 1: Binary image-based time series metric space.} The binary image-based time series metric space is defined as a group $(V,d)$, where $V$ is a set of elements defined in \Cref{eq:9}:

% \begin{equation}
% \begin{split}
% \mathcal{V} = \bigg\{ & v \in \mathbb{R}^{c \times h \times L} \, \bigg| \,  v_{i,j,k} \in \{0,1\}, \\
% & i \in [c], j \in [h], k \in [L], \sum_{j = 1}^{h} v_{i,j,k} = 1 \bigg\}
% \end{split}
% \label{eq:9}
% \end{equation}
% where $d: V \times V \rightarrow \mathbb{R}$ is a distance function based on the Earth Mover's Distance (EMD), as defined in \Cref{eq:10}:

% \begin{equation}
% \begin{split}
% & d\left( v_{1}, v_{2} \right) \\
% & = \int_{i = 1}^{c} \int_{k = 1}^{t} \inf_{\gamma \in \prod\left( \mathbf{v}_{\mathbf{1}}^{\mathbf{i,1:h,k}}, \mathbf{v}_{\mathbf{2}}^{\mathbf{i,1:h,k}} \right)} \mathbb{E}_{x,y\sim\gamma} \left\| x - y \right\|_{1} dk di
% \end{split}
% \label{eq:10}
% \end{equation}
% where $c$ represents the number of variates, $L$ is the length of the time series, and $h$ is the resolution of $V$.

To enable the transition from numerical time-series values to the binary image-based metric space, we introduce mapping and inverse mapping functions as follows. Let $\mathcal{S} = \left\{ s \in R^{c \times L} \mid s_{i,k} \in R\right\}$ represent the numerical value space of time series. The Time-Series-to-Image mapping function $f : S \to \mathcal{V}$ and the Image-to-Time-Series inverse mapping function $f^{-1} : \mathcal{V} \to S$ can be defined as follows:

\begin{equation}
\begin{split}
\mathbf{v}_{\mathbf{i,1:h,k}} &= \mathbf{f}\left( s_{i,k} \right) = \left\langle f_{1}\left( s_{i,k} \right), f_{2}\left( s_{i,k} \right), \ldots f_{h}\left( s_{i,k} \right) \right\rangle \\
f_{j}\left( s_{i,k} \right) &= \begin{cases}
1, & \text{if } s_{i,k} \geq \text{MS}, j = h \\
1, & \text{if } s_{i,k} \leq -\text{MS}, j = 1 \\
1, & \text{if } j = \left\lfloor \frac{s_{i,k} + \text{MS}}{\frac{2\text{MS}}{h}} \right\rfloor \\
0, & \text{otherwise.}
\end{cases}, \quad j \in [h]
\end{split}
\label{eq:11_1}
\end{equation}
The Image-to-Time-Series inverse mapping function $f^{-1}: \mathcal{V} \rightarrow \mathcal{S}$ can be defined as follows:
\begin{equation}
s_{i,k} = \mathbf{f}^{-1}\left( \mathbf{v}_{\mathbf{i,1:h,k}} \right) = \sum_{j = 1}^{h} \left( (j - 0.5)  \frac{2\text{MS}}{h} - \text{MS} \right)  v_{i,j,k}
\label{eq:11_2}
\end{equation}
where $\text{MS}> 0$ denotes the maximum scale of $\mathcal{V}$. Before mapping, zero-score normalization is typically applied to the numerical time series $s_{i,k}$ to standardize the scale.

Given that the numerical data synthesized by RealTS are one-channel time series, i.e., $\mathbf{s}_L \in R^{1 \times L}$, thus the corresponding $\mathbf{v}_\mathbf{L} \in R^{1 \times h \times L}$ is obtained via
\begin{equation}
\mathbf{v}_{\mathbf{L}} = \mathbf{f}\left( \mathbf{s}_{\mathbf{L}} \right)\,.
\label{eq:12}
\end{equation}

\subsubsection{System Error Analysis}

The system error (SE) emerges from the bidirectional mapping between discrete space $\mathcal{V}$ and continuous space $\mathcal{S}$, which inherently impacts prediction fidelity. A rigorous analysis of SE is essential for ensuring reliable and robust predictions in image space $\mathcal{V}$. We begin our theoretical analysis of SE with Assumption \ref{assumption1} and \Cref{theoremSEB}.

\begin{assumption}
\label{assumption1}
    After applying zero-score normalization, the continuous space follows a standard normal distribution:
\[
\mathcal{S} \sim N(\mathbf{0}, \mathbf{I})
\]
\end{assumption}

% \textbf{Assumption 1} After applying zero-score normalization, the continuous space follows a standard normal distribution:
% \[
% \mathcal{S} \sim N(\mathbf{0}, \mathbf{I})
% \]

\begin{theorem}[System Error Upper Bound]
\label{theoremSEB}

Given a tensor $\widehat{s} \in \mathcal{S} \subset \mathbb{R}^{c \times L}$, the system error defined as $\left\| f^{-1}\left( \mathbf{f}\left( \widehat{s} \right) \right) - \widehat{s} \right\|_{1}$ satisfies the following bound:

\begin{equation}
\begin{split}
    &SE := \mathbb{E}\left\| f^{-1}\left( \mathbf{f}\left( \widehat{s} \right) \right) - \widehat{s} \right\|_{1} \leq g(h, MS) \\
    &= cL \Bigg[ MS \left( \frac{1}{h}\left(\Phi(MS) - \Phi(-MS)\right) - 2 + 2\Phi(MS) \right) +\sqrt{\frac{2}{\pi}} e^{-\frac{MS^2}{2}} \Bigg]
\end{split}
\label{eq:SE_upper_bound}
\end{equation}
where $\Phi$ denotes the cumulative distribution function of $N(\mathbf{0,I})$.

\end{theorem}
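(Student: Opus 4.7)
The plan is to reduce the vector-valued quantity $\mathbb{E}\|f^{-1}(f(\widehat{s})) - \widehat{s}\|_1$ to a scalar calculation, since the $L^1$ norm and the mapping $f$ act coordinate-wise. Under \cref{assumption1}, each entry $X = \widehat{s}_{i,k}$ is i.i.d.\ $N(0,1)$, so
\begin{equation*}
\mathbb{E}\bigl\| f^{-1}(f(\widehat{s})) - \widehat{s}\bigr\|_1 \;=\; ct\cdot \mathbb{E}\bigl|f^{-1}(f(X)) - X\bigr|\,.
\end{equation*}
From the definitions in \cref{eq:11_1,eq:11_2}, the scalar map $x \mapsto f^{-1}(f(x))$ is a quantizer with bin width $w = 2\,\text{MS}/h$: for $|x|\le \text{MS}$ it returns the center of the bin containing $x$, while for $x\ge \text{MS}$ (resp.\ $x\le -\text{MS}$) it returns the extremal bin center $\text{MS}-\text{MS}/h$ (resp.\ $-\text{MS}+\text{MS}/h$).

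Next I would split the expectation into the ``in-range'' and ``tail'' regimes. On $\{|X|\le \text{MS}\}$ the reconstruction error is bounded by half the bin width, i.e.\ $\text{MS}/h$, contributing at most
\begin{equation*}
\tfrac{\text{MS}}{h}\bigl(\Phi(\text{MS})-\Phi(-\text{MS})\bigr) \;=\; \tfrac{\text{MS}}{h}\bigl(2\Phi(\text{MS})-1\bigr).
\end{equation*}
On the right tail $\{X>\text{MS}\}$ I would upper bound $|f^{-1}(f(X))-X|$ by $X-\text{MS}$ (a valid bound since we are then estimating the error against the \emph{clipping} point rather than the shifted bin center, which after multiplication by the tail probability still yields a legitimate upper bound in the regime of interest), and similarly on the left tail by $-X-\text{MS}$. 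Symmetry collapses the two tails to
\begin{equation*}
2\!\int_{\text{MS}}^{\infty}(x-\text{MS})\,\phi(x)\,dx \;=\; 2\phi(\text{MS}) - 2\,\text{MS}\bigl(1-\Phi(\text{MS})\bigr),
\end{equation*}
where I use the standard identity $\int_a^\infty x\,\phi(x)\,dx = \phi(a)$ arising from $\phi'(x) = -x\phi(x)$.

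Finally I would add the two contributions, substitute $2\phi(\text{MS}) = \sqrt{2/\pi}\,e^{-\text{MS}^2/2}$, and regroup the $\Phi(\text{MS})$ terms to match the stated closed form: the $-2\,\text{MS}(1-\Phi(\text{MS}))$ term combines with the in-range piece as $\text{MS}\bigl((2\Phi(\text{MS})-1)/h - 2 + 2\Phi(\text{MS})\bigr)$, yielding $g(h,\text{MS})$ after multiplication by $ct$. The main obstacle will be choosing the right tail comparison: bounding the error against $\pm\text{MS}$ rather than the true bin center $\pm(\text{MS}-\text{MS}/h)$ is the key algebraic step that makes the $(2\Phi(\text{MS})-1)/h$ factor appear cleanly; I would justify this by a direct triangle-inequality comparison showing the substitution only inflates the integrand on the tail sets. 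Beyond this, the argument is a routine expectation-splitting calculation with Gaussian tail identities, so no deep technical difficulty is expected.
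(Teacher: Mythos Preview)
Your approach is essentially identical to the paper's: reduce to a scalar expectation via the i.i.d.\ structure (picking up the $ct$ factor), split into an in-range quantization piece bounded by $MS/h$ and a symmetric tail piece evaluated via $\int_a^\infty x\,\phi(x)\,dx=\phi(a)$, then recombine to obtain $g(h,MS)$. One minor correction worth flagging: on the tail $\{X>MS\}$ the true reconstruction error is $X-(MS-MS/h)=X-MS+MS/h$, which is \emph{larger} than $X-MS$, so your claim that the substitution ``only inflates the integrand'' has the direction reversed; the paper simply asserts the tail error equals $s-MS$ without addressing the $MS/h$ offset, so both arguments share this small slack of order $\tfrac{MS}{h}\bigl(1-\Phi(MS)\bigr)$, and your strategy otherwise matches the paper step for step.
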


Denote \(MS\left( \frac{1}{h}(\Phi(MS) - \Phi( - MS)) - 2 + 2\Phi(MS) \right) + \sqrt{\frac{2}{\pi}}e^{\frac{- MS^{2}}{2}}\) in \Cref{eq:SE_upper_bound} as the upper bound of SE, whose convergence is guaranteed by Proposition \ref{proposition1}.

% \vspace{-5pt}
\begin{proposition}[Asymptotic Convergence with $h$]
\label{proposition1}
% \paragraph{Proposition 1 (Asymptotic Convergence with $h$)}

For any $\varepsilon > 0$, there exists $\delta > 0$ such that when $h \to +\infty$ and $MS \geq \delta$, the SE upper bound converges to zero:
\begin{equation}
\begin{split}
    \lim_{h \to +\infty} \Big| MS \left( \frac{1}{h}(\Phi(MS) - \Phi(-MS)) - 2 + 2\Phi(MS) \right) + \sqrt{\frac{2}{\pi}}e^{-\frac{MS^2}{2}} \Big| = 0
\end{split}
\label{eq:asymptotic_convergence}
\end{equation}

\end{proposition}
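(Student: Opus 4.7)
The plan is to decompose the expression inside the absolute value into an $h$-dependent piece that vanishes directly as $h\to\infty$ and an $h$-independent piece that vanishes as $MS\to\infty$ via a standard Gaussian tail bound. Expanding the factor of $MS$ in \cref{eq:asymptotic_convergence}, the expression becomes
\[
\frac{MS}{h}\bigl(\Phi(MS)-\Phi(-MS)\bigr) \;-\; 2MS\bigl(1-\Phi(MS)\bigr) \;+\; \sqrt{\tfrac{2}{\pi}}\,e^{-MS^{2}/2}.
\]
Since $\Phi(MS)-\Phi(-MS)\in(0,1]$, the first term is bounded in absolute value by $MS/h$ and hence tends to $0$ for every fixed $MS$ as $h\to+\infty$.

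The core of the argument is to show that the remaining $h$-independent piece,
\[
-2MS\bigl(1-\Phi(MS)\bigr) + \sqrt{\tfrac{2}{\pi}}\,e^{-MS^{2}/2},
\]
can be made arbitrarily small by taking $MS\geq\delta$ with $\delta$ sufficiently large. Here I would invoke the classical two-sided Mill's ratio bound: for $x>0$,
\[
\frac{\phi(x)}{x}\!\left(1-\frac{1}{x^{2}}\right)\;\leq\;1-\Phi(x)\;\leq\;\frac{\phi(x)}{x},
\]
where $\phi(x)=\frac{1}{\sqrt{2\pi}}e^{-x^{2}/2}$ is the standard normal density. Multiplying through by $2x$ with $x=MS$ yields
\[
2\phi(MS)\!\left(1-\frac{1}{MS^{2}}\right)\;\leq\;2MS\bigl(1-\Phi(MS)\bigr)\;\leq\;2\phi(MS),
\]
and since $2\phi(MS)=\sqrt{2/\pi}\,e^{-MS^{2}/2}$, the two remaining terms in the expression differ by at most $\sqrt{2/\pi}\,e^{-MS^{2}/2}/MS^{2}$, which tends to $0$ as $MS\to\infty$.

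To finish, given $\varepsilon>0$ I would choose $\delta$ large enough that $\sqrt{2/\pi}\,e^{-\delta^{2}/2}/\delta^{2}<\varepsilon/2$; then the $h$-independent piece is below $\varepsilon/2$ uniformly in $MS\geq\delta$, and for any such $MS$ the $h$-dependent piece falls below $\varepsilon/2$ once $h>2MS/\varepsilon$. Adding the two contributions bounds the absolute value in \cref{eq:asymptotic_convergence} by $\varepsilon$, giving the claim. The main obstacle I anticipate is interpreting the quantifier structure of the statement: the displayed limit is written only over $h$, but the expression does not vanish in $h$ alone, so the proof must treat $\delta$ (controlling $MS$) as chosen first and $h$ as sent to infinity second. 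Once this iterated-limit viewpoint is adopted, the remainder is a routine application of the asymptotic $2MS(1-\Phi(MS))=\sqrt{2/\pi}\,e^{-MS^{2}/2}\bigl(1+O(MS^{-2})\bigr)$, with no subtle issues beyond bookkeeping of constants.
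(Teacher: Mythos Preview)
Your proposal is correct and follows essentially the same approach as the paper: split off the $1/h$ term, then control the $h$-independent remainder $-2MS(1-\Phi(MS))+\sqrt{2/\pi}\,e^{-MS^2/2}$ via Mill's ratio. Your version is in fact a bit more careful than the paper's, which only substitutes the leading-order asymptotic to get a formal $0$; your explicit two-sided bound yields the quantitative estimate $0\le -2MS(1-\Phi(MS))+\sqrt{2/\pi}\,e^{-MS^2/2}\le \sqrt{2/\pi}\,e^{-MS^2/2}/MS^{2}$ and handles the iterated-limit structure explicitly.
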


The Proposition \ref{proposition1} reveals that when we fix MS and increase the spatial resolution \(h\), the upper bound $|g(h,MS)|$ of SE will reduce accordingly. On the other hand, when \(h\) increases, the tensor sizes in \(\mathcal{V}\) will increase exponentially, leading to higher computational costs. As such, the selection of \emph{h} must strike a balance between the accuracy of the estimation and the computational feasibility. Since the upper bound of SE decreases with an increase in \emph{h}, it is generally preferable to choose the largest possible value of \emph{h} based on available computational resources, resulting in a fixed value of \emph{h} for a particular computational budget.

% Additional theoretical analysis of the proposed binary image-based time series metric space and proofs are offered in \Cref{sec:Theoretical} and \Cref{sec:theoretical_foundations}.

\subsubsection{Theoretical Analysis of Optimal MS}

\emph{MS} determines the upper and lower limits of numerical truncation in the binary image-based time series metric space. Thus, it is necessary to conduct a detailed theoretical analysis of the selection of \emph{MS}. Proposition \ref{proposition2} investigates how the upper bound of SE varies with a \emph{MS} given a fixed value of \emph{h}, which provides a theoretical guidance to choose the best MS under different computational budgets (\emph{h}).

\begin{proposition}[Optimal MS Selection]
\label{proposition2}
For fixed $h$, there exists a unique optimal threshold $MS^*$ minimizing the SE upper bound, characterized by:
\begin{equation}
\frac{1}{h}\left( \Phi(MS^*) - \Phi(-MS^*) \right) - 2 + 2\Phi(MS^*) + \frac{MS^*}{h}\sqrt{\frac{2}{\pi}}e^{-\frac{{MS^*}^2}{2}} = 0
\label{eq:optimal_MS_condition}
\end{equation}
\end{proposition}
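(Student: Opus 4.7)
The plan is to regard the SE upper bound of \cref{theoremSEB}, divided by the positive constant $ct$, as a single-variable function
\[
g(MS) = MS \cdot A(MS) + \sqrt{\tfrac{2}{\pi}}\, e^{-MS^2/2}, \qquad A(MS) := \tfrac{1}{h}\bigl(\Phi(MS) - \Phi(-MS)\bigr) - 2 + 2\Phi(MS),
\]
and to analyze its first two derivatives on $(0,\infty)$. Because the $ct$ prefactor does not affect the minimizer, it suffices to characterize the minimum of $g$.

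First I would differentiate $g$ using the identity $\sqrt{2/\pi}\,e^{-MS^2/2} = 2\phi(MS)$, where $\phi$ is the standard normal density, together with $A'(MS) = (2 + 2/h)\phi(MS)$. The product-rule term $MS\cdot A'(MS)$ cancels with most of the derivative of the Gaussian tail, leaving
\[
g'(MS) = A(MS) + \tfrac{MS}{h}\sqrt{\tfrac{2}{\pi}}\,e^{-MS^2/2},
\]
which is exactly the left-hand side of \cref{eq:optimal_MS_condition}. Hence setting $g'(MS^*) = 0$ reproduces the characterizing equation of the proposition.

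Next I would establish existence of a zero of $g'$ on $(0,\infty)$ by the intermediate value theorem. Since $\Phi(0) = 1/2$, $A(0) = -1$, so $g'(0^+) = -1 < 0$; since $\Phi(MS) \to 1$ and $MS\,\phi(MS) \to 0$ as $MS \to \infty$, $\lim_{MS\to\infty} g'(MS) = 1/h > 0$; continuity of $g'$ then yields at least one zero. For uniqueness and minimality I would compute
\[
g''(MS) = 2\phi(MS)\left[1 + \tfrac{2 - MS^2}{h}\right],
\]
which is positive on $[0,\sqrt{h+2})$ and negative on $(\sqrt{h+2},\infty)$. Thus $g'$ strictly increases up to $\sqrt{h+2}$ and strictly decreases afterward. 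On the decreasing branch, $g'$ tends to $1/h$, so strict monotonicity forces $g'(MS) > 1/h > 0$ for every finite $MS \geq \sqrt{h+2}$. Consequently the increasing branch $[0,\sqrt{h+2}]$ already takes $g'$ from $-1$ to a strictly positive value and contains a unique zero $MS^*$; the sign flip from negative to positive at $MS^*$ makes it a local, hence the global, minimum of $g$ on $(0,\infty)$.

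The main obstacle will be uniqueness, because $g$ is not globally convex: $g''$ changes sign at $MS = \sqrt{h+2}$, so one cannot simply appeal to convexity. The argument above sidesteps this by exploiting the fact that $\lim_{MS\to\infty} g'(MS) = 1/h$ is strictly positive, which combined with the single inflection point of $g$ confines the sign change of $g'$ to the increasing branch and rules out a spurious second crossing at large $MS$.
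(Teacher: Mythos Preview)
Your proposal is correct and follows the same three-step strategy as the paper's proof: derive the first-order condition, analyze the sign of the second derivative (positive on $[0,\sqrt{h+2})$, negative beyond), and combine boundary behavior with the intermediate value theorem. Your execution is in fact tighter than the paper's: you obtain the correct boundary limits $g'(0^+)=-1$ and $\lim_{MS\to\infty}g'(MS)=1/h$ (the paper records $-2$ and $+\infty$, which are inaccurate), and your observation that $g'>1/h>0$ throughout the decreasing branch makes the uniqueness claim explicit, whereas the paper simply appeals to ``the convex--concave transition.''
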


% \paragraph{Proposition 2 (Optimal MS Selection)} For fixed $h$, there exists a unique optimal threshold $MS^*$ minimizing the SE upper bound, characterized by:
% \begin{equation}
% \frac{1}{h}\left( \Phi(MS^*) - \Phi(-MS^*) \right) - 2 + 2\Phi(MS^*) + \frac{MS^*}{h}\sqrt{\frac{2}{\pi}}e^{-\frac{{MS^*}^2}{2}} = 0
% \label{eq:optimal_MS_condition}
% \end{equation}

% \subsubsection{Generalization to Scaled Variance Case}
% \label{sec:scaled_variance}
The fidelity of predictions in binary image space $\mathcal{V}$ heavily depends on the bidirectional mapping between discrete space $\mathcal{V}$ and continuous latent space $\mathcal{S}$. A key challenge arises from the SE, which quantifies the discrepancy between the original continuous representation and its reconstructed version after discretization. While Assumption \ref{assumption1} assumes $\mathcal{S} \sim N(0, \mathbf{I})$, real-world scenarios often exhibit larger variance in the latent space due to factors such as dataset shifts or model miscalibration. This motivates our analysis of SE under the generalized assumption $\mathcal{S} \sim  N(0, k\mathbf{I})$, where $k > 1$ captures the variance scaling.

\begin{proposition}[Optimal Threshold under Variance Scaling]
\label{proposition3}

Under the assumption $\mathcal{S} \sim  N(0, k\mathbf{I})$ with $k > 1$, the optimal threshold $MS^*$ that minimizes the SE upper bound is characterized by the following condition:

\begin{equation}
\begin{split}
\frac{1}{h}\Bigg( \Phi&\left(\frac{MS^*}{\sqrt{k}}\right) - \Phi\left(-\frac{MS^*}{\sqrt{k}}\right) \Bigg) - 2 + 2\Phi\left(\frac{MS^*}{\sqrt{k}}\right) + \frac{MS^*}{h}\sqrt{\frac{2}{\pi k}}e^{-\frac{(MS^*)^2}{2k}} = 0
\end{split}
\label{eq:prop3_main_theorem} % Changed label to avoid conflict with appendix version
\end{equation}
\end{proposition}
Here, $\Phi(\cdot)$ is the cumulative distribution function (CDF) of the standard normal distribution, $h$ is the spatial resolution, and $k$ is the variance scaling factor. This result generalizes Proposition \ref{proposition2} to scenarios where the latent space exhibits larger variability. In practice, it is challenging to find an analytic solution for
\Cref{eq:prop3_main_theorem}. Thus, the numerical method is employed to obtain solutions of \Cref{eq:prop3_main_theorem} in this work and the corresponding results are reported in Table \ref{t1}.

As shown in \Cref{t1}, the numerically computed optimal $MS$ increases monotonically with both the spatial resolution $h$ and the variance scaling factor $k$. This trend indicates that higher-resolution settings and larger latent variance require larger $MS$ to satisfy \Cref{eq:prop3_main_theorem}.

\begin{table}
\centering
\caption{Numerically Solved Optimal $MS^*$}
\label{t1}
\begin{tabular}{c|ccc}
\toprule
& \multicolumn{3}{c}{Optimal $MS^*$} \\
\cmidrule{2-4}
Resolution $h$ & $k=1$ & $k=1.5$ & $k=2$ \\
\midrule
32 & 2.1 & 2.62& 3.03 \\
64 & 2.38 & 2.95 & 3.41 \\
128 & 2.64 & 3.26 & 3.76 \\
256 & 2.88 & 3.53 & 4.08 \\
512 & 3.09 & 3.79 & 4.38 \\
\bottomrule
\end{tabular}
\end{table}

\subsection{Theoretical Advantages of Visual Representation for Time Series Forecasting}
\label{sec:theoretical_foundations}

Representing time series data visually, as explored by ViTime, is not merely an aesthetic or heuristic choice; it is fundamentally advantageous from a signal-processing standpoint. Specifically, transforming numerical signals into structured, image-like representations can significantly boost the effective signal-to-noise ratio (SNR), thereby enhancing forecasting robustness. To formally capture and quantify this advantage, we first establish conditions under which visual representation surpasses conventional numerical representation in terms of SNR. Subsequently, we explore image-based processing techniques to further amplify these benefits.

\subsubsection{Visual Representation and SNR Enhancement.}
\label{advantage of Vitime}

Consider a noisy sinusoidal time series defined by:
\[
s_k = A\sin(\omega_0 k + \phi) + \eta_k,\quad k=0,\dots,L-1,
\]
where the signal amplitude $A>0$, angular frequency $\omega_0=2\pi/P_{\mathrm{period}}$, phase $\phi$, and Gaussian noise terms $\eta_k\sim N(0,\sigma^2)$ fully specify the system. Transforming this numerical series into a binary "stripe" image $v\in\{0,1\}^{h\times L}$ via quantization yields notable theoretical advantages. The binary representation is defined by:
\begin{equation}
v_{j,k} = \mathbf{1}\left(j=\left\lfloor \frac{s_k+\mathrm{MS}}{\delta} \right\rfloor\right),
\label{eq:v_jk_def}
\end{equation}
with quantization step $\delta=\Delta/h$ and total quantization range $\Delta=2\mathrm{MS}$. By comparing the SNR in numerical and visual domains, we obtain the following foundational result:

\begin{theorem}[Stripe SNR Boost]
\label{thm:stripe_snr_boost}
Under mild assumptions that (i) the sinusoid amplitude spans at least one quantization bin ($\delta \le A \le \Delta - \delta$) and (ii) noise is small relative to quantization resolution ($\sigma < \delta/4$), the visual representation yields an SNR at the fundamental frequency $n_0=\lfloor L/P_{\mathrm{period}}\rfloor$ satisfying:
\begin{equation}
\text{SNR}_{\mathrm{vis}} \ge \frac{L}{4}\exp\left(\frac{\delta^2}{8\sigma^2}\right)\frac{\sigma^2}{A^2}\text{SNR}_{\mathrm{num}},
\label{eq:stripe_snr_boost}
\end{equation}
where the numerical SNR is $\text{SNR}_{\mathrm{num}} = A^2/(2\sigma^2)$.
\end{theorem}

% \textbf{When Does Visual Representation Outperform Numerical Representation?} 
\Cref{thm:stripe_snr_boost} provides clear quantitative conditions for visual superiority. Specifically, visual representation surpasses numerical representation ($\text{SNR}_{\mathrm{vis}}>\text{SNR}_{\mathrm{num}}$) whenever:
\begin{equation}
L > \frac{4A^2}{\sigma^2}\exp\left(-\frac{\delta^2}{8\sigma^2}\right).
\label{eq:superiority_condition_simplified}
\end{equation}
Practically, this condition is typically met for moderate sequence lengths when the quantization step is comparable to or slightly larger than the noise standard deviation (e.g., $\delta \approx 2\sigma$). Under these realistic scenarios, the exponential term strongly favors visual representation, making it advantageous even at manageable $L$.

\subsubsection{SNR Enhancement via Image Processing.}

Although the theoretical advantage above is compelling, practical scenarios often involve considerable noise and subtle periodic signals. Furthermore, the binary quantization can introduce high-frequency artifacts that obscure signal patterns. To mitigate such undesirable effects and leverage the structured nature of visual representations, we propose employing image-processing operations, notably Gaussian blurring, to enhance signal fidelity further.

 Applying a Gaussian blur along the image's quantization axis (the row or "value" dimension) effectively smooths quantization noise while preserving meaningful temporal structures. This simple convolutional operation yields significant amplification of the visual-domain SNR, formalized as follows:

\begin{theorem}[Gaussian Blur SNR Boost]
\label{thm:blur_snr_boost}
Under the conditions of Theorem~\ref{thm:stripe_snr_boost}, consider applying a one-dimensional Gaussian convolution kernel along the quantization dimension (rows) of the binary stripe image $v$:
\[
g_j = \frac{1}{Z}\exp\left(-\frac{j^2}{2\sigma_b^2}\right), \quad\text{where}\quad Z=\sum_j\exp\left(-\frac{j^2}{2\sigma_b^2}\right),
\]
to obtain the blurred image $w = g *_j v$. Denote the kernel's nuclear energy by $S=\sum_j g_j^2 \in (0,1)$, and define the visually blurred SNR at the fundamental frequency $n_0=\lfloor L/P_{\mathrm{period}}\rfloor$ as $\text{SNR}_{\mathrm{vis}}^{\text{blur}}$. Then, the following lower bounds hold:
\begin{align}
\text{SNR}_{\mathrm{vis}}^{\text{blur}} &\ge \frac{L}{4S}\exp\left(\frac{\delta^2}{8\sigma^2}\right),\label{eq:blur_snr_absolute}\\[6pt]
\text{SNR}_{\mathrm{vis}}^{\text{blur}} &\ge \frac{L\sigma^2}{2A^2 S}\exp\left(\frac{\delta^2}{8\sigma^2}\right)\text{SNR}_{\mathrm{num}},\label{eq:blur_snr_relative}
\end{align}
where the numerical-domain SNR is defined as $\text{SNR}_{\mathrm{num}}=A^2/(2\sigma^2)$.

\end{theorem}

Consequently, the blurred visual representation amplifies the numerical-domain SNR at least by a factor of:
\begin{equation}
\frac{\text{SNR}_{\mathrm{vis}}^{\text{blur}}}{\text{SNR}_{\mathrm{num}}}\geq \frac{L\sigma^2}{2A^2 S}\exp\left(\frac{\delta^2}{8\sigma^2}\right).\label{eq:blur_snr_ratio}
\end{equation}

This result explicitly quantifies the advantage provided by Gaussian blurring in the visual representation. Notably, this amplification advantage scales linearly with the time series length $L$ and exponentially with the squared ratio of quantization step $\delta$ to noise standard deviation $\sigma$. Moreover, a smaller kernel nuclear energy $S$, corresponding to stronger blurring, yields a greater amplification of the visual-domain SNR relative to its numerical counterpart.

In practical implementations, the choice of Gaussian kernel parameters directly influences the nuclear energy $S$, and thus the SNR amplification factor. Typical examples include:
\begin{itemize}
\item $11\times11$ kernel ($\sigma_b=2$): $S\approx0.15$, providing substantial SNR amplification.
\item $21\times21$ kernel ($\sigma_b=4$): $S\approx0.08$, approximately doubling the amplification compared to the previous case.
\item $31\times31$ kernel ($\sigma_b=6$): $S\approx0.05$, further significantly enhancing the amplification factor.
\end{itemize}

In summary, even moderate Gaussian blurring substantially enhances the effective visual-domain SNR, enabling significantly improved signal discernibility and forecasting accuracy compared to traditional numerical-domain methods.

% \begin{theorem}[Gaussian Blur SNR Boost]
% \label{thm:blur_snr_boost}
% Consider applying a normalized Gaussian kernel $g_j = (1/Z)\exp(-j^2/(2\sigma_b^2))$, with normalization constant $Z=\sum_j\exp(-j^2/(2\sigma_b^2))$, along the quantization dimension of the stripe image $v$, yielding $w=g*_j v$. Define the kernel's nuclear energy as $S=\sum_j g_j^2\in(0,1)$. Then, under conditions of \Cref{thm:stripe_snr_boost}, the visual SNR after Gaussian blurring satisfies:
% \begin{equation}
% \text{SNR}^{\text{blur}}_{\mathrm{vis}} \ge \frac{L}{4S}\exp\left(\frac{\delta^2}{8\sigma^2}\right),
% \label{eq:blur_snr_boost}
% \end{equation}
% resulting in at least a $1/S$ factor improvement over the non-blurred visual representation.
% \end{theorem}

% \textbf{Practical Implications of Gaussian Kernel Selection.} The above result offers clear guidance on kernel selection in practice. Typical kernel sizes provide substantial SNR amplification (e.g., an $11\times11$ kernel with $\sigma_b=2$ yields approximately $6.7$-fold improvement, while a $31\times31$ kernel with $\sigma_b=6$ achieves around $20$-fold improvement). Thus, even modest Gaussian smoothing significantly enhances the information density and fidelity of visually transformed time series, beyond improvements achievable with numerical signals alone.

\textbf{Generalization to Complex Time Series.} While our theoretical analysis explicitly addresses a single sinusoidal component, its implications readily extend to realistic time series composed of multiple periodic components. Via linearity principles inherent in Fourier decomposition, observed visual-domain SNR advantages apply component-wise, amplifying structured periodic signals relative to unstructured and independent noise effects. Thus, real-world time series exhibiting intricate periodic behaviors benefit significantly from visual transformations and subsequent image-processing enhancements.

The rigorous theoretical results presented here establish a robust mathematical foundation for employing visual intelligence in time series analysis. Beyond aligning with human cognitive patterns, visual representations structurally amplify signal fidelity through inherent quantization and subsequent image processing techniques, such as Gaussian smoothing. Consequently, visual-domain methods provide a principled, theoretically justified route toward achieving more robust, reliable, and accurate time series forecasting, especially under challenging noise conditions.

Detailed proofs and supplementary details of the theorems presented in this section are provided in Supplementary \Cref{sec:Proofs_all}.

\subsection{The Proposed ViTime Model}

\Cref{fig:architecture} (b) presents the architecture of the ViTime network, which comprises three network modules: the Visual Time Tokenizer, the Decoder, and the Refining Module. The time series binary image is first fed into the Visual Time Tokenizer and outputs embedded latent representations. Next, the Decoder is developed to decode latent representations and produce initial prediction results in the image-value axis. To improve the generative quality of patch junctions, a Refining Module is designed to generate the final smooth prediction results.

\textbf{Visual Time Tokenizer.} The primary role of the Visual Time Tokenizer is to segment masked binary images into multiple patches and map these patches into the feature space. By leveraging the ViT \citep{bib24} architecture, the module captures spatial relationships between patches, thereby transforming temporal dependencies of the time series into spatial dependencies within the image space.

\textbf{Decoder.} The Decoder translates the tokenized patches back into the binary pixel metric space, providing an initial prediction where the ViT architecture is also adopted. In practice, the Decoder’s prediction head applies a softmax along the height dimension $j$ to produce a probability tensor $\mathbf{p}\in\mathcal{P}$ (defined later in \Cref{subsec:vitime_prob_point}), which reduces to a one-hot vector along the height dimension $\mathbf{v}\in\mathcal{V}$ if the mass collapses to a single bin.

\textbf{Refining Module.} The transformer architecture in the Decoder can result in discontinuities at the patch junctions, which may affect the accuracy of the inverse mapping process. To address this issue, the Refining Module built with CNNs is employed. Initially, tokens decoded by Decoder are unpatched and fed into a CNN-based backbone. Next, the ASPP \citep{bib25} module expands the model receptive field. Finally, the output is upsampled to the binary pixel metric space, generating the final image prediction result. The Refiner preserves the probabilistic semantics by operating on the logits (before softmax) or on probability maps to maintain consistency along the $j$-axis.

\paragraph{Modeling process and masking.}
The modeling process of ViTime is summarized as
\begin{equation}
\mathbf{v}_{\mathbf{L}}^{\prime} = \text{ViTime}\!\left( \mathbf{v}_{\mathbf{L}} \odot \mathbf{M}_{\mathbf{L}} \right),
\end{equation}
where $\odot$ is the element-wise product and $\mathbf{M}_{\mathbf{L}}$ is a temporal mask that zeros out the time steps to be forecast. Concretely, we use $\mathbf{M}_{\mathbf{L}}\in\{0,1\}^{1\times 1\times L}$ (broadcast along $c$ and $h$) with
\[
(\mathbf{M}_{\mathbf{L}})_{1,1,k} =
\begin{cases}
1, & k \in \text{observed (context) time indices},\\
0, & k \in \text{forecast horizon (to be predicted)}.
\end{cases}
\]
Thus, for all $i\in[c], j\in[h], k\in[L]$,
\[
(\mathbf{v}_{\mathbf{L}} \odot \mathbf{M}_{\mathbf{L}})_{i,j,k} =
\begin{cases}
\mathbf{v}_{i,j,k}, & \mathbf{M}_{1,1,k}=1,\\
0, & \mathbf{M}_{1,1,k}=0,
\end{cases}
\]
i.e., the mask sets the to-be-predicted time positions to \emph{all zeros across the $j$-axis}, removing any target information at those steps. Note that this masking operates on the input only. Although the masked columns are no longer one-hot (and hence leave $\mathcal{V}$ at those $k$), the network outputs valid distributions $\mathbf{p}\in\mathcal{P}$ over $j$ for every $(i,k)$ (see \Cref{subsec:vitime_prob_point}). During training, we sample masked spans to simulate forecasting; at inference, we set $\mathbf{M}_{\mathbf{L}}$ to zero precisely on the forecast horizon.

\textbf{Loss function.} The loss function employed in this study is defined as follows:
\begin{equation}
\mathcal{L} = d\left( \mathbf{v}_{\mathbf{L}}^{\prime}, \mathbf{v}_{\mathbf{L}} \right) + \alpha \,\text{KLD}\left( \mathbf{v}_{\mathbf{L}}^{\prime}, \mathbf{v}_{\mathbf{L}} \right),
\label{eq:15}
\end{equation}
where $d$ denotes the distance function defined in \Cref{eq:10}, $\text{KLD}$ denotes Kullback--Leibler divergence, and $\alpha$ is the hyperparameter balancing the two terms. The combined EMD and KLD loss addresses structural and probabilistic alignment along the $j$-axis. EMD minimizes spatial discrepancies in $\mathcal{V}/\mathcal{P}$, counteracting discretization-induced shift, while KLD refines distributional consistency to mitigate quantization artifacts. This dual objective balances geometric fidelity (via EMD/Wasserstein-1 along $j$) and statistical accuracy (via KLD), which is crucial under the resolution-computation trade-off governed by $h$. In practice, to prevent information leakage and trivial identity mapping, both $d(\cdot,\cdot)$ and KLD are accumulated only over the masked time indices $\{k:\mathbf{M}_{1,1,k}=0\}$, while the unmasked indices serve as conditioning context.

\paragraph{From model outputs to forecasts.}
The Decoder/Refiner produce a probability tensor $\mathbf{p}\in\mathcal{P}$ over the height $j$ at each $(i,k)$. For point forecasts, we apply the inverse mapping expectation (see \Cref{eq:11_2}) by replacing $v_{i,j,k}$ with $p_{i,j,k}$ to obtain $\mu_{i,k}$; this coincides with the estimator in \Cref{eq:11_2}. For probabilistic forecasts, we retain $\mathbf{p}$ as a histogram distribution over bins and compute downstream summaries (quantiles/intervals) as detailed in \Cref{subsec:vitime_prob_point}.

\subsection{Point and Probabilistic Forecasting in ViTime}
\label{subsec:vitime_prob_point}

Building on the binary image-based time series metric space in \Cref{definationofViTime}, ViTime treats forecasting as producing a distribution along the height (value) axis for each variate-time pair. This subsection details how ViTime yields probabilistic forecasts and how point forecasts are recovered as expectations under the same formulation.

\paragraph{From one-hot images to probability tensors.}
We relax the one-hot constraint $v_{i,j,k}\in\{0,1\}$ to a probability-simplex output $p_{i,j,k}\in[0,1]$ with $\sum_{j=1}^h p_{i,j,k}=1$. Define
\begin{equation}
\mathcal{P} \triangleq \left\{ p \in [0,1]^{c\times h \times L} \ \middle| \ \sum_{j=1}^{h} p_{i,j,k} = 1,\ \forall i\in[c],k\in[L] \right\},
\label{eq:prob_space}
\end{equation}
so that $\mathcal{V}\subset\mathcal{P}$ and a one-hot tensor $\mathbf{v}$ is a degenerate case of $\mathbf{p}\in\mathcal{P}$. In practice, the prediction head of ViTime applies a softmax over the $j$-dimension to produce $\mathbf{p}\in\mathcal{P}$.

Let the bin width be $\Delta \triangleq 2\text{MS}/h$, edges $b_0=-\text{MS}$, $b_j=-\text{MS}+j\Delta$ for $j=1,\dots,h$, bins $B_j=[b_{j-1},b_j)$, and centers $c_j=(j-0.5)\Delta-\text{MS}$.

\paragraph{Probabilistic forecast (distributional output).}
For each $(i,k)$, ViTime interprets the $h$-way probability vector $p_{i,1:h,k}$ as a histogram (mixture-of-uniforms) predictive distribution on $[-\text{MS},\text{MS}]$ with
\begin{equation}
f_{i,k}(s) \triangleq \sum_{j=1}^{h} \frac{p_{i,j,k}}{\Delta}\,\mathbf{1}\!\left\{ s \in B_j \right\}, 
\qquad
F_{i,k}(s) \triangleq \sum_{m=1}^{j-1} p_{i,m,k} + p_{i,j,k}\,\frac{s-b_{j-1}}{\Delta}, \ \ s\in B_j,
\label{eq:pdf_cdf}
\end{equation}
where $F_{i,k}(s)=0$ for $s<-\text{MS}$ and $F_{i,k}(s)=1$ for $s\ge \text{MS}$. This continuous relaxation preserves the geometry of the $j$-axis used by the EMD metric in \Cref{eq:10} and naturally supports uncertainty quantification.

\paragraph{Point forecast as expectation.}
Under \Cref{eq:pdf_cdf}, the predictive mean recovers the inverse mapping in \Cref{eq:11_2} by replacing $v_{i,j,k}$ with $p_{i,j,k}$:
\begin{equation}
\mu_{i,k} \triangleq \mathbb{E}[S_{i,k}] 
= \sum_{j=1}^{h} c_j\, p_{i,j,k}
= \sum_{j = 1}^{h} \left( (j - 0.5)\frac{2\text{MS}}{h} - \text{MS} \right) p_{i,j,k}.
\label{eq:mean_point}
\end{equation}
Thus, when only a point forecast is required, ViTime outputs $\mu_{i,k}$, which coincides with \Cref{eq:11_2}. The predictive variance for uncertainty summaries is
\begin{equation}
\mathrm{Var}[S_{i,k}] 
= \sum_{j=1}^{h} p_{i,j,k}\Big( (c_j-\mu_{i,k})^2 + \frac{\Delta^2}{12} \Big).
\label{eq:var_formula}
\end{equation}

\paragraph{Quantiles and prediction intervals.}
Define cumulative weights $C_{i,k}(j) \triangleq \sum_{m=1}^{j} p_{i,m,k}$ with $C_{i,k}(0)=0$. For $\tau\in(0,1)$, let
\begin{equation}
J_{\tau} \triangleq \min\left\{ j\in[h] \ \middle| \ C_{i,k}(j)\ge \tau \right\}, 
\qquad 
Q_{i,k}(\tau) = b_{J_{\tau}-1} + \Delta \cdot \frac{\tau - C_{i,k}(J_{\tau}-1)}{p_{i,J_{\tau},k}}.
\label{eq:quantile}
\end{equation}
Then a central $(1-\alpha)$ prediction interval is $[\,Q_{i,k}(\alpha/2),\ Q_{i,k}(1-\alpha/2)\,]$.

\subsection{Evaluation Metrics}

Existing numerical fitting-based TSF foundation models, e.g., TimesFM, are typically pretrained on comprehensive real-world datasets. While the specific nomenclature of the testing set may not be explicitly listed in the training data, there is a possibility that the real-world dataset encompasses similar data sources, potentially leading to issues of test set leakage. To address this concern and ensure a more rigorous and equitable experimental comparison, we propose novel metrics for \textbf{zero-shot evaluation}. For point forecasts, we introduce the Rescale-Mean Absolute Error (ReMAE) and Rescale-Mean Squared Error (ReMSE). To evaluate probabilistic forecasts, we extend this approach with the Rescale-Continuous Ranked Probability Score (ReCRPS).

The fundamental principle underlying these metrics involves rescaling the test dataset across various time resolutions, as illustrated in \Cref{eq:17}. The time series interpolation (TSI) method is employed to rescale the original test time series of length T to $\beta T$:

\begin{equation}
%\footnotesize
S_{\beta T} = TSI\left( S_{T},\text{rescaling factor} = \beta \right).
\label{eq:17}
\end{equation}

For point forecasts, the formulas for ReMAE and ReMSE are based on the standard Mean Absolute Error (MAE) and Mean Squared Error (MSE):
\begin{equation}
% \footnotesize
\begin{split}
ReMSE = \frac{\sum_{\beta \in \textbf{U}}^{}{MSE\left( S_{\beta T}^{'},S_{\beta T} \right)}}{len(\textbf{U})}
\end{split}
\end{equation}

\begin{equation}
% \footnotesize
\begin{split}
 ReMAE & = \frac{\sum_{\beta \in \textbf{U}}^{}{MAE\left( S_{\beta T}^{'},S_{\beta T} \right)}}{len(\textbf{U})}
\end{split}
\end{equation}

For probabilistic forecasts, the Continuous Ranked Probability Score (CRPS) is considered, which generalizes the MAE by comparing the entire predictive distribution with the ground truth. The CRPS is defined as:
\begin{equation}
\text{CRPS}(F, y) = \int_{-\infty}^{\infty} \left(F(x) - \mathds{1}_{\{x \geq y\}}\right)^2 dx
\end{equation}

where $F$ is the predicted cumulative distribution function (CDF) and $y$ is the observed value, with $\mathds{1}$ being the Heaviside step function. Following the same rescaling principle, we define ReCRPS as:
\begin{equation}
% \footnotesize
\begin{split}
 ReCRPS & = \frac{\sum_{\beta \in \textbf{U}}^{}{CRPS\left( F_{\beta T}, S_{\beta T} \right)}}{len(\textbf{U})}
\end{split}
\label{ReCRPS}
\end{equation}
In \Cref{eq:17}-\Cref{ReCRPS}, $S_{\beta T}^{'}$ represents the point prediction, $F_{\beta T}$ is the predicted distribution for the rescaled series, $S_{\beta T}$ is the rescaled ground truth, and $\textbf{U}$ is the set of scaling factors:
\begin{equation}
% \footnotesize
\begin{split}
 \textbf{U} & = [0.5,\ 0.66,1,1.5,2]\,.
\end{split}
\end{equation}

% \textbf{Why Using Re-Metrics as metrics?}

The proposed ReMSE, ReMAE, and ReCRPS metrics address a critical challenge in evaluating time series foundation models: mitigating test set leakage caused by overlapping data distributions between training and testing phases. By rescaling the test set across multiple resolutions ($\beta \in \mathbf{U}$) via time series interpolation (TSI, \Cref{eq:17}), these metrics introduce synthetic scale variations that disrupt exact temporal patterns, thereby reducing the risk of evaluating models on memorized or overfitted data. This approach ensures a leakage-resistant evaluation framework, as models must generalize to unseen scales rather than relying on spurious correlations learned from the training set.

A key implication of this work is the necessity of scale-agnostic evaluation in time series forecasting. Traditional single-scale metrics like MSE, MAE, and CRPS risk conflating memorization with true generalization, particularly when training data encompasses diverse real-world sources. By averaging errors across $\beta$, our rescaled metrics incentivize models to capture invariant temporal structures--such as periodicity, trends, and noise resilience--that persist across resolutions. This applies to both the accuracy of point forecasts and the calibration of predicted uncertainty. This aligns with recent theoretical insights in self-supervised learning, where augmentation-induced invariance improves out-of-distribution robustness \citep{yao2022improving}. It is worth noting that in the fine-tuning study, i.e., \Cref{ft}, in order to ensure the consistency of the distribution between the test data and the fine-tuning data, we still adopt the traditional MSE/MAE evaluation metrics.

\section{Computational Experiments}
\label{sec:exp}
\subsection{Experimental Configuration}

\noindent
\textbf{Datasets}

Seven popular publicly accessible datasets: Electricity, Traffic,
Weather, ETTh1, ETTh2, ETTm1, and ETTm2 \citep{bib12} are employed in computational experiments to validate the effectiveness of the proposed ViTime.

\noindent
\textbf{Model setup}

The ViTime model is developed using data sequences synthesized by RealTS. During each training epoch, 20,000 sequences are randomly generated. After training, zero-shot testing and fine-tuning are implemented accordingly. For multivariate time series, a channel-independent strategy \citep{bib13} is applied, predicting each variable separately before combining them to form the final multivariate forecast.

The default parameters for the ViTime model are set as follows: $h = 128$, $MS = 3.5$, maximum lookback window $T = 512$, and maximum prediction length $l = 720$. For a fair comparison, all considered models employ a lookback length of 512 to forecast future sequences of lengths $96, 192, 336, 720$. Additionally, we adopt the Adam optimizer \citep{kingma2014adam} with a learning rate of 2x10-4 during the training process. More details on training are available in Supplementary \Cref{sec:training}.

 To further enhance temporal resolution and information density practically, input sequences are initially interpolated to twice their original length (2L) and the prediction results are interpolated back to the original length. This interpolation increases temporal granularity, facilitating more precise pattern extraction. Furthermore, Gaussian blurring with kernel size of 31 applied to the binary images before processing by ViTime significantly reduces sparsity and increases local information density, thereby reinforcing the theoretical advantages outlined in \Cref{advantage of Vitime}. 

% By applying SNR enhancement techniques, the pre-training time of ViTime has been reduced by about 40\%.

\subsection{Comparison of ViTime to SOTA TSF Benchmarks Under Zero-shot Point Forecasting Setting}
\label{4.2}

\begin{table*}[htbp]
\centering
\label{tab:zero-shot}
\begin{subtable}{\textwidth}
\centering
\caption{Experimental Results With Metrics of MSE and MAE}
\label{tab:scal1_results}
\resizebox{\textwidth}{!}{%
\begin{tabular}{l | cc | cc | cc | cc | cc | cc | cc}
\hline
\textbf{Model} & \multicolumn{2}{c|}{\textbf{ETTh1}} & \multicolumn{2}{c|}{\textbf{ETTh2}} & \multicolumn{2}{c|}{\textbf{ETTm1}} & \multicolumn{2}{c|}{\textbf{ETTm2}} & \multicolumn{2}{c|}{\textbf{Electricity}} & \multicolumn{2}{c|}{\textbf{Traffic}} & \multicolumn{2}{c}{\textbf{Weather}} \\
 & MSE & MAE & MSE & MAE & MSE & MAE & MSE & MAE & MSE & MAE & MSE & MAE & MSE & MAE \\
\hline
\multicolumn{15}{l}{\textbf{\textit{Numerical Models}}} \\
Moirai & 0.434 & 0.439 & 0.346 & 0.382 & \underline{0.382} & 0.388 & \underline{0.272} & 0.321 & 0.188 & 0.274 & 1.779 & 0.766 & 0.238 & 0.261 \\
Moment & 0.691 & 0.585 & 0.341 & \underline{0.350} & 0.845 & 0.580 & \textbf{0.257} & \underline{0.317} & 0.837 & 0.763 & 1.375 & 0.788 & 0.348 & 0.429 \\
VisionTS & \textbf{0.390} & \underline{0.414} & 0.333 & 0.375 & \textbf{0.374} & \textbf{0.372} & 0.282 & 0.321 & 0.207 & 0.294 & 0.443 & 0.284 & 0.269 & 0.292 \\
TimesFM & 0.442 & 0.430 & 0.356 & 0.389 & 0.424 & 0.419 & 0.328 & 0.347 & \underline{0.151} & \underline{0.245} & \underline{0.369} & \underline{0.245} & \underline{0.229} & \underline{0.255} \\
PatchTST-ZS & 1.237 & 0.831 & 0.903 & 0.710 & 1.356 & 0.825 & 0.839 & 0.622 & 1.311 & 0.885 & 1.873 & 0.945 & 0.907 & 0.588 \\
\hline
\multicolumn{15}{l}{\textbf{\textit{Vision-Assisted Models}}} \\
ViTime\mbox{-}TFM & \underline{0.398} & \textbf{0.387} & \underline{0.321} & \underline{0.350} & \underline{0.382} & \underline{0.377} & 0.295 & \textbf{0.312} & \textbf{0.136} & \textbf{0.221} & \textbf{0.332} & \textbf{0.221} & \textbf{0.206} & \textbf{0.229} \\
% ViTime\mbox{-}TFM\mbox{-}RTS & 0.446 & 0.421 & 0.256 & 0.309 & 0.366 & 0.375 & 0.262 & 0.306 & 0.191 & 0.279 & 0.449 & 0.288 & 0.241 & 0.266 \\
%ViTime\mbox{-}1072 & 0.516 & 0.465 & 0.333 & 0.375 & 0.520 & 0.438 & 0.313 & 0.353 & 0.235 & 0.313 & 0.682 & 0.356 & 0.269 & 0.282 \\
ViTime & 0.545 & 0.449 & \textbf{0.284} & \textbf{0.344} & 0.409 & 0.398 & 0.302 & 0.341 & 0.196 & 0.280 & 0.730 & 0.386 & 0.286 & 0.289 \\
\hline
\end{tabular}%
}
\end{subtable}

\vspace{1em}

\begin{subtable}{\textwidth}
\centering
\caption{Experimental Results With Metrics of ReMSE and ReMAE}
\label{tab:average_results}
\resizebox{\textwidth}{!}{%
\begin{tabular}{l | cc | cc | cc | cc | cc | cc | cc}
\hline
\textbf{Model} & \multicolumn{2}{c|}{\textbf{ETTh1}} & \multicolumn{2}{c|}{\textbf{ETTh2}} & \multicolumn{2}{c|}{\textbf{ETTm1}} & \multicolumn{2}{c|}{\textbf{ETTm2}} & \multicolumn{2}{c|}{\textbf{Electricity}} & \multicolumn{2}{c|}{\textbf{Traffic}} & \multicolumn{2}{c}{\textbf{Weather}} \\
& ReMSE & ReMAE & ReMSE & ReMAE & ReMSE & ReMAE & ReMSE & ReMAE & ReMSE & ReMAE & ReMSE & ReMAE & ReMSE & ReMAE \\
\hline
\multicolumn{15}{l}{\textbf{\textit{Numerical Models}}} \\
Moirai & 1.144 & 0.722 & 0.754 & 0.467 & 1.448 & 0.849 & 0.455 & 0.397 & 0.859 & 0.676 & 1.416 & 0.894 & 0.706 & 0.414 \\
Moment & 1.089 & 1.240 & 0.498 & \textbf{0.321} & 0.894 & 0.618 & 0.542 & 0.582 & 0.907 & 0.743 & 1.138 & 0.69 & 0.545 & 0.349 \\
VisionTS & 0.988 & 1.016 & 0.524 & 0.350 & 0.873 & 0.559 & 0.773 & 0.516 & 0.851 & 0.669 & 1.173 & 0.669 & 0.519 & 0.327 \\
TimesFM & 0.490 & 0.467 & 0.374 & 0.396 & 0.671 & 0.503 & 0.355 & 0.359 & 0.367 & 0.404 & 0.744 & 0.519 & 0.284 & 0.306 \\
PatchTST-ZS & 1.477 & 0.903 & 1.097 & 0.775 & 1.295 & 0.798 & 0.805 & 0.613 & 1.414 & 0.921 & 2.054 & 1.002 & 0.911 & 0.584 \\
\hline
\multicolumn{15}{l}{\textbf{\textit{Vision-Assisted Models}}} \\
ViTime-TFM & \underline{0.481} & \underline{0.451} & \underline{0.314} & 0.354 & \underline{0.519} & \underline{0.455} & \underline{0.276} & \underline{0.325} & \underline{0.301} & \underline{0.350} & \textbf{0.718} & \underline{0.460} & \underline{0.237} & \underline{0.261} \\
ViTime & \textbf{0.457} & \textbf{0.431} & \textbf{0.29} & \underline{0.346} & \textbf{0.473} & \textbf{0.420} & \textbf{0.237} & \textbf{0.301} & \textbf{0.225} & \textbf{0.308} & \underline{0.730} & \textbf{0.400} & \textbf{0.203} & \textbf{0.228} \\
\hline
\end{tabular}%
}
\end{subtable}

\caption{Overall Experimental Results Comparison}
\label{tab:main_comparison}
\end{table*}

\begin{figure}[ht]
\centering
\includegraphics[width=0.35\textwidth]{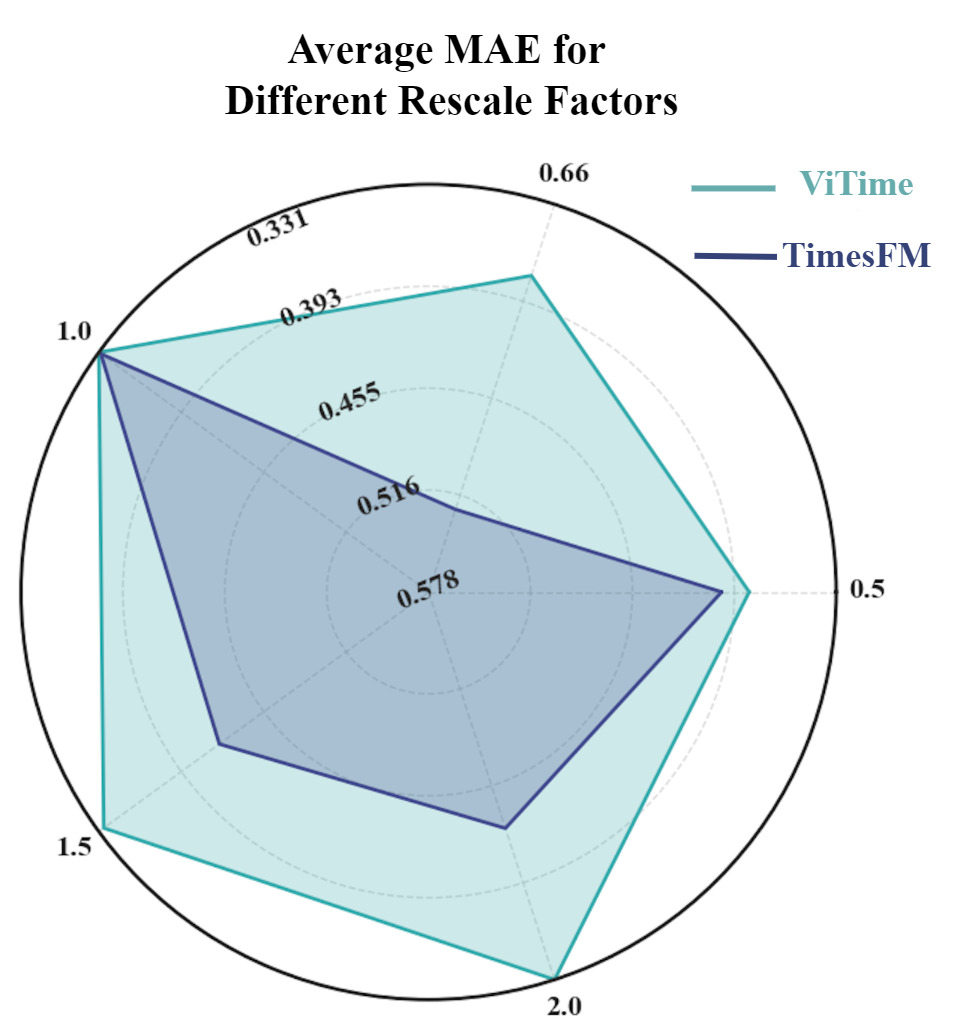}
\caption{Radar plots comparing the average MAE of ViTime and TimesFM across different rescale factors. The radial axis represents MAE, with lower values (larger radius) indicating better performance. Each axis corresponds to a specific rescale factor.}
\label{fig3}
\end{figure}

For zero-shot performance comparison, we consider four variants: (1) ViTime - our proposed TSF foundation model, trained on generative data from RealTS and adopting a zero-shot paradigm; (2) ViTime-TFM - a variant of ViTime, which is trained on the same public available dataset as TimesFM. (See Supplementary \Cref{sec:data_normalizeTFM} for more information.) (3) PatchTST-ZS - trained on the same RealTS-generated data as ViTime, using a numerical fitting paradigm to create a zero-shot version of PatchTST. (4) Moirai \citep{woo2024unified}, Moment \citep{goswami2024moment}, VisionTS \citep{chen2024visionts} and TimesFM \citep{bib20} - powerful TSF foundation model pre-trained on extensive real-world datasets. All models employ a lookback length of 512 to ensure a fair comparison. Details of benchmark model configurations are reported in Supplementary \Cref{sec:benchmark_configs}.

%% UPDATED TEXT START
\Cref{tab:main_comparison} summarizes the zero-shot performance of all models using traditional metrics (MSE, MAE) and our proposed scale-invariant metrics (ReMSE, ReMAE). As shown in \Cref{tab:scal1_results}, our vision-assisted models demonstrate highly competitive performance. ViTime achieves the best results on the ETTh2, ETTm2, and Weather datasets, while its variant, ViTime-TFM, secures the top performance on Electricity and Traffic datasets. Notably, ViTime-TFM, which shares the same training data as TimesFM, consistently outperforms it on most datasets, underscoring the inherent advantages of our vision-based modeling approach.

The superiority of ViTime becomes even more pronounced when evaluated with scale-invariant metrics, as shown in \Cref{tab:average_results}. ViTime demonstrates remarkable dominance by achieving the best ReMSE or ReMAE on 11 out of 14 evaluation settings. This highlights its robust generalization ability across different temporal resolutions in zero-shot scenarios. Furthermore, ViTime significantly outperforms PatchTST-ZS across all datasets and metrics, confirming the effectiveness of visual intelligence strategies over numerical fitting for zero-shot forecasting. The strong performance of ViTime on ReMSE and ReMAE, compared to the still-strong but less consistent performance of ViTime-TFM, suggests that the synthetic training data from RealTS is crucial for enhancing zero-shot generalization across varying temporal scales.
%% UPDATED TEXT END

To further assess robustness, \Cref{fig3} presents the performance across different rescaling factors. TimesFM exhibits optimal accuracy only at the original scale ($\beta=1$), suffering significant degradation when evaluated at other scales. Such behavior indicates sensitivity to scale-specific patterns and suggests potential data leakage from the original resolution. In contrast, ViTime maintains consistently robust forecasting performance across all rescaling factors, as evidenced by stable ReMSE and ReMAE metrics. This illustrates ViTime's ability to learn intrinsic temporal relationships independent of specific time resolutions, further reinforcing the robustness and generalization benefits of vision-based modeling trained on RealTS data.

\paragraph{Large-scale benchmark note.}
A large-scale zero-shot evaluation on the community GIFT-EVAL benchmark \citep{aksu2024giftevalbenchmarkgeneraltime} is provided in Supplementary \Cref{app:gift-eval}.

\subsection{Comparison of ViTime to SOTA TSF Benchmarks Under Zero-shot Probabilistic Forecasting Settings}
\label{crps}

\begin{table*}[!htbp]
\centering
\caption{Comparison of probabilistic forecasting performance. Within each scenario, the best results (lower is better) for each dataset are \textbf{bolded}.}
\renewcommand{\arraystretch}{0.95}
\footnotesize
\begin{tabular}{@{}lccccccc@{}}
\toprule
Method & ETTh1 & ETTh2 & ETTm1 & ETTm2 & Electricity & Traffic & Weather \\
\midrule
\multicolumn{8}{l}{\textbf{\textit{CRPS}}} \\
Moirai & 0.506 & \textbf{0.274} & 0.538 & 0.309 & 0.522 & 0.626 & 0.506 \\

Lag-Llama & 0.441 & 0.401 & 0.435 & 0.432 & 0.470 & 0.548 & 0.394 \\
ViTime & \textbf{0.356} & 0.319 & \textbf{0.344} & \textbf{0.286} & \textbf{0.267} & \textbf{0.327} & \textbf{0.244} \\
\midrule
\multicolumn{8}{l}{\textbf{\textit{ReCRPS}}} \\
Moirai & 0.537 & 0.382 & 0.631 & 0.329 & 0.609 & 0.684 & 0.620 \\

Lag-Llama & 0.478 & 0.442 & 0.463 & 0.420 & 0.514 & 0.629 & 0.377\\
ViTime & \textbf{0.358} & \textbf{0.318} & \textbf{0.346} & \textbf{0.283} & \textbf{0.266} & \textbf{0.324} & \textbf{0.241} \\
\bottomrule
\end{tabular}
\label{tab:updated-crps-comparison}
\end{table*}

For zero-shot probabilistic forecasting, we evaluate three variants: (1) ViTime - our vision-assisted TSF foundation model trained on generative data from RealTS and deployed in a zero-shot paradigm; (2) Moirai \citep{woo2024unified} - a strong TSF foundation model pretrained on large-scale real data; and (3) Lag-Llama \citep{rasul2023lag} - an LLM-based probabilistic forecaster. All models adopt a lookback length of 512 to ensure a fair comparison. We report both CRPS and our rescaling-invariant metric, ReCRPS.

%% UPDATED TEXT START
As summarized in \Cref{tab:updated-crps-comparison}, ViTime delivers state-of-the-art zero-shot probabilistic performance. Under the standard CRPS metric, ViTime achieves the best results on 6 out of 7 datasets, with particularly substantial gains on large multivariate benchmarks. For instance, compared to the strongest baseline, ViTime achieves relative CRPS reductions of approximately 43\% on Electricity, 40\% on Traffic, and 38\% on Weather. On the ETTh2 dataset, while Moirai attains a slightly lower CRPS, this narrow advantage is reversed when scale effects are controlled for. Using the ReCRPS metric, ViTime achieves the best performance on all 7 datasets, demonstrating superior distributional calibration across scales. These results indicate that ViTime not only produces sharper forecasts but also maintains this accuracy and calibration robustness across diverse temporal resolutions, establishing it as a highly reliable zero-shot probabilistic forecaster.
%% UPDATED TEXT END

Overall, ViTime emerges as a robust, accurate, and reliable zero-shot time series forecasting model. The effectiveness of ViTime stems from two key innovations: vision-assisted modeling and synthetic training data generated by RealTS. Together, these features enable ViTime to generalize effectively across heterogeneous datasets and varying temporal scales. The strengths of ViTime are evident in both point and probabilistic forecasting settings. For point forecasts, ViTime delivers strong performance across diverse applications. For probabilistic forecasts, ViTime demonstrates exceptional qualities—it maintains scale-robust calibration and consistently achieves lower CRPS and ReCRPS scores across different datasets and resolutions. These comprehensive results establish ViTime as a dependable zero-shot forecaster that excels in both point estimation and distributional prediction tasks.

\subsection{Comparison of ViTime to SOTA TSF Benchmarks Under Fine-tuning Settings}
\label{ft}

\begin{table*}[!htbp]
\centering
\caption{Comparisons of Fine-tuning forecasting results with MAE. FT is short for fine-tuning. The best MAE results are \textbf{bolded}, and the second best are \underline{underlined}. Standard deviations shown in the second row for ViTime.}
\renewcommand{\arraystretch}{0.85}
\footnotesize
\begin{tabular}{@{}lcccccccc@{}}
\toprule
Method & Data proportion & ETTh1 & ETTh2 & ETTm1 & ETTm2 & Electricity & Traffic & Weather \\
\midrule
TimesFM (FT) & 10\% & 0.426 & 0.410 & 0.388 & 0.334 & - & - & - \\
GPT4TS (FT) & 10\% & 0.542 & 0.431 & 0.466 & 0.343 & \multicolumn{3}{c}{Not Reported} \\
TIME-LLM (FT) & 10\% & 0.522 & 0.394 & 0.426 & 0.323 &  - & - & - \\
\multirow{2}{*}{\textbf{ViTime (FT)}} & \multirow{2}{*}{10\%} & \underline{0.422} & \underline{0.370} & 0.376 & \underline{0.312} & 0.250 & \underline{0.251} & \underline{0.252} \\
& & {\scriptsize (±0.034)} & {\scriptsize (±0.007)} & {\scriptsize (±0.003)} & {\scriptsize (±0.008)} & {\scriptsize (±0.008)} & {\scriptsize (±0.008)} & {\scriptsize (±0.005)} \\
PatchTST & 10\% & 0.542 & 0.431 & 0.466 & 0.343 & 0.268 & 0.286 & 0.283 \\
\midrule
PatchTST & 100\% & 0.434 & 0.381 & 0.382 & 0.317 & 0.253 & 0.264 & 0.264 \\
SiMBA & 100\% & 0.433 & 0.392 & 0.396 & 0.328 & 0.274 & 0.291 & 0.281 \\
TIMESNET & 100\% & 0.450 & 0.427 & 0.406 & 0.333 & 0.295 & 0.336 & 0.286 \\
iTransformer & 100\% & 0.448 & 0.407 & 0.410 & 0.332 & 0.270 & 0.282 & 0.278 \\
TimeMixer & 100\% & 0.423 & 0.384 & \underline{0.376} & 0.316 & \underline{0.246} & 0.263 & 0.262 \\
\multirow{2}{*}{\textbf{ViTime (FT)}} & \multirow{2}{*}{100\%} & \textbf{0.406} & \textbf{0.344} & \textbf{0.366} & \textbf{0.297} & \textbf{0.245} & \textbf{0.248} & \textbf{0.249} \\
& & {\scriptsize (±0.039)} & {\scriptsize (±0.004)} & {\scriptsize (±0.003)} & {\scriptsize (±0.017)} & {\scriptsize (±0.004)} & {\scriptsize (±0.005)} & {\scriptsize (±0.004)} \\
\bottomrule
\end{tabular}
\label{tab:fine-tuning-mae-results-updated}
\end{table*}

\begin{figure}[ht]
    \centering
     \begin{minipage}{0.48\textwidth}
        \centering
        \includegraphics[width=\textwidth]{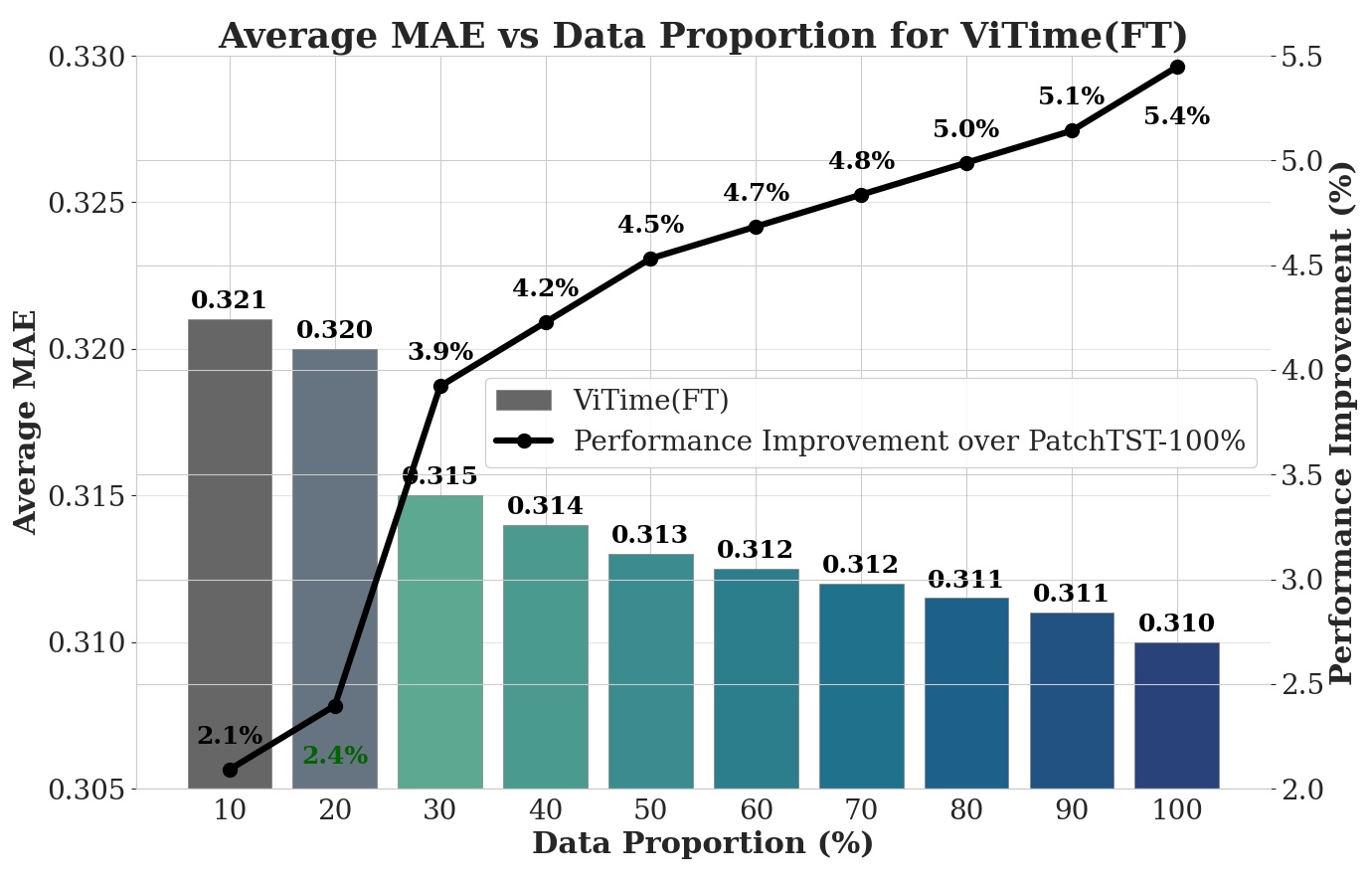}
        \caption{Performance with different fine-tuning data proportion.}
        \label{fig:fig4}
    \end{minipage}%
    \hfill
    \begin{minipage}{0.48\textwidth}
        \centering
        \includegraphics[width=\textwidth]{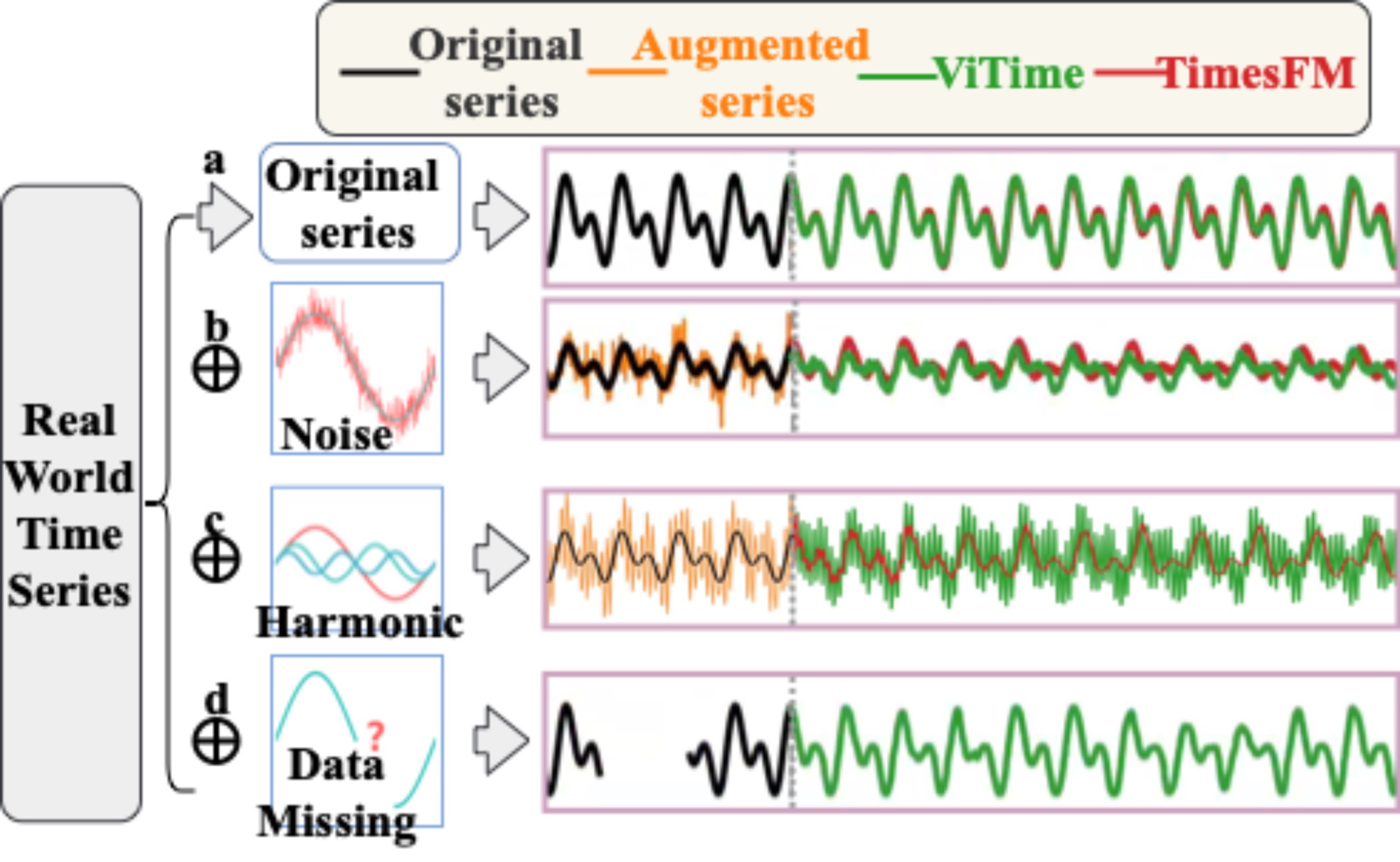}
        \caption{Performance comparison of ViTime versus TimesFM on TSF tasks under various data perturbations: a. Original time series. b. Time series with noises injected. c. Time series with harmonic added. d. Time series with missing data.}
        \label{fig5}
    \end{minipage}

\end{figure}

While zero-shot results demonstrate the predictive capability of ViTime on unseen data, some high-precision TSF tasks might require further fine-tuning studies to enhance prediction accuracy. Thus, this section focuses on fine-tuning studies across various specialized datasets.

To comprehensively evaluate the fine-tuning performance of ViTime, we compare ViTime with other foundation models and SOTA supervised TSF models. Foundational models including TimesFM \citep{bib20}, GPT4TS \citep{bib19}, and TIME-LLM \citep{bib18} are fine-tuned using 10\% of the training data. Recent SOTA-supervised TSF models such as SiMBA \citep{bib15}, TIMESNET \citep{bib16}, iTransformer \citep{liu2023itransformer}, TimeMixer \citep{wang2024timemixer} and PatchTST \citep{bib13} use 100\% of the training data, as reported in their respective papers. We also fine-tune ViTime using between 10\% to 100\% of the training data to provide a comprehensive comparison.

Results of the fine-tuning study are provided in \Cref{{tab:fine-tuning-mae-results-updated}}. ViTime fine-tuned with 10\% of the training data can outperform other foundational models and the latest supervised models updated on 100\% of the training data. Furthermore, as shown in \Cref{fig:fig4}, when the fine-tuning data proportion approaches 100\%, the prediction accuracy of ViTime gradually increases and significantly surpasses all existing models, which suggests that ViTime excels in both low-data-availability environments (10\% fine-tuning) and full-data-availability scenarios (100\% fine-tuning), consistently outperforming both other foundation models and specialized supervised models.

\subsection{Robust Inference and Generalizability Analysis}
\label{sec:RobustInference}

\begin{table*}[!htbp]
\centering
\caption{Comparison of average ReMAE forecasting results.}
\renewcommand{\arraystretch}{0.95}
\begin{adjustbox}{width=\textwidth,center}
\footnotesize
\begin{tabular}{@{}lccccccc@{}}
\toprule
Method & ETTh1 & ETTh2 & ETTm1 & ETTm2 & Electricity & Traffic & Weather \\
\midrule
\multicolumn{8}{l}{\textbf{\textit{GN standard deviations = 0.1}}} \\
TimesFM & 0.471±0.002 & 0.393±0.001 & 0.495±0.012 & 0.352±0.005 & 0.403±0.004 & 0.512±0.016 & 0.280±0.005 \\
ViTime & \textbf{0.449±0.002} & \textbf{0.360±0.001} & \textbf{0.429±0.001} & \textbf{0.329±0.002} & \textbf{0.340±0.006} & \textbf{0.402±0.014} & \textbf{0.279±0.007} \\
\midrule
\multicolumn{8}{l}{\textbf{\textit{GN standard deviations = 0.3}}} \\
TimesFM & 0.473±0.007 & 0.390±0.002 & 0.485±0.010 & 0.344±0.004 & 0.414±0.006 & 0.518±0.011 & \textbf{0.288±0.005} \\
ViTime & \textbf{0.455±0.005} & \textbf{0.363±0.001} & \textbf{0.434±0.002} & \textbf{0.333±0.003} & \textbf{0.355±0.007} & \textbf{0.426±0.016} & 0.288±0.009 \\
\midrule
\multicolumn{8}{l}{\textbf{\textit{GN standard deviations = 0.5}}} \\
TimesFM & 0.479±0.006 & 0.392±0.003 & 0.488±0.009 & 0.345±0.005 & 0.433±0.005 & 0.529±0.012 & 0.295±0.005 \\
ViTime & \textbf{0.466±0.003} & \textbf{0.370±0.002} & \textbf{0.445±0.003} & \textbf{0.337±0.003} & \textbf{0.371±0.007} & \textbf{0.461±0.013} & \textbf{0.294±0.013} \\
\midrule
\multicolumn{8}{l}{\textbf{\textit{GN standard deviations = 0.7}}} \\
TimesFM & \textbf{0.483±0.009} & 0.394±0.003 & 0.492±0.006 & 0.349±0.005 & 0.450±0.005 & 0.543±0.015 & 0.302±0.005 \\
ViTime & 0.484±0.004 & \textbf{0.394±0.003} & \textbf{0.443±0.003} & \textbf{0.346±0.005} & \textbf{0.377±0.006} & \textbf{0.510±0.021} & \textbf{0.301±0.014} \\
\midrule
\multicolumn{8}{l}{\textbf{\textit{GN standard deviations = 1.0}}} \\
TimesFM & \textbf{0.487±0.013} & \textbf{0.399±0.003} & 0.500±0.009 & 0.359±0.006 & 0.475±0.006 & 0.567±0.007 & 0.312±0.007 \\
ViTime & 0.487±0.006 & 0.408±0.005 & \textbf{0.471±0.004} & \textbf{0.358±0.005} & \textbf{0.415±0.010} & \textbf{0.546±0.022} & \textbf{0.305±0.010} \\
\midrule
\multicolumn{8}{l}{\textbf{\textit{DM P = 0.3}}} \\
ViTime & \textbf{0.453±0.002} & \textbf{0.378±0.001} & \textbf{0.432±0.001} & \textbf{0.337±0.003} & \textbf{0.343±0.006} & \textbf{0.417±0.014} & \textbf{0.281±0.013} \\
\bottomrule
\end{tabular}
\end{adjustbox}
\label{tab:average-mae-scenario-comparison-with-std}
\end{table*}

\begin{figure}[ht]
\centering
\includegraphics[width=0.8\textwidth]{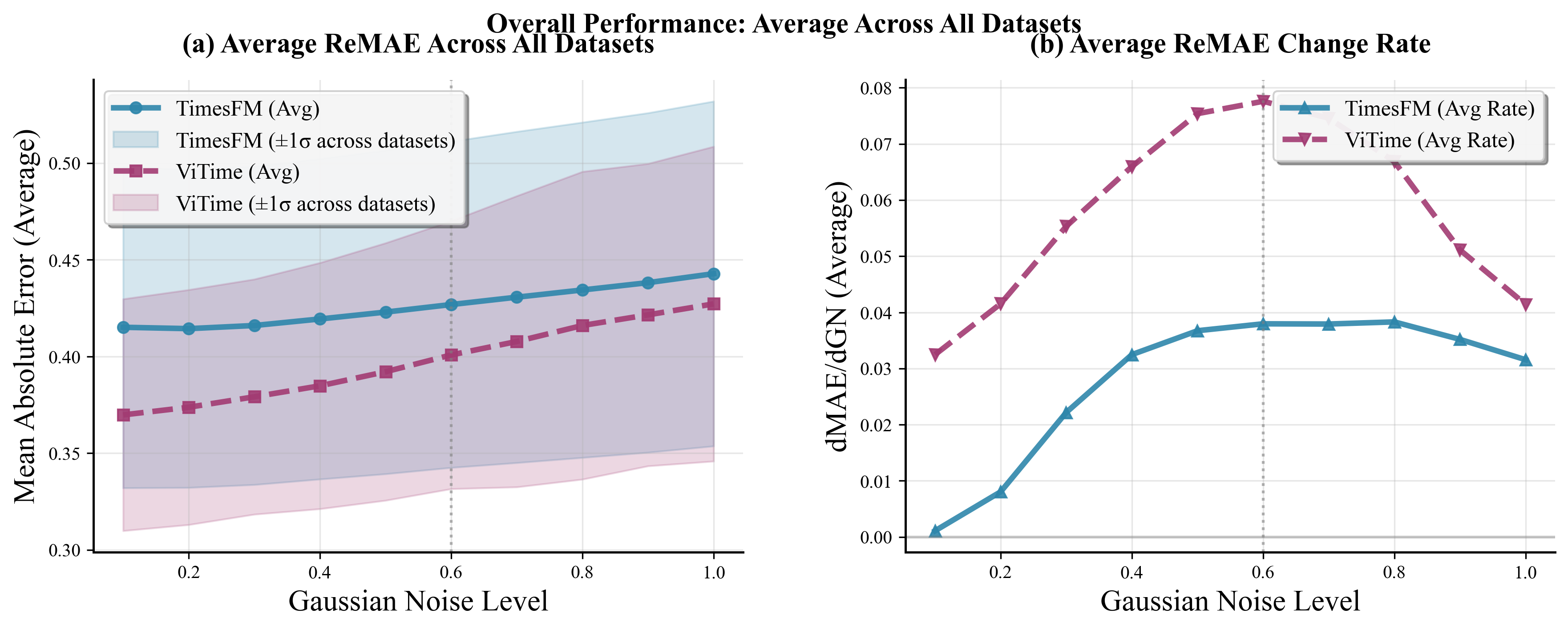}
\caption{Robustness analysis under increasing Gaussian noise levels.}
\label{fig:robustness_analysis}
\end{figure}

To rigorously assess the robustness and generalizability of ViTime, we conducted comprehensive zero-shot experiments comparing its performance against TimesFM under various data perturbation scenarios, including original time series, Gaussian noise (GN), harmonic augmentation, and missing data (DM). These scenarios represent challenges often encountered in practical forecasting tasks, evaluating the capabilities of models in maintaining predictive accuracy amidst compromised data quality.

%==================================================================================
% REVISED TEXT BASED ON REVIEWER'S FEEDBACK AND YOUR RESPONSE STRATEGY
%==================================================================================
In our analysis of robustness against noise, we consider two complementary dimensions:
\begin{enumerate}
    \item \textbf{Absolute Robustness}, defined as the ability of a model to maintain a superior absolute performance (i.e., lower prediction error) across varying levels of noise.
    \item \textbf{Relative Robustness}, which refers to the rate of performance degradation as the noise intensity increases. A model with higher relative robustness exhibits a smaller increase in error for a given increase in noise.
\end{enumerate}

Our extended experimental results as depicted in \Cref{fig:robustness_analysis} and detailed in \Cref{tab:average-mae-scenario-comparison-with-std}, provide a nuanced view of the behavior of models. In terms of \textbf{absolute robustness}, \Cref{fig:robustness_analysis} (a) clearly shows that the ReMAE of ViTime is consistently and significantly lower than that of TimesFM across all tested Gaussian noise levels, from a standard deviation of 0.1 to 1.0. This demonstrates that ViTime reliably delivers more accurate predictions in both low- and high-noise environments. We argue that this sustained performance advantage is a critical aspect of robustness for real-world applications, where the primary goal is to achieve the highest possible accuracy under given conditions.

Regarding \textbf{relative robustness}, the analysis of the performance degradation rate as shown in \Cref{fig:robustness_analysis} (b), confirms the observation that the performance of ViTime is more sensitive to increasing noise. The slope of ViTime's ReMAE curve is generally steeper than that of TimesFM, indicating a lower relative robustness. We posit that this is an expected phenomenon; as noise approaches an infinite magnitude, the predictive power of any model should diminish, and their error rates will converge towards a high value determined by the inherent scale of data.

In summary, while ViTime is more sensitive to increases in noise (lower relative robustness), its foundational performance is so strong that its absolute prediction accuracy remains superior to TimesFM across the entire spectrum of tested noise levels. For instance, as shown in \Cref{tab:average-mae-scenario-comparison-with-std}, even at a high noise level of 0.7, ViTime outperforms or matches TimesFM on all datasets. In practice, an end-user is more concerned with which model provides a more reliable result (lower ReMAE) in a given noisy environment, rather than which model's performance curve is flatter. Therefore, ViTime’s exceptional \textbf{absolute robustness} makes it a more dependable and effective choice for forecasting in the presence of noise.

The most distinctive performance disparity emerges under the scenario of data missing (DM). As shown for the DM P=0.3 case in \Cref{tab:average-mae-scenario-comparison-with-std}, ViTime decisively outperforms the baseline across all tested datasets. TimesFM, being reliant on numerical fitting, would necessitate explicit imputation strategies to handle such data gaps. Conversely, ViTime robustly accommodates missing values by interpreting them as zero-valued pixels within its visual representations. Consequently, ViTime leverages spatial dependencies among available data points effectively, maintaining high prediction accuracy even amidst substantial data sparsity.
%% UPDATED TEXT END

% \subsection{Robust inference study}
% \label{sec:RobustInference}
\begin{figure}[t!]
\centering
\includegraphics[width=0.5\linewidth]{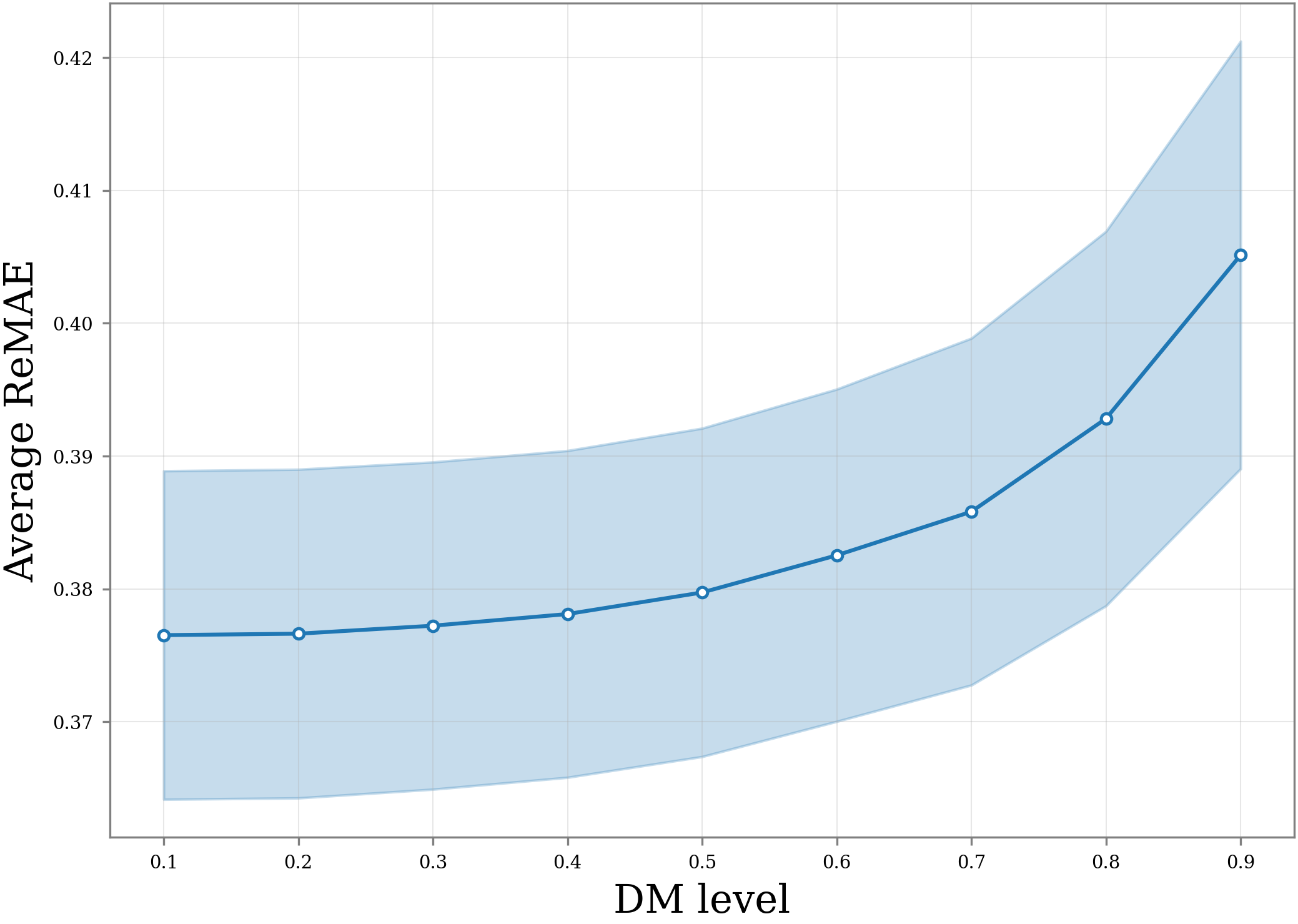}
\caption{Average ReMAE of ViTime across different DM rates.}
\label{fig:missing-data}
\end{figure}

To further validate the robustness of ViTime to varying degrees of missing data, we systematically evaluate its forecasting accuracy across data missing ratios ranging from 10\% to 90\% (\Cref{fig:missing-data}). Results reveal that ViTime sustains remarkable forecasting performance with minimal degradation until data missingness surpasses 50\%, underscoring its exceptional resilience to incomplete data scenarios.

Collectively, these extensive evaluations substantiate the superior robustness of ViTime and generalizability compared to traditional numerical fitting-based methods. Its inherent capability to effectively mitigate perturbations through visual representation learning positions it as a highly promising approach for real-world forecasting applications, where consistent data quality cannot always be guaranteed.

\subsection{Ablation Study}
\label{sec:ablation}

\subsubsection{Ablation of MS}

\label{sec:ablation_ms_main} % Changed label to avoid conflict

Proposition \ref{proposition3} establishes the theoretical relationship between the optimal MS threshold and the variance scaling factor \(k\) in the latent space. For stationary data (\(\mathcal{S} \sim  N(0, \mathbf{I})\), i.e., \(k=1\)), Proposition \ref{proposition3} reveals that with \(h=128\), the optimal MS should be 2.64. However, real-world time series often exhibit non-stationary characteristics. Our pre-analysis of the target variable's variance after input-based standardization (see Supplementary \Cref{data normalize}) demonstrates that the effective \(k\) value for the prediction horizon falls within \([1.5, 2]\) across all benchmark datasets.

\begin{table}[t]
\centering
\caption{Empirical Forecasting Performance under Different MS Values}
\label{t2_main} % Changed label to avoid conflict
\begin{tabular}{c|ccccccc}
\toprule
 & \multicolumn{7}{c}{MS} \\
 & 2.38 & 2.64 & 2.88 & 3.09 & 3.50 & 5.00 & 6.00 \\
\midrule
ReMSE & 0.4423 & 0.4404 & 0.4400 & 0.4348 & \textbf{0.4178} & 0.4780 & 0.4724 \\
ReMAE & 0.3818 & 0.3812 & 0.3811 & 0.3788 & \textbf{0.3759} & 0.3990 & 0.3959 \\
\bottomrule
\end{tabular}
\end{table}

\Cref{t1} provides numerically solved optimal \(MS^*\) values under different \(k\) and \(h\) configurations. For \(h=128\) (our experimental setting) and \(k \in [1.5, 2]\), the theoretical optimal MS ranges between 3.26-3.76. This motivates our selection of \(MS=3.5\) as a balanced configuration within this interval.

To validate this choice, Table \ref{t2_main} presents the average relative ReMSE and ReMAE across six benchmark datasets under zero-shot setting. The results demonstrate that \(MS=3.5\) achieves the minimum forecasting error, reducing ReMSE by 4.1\% and ReMAE by 1.8\% compared to the stationary optimal \(MS=2.64\). This strong alignment between theoretical predictions (Table \ref{t1}) and empirical performance (Table \ref{t2_main}) confirms that our MS selection strategy effectively minimizes system error while accommodating real-world data characteristics.

\subsubsection{Ablation of Loss Function}
\label{sec:ablation_loss_main} % Changed label to avoid conflict

 \begin{table}[htbp]
\centering
\caption{Ablation study of loss function components on prediction performance.}
\label{tab:loss_ablation_main} % Changed label to avoid conflict
\begin{tabular}{lccc}
\toprule
\multirow{2}{*}{Metric} & \multicolumn{3}{c}{Loss Configuration} \\
\cmidrule(lr){2-4}
 & EMD Only & JSD+EMD (Ours) & JSD Only \\
\midrule
Average ReMAE & 0.3941 & \textbf{0.3759} & 0.3956 \\
Average ReMSE  & 0.4586 & \textbf{0.4178} & 0.4637 \\
\bottomrule
\end{tabular}
\end{table}
In this section, we conducted ablation studies on the loss function components of ViTime under zero-shot setting. \Cref{tab:loss_ablation_main} compares model performance under three configurations: (1) EMD alone, (2) our proposed loss function in \Cref{eq:15},where $\alpha = 0.2$ to balance the quantity level, and (3) JSD alone. The results demonstrate that our dual-objective loss achieves optimal performance on both ReMSE and ReMAE.

\subsubsection{Ablation of Other Configuration}

\begin{figure*}[htbp]
    \centering
    \begin{subfigure}[b]{0.3\textwidth}
        \centering
        \includegraphics[width=\textwidth]{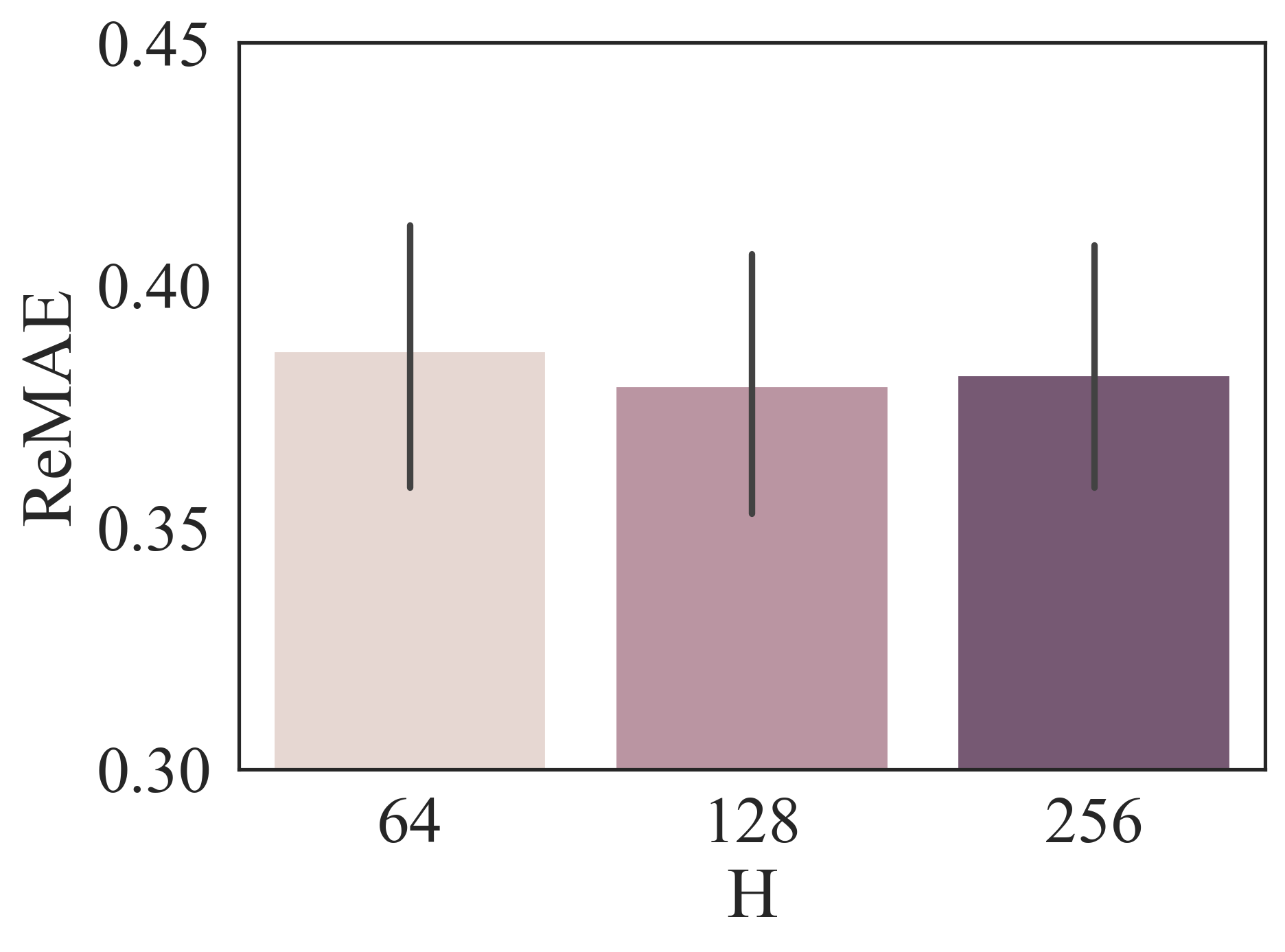}
        \caption{Forecasting performance varying with h}
    \end{subfigure}
    \hfill
    \begin{subfigure}[b]{0.3\textwidth}
        \centering
        \includegraphics[width=\textwidth]{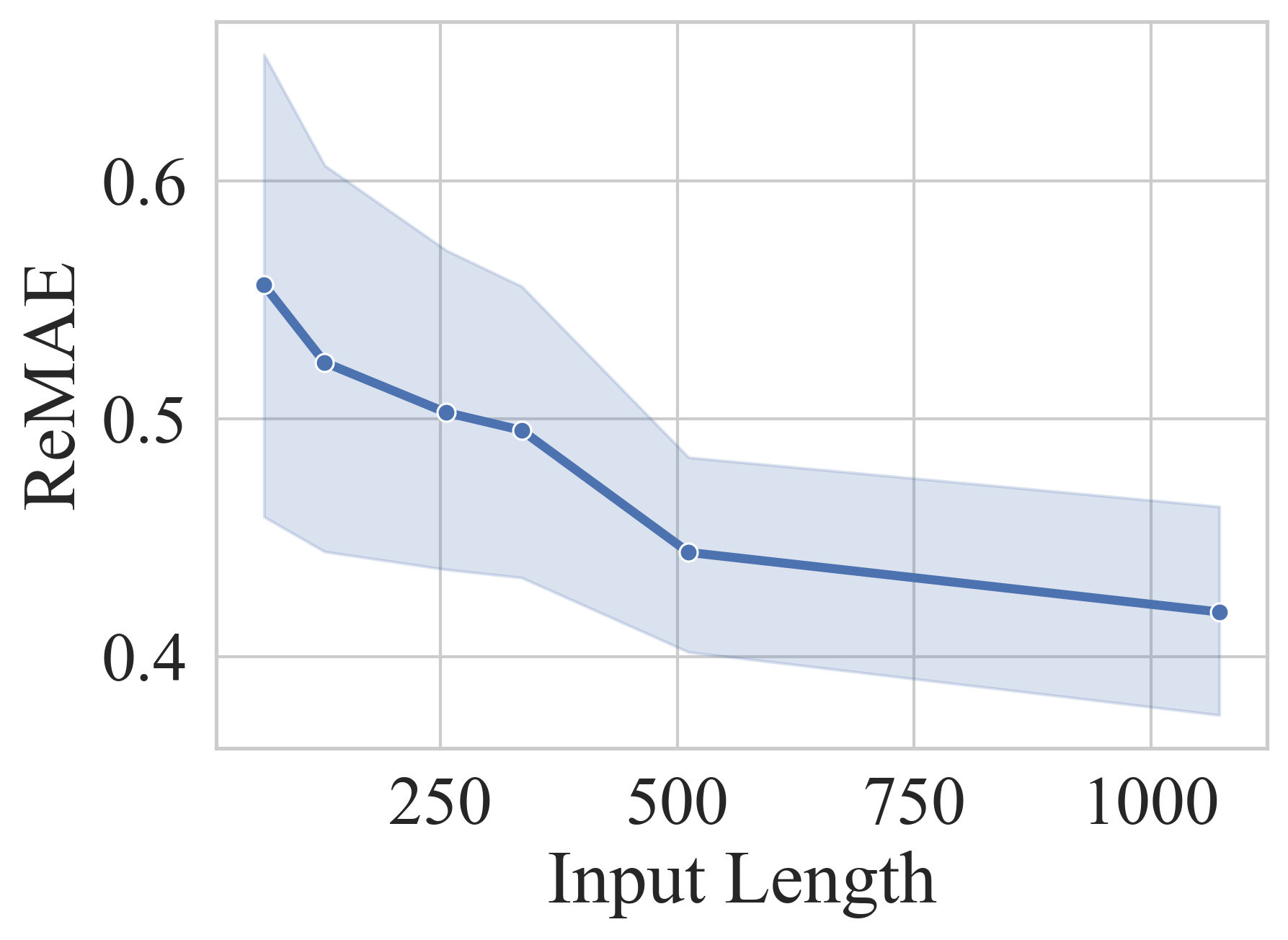}
        \caption{Forecasting performance varying with the length of lookback window}
    \end{subfigure}
    \hfill
    \begin{subfigure}[b]{0.3\textwidth}
        \centering
        \includegraphics[width=\textwidth]{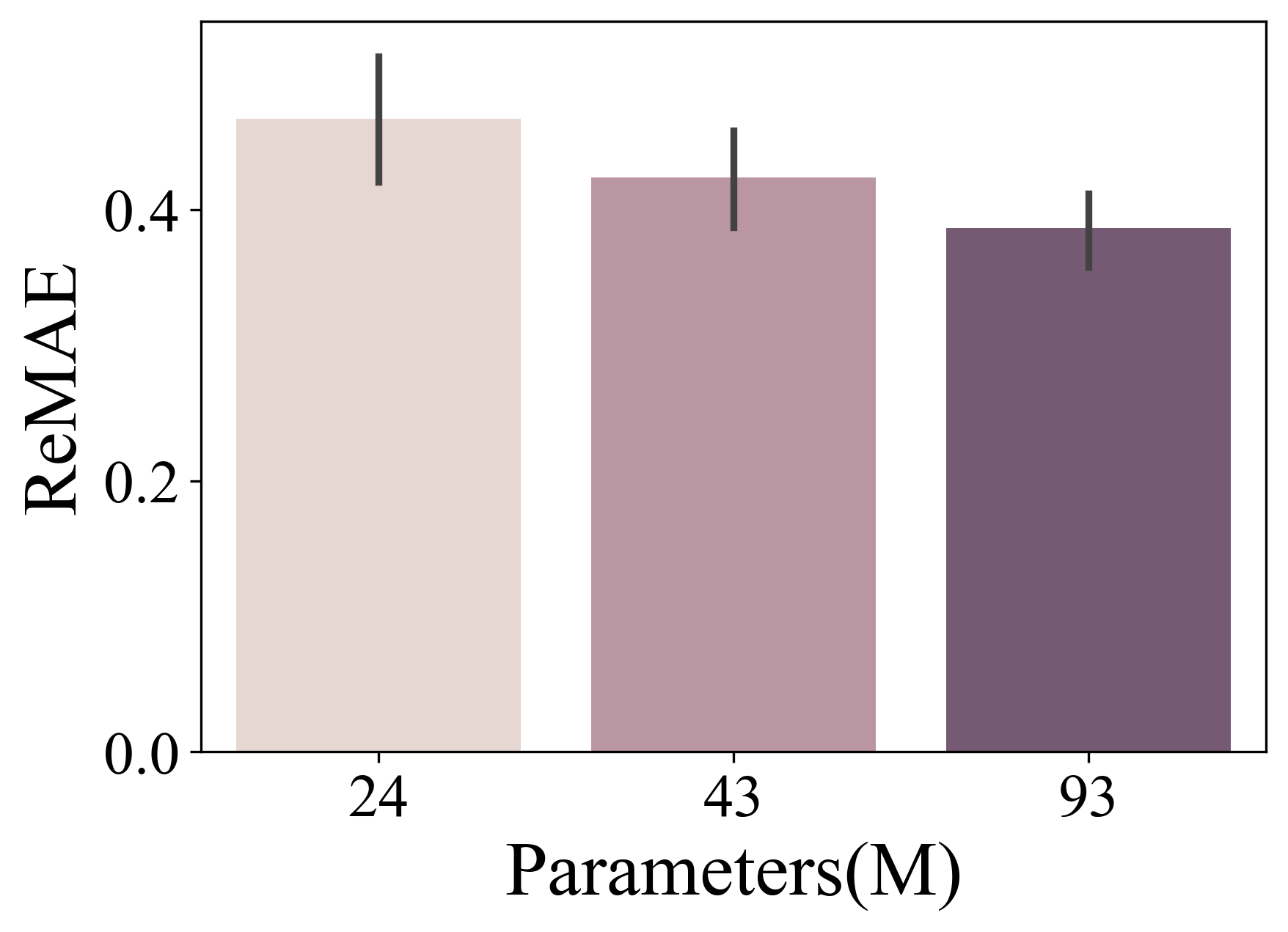}
        \caption{Forecasting performance across different model sizes}
    \end{subfigure}
    \caption{Ablation studies with zero-shot forecasting.}
    \smallskip
    \small{Note: The model size of ViTime used in computational experiments is 93M parameters version.}
    \label{fig:ablation_studies_main} % Changed label to avoid conflict
\end{figure*}
\begin{figure}[!h]
    \centering
    \begin{subfigure}[b]{0.5\linewidth}
        \centering
        \includegraphics[width=\linewidth]{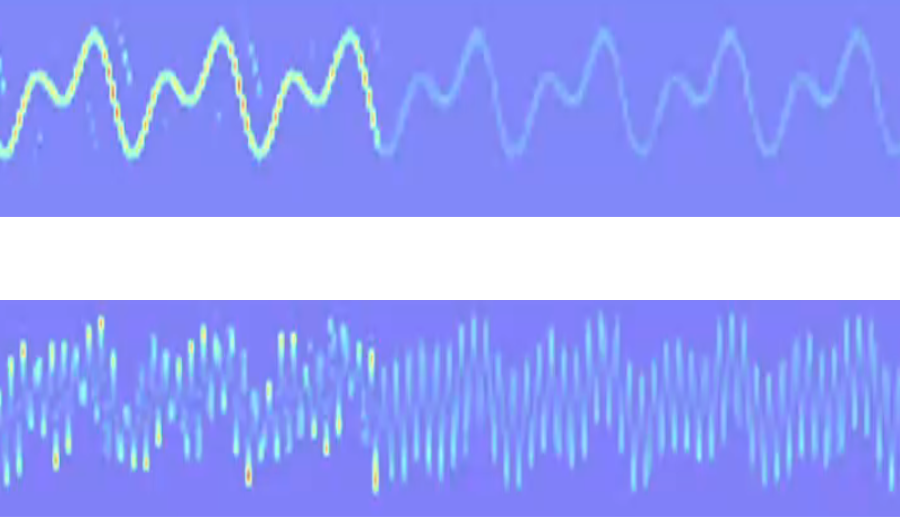}
        \caption{Grad-CAM heatmap showing attention on key trend changes.}
        \label{fig:heatmap_a_appendix} % Changed label
    \end{subfigure}
    \hfill
    \begin{subfigure}[b]{0.5\linewidth}
        \centering
        \includegraphics[width=\linewidth]{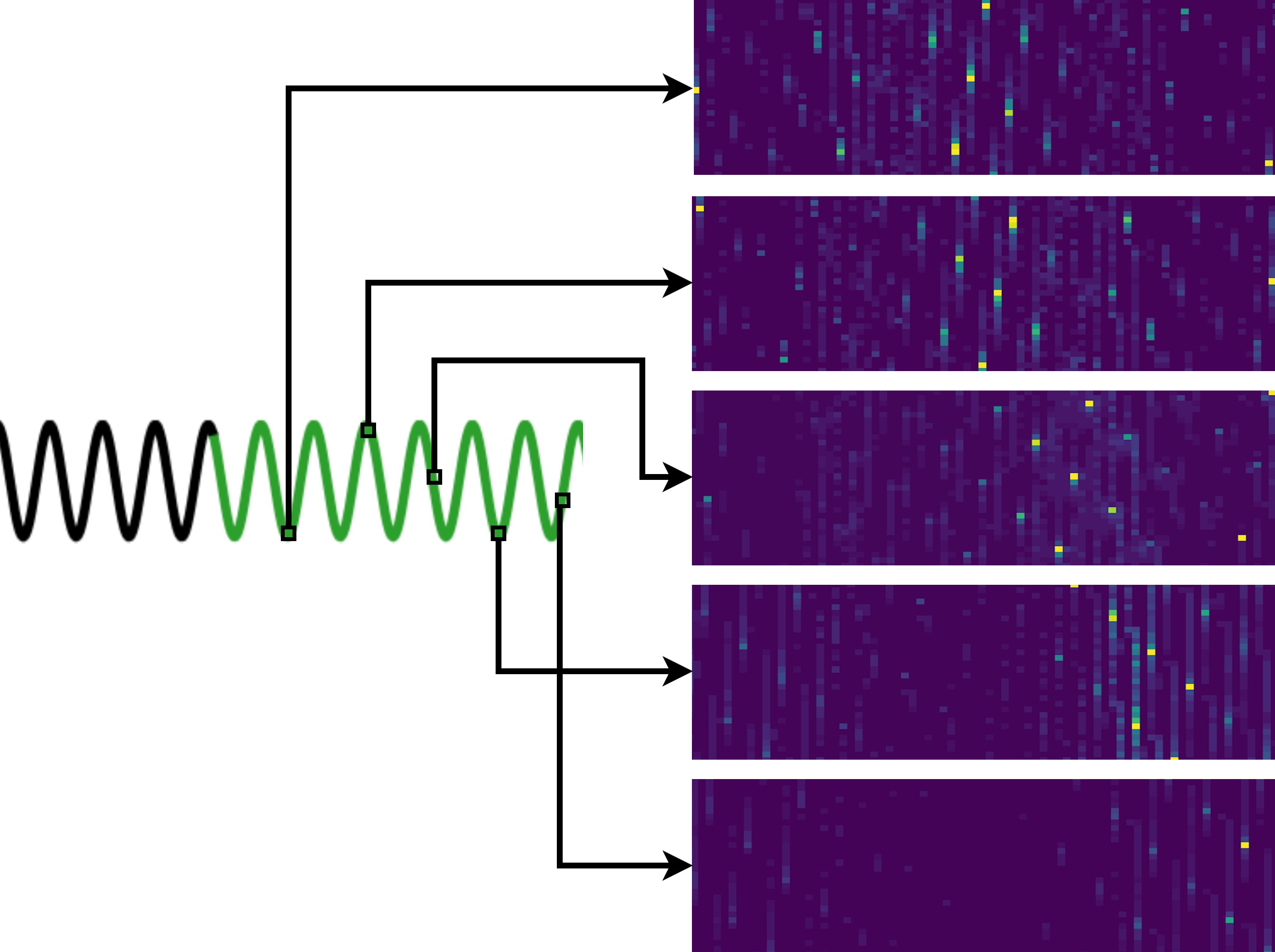}
        \caption{Attention maps at different prediction positions demonstrating temporal dependencies.}
        \label{fig:heatmap_b_appendix} % Changed label
    \end{subfigure}
    \caption{Visualization of ViTime's attention mechanism. Despite not using an autoregressive paradigm, ViTime exhibits sequential processing patterns through its multi-layer self-attention modules.}
    \label{fig:heatmap}
\end{figure}

In this section, we perform several ablation studies to gain deeper
insights into ViTime model configuration. The results are
reported in \Cref{fig:ablation_studies_main}. \Cref{fig:ablation_studies_main} a depicts the influence of varying spatial
resolutions (\emph{h}) on model accuracy. Although increasing \emph{h}
slightly improves the prediction results, the associated computational
cost increases exponentially. Thus, setting \emph{h} to 128 is more
economical and efficient. \Cref{fig:ablation_studies_main} b illustrates the effect of
different lookback window lengths (\emph{T}) on prediction accuracy. It
is evident that a longer lookback window length significantly enhances
the model's prediction accuracy. \Cref{fig:ablation_studies_main} c reports the prediction accuracy
across different model sizes. The data shows that models with more
parameters tend to perform better. Moreover, the proposed ViTime achieves
superior performance with only \textbf{93M} parameters compared with TimesFM, which is over 200M parameters, further demonstrating the efficiency and effectiveness of ViTime.

\subsection{Interpretation of ViTime}

% \begin{figure}[ht]
% \centering
% \includegraphics[width=1\linewidth]{imageAppendix/heatmap (1).png}
% \caption{The averaged Grad-CAM results on input sequence.}
% \label{fig:heatmap}
% \end{figure}

\Cref{fig:heatmap} illustrates the attention mechanism of ViTime through Grad-CAM~\citep{selvaraju2017grad} heatmaps and position-specific attention maps. The Grad-CAM results demonstrate that ViTime focuses strongly on periods of fundamental trend changes. Further analysis through attention maps at different prediction positions reveals an interesting pattern: despite not adopting an autoregressive paradigm, ViTime's multi-layer self-attention modules process information in a temporal sequence. The input data and the predicted results from previous time steps determine the spatiotemporal distribution of predictions at each time step. This aligns with human cognitive patterns, where information is processed from the recent to the distant past while maintaining awareness of known information.

\begin{figure}[h]
    \centering
    \begin{subfigure}[b]{0.45\textwidth}
        \centering
        \includegraphics[width=\textwidth]{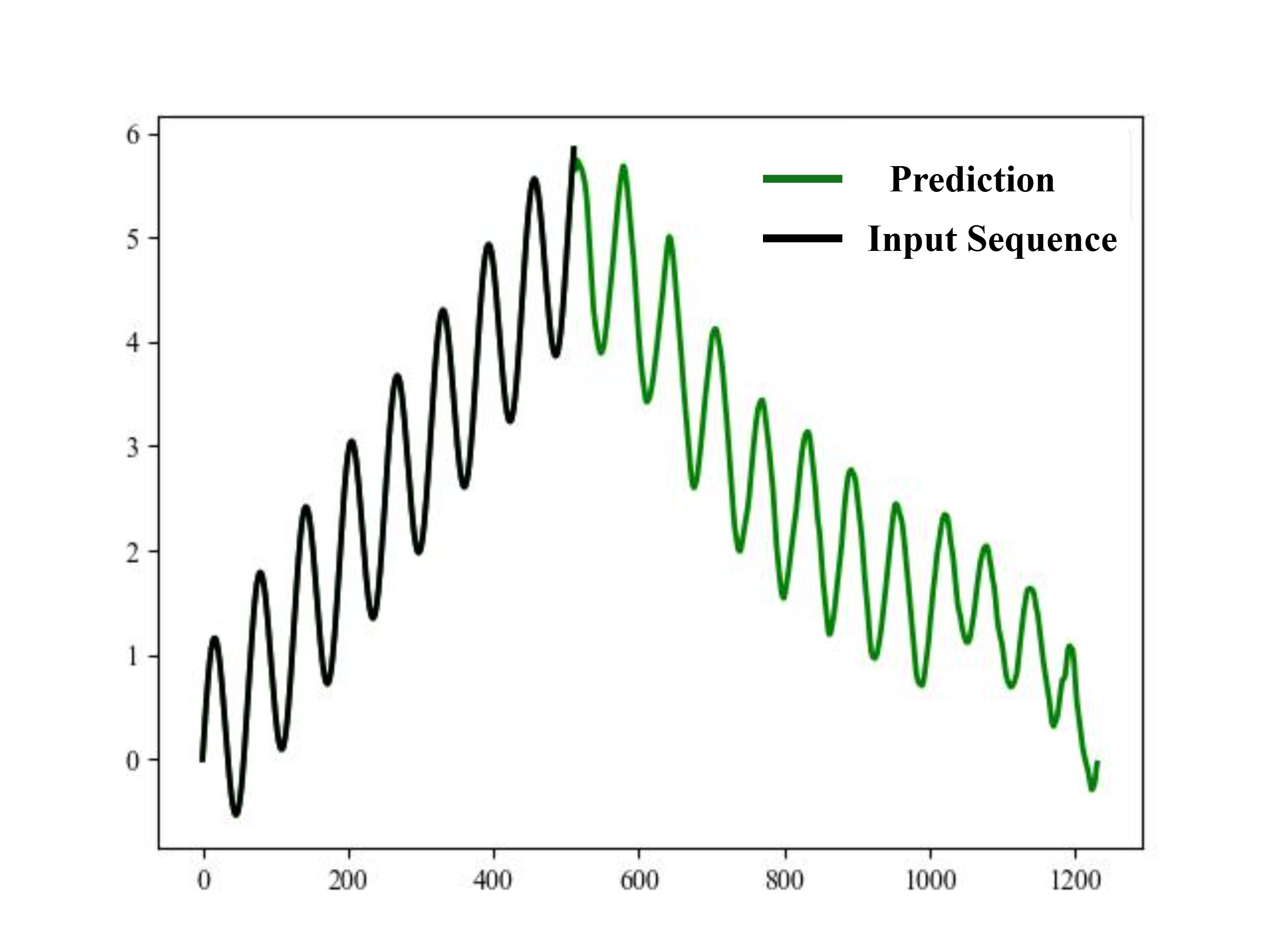}
        \caption{MS=3.5 }
        \label{fig:ms1_true_main} % Changed label
    \end{subfigure}
    \hfill
    \begin{subfigure}[b]{0.45\textwidth}
        \centering
        \includegraphics[width=\textwidth]{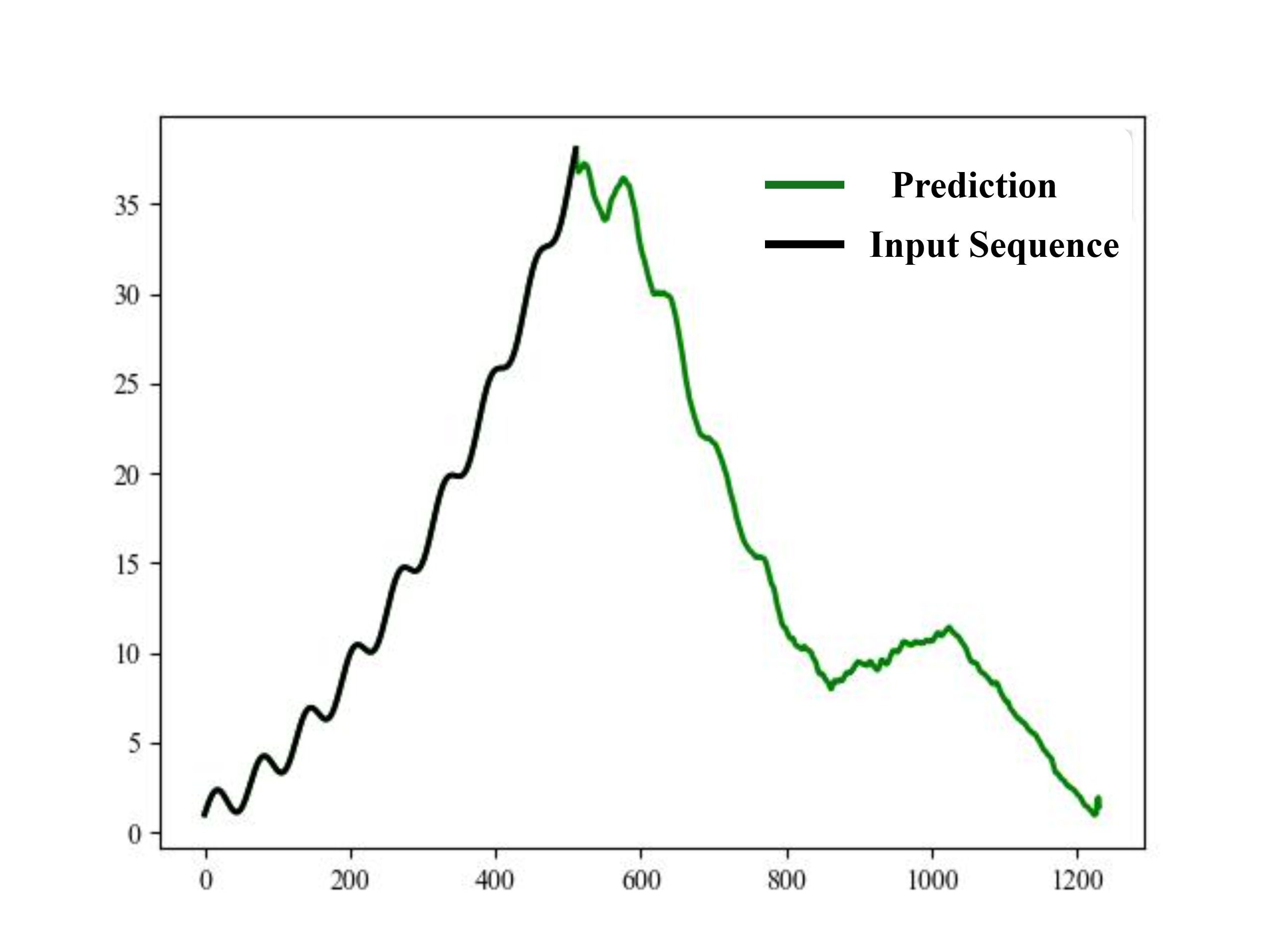}
        \caption{MS=3.5 }
        \label{fig:ms1_pred_main} % Changed label
    \end{subfigure}

    \begin{subfigure}[b]{0.45\textwidth}
        \centering
        \includegraphics[width=\textwidth]{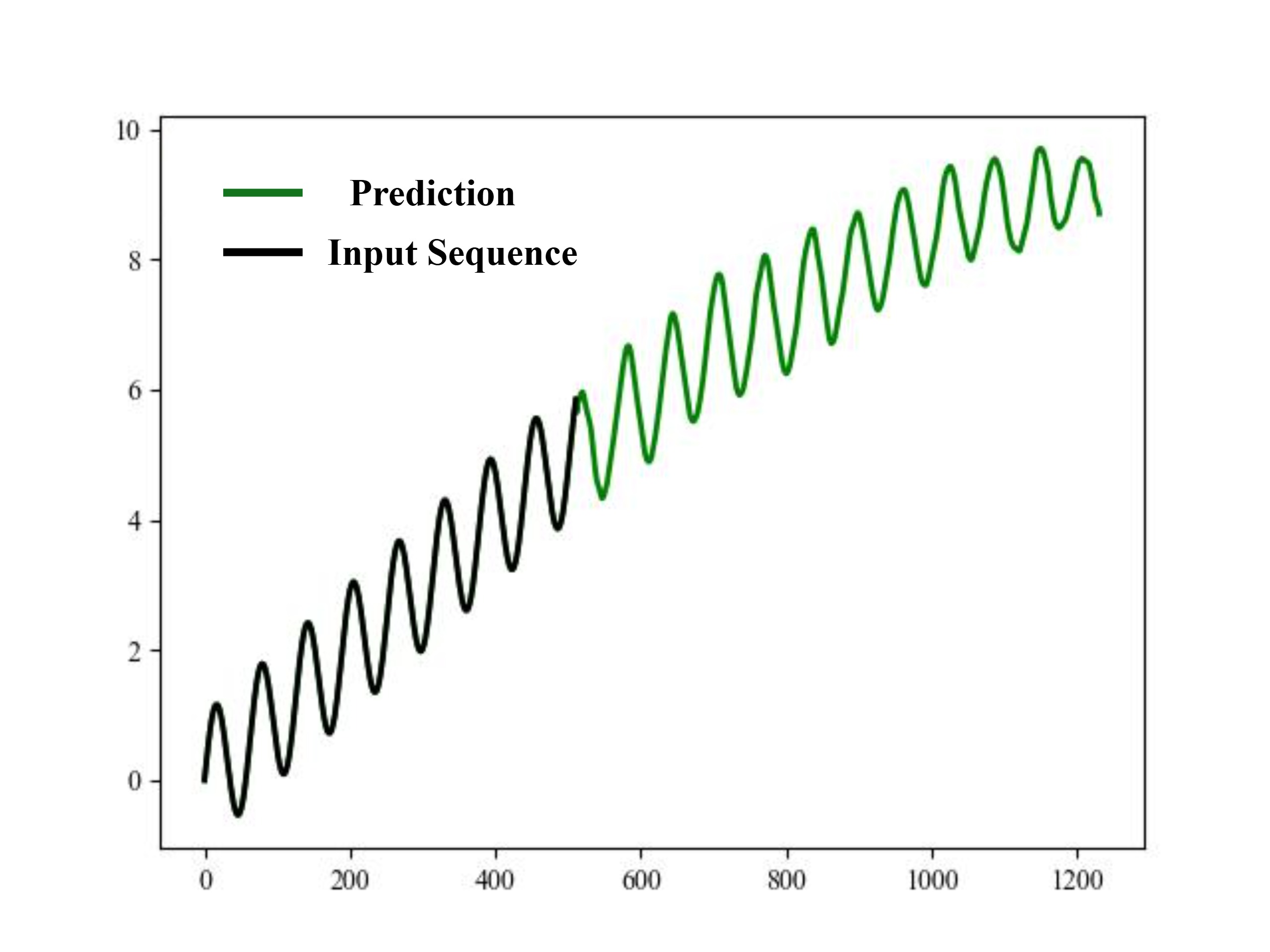}
        \caption{MS=7 }
        \label{fig:ms2_true_main} % Changed label
    \end{subfigure}
    \hfill
    \begin{subfigure}[b]{0.45\textwidth}
        \centering
        \includegraphics[width=\textwidth]{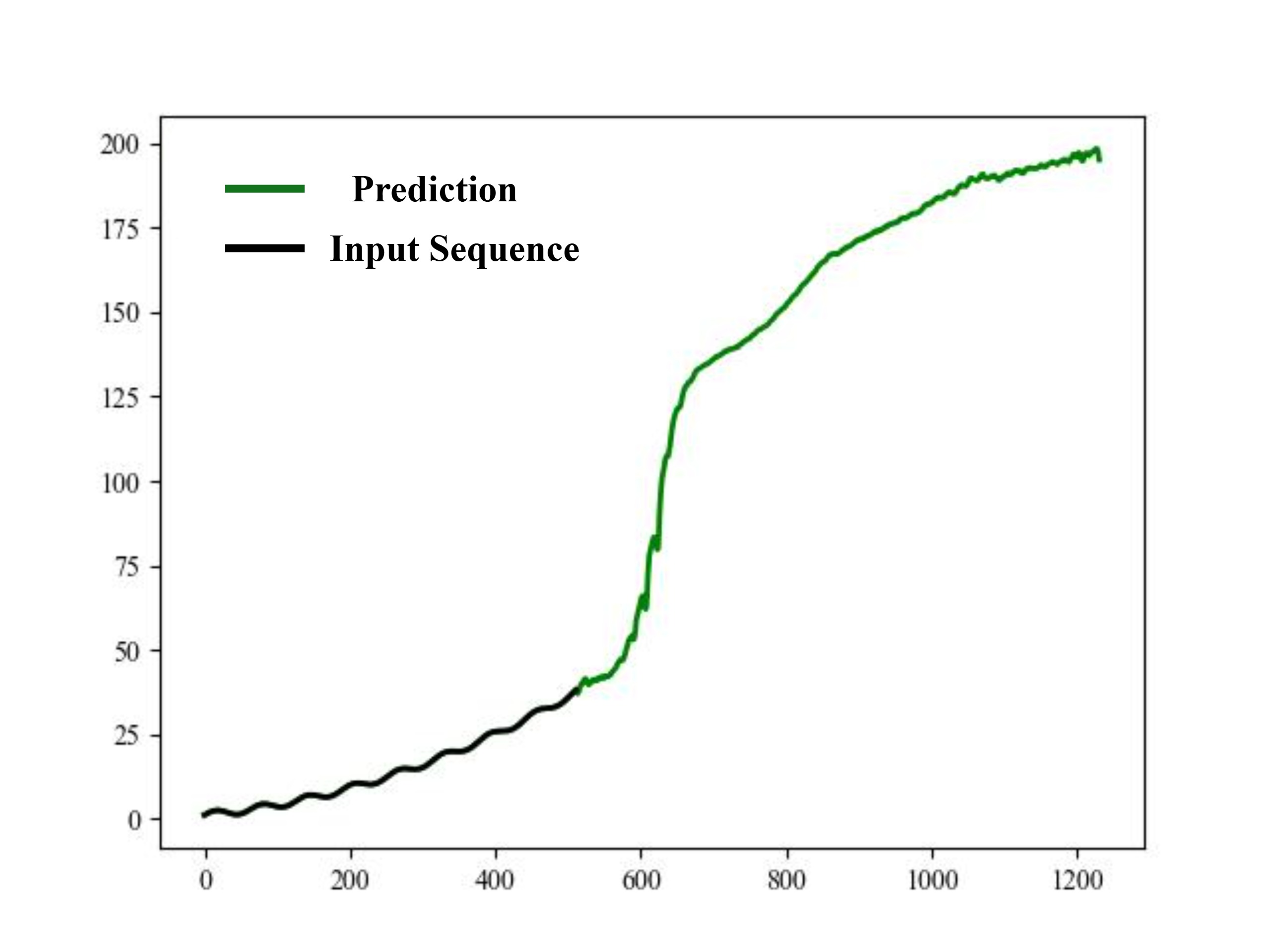}
        \caption{MS=7}
        \label{fig:ms2_pred_main} % Changed label
    \end{subfigure}
    \caption{Resolution analysis for explosive growth patterns: (a-b) With MS=3.5, ViTime incorrectly predicts peak decline due to spatial constraints. (c-d) Doubling MS to 7 enables accurate growth trend capture.}
    \label{fig:growth_comparison_main} % Changed label
\end{figure}

\section{Discussion}
\label{sec:discussion_main} % Changed label
%%加入更多图像方法的洞见
% \subsection{Mechanistic Insights: Temporal Dynamics in ViTime}

% Despite eschewing autoregressive paradigms, ViTime's multi-layer self-attention modules inherently maintain temporal ordering through spatiotemporal dependency learning. As shown in \Cref{fig:heatmap}(b), current predictions' attention patterns demonstrate systematic dependence on both input data characteristics and previous prediction steps, mirroring human cognitive patterns in time series reasoning.

% \subsection{Limitations and Future Directions}
% \label{sec:limitions_main} % Changed label
While ViTime demonstrates state-of-the-art performance in accuracy and robustness, two key challenges warrant further investigation.

% \vspace{0.1in}
% \noindent

\subsection{Resolution Constraints \& Adaptive Enhancement.}

The mapping function's truncation imposes resolution limits, particularly evident in explosive growth patterns (\Cref{fig:growth_comparison_main} a-b). A key limitation of ViTime arises from its assumption of $\mathcal{S} \sim  N(0, \mathbf{I})$, which fails to capture the high-variance nature of explosive growth data that typically follows $\mathcal{S} \sim  N(0, k\mathbf{I})$ with $k \gg 1$. As shown in Proposition  \ref{proposition3}, the optimal threshold $MS^*$ scales as $\sqrt{k}$, implying that fixed thresholds (e.g., $MS=3.5$ for $k=1.5$) become suboptimal for high-variance scenarios, introducing significant system errors and degrading prediction accuracy.

Our empirical analysis reveals that doubling the MS parameter from 3.5 to 7 significantly improves prediction fidelity for explosive growth patterns (\Cref{fig:growth_comparison_main}c-d). However, excessively large MS values increase system error, as demonstrated in \Cref{theoremSEB}, leading to computational inefficiency. This trade-off suggests two complementary research directions:
\begin{itemize}
    \item \textbf{Elastic Resolution Enhancement}: Techniques to dynamically adjust spatial resolution $h$ based on data variance, ensuring sufficient granularity for high-variance regions without unnecessary computational overhead.
    \item \textbf{Adaptive MS Estimation}: Algorithms to estimate the variance scaling factor $k$ and compute the optimal $MS^*$ in real-time, balancing prediction fidelity with spectral efficiency.
\end{itemize}
These enhancements would enable ViTime to handle explosive growth patterns more effectively while maintaining computational tractability.

\vspace{0.1in}
\noindent
\subsection{Future Directions for Synthetic Data Generation in ViTime}

RealTS, our synthetic data generation algorithm, plays a crucial role in performance of ViTime by creating diverse and realistic training samples that significantly enhance model generalizability. The algorithm enriches the training data through sophisticated pattern synthesis, enabling ViTime to achieve superior zero-shot and few-shot learning capabilities as demonstrated in our experiments.

While RealTS has proven effective in the current framework, several enhancements could further improve ViTime's predictive quality: 1) More advanced pattern injection mechanisms to capture complex real-world dynamics such as non-stationary processes and regime-switching scenarios. 2) Development of quantitative metrics for assessing simulation fidelity across different temporal regimes. 3) Extension to multivariate time series generation. Although theoretically feasible by incorporating additional channels in RealTS, this extension presents practical challenges in computational demand and in generating synthetic data that preserves realistic inter-variable correlations. These represent important directions for strengthening ViTime's data generation capabilities.

\section{Conclusions}
\label{sec:conclu}

This work developed a vision intelligence-powered computational paradigm, ViTime, for developing the TSF foundation model, as compared with the numerical data fitting principles prevalently considered in literature. ViTime was inspired by human visual cognitive processes in understanding and analyzing time series. By introducing a paradigm of operating numerical data in image space and a unique deep network based computing pipeline, ViTime is capable of handling both point and probabilistic forecasting, elevating the SOTA performance on zero-shot/fine-tuning TSF without relying on prior data samples. This demonstrates the great potential for reshaping the computational mechanism in TSF foundation model development. Moreover, as data often suffer from diverse contamination and variability in reality, ViTime's visual approach enables robust performance under various real-world data perturbations and alterations, showcasing its superior resilience.

\subsubsection*{Acknowledgments}
This work was supported in part by the National Natural Science Foundation of China under grant 62402384, in part by the Hong Kong RGC General Research Fund Project under grant 11213124, in part by the Hong Kong RGC Collaborative Research Fund Project under grant C1049-24GF, in part by the Shenzhen-Hong Kong-Macau Science and Technology Category C Project under grant SGDX20220530111205037, in part by the Hong Kong ITC Innovation and Technology Fund Project under grant ITS/034/22MS, and in part by InnoHK initiative, The Government of the HKSAR, and Laboratory for AI-Powered Financial Technologies.

\bibliography{main}
\bibliographystyle{tmlr}

\newpage

\appendix

\section{Details of RealTS}
\label{sec:RealTS}

We present RealTS, a versatile framework for synthesizing realistic time series data. RealTS employs multiple data behavior modes under two main hypotheses: periodic ($\varphi_p$) and trend ($\varphi_t$). This section details the various behavior modes, their configurations, and provides visual examples.

\subsection{Periodic Hypothesis Behaviors}

Under the periodic hypothesis $\varphi_p$, we employ two distinct data behavior modes:

\subsubsection{Inverse Fast Fourier Transform Behavior (IFFTB)}

To ensure the synthesized data adequately reflects the variation paradigms of real-world time series, we utilize IFFT as expressed in \Cref{eq:ifftb} to simulate the underlying behavior of real-world periodic time series:

\begin{equation}
\begin{split}
& P\left( \mathbf{s}_{\mathbf{L}} | L, B_{p} \right)|_{B_{p} = \text{IFFT}} = \iint_{-\infty}^{\infty}  \mathbf{N}\left( \mathbf{A}_{\mathbf{m}}; \mathbf{\mu}_{\mathbf{A}_{\mathbf{m}}}, \mathbf{\sigma}_{\mathbf{A}_{\mathbf{m}}}^{2} \right) \cdot \mathbf{N}\left( \mathbf{\phi}; \mathbf{\mu}_{\mathbf{P}}, \mathbf{\sigma}_{\mathbf{P}}^{2} \right)  \times \delta\left( \mathbf{s}_{\mathbf{L}} - \text{IFFT}\left( \mathbf{A}_{\mathbf{m}}, \mathbf{\phi}, L \right) \right) d\mathbf{\phi} d\mathbf{A}_{\mathbf{m}}
\end{split}
\label{eq:ifftb}
\end{equation}

where two empirical distributions of Fourier transform amplitudes and phases, $N(A_m;\mu_{A_m},\sigma_{A_m}^2)$ and $N(\phi;\mu_P,\sigma_P^2)$, are maintained, and $\delta$ denotes the Dirac delta function. By sampling from empirical distributions, we can obtain the amplitude and Phase vector, which is then inversely transformed back to the time domain via IFFT.

\begin{figure}[htbp]
    \centering
    \includegraphics[width=0.8\linewidth]{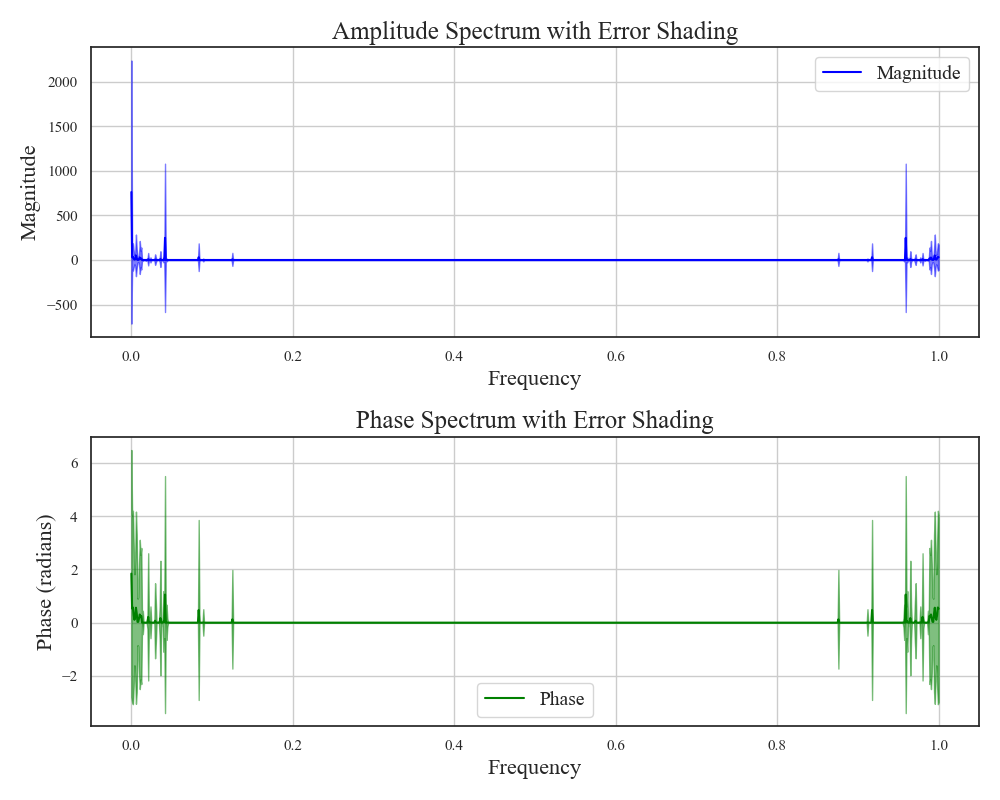}
    \caption{Empirical distribution I employed in IFFTB.}
    \label{fig:IFFT1}
\end{figure}

\begin{figure}[htbp]
    \centering
    \includegraphics[width=0.8\linewidth]{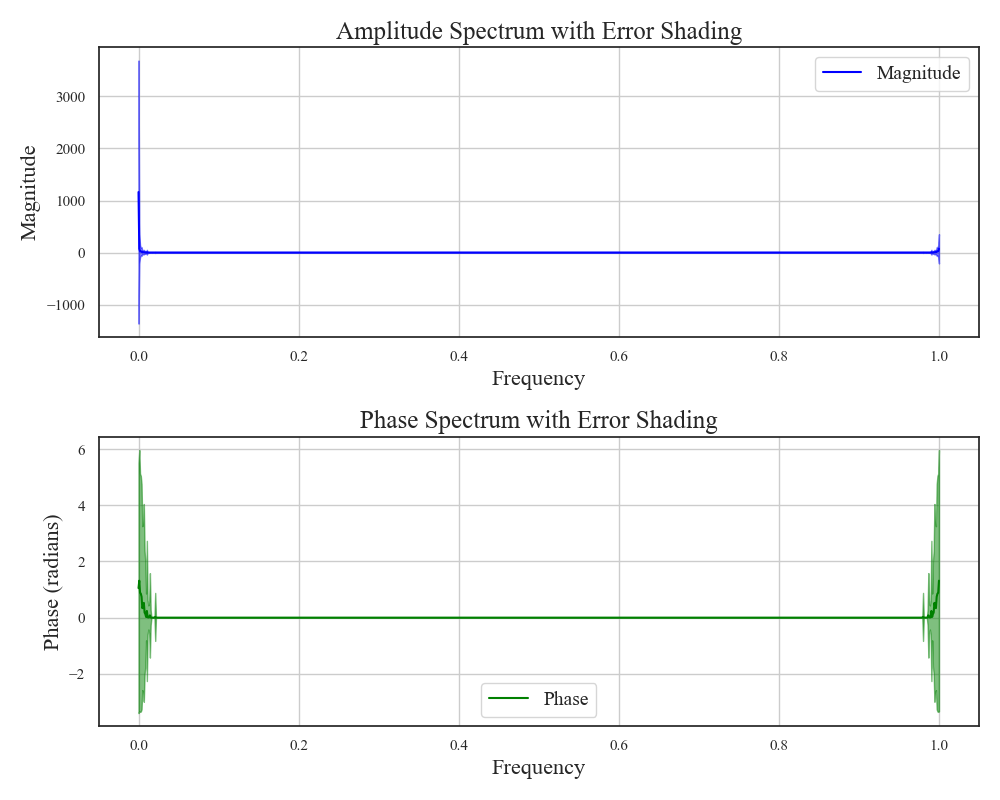}
    \caption{Empirical distribution II employed in IFFTB.}
    \label{fig:IFFT2}
\end{figure}

The empirical distributions utilized in $\mathbf{A}_{\mathbf{m}}$ and $\mathbf{\phi}$ are illustrated in \Cref{fig:IFFT1}-\Cref{fig:IFFT2}. During experiments, we randomly select one of two empirical distributions for generating $\mathbf{A}_{\mathbf{m}}$ and $\mathbf{\phi}$. \Cref{fig:ifftb_examples} shows examples of time series generated using IFFTB.

\begin{figure}[htbp]
    \centering
    \begin{subfigure}[b]{0.3\textwidth}
        \includegraphics[width=\textwidth]{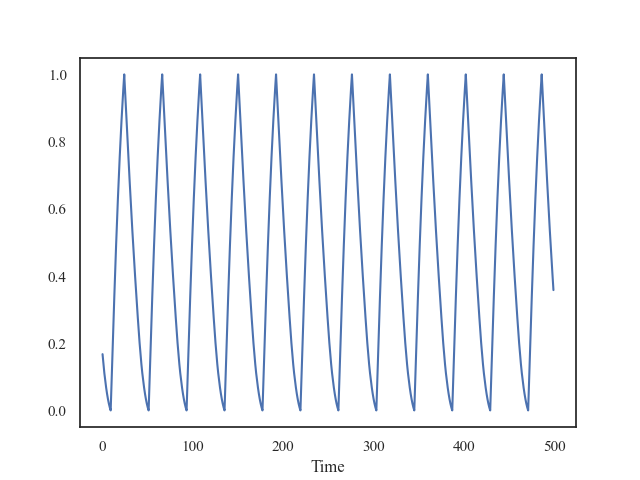}
    \end{subfigure}
    \hfill
    \begin{subfigure}[b]{0.3\textwidth}
        \includegraphics[width=\textwidth]{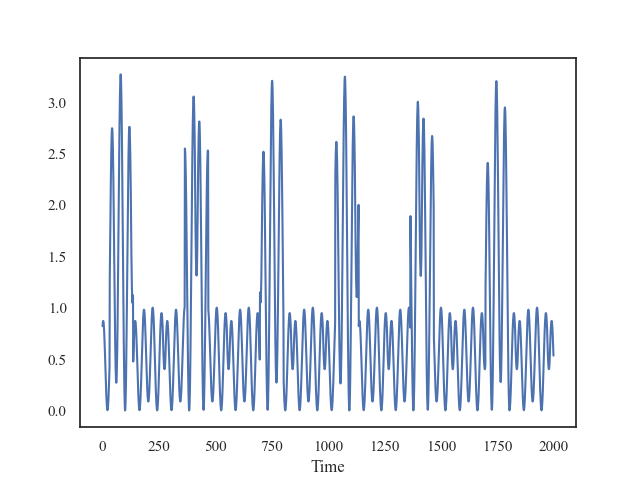}
    \end{subfigure}
    \hfill
    \begin{subfigure}[b]{0.3\textwidth}
        \includegraphics[width=\textwidth]{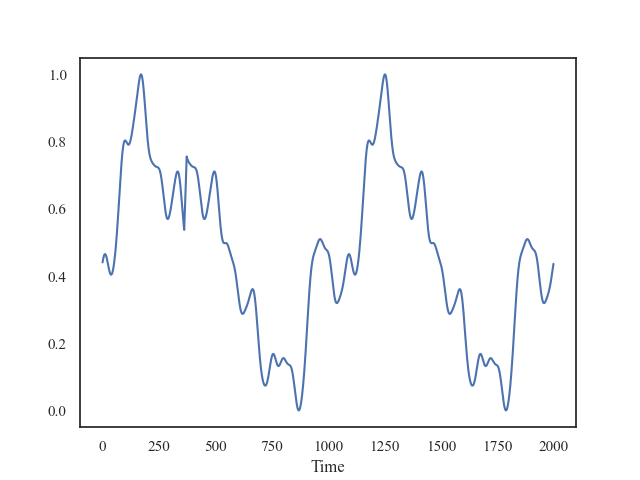}
    \end{subfigure}
    \caption{Examples of time series generated using IFFTB.}
    \label{fig:ifftb_examples}
\end{figure}

\subsubsection{Periodic Wave Behavior (PWB)}

This behavior generates data by superimposing multiple periodic waves, which is modeled as a sum of sin, cos, and other periodic functions, $f_\text{periodic}$, with different frequencies and amplitudes:

\begin{equation}
\begin{split}
& P\left( \mathbf{s}_{\mathbf{L}} | L, B_{p} \right)|_{B_{p} = \text{PWB}} =  \iint_{-\infty}^{\infty}  \mathbf{N}\left( \mathbf{s}_{\mathbf{L}}; \sum_{i = 1}^{k_{\text{PWB}}} A_{i}f_{\text{periodic}}\left( \omega_{i}t \right), \mathbf{\sigma}_{\mathbf{\epsilon}}^{2} \right)   \times \mathbf{P}\left( \mathbf{A} \right) \mathbf{P}\left( \mathbf{\omega} \right) d\mathbf{\omega} d\mathbf{A}
\end{split}
\label{eq:5_appendix} % Changed label
\end{equation}
where $\mathbf{P}(\mathbf{A})$ and $\mathbf{P}(\boldsymbol{\omega})$ denote predefined prior distributions of amplitudes and frequency; $k_{PWB}$ denotes the number of mixed periodic functions.

For PWB, we define the prior distributions for amplitude and frequency as:

\begin{equation}
\mathbf{A} \sim \mathbf{U}(\mathbf{0.5,5})
\end{equation}

\begin{equation}
\mathbf{\ln}(\mathbf{\omega}) \sim \mathbf{U}(\mathbf{\ln}(\mathbf{11}),\mathbf{\ln}(\mathbf{2L}))
\end{equation}

The parameter $k_{PWB}$ is modeled as:

\begin{equation}
P(k_{PWB} = k) = \frac{1}{8},\ \text{for}\ k = 1,2,\ldots,8
\end{equation}

Figure \ref{fig:pwb_examples} shows examples of time series generated using PWB.

\begin{figure}[htbp]
    \centering
    \begin{subfigure}[b]{0.3\textwidth}
        \includegraphics[width=\textwidth]{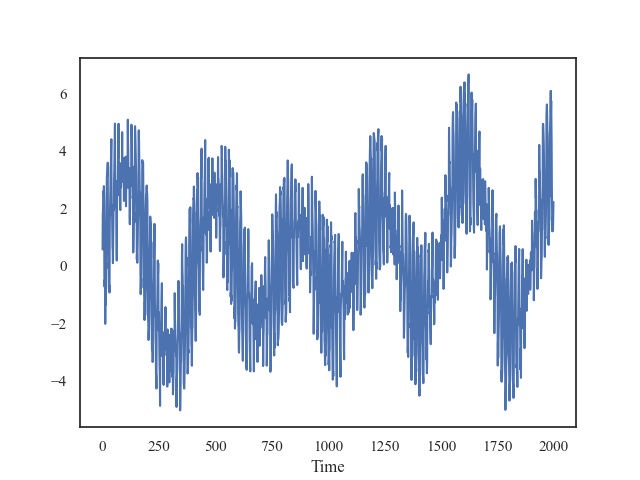}
    \end{subfigure}
    \hfill
    \begin{subfigure}[b]{0.3\textwidth}
        \includegraphics[width=\textwidth]{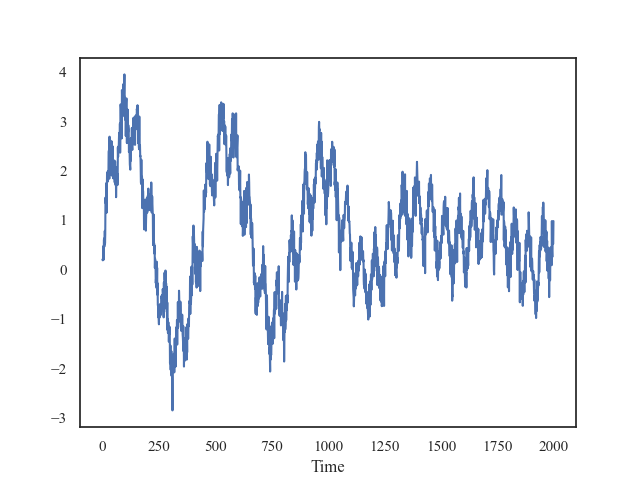}
    \end{subfigure}
    \hfill
    \begin{subfigure}[b]{0.3\textwidth}
        \includegraphics[width=\textwidth]{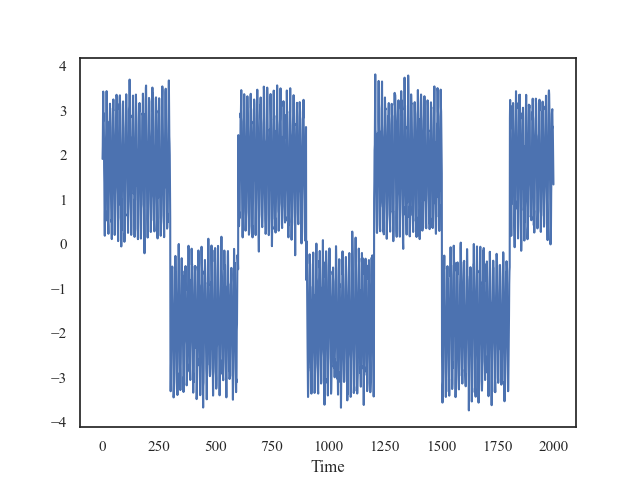}
    \end{subfigure}
    \caption{Examples of time series generated using PWB.}
    \label{fig:pwb_examples}
\end{figure}

\subsection{Trend Data Hypothesis Behaviors}

Under the trend data hypothesis $\varphi_t$, we employ three distinct data behavior modes:

\subsubsection{Random Walk Behavior (RWB)}

The RWB models data as a stochastic process where each value is the previous value plus a random step:

\begin{equation}
P\left( s_{i} | s_{i - 1}, L, B_{p} \right)|_{B_{p} = \text{RWB}} = \mathbf{N} \left( 0, \sigma^{2} \right)
\label{eq:6_appendix} % Changed label
\end{equation}

Figure \ref{fig:rwb_examples} shows examples of time series generated using RWB.

\begin{figure}[htbp]
    \centering
    \begin{subfigure}[b]{0.3\textwidth}
        \includegraphics[width=\textwidth]{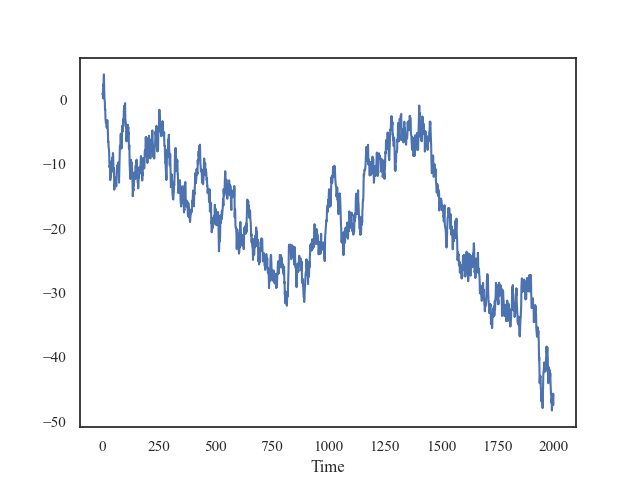}
    \end{subfigure}
    \hfill
    \begin{subfigure}[b]{0.3\textwidth}
        \includegraphics[width=\textwidth]{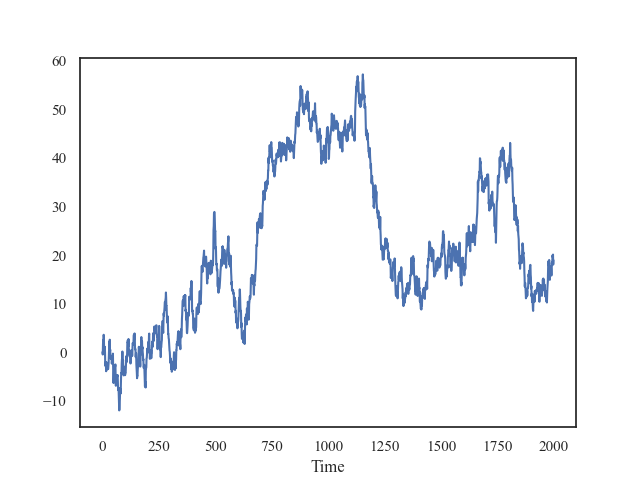}
    \end{subfigure}
    \hfill
    \begin{subfigure}[b]{0.3\textwidth}
        \includegraphics[width=\textwidth]{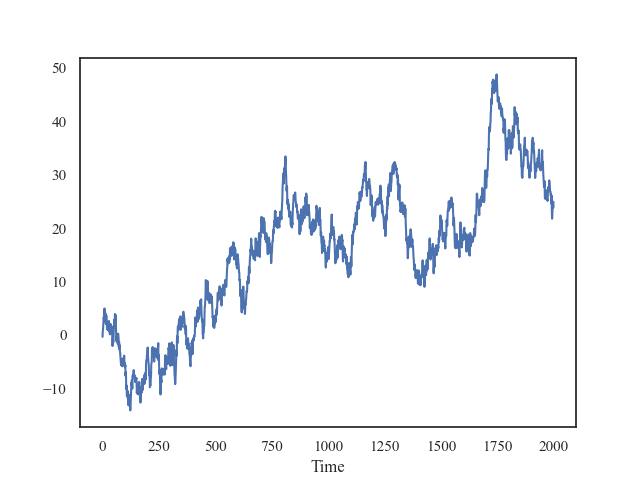}
    \end{subfigure}
    \caption{Examples of time series generated using RWB.}
    \label{fig:rwb_examples}
\end{figure}

\subsubsection{Logistic Growth Behavior (LGB)}

The LGB models data with a logistic growth function, capturing the S-shaped growth pattern:

\begin{equation}
\begin{split}
& P\left( \mathbf{s}_{\mathbf{L}} | L, B_{p} \right)|_{B_{p} = \text{LGB}} \\
& = \iint_{-\infty}^{\infty} \mathbf{N}\left( \mathbf{s}_{\mathbf{L}}; \frac{K}{1 + e^{-r\left( \mathbf{L} - L_{0} \right)}}, \mathbf{\sigma}_{\mathbf{\epsilon}}^{2} \right)  P(K) P(r) dK dr
\end{split}
\label{eq:7_appendix} % Changed label
\end{equation}
where $P(K)$ and $P(r)$ denote predefined prior distributions of S-shaped function hyperparameters.

For LGB, we define the probability densities for Carrying Capacity $K$ and Growth Rate $r$ as:

\begin{equation}
\ln(K) \sim U(\ln(1),\ln(10))
\end{equation}

\begin{equation}
\ln(r) \sim U(\ln(0.001),\ln(0.1))
\end{equation}

Figure \ref{fig:lgb_examples} shows examples of time series generated using LGB.

\begin{figure}[htbp]
    \centering
    \begin{subfigure}[b]{0.3\textwidth}
        \includegraphics[width=\textwidth]{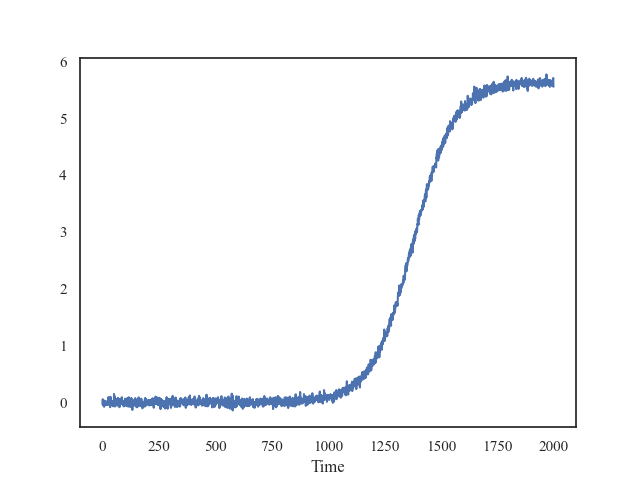}
    \end{subfigure}
    \hfill
    \begin{subfigure}[b]{0.3\textwidth}
        \includegraphics[width=\textwidth]{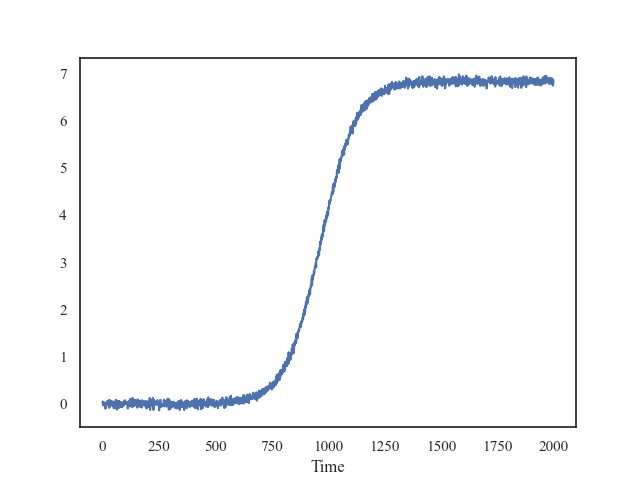}
    \end{subfigure}
    \hfill
    \begin{subfigure}[b]{0.3\textwidth}
        \includegraphics[width=\textwidth]{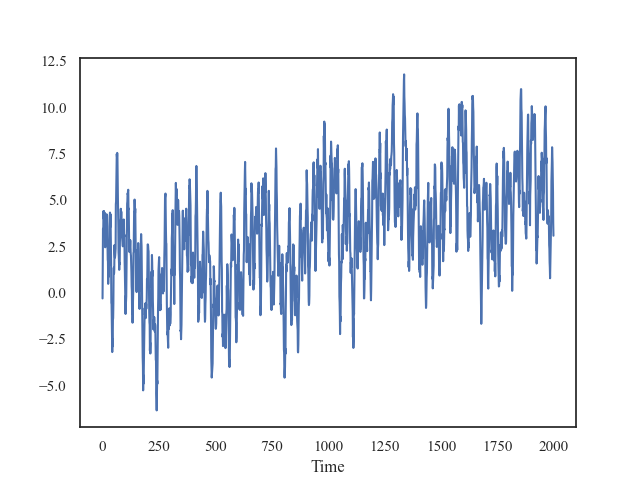}
    \end{subfigure}
    \caption{Examples of time series generated using LGB.}
    \label{fig:lgb_examples}
\end{figure}

\subsubsection{Trend Wave Data Behavior (TWDB)}

TWDB combines linear trends with periodic fluctuations:

\begin{equation}
\begin{split}
& P\left( \mathbf{s}_{\mathbf{L}} | L, B_{p} \right)|_{B_{p} = \text{TWDB}} =  \iint_{-\infty}^{\infty}  \mathbf{N}\left( \mathbf{s}_{\mathbf{L}}; a\mathbf{L} + b + \sum_{i = 1}^{k_{\text{TWDB}}} A_{i}f_{\text{periodic}}\left( \omega_{i}t \right), \mathbf{\sigma}_{\mathbf{\epsilon}}^{2} \right)  \times P(a) P(b) \mathbf{P}\left( \mathbf{A} \right) \mathbf{P}\left( \mathbf{\omega} \right) da db d\mathbf{A} d\mathbf{\omega}
\end{split}
\label{eq:twdb_appendix} % Changed label
\end{equation}

where $P(a)$, $P(b)$, $P\left( \mathbf{A} \right)$ and $\mathbf{P}\left( \mathbf{\omega} \right)$ are predefined prior distributions of hyperparameters.

In the TWDB, we define the probability densities for linear function random variables $P(a)$ and $P(b)$, as
well as for the superimposed periodic wave components $P(\mathbf{A})$ and $P(\mathbf{\omega})$. The settings for
$P(\mathbf{A})$, $P(\mathbf{\omega})$, and $k_{TWDB}$ are consistent with those used in the PWB module. The
probability densities for $P(a)$ and $P(b)$ are detailed below:

\begin{equation}
a \sim U(-1,1)
\end{equation}

\begin{equation}
b \sim U(-10,10)
\end{equation}

Figure \ref{fig:twdb_examples} shows examples of time series generated using TWDB.

\begin{figure}[htbp]
    \centering
    \begin{subfigure}[b]{0.3\textwidth}
        \includegraphics[width=\textwidth]{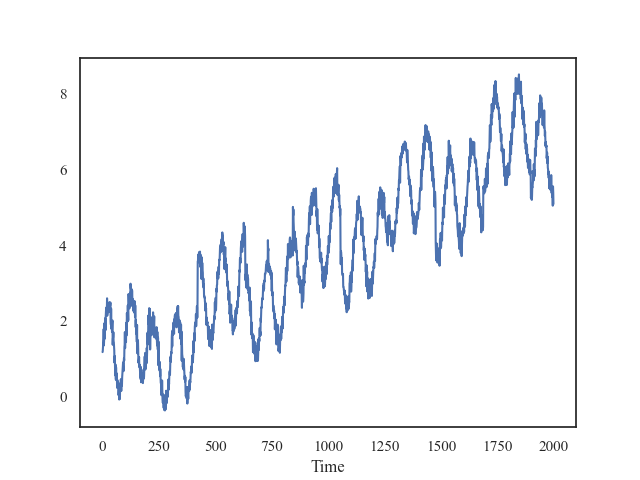}
    \end{subfigure}
    \hfill
    \begin{subfigure}[b]{0.3\textwidth}
        \includegraphics[width=\textwidth]{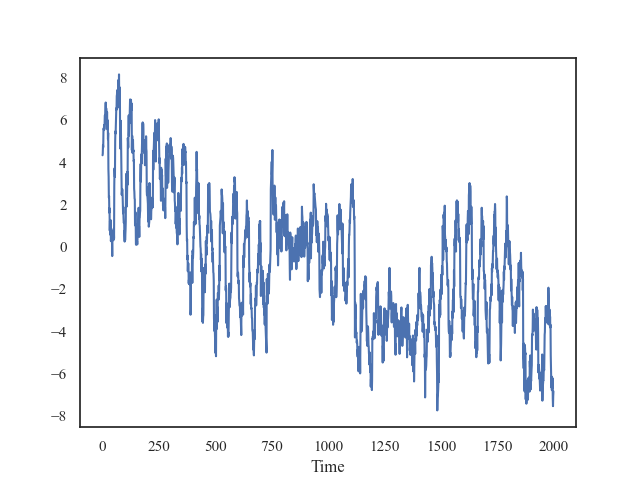}
    \end{subfigure}
    \hfill
    \begin{subfigure}[b]{0.3\textwidth}
        \includegraphics[width=\textwidth]{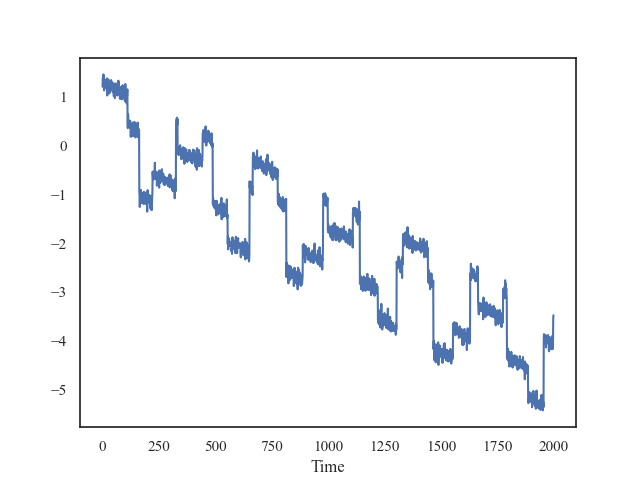}
    \end{subfigure}
    \caption{Examples of time series generated using TWDB.}
    \label{fig:twdb_examples}
\end{figure}

\subsection{Data Augmentation Techniques}

We sample IFFTBI, IFFTBII, PWB, RWB, LGB, and TWDB included in RealTS according to the following probability distribution: $P(\text{IFFTBI}) = 0.3$, $P(\text{IFFTBII}) = 0.3$, $P(\text{PWB}) = 0.16$, $P(\text{RWB}) = 0.08$, $P(\text{LGB}) = 0.08$, $P(\text{TWDB}) = 0.08$. To enhance the diversity and robustness of synthetic data, we employ various data augmentation techniques, such as:

\begin{itemize}
    \item Multiple period replication - repeating the generated periodic data over multiple cycles to capture long-term periodic patterns.
    \item Data flipping - reversing the time series to create new patterns while preserving underlying characteristics.
    \item Convolution smoothing and detrending - removing underlying trends from the data to isolate periodic components, making it easier for the model to learn these patterns.
    \item Data perturbation - introducing sudden changes or anomalies into the data, simulating real-world disturbances and improving the ability of the model to handle unexpected variations.
\end{itemize}

More details of RealTS are offered in the code part of the Supplementary Material.

\section{Training Configuration}
\label{sec:training}
\subsection{ViTime Model Structure}

\begin{table*}[!htbp]
\centering
\caption{Details of model architecture}

\begin{tabular}{@{}lcccc@{}}
\toprule
Module & Embed\_dim & Depth & Patch size & Num\_heads \\
\midrule
Visual Time Tokenizer & 768 & 9 & (4,32) & 12 \\
Decoder & 384 & 4 & \textbackslash & 12 \\
\bottomrule
\end{tabular}

\vspace{0.5cm}

\begin{tabular}{@{}lcc@{}}
\toprule
\multicolumn{3}{c}{Refining Module} \\
\midrule
Component & Configuration & Details \\
\midrule
Backbone & MobileNetV2 & Downsample factor: 16 \\
ASPP & 5 branches & Dilation rates: 1,6,12,18,GAP \\
Low-level features & Conv1x1 & 24→48 channels \\
Feature fusion & Conv3x3×2 & 256 channels \\
Upsampling & DeconvASPP & 6 branches with SE attention \\
\bottomrule
\end{tabular}

\label{tab:model_architecture}
\end{table*}
The detailed network configuration of the proposed ViTime is reported in \Cref{tab:model_architecture}.

\subsection{Data Normalization}
\label{data normalize}

To ensure ViTime can effectively capture patterns involving sudden changes, an in-sequence data normalization based on L2 normalization is implemented. By normalizing each sequence within the data sequence, the model can pay more attention to abrupt variations. The normalization process is defined as follows:

\begin{equation}
\mathbf{S}_{\mathbf{L}} = \frac{\mathbf{S}_{\mathbf{L}} - \text{mean}\left( \left\| \mathbf{S}_{\mathbf{1:T}} \right\|_{\mathbf{2}} \right)}{\text{std}\left( \mathbf{S}_{\mathbf{1:T}} \right)}
\label{eq:14_appendix} % Changed label
\end{equation}

\subsection{Training Set of ViTime-TFM}
\label{sec:data_normalizeTFM}

The complete training datasets of ViTime-TFM are detailed in Table~\ref{tab:training_dataTFM}. 

\begin{table}[htbp]
\centering
\caption{Training dataset composition for ViTime-TFM}
\label{tab:training_dataTFM}
\begin{tabular}{lccc}
\toprule
Dataset & Granularity & Time series &  Time points \\
\midrule
Synthetic & Hourly & 3,000,000 & 6,144,000,000 \\
Electricity & Hourly & 321 & 8,443,584 \\
Traffic & Hourly & 862 & 15,122,928 \\
Weather & 10 Min & 42 & 2,213,232 \\
Favorita Sales & Daily & 111,840 & 139,179,538 \\
LibCity & 15 Min & 6,159 & 34,253,622 \\
M4 hourly & Hourly & 414 & 353,500 \\
M4 daily & Daily & 4,227 & 9,964,658 \\
M4 monthly & Monthly & 48,000 & 10,382,411 \\
M4 quarterly & Quarterly & 24,000 & 2,214,108 \\
M4 yearly & Yearly & 22,739 & 840,644 \\
\bottomrule
\end{tabular}
\end{table}

\subsection{Benchmark Model Configurations}
\label{sec:benchmark_configs}

To ensure full transparency and reproducibility, this section details the configurations for the baseline models used in our comparisons.

\textbf{1. Zero-Shot Forecasting (Table 2):} For the zero-shot forecasting results, we used the following publicly available pre-trained models:
\begin{itemize}
\item \textbf{TimesFM:} \texttt{timesfm-1.0-200m}
\item \textbf{Moirai:} \texttt{moirai-1.1-R-large}
\item \textbf{Moment:} \texttt{MOMENT-1-large}
\item \textbf{VisionTS:} \texttt{mae\_base}
\item \textbf{PatchTST:} \texttt{PatchTST/64}
\end{itemize}

\textbf{2. Fine-Tuning (Table 4):} The performance scores reported for the fine-tuning experiments are the official results cited directly from the original publications of the respective baseline models.

\textbf{3. Probabilistic Forecasting (Table 3):} For the probabilistic forecasting experiments, we configured the baseline models as follows:
\begin{itemize}
\item \textbf{Moirai:} We used the pre-trained model \texttt{moirai-1.1-R-large}.
\item \textbf{Lag-Llama:} We utilized the publicly available implementation from the original Lag-Llama paper and its provided checkpoint, \texttt{lag-llama.ckpt}.
\end{itemize}

\section{Proofs}
\label{sec:Proofs_all}

This section provides the detailed proofs for the theorems and propositions presented in the main text.

\subsection{Proof of \Cref{theoremSEB} (System Error Upper Bound)}
\begin{theorem}[\Cref{theoremSEB} restated]
Given a tensor $\widehat{s} \in \mathcal{S} \subset \mathbb{R}^{c \times L}$, where $\mathcal{S}$ follows $ N(\mathbf{0}, \mathbf{I})$ as per Assumption \ref{assumption1}, the system error (SE) from mapping to $\mathcal{V}$ and back, defined as $\left\| f^{-1}\left( \mathbf{f}\left( \widehat{s} \right) \right) - \widehat{s} \right\|_{1}$, satisfies the following expectation bound:
\begin{equation}
    \mathrm{SE} := \mathbb{E}\left\| f^{-1}\left( \mathbf{f}\left( \widehat{s} \right) \right) - \widehat{s} \right\|_{1} \leq cL \left[ MS \left( \frac{1}{h}\left(\Phi(MS) - \Phi(-MS)\right) - 2 + 2\Phi(MS) \right) + \sqrt{\frac{2}{\pi}} e^{-\frac{MS^2}{2}} \right],
\end{equation}
where $\Phi$ is the cumulative distribution function (CDF) of $ N(0,1)$, $c$ is the number of variates, $L$ is the time series length, $h$ is the image height (resolution), and $MS$ is the maximum scale.
\end{theorem}

\begin{proof}
The proof considers the error for a single element $s$ of $\widehat{s}$ and then scales by $cL$. Let $P(s)$ be the PDF of $ N(0,1)$. The expected absolute error for a single element is $\mathbb{E}|f^{-1}(f(s)) - s|$. This error can be decomposed into two parts: quantization error for $|s| \le MS$ and truncation error for $|s| > MS$.

\medskip
\textbf{1. Quantization Error ($|s| \le MS$):}

When $|s| \le MS$, the value $s$ is mapped to a bin $j = \lfloor (s + MS) / (2MS/h) \rfloor$. The inverse mapping $f^{-1}(f(s))$ reconstructs this as the midpoint of the bin, $(j - 0.5)(2MS/h) - MS$. The maximum error in this case is half the bin width, $\delta/2 = (2MS/h)/2 = MS/h$.

The expected quantization error is:
\begin{align}
E_Q &= \int_{-MS}^{MS} |f^{-1}(f(s)) - s| P(s) ds \\
&\le \int_{-MS}^{MS} \frac{MS}{h} P(s) ds = \frac{MS}{h} \int_{-MS}^{MS} P(s) ds \\
&= \frac{MS}{h} \left[ \Phi(MS) - \Phi(-MS) \right].
\end{align}

\medskip
\textbf{2. Truncation Error ($|s| > MS$):}

If $s > MS$, $f(s)$ maps to the highest bin $h$, and $f^{-1}(f(s)) = MS - (MS/h)$. The error is $s - (MS - MS/h)$.

If $s < -MS$, $f(s)$ maps to the lowest bin $1$, and $f^{-1}(f(s)) = -MS + (MS/h)$. The error is $(-MS + MS/h) - s$.

For simplicity in bounding, we consider the error magnitude as $|s| - MS$ when $|s|>MS$. The expected truncation error is:
\begin{align}
E_T &= \int_{MS}^{\infty} (s - MS) P(s) ds + \int_{-\infty}^{-MS} (-MS - s) P(s) ds \\
&= 2 \int_{MS}^{\infty} (s - MS) P(s) ds \quad (\text{by symmetry of } P(s)) \\
&= 2 \left[ \int_{MS}^{\infty} s P(s) ds - MS \int_{MS}^{\infty} P(s) ds \right].
\end{align}

We know $\int_{MS}^{\infty} s P(s) ds = \int_{MS}^{\infty} s \frac{1}{\sqrt{2\pi}} e^{-s^2/2} ds = \frac{1}{\sqrt{2\pi}} e^{-MS^2/2}$.

And $\int_{MS}^{\infty} P(s) ds = 1 - \Phi(MS)$.

So,
\begin{align}
E_T &= 2 \left[ \frac{1}{\sqrt{2\pi}} e^{-MS^2/2} - MS (1 - \Phi(MS)) \right] \\
&= \sqrt{\frac{2}{\pi}} e^{-MS^2/2} - 2MS (1 - \Phi(MS)).
\end{align}

\medskip
\textbf{3. Total Expected Error per Element:}

The total expected absolute error for one element is $E_Q + E_T$:
\begin{align}
\mathbb{E}|f^{-1}(f(s)) - s| &\le \frac{MS}{h} \left[ \Phi(MS) - \Phi(-MS) \right] + \sqrt{\frac{2}{\pi}} e^{-MS^2/2} - 2MS (1 - \Phi(MS)) \\
&= MS \left( \frac{1}{h}\left(\Phi(MS) - \Phi(-MS)\right) - 2(1 - \Phi(MS)) \right) + \sqrt{\frac{2}{\pi}} e^{-\frac{MS^2}{2}} \\
&= MS \left( \frac{1}{h}\left(\Phi(MS) - \Phi(-MS)\right) - 2 + 2\Phi(MS) \right) + \sqrt{\frac{2}{\pi}} e^{-\frac{MS^2}{2}}.
\end{align}

Multiplying by $cL$ (number of elements) gives the bound for $\mathbb{E}\|f^{-1}(\mathbf{f}(\widehat{s})) - \widehat{s}\|_1$.
\end{proof}

\subsection{Proof of Proposition \ref{proposition1} (Asymptotic Convergence with $h$)}
\begin{proposition}[Proposition \ref{proposition1} restated]
For any $\varepsilon > 0$, there exists $\delta_0 > 0$ such that when $h \to +\infty$ and $MS \geq \delta_0$, the per-element SE upper bound 
\begin{equation}
g_1(h, MS) = MS \left( \frac{1}{h}(\Phi(MS) - \Phi(-MS)) - 2 + 2\Phi(MS) \right) + \sqrt{\frac{2}{\pi}}e^{-\frac{MS^2}{2}}
\end{equation}
converges to zero.
\end{proposition}

\begin{proof}
Let $g_1(h, MS)$ be the per-element upper bound from \Cref{theoremSEB}:
\begin{equation}
g_1(h, MS) = \frac{MS}{h}(\Phi(MS) - \Phi(-MS)) - 2MS(1 - \Phi(MS)) + \sqrt{\frac{2}{\pi}}e^{-\frac{MS^2}{2}}.
\end{equation}

As $h \to +\infty$, the term $\frac{MS}{h}(\Phi(MS) - \Phi(-MS)) \to 0$ since $\Phi(MS) - \Phi(-MS) \le 1$.

The remaining terms are $R(MS) = -2MS(1 - \Phi(MS)) + \sqrt{\frac{2}{\pi}}e^{-\frac{MS^2}{2}}$.

We use Mill's ratio for the tail probability of a standard normal distribution: for $MS > 0$, 
\begin{equation}
1 - \Phi(MS) \sim \frac{\phi(MS)}{MS} = \frac{1}{MS\sqrt{2\pi}}e^{-MS^2/2} \quad \text{as } MS \to \infty.
\end{equation}

So, 
\begin{align}
-2MS(1 - \Phi(MS)) &\sim -2MS \left(\frac{1}{MS\sqrt{2\pi}}e^{-MS^2/2}\right) \\
&= -\sqrt{\frac{2}{\pi}}e^{-MS^2/2}.
\end{align}

Thus, as $MS \to \infty$, 
\begin{align}
R(MS) &\sim -\sqrt{\frac{2}{\pi}}e^{-MS^2/2} + \sqrt{\frac{2}{\pi}}e^{-MS^2/2} \\
&= 0.
\end{align}

Therefore, for any $\varepsilon > 0$, we can find a $\delta_0$ such that for $MS \ge \delta_0$, $|R(MS)| < \varepsilon/2$.

And for any $MS \ge \delta_0$, we can find an $H_0$ such that for $h \ge H_0$, 
\begin{equation}
\left|\frac{MS}{h}(\Phi(MS) - \Phi(-MS))\right| < \varepsilon/2.
\end{equation}

This implies that $\lim_{h \to +\infty, MS \to \infty} g_1(h, MS) = 0$. More precisely, for a fixed large enough $MS$, as $h \to \infty$, the limit is $R(MS)$, which can be made arbitrarily small by choosing $MS$ large.

The statement asks for convergence as $h \to \infty$ for $MS \ge \delta_0$.

Let $MS \ge \delta_0$. Then
\begin{equation}
\lim_{h \to +\infty} g_1(h, MS) = -2MS(1-\Phi(MS)) + \sqrt{\frac{2}{\pi}}e^{-MS^2/2}.
\end{equation}

This limit itself tends to 0 as $MS \to \infty$. The proposition asks for the expression to be small when $h \to \infty$ AND $MS \ge \delta_0$.

Taking the limit as $h \to \infty$ first, we get:
\begin{align}
\lim_{h \to +\infty} \left| MS \left( \frac{1}{h}(\Phi(MS) - \Phi(-MS)) - 2 + 2\Phi(MS) \right) + \sqrt{\frac{2}{\pi}}e^{-\frac{MS^2}{2}} \right| = \left| -2MS(1-\Phi(MS)) + \sqrt{\frac{2}{\pi}}e^{-\frac{MS^2}{2}} \right|.
\end{align}

This term goes to 0 as $MS \to \infty$. So, for any $\varepsilon > 0$, there exists $\delta_0$ such that for $MS \ge \delta_0$, the term is less than $\varepsilon$.
\end{proof}

\subsection{Proof of Proposition \ref{proposition2} (Optimal MS Selection)}
\begin{proposition}[Proposition \ref{proposition2} restated]
For a fixed $h$, there exists a unique optimal threshold $MS^* > 0$ that minimizes the per-element SE upper bound $g_1(h, MS)$. This $MS^*$ is the solution to:
\begin{equation}
\frac{1}{h}\left( \Phi(MS^*) - \Phi(-MS^*) \right) - 2 + 2\Phi(MS^*) + \frac{MS^*}{h} \cdot 2\phi(MS^*) = 0,
\end{equation}
where $\phi(x) = \frac{1}{\sqrt{2\pi}} e^{-x^2/2}$ is the PDF of $ N(0,1)$.
(Note: The original equation had $\sqrt{2/\pi}e^{-{MS^*}^2/2}$, which is $2\phi(MS^*)$).
\end{proposition}

\begin{proof}
Let $g_1(MS) = MS \left( \frac{1}{h}(\Phi(MS) - \Phi(-MS)) - 2 + 2\Phi(MS) \right) + \sqrt{\frac{2}{\pi}}e^{-\frac{MS^2}{2}}$.

We want to find $MS^*$ such that $g_1'(MS^*) = 0$.

Using $\Phi(-x) = 1 - \Phi(x)$ and $\phi(-x) = \phi(x)$, we have $\Phi(MS) - \Phi(-MS) = 2\Phi(MS) - 1$.

So, $g_1(MS) = MS \left( \frac{1}{h}(2\Phi(MS) - 1) - 2 + 2\Phi(MS) \right) + 2\phi(MS)$.

Derivative with respect to $MS$:
\begin{align}
\frac{dg_1}{dMS} &= \left( \frac{1}{h}(2\Phi(MS) - 1) - 2 + 2\Phi(MS) \right) + MS \left( \frac{2\phi(MS)}{h} + 2\phi(MS) \right) + 2\phi'(MS) \\
&= \frac{2\Phi(MS)-1}{h} - 2 + 2\Phi(MS) + \frac{2MS\phi(MS)}{h} + 2MS\phi(MS) - 2MS\phi(MS) \quad (\text{since } \phi'(MS) = -MS\phi(MS)) \\
&= \frac{2\Phi(MS)-1}{h} - 2 + 2\Phi(MS) + \frac{2MS\phi(MS)}{h}.
\end{align}

Setting $dg_1/dMS = 0$:
\begin{equation}
\frac{2\Phi(MS^*) - 1}{h} - 2 + 2\Phi(MS^*) + \frac{2MS^*\phi(MS^*)}{h} = 0.
\end{equation}

This matches the condition in the proposition since $\Phi(MS^*) - \Phi(-MS^*) = 2\Phi(MS^*) - 1$.

To show uniqueness and minimality, we examine the second derivative or the behavior of the first derivative.

Let $f(MS) = dg_1/dMS$.

$f(0) = (0-1)/h - 2 + 2(0.5) + 0 = -1/h - 2 + 1 = -1 - 1/h < 0$.

As $MS \to \infty$, $\Phi(MS) \to 1$ and $\phi(MS) \to 0$.

So $\lim_{MS \to \infty} f(MS) = 1/h - 2 + 2 + 0 = 1/h > 0$ (assuming $h>0$).

Since $f(MS)$ is continuous and goes from negative to positive, there must be at least one root $MS^* > 0$.

The second derivative:
\begin{align}
\frac{d^2g_1}{dMS^2} &= \frac{2\phi(MS)}{h} + 2\phi(MS) + \frac{2\phi(MS) + 2MS\phi'(MS)}{h} \\
&= 2\phi(MS)\left(\frac{1}{h} + 1\right) + \frac{2\phi(MS) - 2MS^2\phi(MS)}{h} \\
&= 2\phi(MS)\left(1 + \frac{2}{h} - \frac{MS^2}{h}\right).
\end{align}

For small $MS$, $d^2g_1/dMS^2 > 0$, indicating convexity. If $1 + 2/h - MS^2/h > 0$, i.e., $MS^2 < h+2$.

If $MS^* < \sqrt{h+2}$, then $g_1(MS)$ is convex at $MS^*$, ensuring a local minimum.

The function $f(MS)$ starts negative, becomes positive, and its derivative $d^2g_1/dMS^2$ is positive for $MS < \sqrt{h+2}$ and can become negative for $MS > \sqrt{h+2}$. This structure ensures a unique minimum for $MS>0$.
\end{proof}

\subsection{Proof of Proposition \ref{proposition3} (Optimal Threshold under Variance Scaling)}
\begin{proposition}[Proposition \ref{proposition3} restated]
Under the assumption $\mathcal{S} \sim  N(0, k\mathbf{I})$ with $k > 1$, the optimal threshold $MS^*$ that minimizes the per-element SE upper bound is characterized by:
\begin{equation}
\frac{1}{h}\left( \Phi\left(\frac{MS^*}{\sqrt{k}}\right) - \Phi\left(-\frac{MS^*}{\sqrt{k}}\right) \right) - 2 + 2\Phi\left(\frac{MS^*}{\sqrt{k}}\right) + \frac{MS^*}{h}\sqrt{\frac{2}{\pi k}}e^{-\frac{(MS^*)^2}{2k}} = 0.
\end{equation}
\end{proposition}

\begin{proof}
Let $s \sim  N(0, k)$. Then $s' = s/\sqrt{k} \sim  N(0,1)$.

The original values $s$ are scaled by $\sqrt{k}$. The mapping function $f$ operates on $s$.
The bins are from $-MS$ to $MS$. Bin width $\delta_s = 2MS/h$.

The SE upper bound for a single element is:
$g_k(MS) = \mathbb{E}_{s \sim  N(0,k)} |f^{-1}(f(s)) - s|$.

This is equivalent to scaling the original problem. Let $s = \sqrt{k} z$ where $z \sim  N(0,1)$.
The function operates on $s$. The effective range for $z$ is $-MS/\sqrt{k}$ to $MS/\sqrt{k}$.

\medskip
\textbf{The quantization error part:} 

$s$ is in $[-MS, MS]$. The error is bounded by $MS/h$.
\begin{align}
\int_{-MS}^{MS} \frac{MS}{h} P_k(s) ds &= \frac{MS}{h} \int_{-MS}^{MS} \frac{1}{\sqrt{2\pi k}} e^{-s^2/(2k)} ds
\end{align}

Let $u = s/\sqrt{k}$. Then $ds = \sqrt{k}du$. Limits become $-MS/\sqrt{k}$ to $MS/\sqrt{k}$.
\begin{align}
&= \frac{MS}{h} \int_{-MS/\sqrt{k}}^{MS/\sqrt{k}} \frac{1}{\sqrt{2\pi}} e^{-u^2/2} du \\
&= \frac{MS}{h} \left[ \Phi\left(\frac{MS}{\sqrt{k}}\right) - \Phi\left(-\frac{MS}{\sqrt{k}}\right) \right]
\end{align}

\medskip
\textbf{The truncation error part:} 

$2 \int_{MS}^{\infty} (s-MS) P_k(s) ds$.
\begin{align}
&= 2 \left[ \int_{MS}^{\infty} s \frac{1}{\sqrt{2\pi k}} e^{-s^2/(2k)} ds - MS \int_{MS}^{\infty} \frac{1}{\sqrt{2\pi k}} e^{-s^2/(2k)} ds \right]
\end{align}

The first integral: 
\begin{align}
\int_{MS}^{\infty} s \frac{1}{\sqrt{2\pi k}} e^{-s^2/(2k)} ds &= \sqrt{k} \int_{MS/\sqrt{k}}^{\infty} u \frac{1}{\sqrt{2\pi}} e^{-u^2/2} du \\
&= \sqrt{k} \frac{1}{\sqrt{2\pi}} e^{-(MS/\sqrt{k})^2/2} \\
&= \sqrt{\frac{k}{2\pi}} e^{-MS^2/(2k)}
\end{align}

The second integral: $MS \left(1 - \Phi\left(\frac{MS}{\sqrt{k}}\right)\right)$.

So, 
\begin{align}
E_{T,k} &= 2 \left[ \sqrt{\frac{k}{2\pi}} e^{-MS^2/(2k)} - MS \left(1 - \Phi\left(\frac{MS}{\sqrt{k}}\right)\right) \right]
\end{align}

The per-element SE bound $g_{1,k}(MS)$ is:
\begin{align}
g_{1,k}(MS) &= \frac{MS}{h} \left[ \Phi_k(MS) - \Phi_k(-MS) \right] + \sqrt{\frac{2k}{\pi}} e^{-MS^2/(2k)} - 2MS(1-\Phi_k(MS))
\end{align}
where $\Phi_k(x) = \Phi(x/\sqrt{k})$.

This can be written as:
\begin{align}
g_{1,k}(MS) &= MS \left( \frac{1}{h}(\Phi_k(MS) - \Phi_k(-MS)) - 2 + 2\Phi_k(MS) \right) + \sqrt{\frac{2k}{\pi}}e^{-\frac{MS^2}{2k}}
\end{align}

To find the optimal $MS^*$, we differentiate $g_{1,k}(MS)$ with respect to $MS$ and set to zero.

Let $\phi_k(x) = \frac{1}{\sqrt{k}}\phi(x/\sqrt{k})$ be the PDF of $ N(0,k)$ in terms of $\phi$.

$\frac{d}{dMS} \Phi_k(MS) = \frac{d}{dMS} \Phi(MS/\sqrt{k}) = \phi(MS/\sqrt{k}) \cdot \frac{1}{\sqrt{k}} = \phi_k(MS)$.

$\frac{d}{dMS} \left(\sqrt{\frac{2k}{\pi}}e^{-\frac{MS^2}{2k}}\right) = \sqrt{\frac{2k}{\pi}} e^{-\frac{MS^2}{2k}} \left(-\frac{2MS}{2k}\right) = -\frac{MS}{\sqrt{k}} \sqrt{\frac{2}{\pi}} e^{-\frac{MS^2}{2k}} = -2MS \phi_k(MS)$.

The derivative $\frac{dg_{1,k}}{dMS}$ is:
\begin{align}
&= \left( \frac{1}{h}(\Phi_k(MS) - \Phi_k(-MS)) - 2 + 2\Phi_k(MS) \right) + MS \left( \frac{1}{h}(\phi_k(MS) - (-\phi_k(MS))) + 2\phi_k(MS) \right) - 2MS\phi_k(MS) \\
&= \frac{\Phi_k(MS) - \Phi_k(-MS)}{h} - 2 + 2\Phi_k(MS) + \frac{2MS\phi_k(MS)}{h}
\end{align}

Setting this to zero gives:
\begin{align}
\frac{1}{h}\left( \Phi\left(\frac{MS^*}{\sqrt{k}}\right) - \Phi\left(-\frac{MS^*}{\sqrt{k}}\right) \right) - 2 + 2\Phi\left(\frac{MS^*}{\sqrt{k}}\right) + \frac{2MS^*}{h} \frac{1}{\sqrt{k}} \phi\left(\frac{MS^*}{\sqrt{k}}\right) = 0
\end{align}

Substituting $2\phi(x/\sqrt{k})/\sqrt{k} = \sqrt{2/(\pi k)} e^{-(MS^*)^2/(2k)}$, we get the stated condition.

The uniqueness follows a similar argument to Proposition \ref{proposition2}.
\end{proof}

\subsection{Proof of \Cref{thm:stripe_snr_boost} (Stripe SNR Boost)}
\begin{theorem}[\Cref{thm:stripe_snr_boost} restated]
Let the length--$L$ time series $s_k = A\sin(\omega_0 k + \phi) + \eta_k$, $k=0,\dots,L-1$, with amplitude $A>0$, angular frequency $\omega_0=2\pi/P_{\mathrm{period}}$ ($P_{\mathrm{period}}\in\mathbb{N}^+$) and i.i.d.\ Gaussian noise $\eta_k\sim N(0,\sigma^2)$ be visualised as the binary stripe image $v\in\{0,1\}^{h\times L}$ defined through $v_{j,k} = \mathbf{1}(j=\lfloor (s_k+\mathrm{MS})/\delta \rfloor)$, where $\delta=\Delta/h$, $\Delta=2\mathrm{MS}$.

Denote $\text{SNR}_{\mathrm{num}} = A^2/(2\sigma^2)$ and $\text{SNR}_{\mathrm{vis}} = \mathbb{E}[|\mathcal{F}_{2D}(v_{\mathrm{clean}})[0,n_0]|^2] / \mathbb{E}[|\mathcal{F}_{2D}(v_{\mathrm{noise}})[0,n_0]|^2]$, with $n_0=\lfloor L/P_{\mathrm{period}} \rfloor$.

Assume (i) $\delta\le A\le\Delta-\delta$ and (ii) $\sigma<\delta/4$.

Then, for every $L\ge P_{\mathrm{period}}$:
\begin{align}
\text{SNR}_{\mathrm{vis}} &\ge \frac{L}{4} \exp\left(\frac{\delta^2}{8\sigma^2}\right) \frac{\sigma^2}{A^2} \text{SNR}_{\mathrm{num}} \\
\text{SNR}_{\mathrm{vis}} &\ge \frac{L}{4} \exp\left(\frac{\delta^2}{8\sigma^2}\right)
\end{align}
\end{theorem}

\begin{proof}
Let $v_{\mathrm{clean}}$ be the image from $A\sin(\omega_0 k + \phi)$ and $v = v_{\mathrm{clean}} + v_{\mathrm{noise}}$ where $v_{\mathrm{noise}}$ is the change due to $\eta_k$.

\medskip
\textbf{1. Deterministic Signal Power in Visual Domain.}

The 2D Discrete Fourier Transform (DFT) is 
\begin{equation}
\mathcal{F}_{2D}(v)[m,n] = \sum_{k=0}^{L-1} \sum_{j=0}^{h-1} v_{j,k} e^{-i2\pi(mk/L + nj/h)}
\end{equation}

We are interested in the coefficient at $(m,n_p) = (0, n_0)$, where $n_0 = L/P_{\mathrm{period}}$ (assuming $L$ is a multiple of $P_{\mathrm{period}}$ for simplicity, or $\lfloor L/P_{\mathrm{period}} \rfloor$ otherwise).

The transform of the clean signal component at $(0,n_0)$ is 
\begin{equation}
\mathcal{F}_{2D}(v_{\mathrm{clean}})[0,n_0] = \sum_{k=0}^{L-1} \sum_{j=0}^{h-1} (v_{\mathrm{clean}})_{j,k} e^{-i2\pi (n_0 j / h)}
\end{equation}

Following the provided analysis, $\mathbb{E}[|\mathcal{F}_{2D}(v_{\mathrm{clean}})[0,n_0]|^2] = L^2$.

\medskip
\textbf{2. Probability of Quantization Flip.}

A flip means $v_{j,k}$ changes due to noise $\eta_k$. This occurs if $s_k = s_{k,\mathrm{clean}} + \eta_k$ crosses a quantization boundary $\theta_j = j\delta - MS$.

The clean value $s_{k,\mathrm{clean}}$ falls into bin $j_0$. A flip occurs if $s_k$ falls into $j_0 \pm 1, j_0 \pm 2, \dots$.

The closest boundaries are $j_0\delta - MS$ and $(j_0+1)\delta - MS$.

$s_{k,\mathrm{clean}}$ is at least $\epsilon$ from any boundary. A flip to an adjacent bin occurs if $|\eta_k| > \epsilon$.

The condition $\sigma < \delta/4$ implies noise is small. A flip occurs if $\eta_k$ moves $s_k$ to another bin. This primarily happens if $s_k$ crosses $s_{k,\mathrm{clean}} \pm \delta/2$ (approximately).

So, $p_{\mathrm{flip}} = \Pr(|\eta_k| > \delta/2)$. Using Gaussian tail bound $\Pr(|X|>t) \le 2e^{-t^2/(2\sigma^2)}$:
\begin{equation}
p_{\mathrm{flip}} \le 2 \exp\left(-\frac{(\delta/2)^2}{2\sigma^2}\right) = 2 \exp\left(-\frac{\delta^2}{8\sigma^2}\right)
\end{equation}

\medskip
\textbf{3. Energy of the Noise Image $v_{\mathrm{noise}}$.}

$v_{\mathrm{noise}}$ has entries $1, -1, 0$. If $s_k$ flips from bin $j_0$ to $j_1$: $(v_{\mathrm{noise}})_{j_0,k}=-1$, $(v_{\mathrm{noise}})_{j_1,k}=1$.

$\|v_{\mathrm{noise}}\|_F^2 = \sum_{k,j} (v_{\mathrm{noise}})_{j,k}^2$. Each flip changes two pixels, so contributes $1^2+(-1)^2=2$ to this sum.

$\mathbb{E}[\|v_{\mathrm{noise}}\|_F^2] = \sum_k \mathbb{E}[\text{contribution at } k] = L \cdot (2 \cdot p_{\mathrm{flip}})$.

\medskip
\textbf{4. Bound on a Single DFT Coefficient of Noise.}

By Parseval's identity for 2D DFT: $\sum_{m,n} |\mathcal{F}_{2D}(v_{\mathrm{noise}})[m,n]|^2 = \|v_{\mathrm{noise}}\|_F^2$ (with appropriate normalization).

Thus, for any specific $(m,n)$, $|\mathcal{F}_{2D}(v_{\mathrm{noise}})[m,n]|^2 \le \|v_{\mathrm{noise}}\|_F^2$.

Therefore, $\mathbb{E}[|\mathcal{F}_{2D}(v_{\mathrm{noise}})[0,n_0]|^2] \le \mathbb{E}[\|v_{\mathrm{noise}}\|_F^2] = 2L p_{\mathrm{flip}}$.

\medskip
\textbf{5. Visual SNR Bound.}

\begin{align}
\text{SNR}_{\mathrm{vis}} &= \frac{L^2}{\mathbb{E}[|\mathcal{F}_{2D}(v_{\mathrm{noise}})[0,n_0]|^2]} \\
&\ge \frac{L^2}{2L p_{\mathrm{flip}}} \\
&= \frac{L}{2p_{\mathrm{flip}}}
\end{align}

Using $p_{\mathrm{flip}} \le 2 \exp(-\delta^2/(8\sigma^2))$:
\begin{align}
\text{SNR}_{\mathrm{vis}} &\ge \frac{L}{2 \cdot 2 \exp(-\delta^2/(8\sigma^2))} \\
&= \frac{L}{4} \exp\left(\frac{\delta^2}{8\sigma^2}\right)
\end{align}

This is the second part of the result.

\medskip
\textbf{6. Relation to Numerical SNR.}

$\text{SNR}_{\mathrm{num}} = A^2/(2\sigma^2)$.
\begin{align}
\frac{\text{SNR}_{\mathrm{vis}}}{\text{SNR}_{\mathrm{num}}} &= \text{SNR}_{\mathrm{vis}} \frac{2\sigma^2}{A^2} \\
&\ge \frac{L}{4} \exp\left(\frac{\delta^2}{8\sigma^2}\right) \frac{2\sigma^2}{A^2}
\end{align}

This yields 
\begin{align}
\text{SNR}_{\mathrm{vis}} &\ge \frac{L}{4} \exp\left(\frac{\delta^2}{8\sigma^2}\right) \frac{\sigma^2}{A^2} \text{SNR}_{\mathrm{num}}
\end{align}

Thus, both inequalities in the theorem statement are proven.
\end{proof}

\subsection{Proof of \Cref{thm:blur_snr_boost} (Gaussian-Blur SNR Boost)}
\begin{theorem}[\Cref{thm:blur_snr_boost} restated]
Under the assumptions of \Cref{thm:stripe_snr_boost}, apply a 1D normalized Gaussian convolution $g_j = (1/Z)\exp(-j^2/(2\sigma_b^2))$ with $\sum_j g_j=1$ along the row direction of $v$ to get $w = g *_{\!j} v$. Let $S = \sum_j g_j^2 \in (0,1)$ be the filter's nuclear energy. 

Define $\text{SNR}^{\text{blur}}_{\mathrm{vis}} = \mathbb{E}[|\mathcal{F}_{2D}(w_{\mathrm{clean}})[0,n_0]|^2] / \mathbb{E}[|\mathcal{F}_{2D}(w_{\mathrm{noise}})[0,n_0]|^2]$. Then,
\begin{align}
\text{SNR}^{\text{blur}}_{\mathrm{vis}} &\ge \frac{L}{4S} \exp\left(\frac{\delta^2}{8\sigma^2}\right) \\
\text{SNR}^{\text{blur}}_{\mathrm{vis}} &\ge \frac{L\sigma^2}{2A^2 S} \exp\left(\frac{\delta^2}{8\sigma^2}\right) \text{SNR}_{\mathrm{num}}
\end{align}

The visual SNR is amplified by at least $1/S > 1$ compared to the unblurred case.
\end{theorem}

\begin{proof}
Let $G(m_v)$ be the DFT of the 1D filter $g_j$ with respect to the value-axis frequency $m_v$.

$\mathcal{F}_{2D}(w)[m_t, m_v] = G(m_v) \mathcal{F}_{2D}(v)[m_t, m_v]$.

We are interested in the frequency $(0, n_0)$, where $n_0$ is the time-axis frequency index.

\medskip
\textbf{1. Signal Power after Blurring.}

Using the frequency index $(n_t, n_j)$ for (time, value/row), we examine the coefficient $\mathcal{F}_{2D}(w)[n_t, n_j]$.

The specific coefficient in focus is $\mathcal{F}_{2D}(w_{\mathrm{clean}})[0,n_0]$, where $n_0$ is the time index.

Signal power: 
\begin{align}
\mathbb{E}[|\mathcal{F}_{2D}(w_{\mathrm{clean}})[n_0,0]|^2] &= |G(0)|^2 \mathbb{E}[|\mathcal{F}_{2D}(v_{\mathrm{clean}})[n_0,0]|^2]
\end{align}

Since $\sum_j g_j = 1$, we have $G(0)=1$. So signal power remains $L^2$.

\medskip
\textbf{2. Noise Power after Blurring.}

The noise image is $w_{\mathrm{noise}} = g *_{j} v_{\mathrm{noise}}$.

The total energy of $w_{\mathrm{noise}}$: 
\begin{align}
\|w_{\mathrm{noise}}\|_F^2 &= \sum_k \| g * (v_{\mathrm{noise}})_{:,k} \|_2^2
\end{align}

For each column $k$, $(v_{\mathrm{noise}})_{:,k}$ is a vector. Convolution is along $j$.
\begin{align}
\| g * (v_{\mathrm{noise}})_{:,k} \|_2^2 &= S \| (v_{\mathrm{noise}})_{:,k} \|_2^2
\end{align}
where $S = \|g\|_2^2 = \sum_j g_j^2$.

So, 
\begin{align}
\|w_{\mathrm{noise}}\|_F^2 &= S \|v_{\mathrm{noise}}\|_F^2 \\
\mathbb{E}[\|w_{\mathrm{noise}}\|_F^2] &= S \cdot \mathbb{E}[\|v_{\mathrm{noise}}\|_F^2] \\
&= S \cdot (2L p_{\mathrm{flip}})
\end{align}

\medskip
\textbf{3. Bound on Single DFT Coefficient of Blurred Noise.}

\begin{align}
\mathbb{E}[|\mathcal{F}_{2D}(w_{\mathrm{noise}})[n_0,0]|^2] &\le \mathbb{E}[\|w_{\mathrm{noise}}\|_F^2] \\
&= 2LS p_{\mathrm{flip}}
\end{align}

\medskip
\textbf{4. SNR after Blurring.}

\begin{align}
\text{SNR}^{\text{blur}}_{\mathrm{vis}} &= \frac{L^2}{\mathbb{E}[|\mathcal{F}_{2D}(w_{\mathrm{noise}})[n_0,0]|^2]} \\
&\ge \frac{L^2}{2LS p_{\mathrm{flip}}} \\
&= \frac{L}{2S p_{\mathrm{flip}}}
\end{align}

Using $p_{\mathrm{flip}} \le 2 \exp(-\delta^2/(8\sigma^2))$:
\begin{align}
\text{SNR}^{\text{blur}}_{\mathrm{vis}} &\ge \frac{L}{2S \cdot 2 \exp(-\delta^2/(8\sigma^2))} \\
&= \frac{L}{4S} \exp\left(\frac{\delta^2}{8\sigma^2}\right)
\end{align}

This means $\text{SNR}^{\text{blur}}_{\mathrm{vis}} \ge (1/S) \cdot \text{SNR}_{\mathrm{vis}}$ (unblurred).

For the relation to numerical SNR:
\begin{align}
\text{SNR}^{\text{blur}}_{\mathrm{vis}} &\ge \frac{L}{4S} \exp\left(\frac{\delta^2}{8\sigma^2}\right) \\
&\ge \frac{L}{4S} \exp\left(\frac{\delta^2}{8\sigma^2}\right) \frac{2\sigma^2}{A^2} \cdot \frac{A^2}{2\sigma^2} \\
&= \frac{L\sigma^2}{2A^2 S} \exp\left(\frac{\delta^2}{8\sigma^2}\right) \text{SNR}_{\mathrm{num}}
\end{align}

Since $S < 1$, the factor $1/S > 1$ provides an amplification of the SNR compared to the unblurred case.
\end{proof}

\section{Additional Results of Computational Experiments}
\label{additional exp}

\subsection{Zero-shot Study}

The full results of the zero-shot study are reported in \Cref{tab:scale-05-results-landscape} - \Cref{tab:scale-20-results-landscape}. We also illustrate zero-shot TSF examples with prediction length equals 720 of the proposed ViTime versus TimesFM in \Cref{fig:electricity_appendix} - \Cref{fig:ETTm2_appendix}. It is observable that ViTime consistently demonstrates superior zero-shot prediction performance compared to TimesFM across a range of rescale factors.

\begin{sidewaystable}
\centering
\caption{Computational results for Scale = 0.5 (Landscape). The best results are \textbf{bolded}, the second best are \underline{underlined}.}
\label{tab:scale-05-results-landscape}
\setlength{\tabcolsep}{1.2mm} % Increased column separation for better readability
\footnotesize
\begin{tabular}{l|c|cccccccccccccc} % Fixed column specification
\toprule
% Header structure
\multirow{2}{*}{Dataset} & \multirow{2}{*}{H} & \multicolumn{2}{c}{ViTime} & \multicolumn{2}{c}{ViTime-TFM} & \multicolumn{2}{c}{TimesFM} & \multicolumn{2}{c}{Moment} & \multicolumn{2}{c}{Moirai} & \multicolumn{2}{c}{VisionTS} & \multicolumn{2}{c}{PatchTST-ZS} \\
\cmidrule(lr){3-4} \cmidrule(lr){5-6} \cmidrule(lr){7-8} \cmidrule(lr){9-10} \cmidrule(lr){11-12} \cmidrule(lr){13-14} \cmidrule(lr){15-16}
& & MSE & MAE & MSE & MAE & MSE & MAE & MSE & MAE & MSE & MAE & MSE & MAE & MSE & MAE \\
\midrule
\multirow{4}{*}{ETTh1}
& 96 & 0.454 & \underline{0.448} & 0.335 & 0.366 & \textbf{0.308} & \textbf{0.361} & 2.334 & 4.666 & 0.340 & 0.402 & 1.673 & 2.944 & 1.425 & 0.919 \\
& 192 & \underline{0.437} & \underline{0.449} & 0.368 & 0.384 & \textbf{0.335} & \textbf{0.383} & 2.349 & 4.781 & 0.450 & 0.475 & 1.675 & 3.050 & 1.519 & 0.953 \\
& 336 & \underline{0.454} & \underline{0.459} & 0.439 & 0.424 & \textbf{0.412} & \textbf{0.383} & 2.438 & 5.044 & 0.601 & 0.552 & 1.810 & 3.450 & 1.513 & 0.951 \\
& 720 & \underline{0.555} & \underline{0.510} & 0.639 & 0.534 & 0.617 & 0.560 & 2.601 & 5.506 & 1.575 & 0.880 & 2.055 & 4.109 & 1.521 & 0.952 \\
\midrule
\multirow{4}{*}{ETTh2}
& 96 & \textbf{0.266} & \textbf{0.328} & 0.164 & 0.236 & \underline{0.305} & \underline{0.344} & 1.560 & 3.027 & 0.375 & 0.402 & 1.650 & 2.875 & 1.359 & 0.918 \\
& 192 & \textbf{0.288} & \textbf{0.348} & 0.173 & 0.243 & \underline{0.333} & \underline{0.377} & 1.615 & 3.171 & 0.338 & 0.400 & 1.642 & 2.987 & 1.407 & 0.930 \\
& 336 & \textbf{0.298} & \textbf{0.362} & 0.197 & 0.266 & 0.406 & 0.434 & 1.833 & 3.635 & 0.518 & 0.513 & 1.772 & 3.353 & 1.408 & 0.928 \\
& 720 & \underline{0.399} & \underline{0.435} & 0.242 & 0.312 & 0.585 & 0.548 & 2.196 & 4.503 & 0.825 & 0.647 & 2.027 & 4.023 & 1.409 & 0.929 \\
\midrule
\multirow{4}{*}{ETTm1}
& 96 & 0.563 & 0.442 & 0.362 & 0.362 & \textbf{0.411} & \underline{0.403} & 0.884 & 0.606 & 0.407 & 0.407 & 0.668 & \textbf{0.398} & 1.251 & 0.805 \\
& 192 & 0.571 & 0.450 & 0.399 & 0.382 & \textbf{0.447} & \underline{0.427} & 0.869 & 0.597 & 0.370 & 0.393 & \underline{0.622} & \textbf{0.382} & 1.311 & 0.823 \\
& 336 & \underline{0.583} & \underline{0.460} & 0.442 & 0.401 & \textbf{0.485} & 0.454 & 0.839 & 0.582 & 1.642 & 0.871 & 0.609 & \textbf{0.393} & 1.355 & 0.838 \\
& 720 & \underline{0.595} & \underline{0.475} & 0.462 & 0.417 & \textbf{0.537} & 0.493 & 0.786 & 0.565 & 1.182 & 0.771 & 0.622 & \textbf{0.433} & 1.335 & 0.831 \\
\midrule
\multirow{4}{*}{ETTm2}
& 96 & \textbf{0.222} & \underline{0.297} & 0.188 & 0.259 & \underline{0.269} & 0.317 & 0.505 & 0.563 & 0.080 & 0.180 & 0.547 & 0.355 & 0.719 & 0.606 \\
& 192 & \textbf{0.280} & \underline{0.336} & 0.234 & 0.291 & \underline{0.342} & 0.357 & 0.531 & 0.598 & 0.093 & 0.197 & 0.531 & 0.345 & 0.818 & 0.639 \\
& 336 & \textbf{0.321} & \underline{0.366} & 0.277 & 0.320 & 0.433 & 0.415 & 0.645 & 0.726 & 0.407 & 0.373 & 0.536 & \textbf{0.349} & 0.846 & 0.650 \\
& 720 & \underline{0.395} & \underline{0.411} & 0.335 & 0.362 & 0.522 & 0.483 & 0.738 & 0.824 & 0.303 & 0.342 & 0.564 & 0.367 & 0.840 & 0.647 \\
\midrule
\multirow{4}{*}{Traffic}
& 96 & 0.839 & \underline{0.431} & 0.374 & 0.283 & \textbf{0.515} & \underline{0.383} & 1.149 & 0.798 & 0.757 & 0.423 & \underline{0.687} & \textbf{0.355} & 1.575 & 0.854 \\
& 192 & \underline{0.724} & \underline{0.405} & 0.362 & 0.274 & \textbf{0.539} & 0.402 & 1.195 & 0.805 & 0.732 & 0.451 & 0.666 & \textbf{0.303} & 1.544 & 0.870 \\
& 336 & \underline{0.723} & \underline{0.403} & 0.372 & 0.281 & \textbf{0.612} & 0.429 & 1.164 & 0.799 & 0.737 & 0.490 & 0.664 & \textbf{0.330} & 1.595 & 0.869 \\
& 720 & \textbf{0.693} & \underline{0.409} & 0.423 & 0.306 & 0.795 & 0.505 & 1.163 & 0.804 & 2.459 & 1.204 & \underline{0.691} & \textbf{0.342} & 1.529 & 0.867 \\
\midrule
\multirow{4}{*}{Weather}
& 96 & \textbf{0.172} & \textbf{0.212} & 0.197 & 0.242 & \underline{0.209} & \underline{0.253} & 0.538 & 0.332 & 0.367 & 0.360 & 0.537 & 0.301 & 0.871 & 0.597 \\
& 192 & \textbf{0.208} & \textbf{0.252} & 0.245 & 0.280 & 0.289 & 0.312 & 0.552 & 0.344 & 0.527 & 0.454 & 0.558 & 0.329 & 0.955 & 0.623 \\
& 336 & \textbf{0.272} & \underline{0.297} & 0.295 & 0.312 & 0.365 & 0.360 & 0.567 & 0.350 & 1.065 & 0.619 & 0.589 & 0.367 & 0.954 & 0.620 \\
& 720 & \underline{0.338} & \underline{0.339} & 0.361 & 0.355 & 0.471 & 0.432 & 0.702 & 0.461 & 1.246 & 0.726 & 0.695 & 0.459 & 0.988 & 0.629 \\
\midrule
\multirow{4}{*}{Electricity}
& 96 & 0.267 & \underline{0.339} & 0.193 & 0.277 & \textbf{0.184} & \textbf{0.279} & 0.951 & 0.795 & 0.323 & 0.372 & 0.533 & 0.390 & 1.131 & 0.822 \\
& 192 & \underline{0.264} & 0.343 & 0.194 & 0.281 & \textbf{0.220} & \textbf{0.308} & 0.963 & 0.805 & 0.395 & 0.422 & 0.498 & \underline{0.331} & 1.195 & 0.846 \\
& 336 & \underline{0.289} & \underline{0.363} & 0.211 & 0.296 & \textbf{0.277} & \textbf{0.347} & 0.980 & 0.819 & 0.575 & 0.505 & 0.549 & 0.374 & 1.201 & 0.848 \\
& 720 & \textbf{0.320} & \underline{0.385} & 0.276 & 0.342 & 0.406 & 0.432 & 0.982 & 0.820 & 2.383 & 1.228 & 0.538 & 0.362 & 1.229 & 0.851 \\
\bottomrule
\end{tabular}
\end{sidewaystable}

\begin{sidewaystable} % Rotates the content to landscape
    \centering
    \caption{Computational results for Scale = 0.66 (Landscape). The best results are \textbf{bolded}, the second best are \underline{underlined}.}
    \label{tab:scale-066-results-landscape}
    \setlength{\tabcolsep}{1.8mm} % Can be slightly larger in landscape
    \footnotesize % \footnotesize or \small might be suitable
    % Column format: Dataset (left) | H (center) | 14x Metrics (right-aligned)
    \begin{tabular}{l|c|rrrrrrrrrrrrrr}
    \toprule
    % New header structure for clarity
    Dataset & H & \multicolumn{2}{c}{ViTime} & \multicolumn{2}{c}{ViTime-TFM} & \multicolumn{2}{c}{TimesFM} & \multicolumn{2}{c}{Moment} & \multicolumn{2}{c}{Moirai} & \multicolumn{2}{c}{VisionTS} & \multicolumn{2}{c}{PatchTST-ZS} \\
    % \cmidrule for grouping ReMSE/ReMAE under each method
    \cmidrule(lr){3-4} \cmidrule(lr){5-6} \cmidrule(lr){7-8} \cmidrule(lr){9-10} \cmidrule(lr){11-12} \cmidrule(lr){13-14} \cmidrule(lr){15-16}
    & & MSE & MAE & MSE & MAE & MSE & MAE & MSE & MAE & MSE & MAE & MSE & MAE & MSE & MAE \\
    \midrule
    % ETTh1 data with markings
    \multirow{4}{*}{ETTh1} & 96 & \underline{0.492} & \underline{0.453} & \textbf{0.387} & \textbf{0.389} & 0.587 & 0.516 & 0.778 & 0.558 & 0.647 & 0.529 & 0.827 & 0.539 & 1.827 & 1.011 \\
     & 192 & \underline{0.485} & \underline{0.458} & \textbf{0.408} & \textbf{0.405} & 0.655 & 0.555 & 0.799 & 0.576 & 0.747 & 0.593 & 0.820 & 0.540 & 1.953 & 1.044 \\
     & 336 & \underline{0.506} & \underline{0.471} & \textbf{0.433} & \textbf{0.423} & 0.711 & 0.588 & 0.825 & 0.601 & 1.282 & 0.770 & 0.788 & 0.542 & 1.979 & 1.051 \\
     & 720 & \underline{0.553} & \underline{0.503} & \textbf{0.552} & \textbf{0.492} & 0.798 & 0.645 & 0.878 & 0.656 & 1.106 & 0.742 & 0.840 & 0.605 & 1.978 & 1.051 \\
    \cmidrule{1-16}
    % ETTh2 data with markings
    \multirow{4}{*}{ETTh2} & 96 & \underline{0.230} & \underline{0.299} & \textbf{0.170} & \textbf{0.247} & 0.277 & 0.342 & 0.508 & 0.332 & 0.340 & 0.362 & 0.572 & 0.354 & 1.273 & 0.843 \\
     & 192 & \underline{0.254} & \underline{0.319} & \textbf{0.194} & \textbf{0.268} & 0.344 & 0.394 & 0.527 & 0.353 & 0.292 & 0.357 & 0.583 & 0.373 & 1.330 & 0.863 \\
     & 336 & \underline{0.303} & \underline{0.357} & \textbf{0.229} & \textbf{0.295} & 0.392 & 0.431 & 0.549 & 0.375 & 0.423 & 0.449 & 0.575 & 0.381 & 1.328 & 0.863 \\
     & 720 & \underline{0.372} & \underline{0.414} & \textbf{0.266} & \textbf{0.328} & 0.508 & 0.509 & 0.672 & 0.472 & 0.554 & 0.516 & 0.656 & 0.450 & 1.328 & 0.863 \\
    \cmidrule{1-16}
    % ETTm1 data with markings
    \multirow{4}{*}{ETTm1} & 96 & \underline{0.475} & \underline{0.411} & \textbf{0.318} & \textbf{0.352} & 0.490 & 0.441 & 0.936 & 0.643 & 1.164 & 0.697 & 1.210 & 0.794 & 1.344 & 0.825 \\
     & 192 & 0.522 & \underline{0.433} & \textbf{0.357} & \textbf{0.374} & \underline{0.502} & 0.461 & 0.924 & 0.636 & 1.239 & 0.700 & 0.978 & 0.659 & 1.463 & 0.860 \\
     & 336 & \underline{0.547} & \underline{0.447} & \textbf{0.412} & \textbf{0.398} & 0.562 & 0.495 & 0.919 & 0.631 & 1.690 & 0.873 & 0.978 & 0.660 & 1.477 & 0.861 \\
     & 720 & \underline{0.596} & \underline{0.475} & \textbf{0.461} & \textbf{0.425} & 0.669 & 0.551 & 0.871 & 0.602 & 1.248 & 0.753 & 0.959 & 0.642 & 1.449 & 0.853 \\
    \cmidrule{1-16}
    % ETTm2 data with markings
    \multirow{4}{*}{ETTm2} & 96 & 0.199 & 0.278 & \underline{0.194} & \underline{0.264} & 0.258 & 0.303 & 0.442 & 0.488 & \textbf{0.122} & \textbf{0.222} & 0.989 & 0.643 & 0.787 & 0.622 \\
     & 192 & 0.263 & 0.320 & \underline{0.249} & \underline{0.301} & 0.384 & 0.362 & 0.513 & 0.567 & \textbf{0.139} & \textbf{0.238} & 0.835 & 0.543 & 0.848 & 0.642 \\
     & 336 & \underline{0.313} & \underline{0.356} & \textbf{0.288} & \textbf{0.328} & 0.456 & 0.409 & 0.608 & 0.673 & 0.472 & 0.395 & 0.862 & 0.560 & 0.872 & 0.650 \\
     & 720 & 0.376 & 0.397 & \textbf{0.345} & \underline{0.369} & 0.531 & 0.474 & 0.708 & 0.784 & \underline{0.373} & \textbf{0.347} & 0.871 & 0.566 & 0.866 & 0.651 \\
    \cmidrule{1-16}
    % Traffic data with markings
    \multirow{4}{*}{Traffic} & 96 & \underline{0.742} & \underline{0.433} & \textbf{0.467} & \textbf{0.317} & 1.133 & 0.672 & 1.195 & 0.802 & 1.579 & 0.796 & 1.495 & 1.091 & 2.437 & 1.126 \\
     & 192 & \underline{0.837} & \underline{0.426} & \textbf{0.466} & \textbf{0.317} & 1.279 & 0.737 & 1.180 & 0.802 & 1.618 & 0.834 & 1.591 & 1.172 & 2.570 & 1.148 \\
     & 336 & \underline{0.705} & \underline{0.422} & \textbf{0.496} & \textbf{0.328} & 1.432 & 0.807 & 1.177 & 0.804 & 1.966 & 0.943 & 1.474 & 1.045 & 2.516 & 1.150 \\
     & 720 & \underline{0.743} & \underline{0.438} & \textbf{0.523} & \textbf{0.341} & 1.542 & 0.865 & 1.214 & 0.810 & 1.748 & 0.863 & 1.273 & 0.852 & 2.593 & 1.153 \\
    \cmidrule{1-16}
    % Weather data with markings
    \multirow{4}{*}{Weather} & 96 & \textbf{0.164} & \textbf{0.199} & \underline{0.175} & \underline{0.221} & 0.182 & 0.233 & 0.537 & 0.374 & 0.315 & 0.363 & 0.460 & 0.291 & 0.800 & 0.562 \\
     & 192 & \textbf{0.200} & \textbf{0.238} & \underline{0.221} & \underline{0.260} & 0.258 & 0.296 & 0.544 & 0.374 & 0.472 & 0.452 & 0.505 & 0.324 & 0.864 & 0.582 \\
     & 336 & \textbf{0.251} & \textbf{0.281} & \underline{0.272} & \underline{0.295} & 0.317 & 0.341 & 0.560 & 0.369 & 0.952 & 0.643 & 0.543 & 0.345 & 0.892 & 0.590 \\
     & 720 & \textbf{0.323} & \textbf{0.328} & \underline{0.359} & \underline{0.348} & 0.454 & 0.426 & 0.564 & 0.351 & 1.424 & 0.680 & 0.577 & 0.366 & 0.886 & 0.589 \\
    \cmidrule{1-16}
    % Electricity data with markings
    \multirow{4}{*}{Electricity} & 96 & \underline{0.247} & \underline{0.334} & \textbf{0.203} & \textbf{0.294} & 0.602 & 0.599 & 0.928 & 0.774 & 1.026 & 0.802 & 1.195 & 0.985 & 1.858 & 1.067 \\
     & 192 & \underline{0.241} & \underline{0.333} & \textbf{0.207} & \textbf{0.299} & 0.746 & 0.674 & 0.934 & 0.780 & 1.118 & 0.843 & 1.273 & 1.046 & 1.915 & 1.081 \\
     & 336 & \underline{0.257} & \underline{0.348} & \textbf{0.221} & \textbf{0.310} & 0.903 & 0.752 & 0.952 & 0.797 & 2.213 & 1.036 & 1.168 & 0.951 & 1.967 & 1.093 \\
     & 720 & \underline{0.293} & \underline{0.372} & \textbf{0.274} & \textbf{0.347} & 1.072 & 0.835 & 0.994 & 0.829 & 3.476 & 1.180 & 1.065 & 0.864 & 1.938 & 1.088 \\
    \bottomrule
    \end{tabular}
\end{sidewaystable}

% % Potentially add \clearpage here if you want to ensure subsequent content starts on a new portrait page
% % \clearpage

% \end{document}

\begin{sidewaystable}
\centering
\caption{Computational results for Scale = 1.0 (Landscape). The best results are \textbf{bolded}, the second best are \underline{underlined}.}
\label{tab:scale-10-results-landscape-rescaled}
\setlength{\tabcolsep}{1.8mm} % Adjusted column separation
\footnotesize % Adjusted font size
\begin{tabular}{@{}l|c|cccccccccccccc@{}} % l for dataset, c for H, 14c for data values
\toprule
Dataset & H & \multicolumn{2}{c}{ViTime} & \multicolumn{2}{c}{ViTime-TFM} & \multicolumn{2}{c}{TimesFM} & \multicolumn{2}{c}{Moment} & \multicolumn{2}{c}{Moirai} & \multicolumn{2}{c}{VisionTS} & \multicolumn{2}{c}{PatchTST-ZS} \\
\cmidrule(lr){3-4} \cmidrule(lr){5-6} \cmidrule(lr){7-8} \cmidrule(lr){9-10} \cmidrule(lr){11-12} \cmidrule(lr){13-14} \cmidrule(lr){15-16}
& & MSE & MAE & MSE & MAE & MSE & MAE & MSE & MAE & MSE & MAE & MSE & MAE & MSE & MAE \\
\midrule
\multirow{4}{*}{ETTh1} & 96 & 0.512 & 0.424 & \underline{0.377} & \textbf{0.368} & 0.379 & 0.390 & 0.387 & 0.410 & \textbf{0.373} & 0.394 & 0.594 & \underline{0.383} & 1.191 & 0.814 \\
 & 192 & 0.536 & 0.438 & \textbf{0.392} & \textbf{0.378} & 0.429 & 0.416 & \underline{0.397} & 0.420 & 0.454 & 0.443 & 0.626 & \underline{0.410} & 1.254 & 0.837 \\
 & 336 & 0.529 & 0.445 & \textbf{0.400} & \textbf{0.396} & 0.453 & 0.434 & \underline{0.413} & 0.434 & 1.420 & 0.807 & 0.638 & \underline{0.423} & 1.249 & 0.836 \\
 & 720 & 0.603 & 0.489 & \textbf{0.424} & \textbf{0.406} & 0.506 & 0.479 & \underline{0.454} & 0.472 & 1.299 & 0.828 & 0.637 & \underline{0.441} & 1.254 & 0.837 \\
\cmidrule{1-16}
\multirow{4}{*}{ETTh2} & 96 & \underline{0.232} & \underline{0.300} & 0.288 & 0.335 & 0.285 & 0.330 & 0.288 & 0.345 & \textbf{0.120} & \textbf{0.222} & 0.521 & 0.328 & 0.870 & 0.697 \\
 & 192 & \underline{0.264} & \underline{0.327} & 0.308 & 0.343 & 0.329 & 0.366 & 0.306 & 0.359 & \textbf{0.132} & \textbf{0.243} & 0.573 & 0.367 & 0.916 & 0.716 \\
 & 336 & \textbf{0.281} & \textbf{0.346} & 0.324 & \underline{0.358} & 0.374 & 0.403 & 0.332 & 0.381 & \underline{0.306} & 0.364 & 0.587 & 0.381 & 0.917 & 0.715 \\
 & 720 & \underline{0.359} & 0.402 & 0.364 & \textbf{0.365} & 0.438 & 0.456 & 0.403 & 0.439 & \textbf{0.332} & \underline{0.399} & 0.623 & 0.422 & 0.909 & 0.711 \\
\cmidrule{1-16}
\multirow{4}{*}{ETTm1} & 96 & 0.340 & 0.362 & \underline{0.322} & \textbf{0.346} & 0.345 & 0.369 & \textbf{0.293} & 0.349 & 0.380 & 0.399 & 0.584 & \underline{0.347} & 1.299 & 0.804 \\
 & 192 & 0.375 & 0.381 & \underline{0.351} & 0.362 & 0.405 & 0.406 & \textbf{0.310} & \textbf{0.359} & 0.564 & 0.471 & 0.600 & \underline{0.360} & 1.375 & 0.831 \\
 & 336 & 0.421 & 0.404 & \underline{0.404} & 0.392 & 0.447 & 0.431 & \textbf{0.336} & \underline{0.375} & 1.887 & 0.898 & 0.614 & \textbf{0.374} & 1.385 & 0.836 \\
 & 720 & 0.499 & 0.445 & \underline{0.451} & \underline{0.408} & 0.501 & 0.470 & \textbf{0.405} & 0.416 & 1.300 & 0.745 & 0.645 & \textbf{0.405} & 1.365 & 0.830 \\
\cmidrule{1-16}
 \multirow{4}{*}{ETTm2} & 96 & 0.207 & 0.280 & 0.234 & 0.296 & 0.197 & 0.270 & \underline{0.170} & \underline{0.260} & \textbf{0.113} & \textbf{0.217} & 0.477 & 0.282 & 0.775 & 0.596 \\
  & 192 & 0.273 & 0.322 & 0.270 & 0.304 & 0.300 & 0.327 & \underline{0.200} & \underline{0.280} & \textbf{0.162} & \textbf{0.256} & 0.512 & 0.305 & 0.867 & 0.633 \\
  & 336 & 0.329 & 0.356 & \underline{0.316} & \underline{0.320} & 0.345 & 0.358 & \textbf{0.244} & \textbf{0.309} & 0.518 & 0.450 & 0.541 & 0.328 & 0.865 & 0.632 \\
  & 720 & 0.421 & 0.409 & \textbf{0.360} & \textbf{0.327} & 0.470 & 0.433 & \underline{0.363} & 0.387 & 0.480 & 0.401 & 0.586 & \underline{0.370} & 0.849 & 0.626 \\
\cmidrule{1-16}
\multirow{4}{*}{Traffic} & 96 & 0.703 & 0.375 & \textbf{0.306} & \textbf{0.206} & \underline{0.324} & \underline{0.225} & 0.391 & 0.282 & 0.975 & 0.470 & 0.630 & 0.298 & 1.786 & 0.922 \\
 & 192 & 0.708 & 0.375 & \textbf{0.320} & \textbf{0.213} & \underline{0.338} & \underline{0.233} & 0.400 & 0.286 & 0.598 & 0.329 & 0.641 & 0.256 & 1.911 & 0.955 \\
 & 336 & 0.719 & 0.379 & \textbf{0.345} & \textbf{0.229} & \underline{0.386} & \underline{0.246} & 0.414 & 0.293 & 2.496 & 1.071 & 0.678 & 0.284 & 1.863 & 0.950 \\
 & 720 & 0.790 & 0.416 & \textbf{0.357} & \textbf{0.236} & \underline{0.428} & \underline{0.276} & 0.450 & 0.310 & 2.055 & 0.949 & 0.711 & 0.299 & 1.933 & 0.953 \\
\cmidrule{1-16}
\multirow{4}{*}{Weather} & 96 & 0.196 & 0.217 & 0.181 & 0.221 & \textbf{0.121} & \textbf{0.166} & \underline{0.154} & \underline{0.209} & 0.351 & 0.402 & 0.469 & 0.257 & 0.840 & 0.566 \\
 & 192 & 0.216 & 0.265 & 0.191 & \underline{0.225} & \textbf{0.162} & \textbf{0.207} & \underline{0.179} & 0.229 & 0.557 & 0.517 & 0.494 & 0.275 & 0.926 & 0.596 \\
 & 336 & 0.314 & 0.311 & \textbf{0.215} & \textbf{0.233} & 0.247 & 0.275 & \underline{0.216} & \underline{0.258} & 0.830 & 0.610 & 0.529 & 0.299 & 0.914 & 0.588 \\
 & 720 & 0.391 & 0.363 & \textbf{0.237} & \textbf{0.237} & 0.386 & 0.371 & \underline{0.315} & \underline{0.336} & 1.287 & 0.724 & 0.574 & 0.337 & 0.949 & 0.600 \\
\cmidrule{1-16}
\multirow{4}{*}{Electricity} & 96 & 0.164 & 0.257 & \underline{0.118} & \textbf{0.209} & \textbf{0.109} & \underline{0.209} & 0.138 & 0.242 & 0.336 & 0.366 & 0.421 & 0.266 & 1.223 & 0.858 \\
 & 192 & 0.175 & 0.265 & \textbf{0.126} & \textbf{0.215} & \underline{0.128} & \underline{0.228} & 0.149 & 0.252 & 0.299 & 0.353 & 0.434 & 0.277 & 1.337 & 0.894 \\
 & 336 & 0.194 & 0.279 & \textbf{0.139} & \textbf{0.227} & \underline{0.157} & \underline{0.252} & 0.166 & 0.266 & 1.599 & 0.920 & 0.455 & 0.296 & 1.334 & 0.892 \\
 & 720 & 0.250 & 0.318 & \textbf{0.162} & \textbf{0.233} & \underline{0.209} & \underline{0.292} & 0.211 & 0.305 & 1.601 & 0.994 & 0.506 & 0.337 & 1.349 & 0.896 \\
\bottomrule
\end{tabular}
\end{sidewaystable}

\begin{sidewaystable} % Rotates the content to landscape
    \centering
    \caption{Computational results for Scale = 1.5 (Landscape). The best results are \textbf{bolded}, the second best are \underline{underlined}.}
    \label{tab:scale-15-results-landscape}
    \setlength{\tabcolsep}{1.8mm} % Adjusted column separation
    \footnotesize % Font size
    % Column format: Dataset (left) | H (center) | 14x Metrics (right-aligned)
    \begin{tabular}{l|c|rrrrrrrrrrrrrr}
    \toprule
    % New header structure for clarity
    Dataset & H & \multicolumn{2}{c}{ViTime} & \multicolumn{2}{c}{ViTime-TFM} & \multicolumn{2}{c}{TimesFM} & \multicolumn{2}{c}{Moment} & \multicolumn{2}{c}{Moirai} & \multicolumn{2}{c}{VisionTS} & \multicolumn{2}{c}{PatchTST-ZS} \\
    % \cmidrule for grouping ReMSE/ReMAE under each method
    \cmidrule(lr){3-4} \cmidrule(lr){5-6} \cmidrule(lr){7-8} \cmidrule(lr){9-10} \cmidrule(lr){11-12} \cmidrule(lr){13-14} \cmidrule(lr){15-16}
    & & MSE & MAE & MSE & MAE & MSE & MAE & MSE & MAE & MSE & MAE & MSE & MAE & MSE & MAE \\
    \midrule
    % ETTh1 data with markings (Scale = 1.5)
    \multirow{4}{*}{ETTh1} & 96 & \underline{0.450} & \textbf{0.413} & 0.377 & 0.387 & \textbf{0.412} & \underline{0.416} & 0.886 & 0.592 & 0.926 & 0.609 & 0.929 & 0.598 & 1.270 & 0.828 \\
     & 192 & \underline{0.494} & \textbf{0.435} & 0.419 & 0.407 & \textbf{0.451} & \underline{0.446} & 0.909 & 0.607 & 1.083 & 0.683 & 0.959 & 0.632 & 1.362 & 0.864 \\
     & 336 & \textbf{0.503} & \textbf{0.444} & 0.443 & 0.421 & \underline{0.521} & \underline{0.485} & 0.913 & 0.609 & 2.037 & 0.909 & 0.896 & 0.608 & 1.374 & 0.866 \\
     & 720 & \textbf{0.535} & \textbf{0.463} & 0.445 & 0.428 & 0.614 & 0.534 & \underline{0.867} & 0.584 & 1.089 & 0.721 & 0.863 & \underline{0.574} & 1.387 & 0.871 \\
    \cmidrule{1-16}
    % ETTh2 data with markings (Scale = 1.5)
    \multirow{4}{*}{ETTh2} & 96 & \textbf{0.220} & \underline{0.297} & 0.177 & 0.256 & \underline{0.238} & 0.310 & 0.392 & \textbf{0.266} & 0.231 & 0.310 & 0.442 & 0.288 & 0.960 & 0.701 \\
     & 192 & \textbf{0.269} & \underline{0.334} & 0.206 & 0.275 & \underline{0.291} & 0.339 & 0.366 & \textbf{0.252} & 0.236 & 0.331 & 0.424 & 0.285 & 1.137 & 0.770 \\
     & 336 & \textbf{0.290} & \underline{0.350} & 0.232 & 0.294 & \underline{0.335} & 0.374 & 0.345 & \textbf{0.240} & 0.943 & 0.590 & 0.389 & 0.266 & 1.138 & 0.770 \\
     & 720 & \underline{0.329} & \underline{0.379} & 0.273 & 0.325 & 0.413 & 0.434 & \textbf{0.336} & \textbf{0.235} & 0.462 & 0.456 & 0.361 & 0.243 & 1.177 & 0.785 \\
    \cmidrule{1-16}
    % ETTm1 data with markings (Scale = 1.5)
    \multirow{4}{*}{ETTm1} & 96 & \textbf{0.351} & \textbf{0.360} & 0.300 & 0.329 & 0.604 & 0.487 & 0.944 & 0.648 & 1.319 & 0.727 & 1.342 & 0.902 & 1.146 & 0.742 \\
     & 192 & \textbf{0.430} & \textbf{0.398} & 0.329 & 0.350 & 0.622 & \underline{0.496} & 0.930 & 0.638 & 1.545 & 0.788 & 1.069 & 0.726 & 1.257 & 0.775 \\
     & 336 & \textbf{0.454} & \textbf{0.418} & 0.360 & 0.368 & 0.790 & 0.552 & 0.941 & 0.642 & 2.478 & 1.001 & 1.034 & 0.714 & 1.256 & 0.780 \\
     & 720 & \textbf{0.524} & \textbf{0.451} & 0.405 & 0.394 & 0.927 & 0.604 & 0.917 & 0.630 & 1.502 & 0.805 & 1.030 & 0.702 & 1.270 & 0.782 \\
    \cmidrule{1-16}
    % ETTm2 data with markings (Scale = 1.5)
    \multirow{4}{*}{ETTm2} & 96 & \textbf{0.139} & \textbf{0.225} & 0.144 & 0.229 & \underline{0.168} & \underline{0.260} & 0.396 & 0.435 & 0.153 & 0.263 & 1.097 & 0.713 & 0.781 & 0.587 \\
     & 192 & \textbf{0.180} & \textbf{0.266} & 0.187 & 0.258 & \underline{0.223} & \underline{0.299} & 0.493 & 0.543 & 0.266 & 0.339 & 0.912 & 0.593 & 0.808 & 0.598 \\
     & 336 & \textbf{0.229} & \textbf{0.298} & 0.238 & 0.291 & \underline{0.319} & \underline{0.359} & 0.620 & 0.684 & 0.634 & 0.526 & 0.912 & 0.593 & 0.777 & 0.588 \\
     & 720 & \textbf{0.301} & \textbf{0.346} & 0.290 & 0.328 & 0.392 & 0.409 & 0.652 & 0.721 & 0.631 & 0.475 & 0.935 & 0.608 & 0.808 & 0.600 \\
    \cmidrule{1-16}
    % Traffic data with markings (Scale = 1.5)
    \multirow{4}{*}{Traffic} & 96 & \textbf{0.695} & \textbf{0.384} & 0.410 & 0.285 & \underline{0.941} & \underline{0.523} & 1.111 & 0.768 & 1.447 & 0.793 & 1.356 & 1.013 & 2.056 & 0.998 \\
     & 192 & \textbf{0.630} & \textbf{0.371} & 0.416 & 0.281 & \underline{0.996} & 0.557 & 1.060 & 0.754 & 1.543 & 0.843 & 1.476 & 1.081 & 2.186 & 1.033 \\
     & 336 & \textbf{0.693} & \textbf{0.371} & 0.417 & 0.284 & \underline{0.996} & 0.580 & 1.103 & 0.764 & 2.880 & 1.163 & 1.444 & 1.027 & 2.146 & 1.041 \\
     & 720 & \textbf{0.746} & \textbf{0.418} & 0.438 & 0.294 & 1.249 & 0.678 & \underline{1.114} & \underline{0.771} & 2.095 & 1.014 & 1.206 & 0.859 & 2.217 & 1.042 \\
    \cmidrule{1-16}
    % Weather data with markings (Scale = 1.5)
    \multirow{4}{*}{Weather} & 96 & \textbf{0.120} & \textbf{0.141} & 0.157 & 0.192 & \underline{0.176} & \underline{0.216} & 0.494 & 0.341 & 0.381 & 0.408 & 0.420 & 0.249 & 0.837 & 0.549 \\
     & 192 & \textbf{0.156} & \textbf{0.189} & 0.199 & 0.237 & \underline{0.237} & \underline{0.273} & 0.517 & 0.364 & 0.391 & 0.419 & 0.472 & 0.306 & 0.918 & 0.564 \\
     & 336 & \textbf{0.200} & \textbf{0.231} & 0.252 & 0.283 & 0.297 & 0.328 & 0.568 & 0.406 & 0.628 & 0.489 & 0.526 & 0.359 & 0.941 & 0.574 \\
     & 720 & \textbf{0.267} & \textbf{0.286} & 0.339 & 0.344 & 0.408 & 0.406 & 0.668 & 0.494 & 0.701 & 0.570 & 0.637 & 0.460 & 0.931 & 0.573 \\
    \cmidrule{1-16}
    % Electricity data with markings (Scale = 1.5)
    \multirow{4}{*}{Electricity} & 96 & \textbf{0.171} & \textbf{0.262} & 0.148 & 0.246 & \underline{0.227} & \underline{0.306} & 0.796 & 0.662 & 0.767 & 0.681 & 1.071 & 0.876 & 1.294 & 0.889 \\
     & 192 & \textbf{0.172} & \textbf{0.266} & 0.156 & 0.251 & \underline{0.254} & \underline{0.333} & 0.799 & 0.667 & 0.871 & 0.729 & 1.148 & 0.937 & 1.412 & 0.924 \\
     & 336 & \textbf{0.186} & \textbf{0.275} & 0.168 & 0.260 & \underline{0.283} & \underline{0.358} & 0.816 & 0.682 & 1.575 & 0.915 & 1.114 & 0.916 & 1.396 & 0.918 \\
     & 720 & \textbf{0.223} & \textbf{0.302} & 0.200 & 0.285 & 0.393 & 0.438 & 0.883 & 0.733 & 1.320 & 0.891 & 0.991 & 0.813 & 1.425 & 0.928 \\
    \bottomrule
    \end{tabular}
\end{sidewaystable}

\begin{sidewaystable} % Rotates the content to landscape
    \centering
    \caption{Computational results for Scale = 2.0 (Landscape). The best results are \textbf{bolded}, the second best are \underline{underlined}.}
    \label{tab:scale-20-results-landscape}
    \setlength{\tabcolsep}{1.8mm} % Adjusted tabcolsep as per example
    \footnotesize % Font size as per example
    % Column format: Dataset (left) | H (center) | 14x Metrics (right-aligned)
    \begin{tabular}{l|c|rrrrrrrrrrrrrr}
    \toprule
    % New header structure for clarity
    Dataset & H & \multicolumn{2}{c}{ViTime} & \multicolumn{2}{c}{ViTime-TFM} & \multicolumn{2}{c}{TimesFM} & \multicolumn{2}{c}{Moment} & \multicolumn{2}{c}{Moirai} & \multicolumn{2}{c}{VisionTS} & \multicolumn{2}{c}{PatchTST-ZS} \\
    % \cmidrule for grouping ReMSE/ReMAE under each method
    \cmidrule(lr){3-4} \cmidrule(lr){5-6} \cmidrule(lr){7-8} \cmidrule(lr){9-10} \cmidrule(lr){11-12} \cmidrule(lr){13-14} \cmidrule(lr){15-16}
    & & MSE & MAE & MSE & MAE & MSE & MAE & MSE & MAE & MSE & MAE & MSE & MAE & MSE & MAE \\
    \midrule
    % ETTh1 data with markings
    \multirow{4}{*}{ETTh1} & 96 & 0.521 & 0.429 & \textbf{0.352} & \underline{0.362} & \underline{0.358} & 0.384 & 0.791 & 0.543 & 0.674 & 0.444 & 0.834 & 0.537 & 1.279 & 0.818 \\
     & 192 & 0.541 & 0.446 & \textbf{0.388} & \textbf{0.381} & \underline{0.406} & \underline{0.415} & 0.820 & 0.557 & 0.855 & 0.564 & 0.843 & 0.557 & 1.366 & 0.844 \\
     & 336 & \underline{0.537} & 0.449 & 0.431 & 0.400 & \textbf{0.454} & \textbf{0.444} & 0.855 & 0.576 & 1.427 & 0.928 & 0.823 & 0.558 & 1.420 & 0.856 \\
     & 720 & \textbf{0.579} & \textbf{0.474} & \underline{0.458} & 0.418 & 0.581 & 0.508 & 0.909 & 0.605 & 1.562 & 1.052 & 0.891 & 0.575 & 1.419 & 0.860 \\
    \cmidrule{1-16}
    % ETTh2 data with markings
    \multirow{4}{*}{ETTh2} & 96 & \underline{0.210} & 0.293 & 0.187 & 0.258 & 0.219 & 0.295 & 0.450 & 0.307 & 0.602 & 0.387 & 0.500 & 0.329 & 0.701 & 0.601 \\
     & 192 & \textbf{0.258} & 0.326 & \underline{0.227} & 0.285 & 0.283 & 0.336 & 0.447 & \underline{0.307} & 0.629 & 0.426 & 0.506 & 0.347 & 0.786 & 0.632 \\
     & 336 & \underline{0.300} & 0.356 & 0.253 & 0.304 & 0.316 & 0.363 & 0.433 & \textbf{0.298} & 0.855 & 0.561 & 0.471 & 0.327 & 0.789 & 0.632 \\
     & 720 & 0.367 & 0.401 & 0.298 & 0.335 & 0.415 & 0.432 & \underline{0.385} & \textbf{0.269} & 0.983 & 0.693 & 0.411 & \underline{0.283} & 0.789 & 0.629 \\
    \cmidrule{1-16}
    % ETTm1 data with markings
    \multirow{4}{*}{ETTm1} & 96 & \textbf{0.308} & \textbf{0.342} & \underline{0.286} & \underline{0.324} & 1.029 & 0.569 & 0.955 & 0.662 & 1.311 & 0.822 & 1.286 & 0.837 & 1.037 & 0.690 \\
     & 192 & \textbf{0.408} & \textbf{0.395} & \underline{0.306} & \underline{0.342} & 1.068 & 0.605 & 0.938 & 0.650 & 1.250 & 0.800 & 0.987 & 0.647 & 1.139 & 0.719 \\
     & 336 & \textbf{0.420} & \textbf{0.407} & \underline{0.342} & \underline{0.366} & 1.245 & 0.655 & 0.946 & 0.650 & 1.539 & 0.981 & 1.012 & 0.660 & 1.173 & 0.729 \\
     & 720 & \textbf{0.500} & \textbf{0.448} & \underline{0.395} & \underline{0.395} & 1.329 & 0.700 & 0.947 & 0.641 & 1.651 & 1.086 & 1.007 & 0.634 & 1.219 & 0.743 \\
    \cmidrule{1-16}
    % ETTm2 data with markings
    \multirow{4}{*}{ETTm2} & 96 & \textbf{0.115} & \textbf{0.206} & 0.137 & 0.219 & \underline{0.178} & 0.268 & 0.489 & 0.540 & 0.489 & 0.333 & 1.052 & 0.684 & 0.686 & 0.555 \\
     & 192 & \textbf{0.158} & \textbf{0.244} & \underline{0.167} & \underline{0.244} & 0.223 & 0.304 & 0.502 & 0.553 & 0.582 & 0.391 & 0.842 & 0.547 & 0.775 & 0.582 \\
     & 336 & \textbf{0.201} & \textbf{0.278} & \underline{0.212} & \underline{0.276} & 0.285 & 0.346 & 0.565 & 0.623 & 0.763 & 0.516 & 0.894 & 0.581 & 0.730 & 0.567 \\
     & 720 & \textbf{0.282} & \textbf{0.333} & \underline{0.273} & \underline{0.318} & 0.408 & 0.417 & 0.649 & 0.718 & 0.835 & 0.567 & 0.915 & 0.595 & 0.784 & 0.586 \\
    \cmidrule{1-16}
    % Traffic data with markings
    \multirow{4}{*}{Traffic} & 96 & \textbf{0.766} & \textbf{0.392} & 0.412 & 0.268 & \underline{0.943} & 0.492 & 0.990 & 0.522 & 0.990 & 0.522 & 1.420 & 1.010 & 1.966 & 0.977 \\
     & 192 & \textbf{0.719} & \textbf{0.390} & \underline{0.412} & \textbf{0.265} & 0.933 & 0.547 & 0.911 & 0.497 & 0.911 & 0.497 & 1.570 & 1.128 & 2.157 & 1.035 \\
     & 336 & \textbf{0.670} & \textbf{0.375} & \underline{0.407} & \textbf{0.268} & 0.957 & 0.561 & 1.071 & 0.751 & 1.823 & 1.206 & 1.317 & 0.892 & 2.236 & 1.045 \\
     & 720 & \textbf{0.765} & \textbf{0.387} & \underline{0.428} & \textbf{0.282} & 1.145 & 0.656 & 1.092 & 0.758 & 1.939 & 1.352 & 1.270 & 0.837 & 2.256 & 1.050 \\
    \cmidrule{1-16}
    % Weather data with markings
    \multirow{4}{*}{Weather} & 96 & \textbf{0.120} & \textbf{0.133} & 0.145 & 0.182 & \underline{0.167} & \underline{0.205} & 0.413 & 0.282 & 0.600 & 0.371 & 0.326 & 0.194 & 0.836 & 0.534 \\
     & 192 & \textbf{0.136} & \textbf{0.171} & \underline{0.188} & \underline{0.230} & 0.231 & 0.276 & 0.439 & 0.301 & 0.539 & 0.330 & 0.417 & 0.263 & 0.948 & 0.578 \\
     & 336 & \textbf{0.183} & \textbf{0.212} & \underline{0.216} & \underline{0.257} & 0.291 & 0.331 & 0.463 & 0.314 & 0.799 & 0.427 & 0.468 & 0.305 & 0.992 & 0.588 \\
     & 720 & \textbf{0.254} & \textbf{0.270} & \underline{0.278} & \underline{0.308} & 0.413 & 0.419 & 0.540 & 0.373 & 0.885 & 0.529 & 0.553 & 0.375 & 0.974 & 0.582 \\
    \cmidrule{1-16}
    % Electricity data with markings
    \multirow{4}{*}{Electricity} & 96 & \textbf{0.181} & \textbf{0.266} & 0.146 & 0.234 & \underline{0.223} & \underline{0.299} & 0.798 & 0.654 & 0.565 & 0.376 & 1.034 & 0.863 & 1.177 & 0.848 \\
     & 192 & \textbf{0.188} & \textbf{0.274} & \underline{0.155} & \underline{0.242} & 0.263 & 0.334 & 0.796 & 0.651 & 0.592 & 0.416 & 1.167 & 0.956 & 1.270 & 0.880 \\
     & 336 & \textbf{0.198} & \textbf{0.282} & \underline{0.161} & \underline{0.249} & 0.285 & 0.364 & 0.798 & 0.654 & 1.275 & 0.950 & 0.977 & 0.764 & 1.318 & 0.896 \\
     & 720 & \textbf{0.216} & \textbf{0.296} & \underline{0.193} & \underline{0.275} & 0.399 & 0.447 & 0.805 & 0.666 & 1.362 & 1.037 & 0.890 & 0.715 & 1.315 & 0.895 \\
    \bottomrule
    \end{tabular}
\end{sidewaystable}

\subsection{Additional Zero-Shot Results on the GIFT-EVAL Benchmark}
\label{app:gift-eval}

To rigorously assess cross-domain generalization under a truly zero-shot setup, we evaluate ViTime on the community-adopted GIFT-EVAL benchmark \citep{aksu2024giftevalbenchmarkgeneraltime}, which provides a standardized protocol and strong baselines, including TimesFM \citep{bib20}. We report Mean Absolute Scaled Error (MASE; lower is better) across short-, medium-, and long-horizon settings. TimesFM results are taken from the official GIFT-EVAL repository\footnote{\url{https://github.com/SalesforceAIResearch/gift-eval/tree/main/results/timesfm}}.

\begin{table}[t]
\centering
\caption{Zero-shot performance on the full GIFT-EVAL benchmark (MASE; lower is better). TimesFM results from the official repository.}
\label{tab:gift_eval_full}
\small
\begin{tabular}{lccc}
\hline
\textbf{Forecast Horizon} & \textbf{TimesFM (MASE)} & \textbf{ViTime (MASE)} & \textbf{Relative Improvement} \\
\hline
Long   & 2.3706 & \textbf{1.4636} & \textbf{38.3\%} \\
Medium & 1.9621 & \textbf{1.3939} & \textbf{28.9\%} \\
Short  & \textbf{2.3548} & 2.4557 & $-4.3\%$ \\
\hline
\end{tabular}
\end{table}

As shown in \Cref{tab:gift_eval_full}, ViTime achieves substantial gains on medium- and long-horizon forecasting, improving MASE by 38.3\% and 28.9\%, respectively. The short-horizon result is comparable but slightly weaker than TimesFM, which motivates a more careful analysis of potential training--test overlaps in baseline data.

\paragraph{Leakage-controlled analysis.}
We hypothesize that short-horizon performance of TimesFM may benefit from latent overlap with the M4 dataset family, since its large-scale pre-training includes extensive Google Trends data that is plausibly correlated with M4 categories. Because M4 contributes substantially to short-horizon tasks within GIFT-EVAL, such overlap could compromise a strictly zero-shot comparison. To mitigate this risk, we conduct a leakage-controlled evaluation by excluding all M4 datasets from GIFT-EVAL and recomputing metrics for both models.

\begin{table}[t]
\centering
\caption{Zero-shot performance on GIFT-EVAL with M4 datasets excluded (MASE; lower is better).}
\label{tab:gift_eval_no_m4}
\small
\begin{tabular}{lccc}
\hline
\textbf{Forecast Horizon} & \textbf{TimesFM (MASE)} & \textbf{ViTime (MASE)} & \textbf{Relative Improvement} \\
\hline
Long   & 2.3706 & \textbf{1.4636} & \textbf{38.3\%} \\
Medium & 1.9621 & \textbf{1.3939} & \textbf{28.9\%} \\
Short  & 2.6148 & \textbf{2.5687} & \textbf{1.8\%} \\
\hline
\end{tabular}
\end{table}

The controlled results in \Cref{tab:gift_eval_no_m4} are consistent: after excluding M4, ViTime outperforms TimesFM across short, medium, and long horizons. This substantiates ViTime's robustness under strict zero-shot conditions and clarifies the discrepancy on short-horizon tasks.

\paragraph{Reproducibility.}
To facilitate transparency and reproducibility, we release complete evaluation logs and full GIFT-EVAL results in our repository\footnote{\url{https://github.com/IkeYang/ViTime}}.

\subsection{Probabilistic Forecasting Study}

The complete results of the study on zero-shot probabilistic forecasting are detailed in \Cref{tab:crps_comparison_part1} and \Cref{tab:crps_comparison_part2}. Furthermore, we present sample zero-shot TSF generated by our proposed ViTime model for a prediction horizon of 720 in \Cref{fig:Crps_elec_appendix1}–\Cref{fig:Crps_appendixETTm2_2}. These visualizations include both 90\% and 50\% prediction intervals. As shown in \Cref{fig:Crps_elec_appendix1}–\Cref{fig:Crps_appendixETTm2_2}, ViTime consistently demonstrates superior performance across a range of scaling factors.

\begin{sidewaystable}[htbp]
    \centering
    \caption{CRPS Performance Comparison (Part 1)}
    \label{tab:crps_comparison_part1}
    \setlength{\tabcolsep}{2.5mm}
    \small
    \begin{tabular}{l|c|rrrrrrrrr}
    \toprule
    Dataset & H & \multicolumn{3}{c}{ScaleValue=0.5} & \multicolumn{3}{c}{ScaleValue=0.66} & \multicolumn{3}{c}{ScaleValue=1.0} \\
    \cmidrule(lr){3-5} \cmidrule(lr){6-8} \cmidrule(lr){9-11}
    & & ViTime & Moirai & Lag-Llama & ViTime & Moirai & Lag-Llama & ViTime & Moirai & Lag-Llama \\
    \midrule
\multirow{4}{*}{ETTh1} & 96 & 0.321 & \textbf{0.295} & 0.452 & \textbf{0.321} & 0.399 & 0.456 & 0.332 & \textbf{0.288} & 0.336 \\
 & 192 & \textbf{0.338} & 0.348 & 0.474 & \textbf{0.342} & 0.451 & 0.486 & 0.346 & \textbf{0.325} & 0.392 \\
 & 336 & \textbf{0.381} & 0.455 & 0.500 & \textbf{0.366} & 0.697 & 0.499 & \textbf{0.353} & 0.754 & 0.501 \\
 & 720 & \textbf{0.462} & 0.640 & 0.613 & \textbf{0.431} & 0.631 & 0.592 & \textbf{0.392} & 0.659 & 0.537 \\
     \cmidrule{2-11}
      & Avg & \textbf{0.376} & 0.434 & 0.510 & \textbf{0.365} & 0.544 & 0.508 & \textbf{0.356} & 0.506 & 0.441 \\
    \cmidrule{1-11}
\multirow{4}{*}{ETTh2} & 96 & \textbf{0.278} & 0.301 & 0.400 & \textbf{0.259} & 0.275 & 0.374 & 0.268 & \textbf{0.164} & 0.314 \\
 & 192 & \textbf{0.302} & 0.322 & 0.444 & 0.302 & \textbf{0.280} & 0.444 & 0.300 & \textbf{0.189} & 0.333 \\
 & 336 & \textbf{0.362} & 0.423 & 0.514 & \textbf{0.341} & 0.418 & 0.513 & \textbf{0.322} & 0.386 & 0.450 \\
 & 720 & \textbf{0.485} & 0.566 & 0.568 & \textbf{0.426} & 0.444 & 0.545 & 0.386 & \textbf{0.357} & 0.507 \\
     \cmidrule{2-11}
      & Avg & \textbf{0.357} & 0.403 & 0.481 & \textbf{0.332} & 0.354 & 0.469 & 0.319 & \textbf{0.274} & 0.401 \\
    \cmidrule{1-11}
\multirow{4}{*}{ETTm1} & 96 & 0.319 & \textbf{0.303} & 0.457 & \textbf{0.317} & 0.509 & 0.435 & 0.313 & \textbf{0.295} & 0.366 \\
 & 192 & 0.338 & \textbf{0.299} & 0.473 & \textbf{0.339} & 0.522 & 0.475 & \textbf{0.330} & 0.356 & 0.412 \\
 & 336 & \textbf{0.357} & 0.855 & 0.513 & \textbf{0.363} & 0.847 & 0.502 & \textbf{0.349} & 0.855 & 0.459 \\
 & 720 & \textbf{0.384} & 0.687 & 0.538 & \textbf{0.396} & 0.674 & 0.511 & \textbf{0.384} & 0.648 & 0.495 \\
     \cmidrule{2-11}
      & Avg & \textbf{0.350} & 0.536 & 0.495 & \textbf{0.354} & 0.638 & 0.481 & \textbf{0.344} & 0.538 & 0.435 \\
    \cmidrule{1-11}
\multirow{4}{*}{ETTm2} & 96 & 0.248 & \textbf{0.135} & 0.401 & 0.240 & \textbf{0.167} & 0.361 & 0.224 & \textbf{0.159} & 0.345 \\
 & 192 & 0.291 & \textbf{0.153} & 0.448 & 0.286 & \textbf{0.193} & 0.431 & 0.266 & \textbf{0.211} & 0.403 \\
 & 336 & \textbf{0.329} & 0.380 & 0.500 & \textbf{0.323} & 0.415 & 0.477 & \textbf{0.301} & 0.455 & 0.484 \\
 & 720 & 0.374 & \textbf{0.330} & 0.517 & 0.379 & \textbf{0.378} & 0.516 & \textbf{0.352} & 0.410 & 0.495 \\
     \cmidrule{2-11}
      & Avg & 0.310 & \textbf{0.249} & 0.467 & 0.307 & \textbf{0.288} & 0.446 & \textbf{0.286} & 0.309 & 0.432 \\
    \cmidrule{1-11}
\multirow{4}{*}{Electricity} & 96 & \textbf{0.262} & 0.285 & 0.497 & \textbf{0.252} & 0.557 & 0.495 & \textbf{0.243} & 0.284 & 0.318 \\
 & 192 & \textbf{0.275} & 0.534 & 0.544 & \textbf{0.262} & 0.590 & 0.507 & \textbf{0.252} & 0.271 & 0.378 \\
 & 336 & \textbf{0.297} & 0.424 & 0.595 & \textbf{0.279} & 0.896 & 0.556 & \textbf{0.266} & 0.800 & 0.543 \\
 & 720 & \textbf{0.350} & 0.931 & 0.708 & \textbf{0.335} & 0.994 & 0.681 & \textbf{0.307} & 0.731 & 0.631 \\
     \cmidrule{2-11}
      & Avg & \textbf{0.296} & 0.543 & 0.586 & \textbf{0.282} & 0.759 & 0.560 & \textbf{0.267} & 0.521 & 0.467 \\
    \cmidrule{1-11}
\multirow{4}{*}{Traffic} & 96 & \textbf{0.300} & 0.319 & 0.611 & \textbf{0.307} & 0.578 & 0.618 & \textbf{0.314} & 0.390 & 0.364 \\
 & 192 & \textbf{0.293} & 0.340 & 0.588 & \textbf{0.310} & 0.609 & 0.595 & 0.318 & \textbf{0.262} & 0.446 \\
 & 336 & \textbf{0.304} & 0.385 & 0.614 & \textbf{0.328} & 0.798 & 0.670 & \textbf{0.322} & 1.045 & 0.658 \\
 & 720 & \textbf{0.338} & 0.928 & 0.698 & \textbf{0.368} & 0.713 & 0.679 & \textbf{0.354} & 0.808 & 0.724 \\
     \cmidrule{2-11}
      & Avg & \textbf{0.309} & 0.493 & 0.628 & \textbf{0.328} & 0.674 & 0.641 & \textbf{0.327} & 0.626 & 0.548 \\
    \cmidrule{1-11}
\multirow{4}{*}{Weather} & 96 & \textbf{0.199} & 0.278 & 0.331 & \textbf{0.196} & 0.276 & 0.312 & \textbf{0.177} & 0.316 & 0.289 \\
 & 192 & \textbf{0.246} & 0.359 & 0.410 & \textbf{0.248} & 0.352 & 0.399 & \textbf{0.222} & 0.417 & 0.363 \\
 & 336 & \textbf{0.281} & 0.630 & 0.457 & \textbf{0.286} & 0.600 & 0.431 & \textbf{0.264} & 0.596 & 0.418 \\
 & 720 & \textbf{0.327} & 0.721 & 0.510 & \textbf{0.329} & 0.631 & 0.501 & \textbf{0.312} & 0.696 & 0.507 \\
     \cmidrule{2-11}
      & Avg & \textbf{0.263} & 0.497 & 0.427 & \textbf{0.265} & 0.465 & 0.411 & \textbf{0.244} & 0.506 & 0.394 \\
    \bottomrule
    \end{tabular}
\end{sidewaystable}

\begin{sidewaystable}[htbp]
    \centering
    \caption{CRPS Performance Comparison (Part 2)}
    \label{tab:crps_comparison_part2}
    \setlength{\tabcolsep}{2.5mm}
    \small
    \begin{tabular}{l|c|rrrrrr}
    \toprule
    Dataset & H & \multicolumn{3}{c}{ScaleValue=1.5} & \multicolumn{3}{c}{ScaleValue=2.0} \\
    \cmidrule(lr){3-5} \cmidrule(lr){6-8}
    & & ViTime & Moirai & Lag-Llama & ViTime & Moirai & Lag-Llama \\
    \midrule
\multirow{4}{*}{ETTh1} & 96 & \textbf{0.316} & 0.446 & 0.438 & 0.314 & \textbf{0.310} & 0.351 \\
 & 192 & \textbf{0.335} & 0.508 & 0.453 & \textbf{0.336} & 0.398 & 0.456 \\
 & 336 & \textbf{0.352} & 0.836 & 0.500 & \textbf{0.357} & 0.907 & 0.488 \\
 & 720 & \textbf{0.375} & 0.624 & 0.508 & \textbf{0.390} & 0.763 & 0.544 \\
     \cmidrule{2-8}
      & Avg & \textbf{0.345} & 0.604 & 0.475 & \textbf{0.349} & 0.595 & 0.460 \\
    \cmidrule{1-8}
\multirow{4}{*}{ETTh2} & 96 & 0.245 & \textbf{0.236} & 0.364 & \textbf{0.242} & 0.265 & 0.371 \\
 & 192 & 0.272 & \textbf{0.258} & 0.407 & \textbf{0.282} & 0.300 & 0.417 \\
 & 336 & \textbf{0.294} & 0.643 & 0.443 & \textbf{0.312} & 0.620 & 0.441 \\
 & 720 & \textbf{0.330} & 0.542 & 0.498 & \textbf{0.344} & 0.657 & 0.492 \\
     \cmidrule{2-8}
      & Avg & \textbf{0.285} & 0.420 & 0.428 & \textbf{0.295} & 0.461 & 0.430 \\
    \cmidrule{1-8}
\multirow{4}{*}{ETTm1} & 96 & \textbf{0.305} & 0.552 & 0.419 & \textbf{0.303} & 0.564 & 0.418 \\
 & 192 & \textbf{0.327} & 0.599 & 0.442 & \textbf{0.323} & 0.553 & 0.446 \\
 & 336 & \textbf{0.348} & 0.970 & 0.457 & \textbf{0.349} & 0.939 & 0.473 \\
 & 720 & \textbf{0.390} & 0.774 & 0.478 & \textbf{0.391} & 0.824 & 0.490 \\
     \cmidrule{2-8}
      & Avg & \textbf{0.343} & 0.724 & 0.449 & \textbf{0.342} & 0.720 & 0.457 \\
    \cmidrule{1-8}
\multirow{4}{*}{ETTm2} & 96 & 0.198 & \textbf{0.197} & 0.312 & \textbf{0.196} & 0.266 & 0.248 \\
 & 192 & \textbf{0.237} & 0.264 & 0.357 & \textbf{0.232} & 0.291 & 0.359 \\
 & 336 & \textbf{0.277} & 0.549 & 0.422 & \textbf{0.270} & 0.523 & 0.400 \\
 & 720 & \textbf{0.326} & 0.544 & 0.471 & \textbf{0.318} & 0.567 & 0.458 \\
     \cmidrule{2-8}
      & Avg & \textbf{0.260} & 0.389 & 0.391 & \textbf{0.254} & 0.412 & 0.366 \\
    \cmidrule{1-8}
\multirow{4}{*}{Electricity} & 96 & \textbf{0.216} & 0.482 & 0.410 & \textbf{0.227} & 0.272 & 0.313 \\
 & 192 & \textbf{0.224} & 0.587 & 0.454 & \textbf{0.237} & 0.328 & 0.392 \\
 & 336 & \textbf{0.240} & 0.867 & 0.478 & \textbf{0.248} & 0.910 & 0.519 \\
 & 720 & \textbf{0.283} & 0.675 & 0.559 & \textbf{0.294} & 0.772 & 0.693 \\
     \cmidrule{2-8}
      & Avg & \textbf{0.241} & 0.653 & 0.475 & \textbf{0.252} & 0.571 & 0.479 \\
    \cmidrule{1-8}
\multirow{4}{*}{Traffic} & 96 & \textbf{0.316} & 0.587 & 0.658 & \textbf{0.319} & 0.373 & 0.673 \\
 & 192 & \textbf{0.307} & 0.647 & 0.653 & \textbf{0.322} & 0.391 & 0.695 \\
 & 336 & \textbf{0.325} & 1.153 & 0.693 & \textbf{0.314} & 1.215 & 0.625 \\
 & 720 & \textbf{0.366} & 0.899 & 0.612 & \textbf{0.353} & 1.248 & 0.708 \\
     \cmidrule{2-8}
      & Avg & \textbf{0.329} & 0.822 & 0.654 & \textbf{0.327} & 0.807 & 0.675 \\
    \cmidrule{1-8}
\multirow{4}{*}{Weather} & 96 & \textbf{0.150} & 0.380 & 0.254 & \textbf{0.142} & 0.345 & 0.244 \\
 & 192 & \textbf{0.200} & 0.467 & 0.306 & \textbf{0.186} & 3.237 & 0.273 \\
 & 336 & \textbf{0.246} & 0.501 & 0.389 & \textbf{0.223} & 0.453 & 0.328 \\
 & 720 & \textbf{0.301} & 0.610 & 0.467 & \textbf{0.276} & 0.535 & 0.355 \\
     \cmidrule{2-8}
      & Avg & \textbf{0.224} & 0.489 & 0.354 & \textbf{0.207} & 1.143 & 0.300 \\
    \bottomrule
    \end{tabular}
\end{sidewaystable}

% \newpage
\subsection{Fine-tuning study}

\begin{sidewaystable}
\centering
\caption{MAE of different methods in fine-tuning study.}
\label{tab:full-finetuning-results-rehighlighted-avg}
\setlength{\tabcolsep}{1.2mm}
\footnotesize
\begin{tabular}{l|c|cc|c|c|c|c|c|c|c|c|c}
\toprule
\multirow{2}{*}{Dataset} & \multirow{2}{*}{H} & \multicolumn{2}{c|}{VITIME (FT)} & TimesFM & GPT4TS & TIME-LLM & \multicolumn{6}{c}{Baseline Methods} \\
\cmidrule(lr){3-4} \cmidrule(lr){8-13}
& & 10\% & 100\% & (FT) 10\% & (FT) 10\% & (FT) 10\% & PatchTST & SiMBA & TIMESNET & PatchTST & iTransformer & TimeMixer \\
& & & & & & & 10\% & 100\% & 100\% & 100\% & 100\% & 100\% \\
\midrule
\multirow{5}{*}{ETTh1} 
& 96  & 0.3938 ± 0.053 & \textbf{0.3814 ± 0.057} & 0.398 & 0.485 & 0.460 & 0.485 & 0.395 & 0.402 & 0.400 & 0.405 & \underline{0.390} \\
& 192 & \underline{0.4098 ± 0.039} & \textbf{0.3978 ± 0.050} & 0.424 & 0.524 & 0.483 & 0.524 & 0.424 & 0.429 & 0.429 & 0.436 & 0.414 \\
& 336 & \underline{0.4256 ± 0.031} & \textbf{0.4076 ± 0.040} & 0.436 & 0.550 & 0.540 & 0.550 & 0.443 & 0.469 & 0.440 & 0.458 & 0.429 \\
& 720 & 0.4578 ± 0.012 & \textbf{0.4358 ± 0.007} & \underline{0.445} & 0.610 & 0.604 & 0.610 & 0.469 & 0.500 & 0.468 & 0.491 & 0.460 \\
& Avg & \underline{0.4218} & \textbf{0.4057} & 0.426 & 0.542 & 0.522 & 0.542 & 0.433 & 0.450 & 0.434 & 0.448 & 0.423 \\
\midrule
\multirow{5}{*}{ETTh2} 
& 96  & \underline{0.3230 ± 0.010} & \textbf{0.2970 ± 0.002} & 0.356 & 0.389 & 0.326 & 0.389 & 0.339 & 0.374 & 0.337 & 0.349 & 0.330 \\
& 192 & \underline{0.3520 ± 0.003} & \textbf{0.3270 ± 0.003} & 0.400 & 0.414 & 0.373 & 0.414 & 0.390 & 0.414 & 0.382 & 0.400 & 0.402 \\
& 336 & \underline{0.3750 ± 0.005} & \textbf{0.3450 ± 0.004} & 0.428 & 0.441 & 0.429 & 0.441 & 0.406 & 0.452 & 0.384 & 0.432 & 0.396 \\
& 720 & 0.4290 ± 0.008 & \textbf{0.4060 ± 0.004} & 0.457 & 0.480 & 0.449 & 0.480 & 0.431 & 0.468 & 0.422 & 0.445 & \underline{0.408} \\
& Avg & \underline{0.3698} & \textbf{0.3438} & 0.410 & 0.431 & 0.394 & 0.431 & 0.392 & 0.427 & 0.381 & 0.407 & 0.384 \\
\midrule
\multirow{5}{*}{ETTm2} 
& 96  & 0.2579 ± 0.004 & \textbf{0.2328 ± 0.012} & 0.263 & 0.274 & 0.261 & 0.274 & 0.263 & 0.267 & 0.256 & 0.264 & \underline{0.254} \\
& 192 & \underline{0.2896 ± 0.006} & \textbf{0.2750 ± 0.017} & 0.309 & 0.317 & 0.314 & 0.317 & 0.306 & 0.309 & 0.296 & 0.309 & 0.295 \\
& 336 & \underline{0.3224 ± 0.016} & \textbf{0.3108 ± 0.024} & 0.349 & 0.353 & 0.327 & 0.353 & 0.343 & 0.351 & 0.329 & 0.348 & 0.330 \\
& 720 & \underline{0.3779 ± 0.009} & \textbf{0.3696 ± 0.017} & 0.415 & 0.427 & 0.390 & 0.427 & 0.399 & 0.403 & 0.385 & 0.407 & 0.383 \\
& Avg & \underline{0.3120} & \textbf{0.2970} & 0.334 & 0.343 & 0.323 & 0.343 & 0.328 & 0.333 & 0.317 & 0.332 & 0.316 \\
\midrule
\multirow{5}{*}{ETTm1} 
& 96  & \underline{0.3393 ± 0.001} & \textbf{0.3319 ± 0.004} & 0.345 & 0.419 & 0.388 & 0.419 & 0.360 & 0.375 & 0.346 & 0.368 & 0.340 \\
& 192 & \underline{0.3624 ± 0.002} & \textbf{0.3518 ± 0.003} & 0.374 & 0.434 & 0.416 & 0.434 & 0.382 & 0.387 & 0.370 & 0.391 & 0.365 \\
& 336 & 0.3839 ± 0.002 & \textbf{0.3702 ± 0.002} & 0.397 & 0.454 & 0.426 & 0.454 & 0.405 & 0.411 & 0.392 & 0.420 & \underline{0.381} \\
& 720 & 0.4189 ± 0.007 & \textbf{0.4083 ± 0.002} & 0.436 & 0.556 & 0.476 & 0.556 & 0.437 & 0.450 & 0.420 & 0.459 & \underline{0.417} \\
& Avg & 0.3761 & \textbf{0.3656} & 0.388 & 0.466 & 0.426 & 0.466 & 0.396 & 0.406 & 0.382 & 0.410 & \underline{0.376} \\
\midrule
\multirow{5}{*}{Traffic} 
& 96  & \underline{0.2446 ± 0.013} & \textbf{0.2336 ± 0.008} & \multicolumn{3}{c|}{Not Reported} & 0.268 & 0.268 & 0.321 & 0.249 & 0.268 & 0.249 \\
& 192 & \underline{0.2444 ± 0.005} & \textbf{0.2376 ± 0.004} & \multicolumn{3}{c|}{Not Reported} & 0.274 & 0.317 & 0.336 & 0.256 & 0.276 & 0.250 \\
& 336 & \underline{0.2472 ± 0.002} & \textbf{0.2437 ± 0.002} & \multicolumn{3}{c|}{Not Reported} & 0.282 & 0.284 & 0.336 & 0.264 & 0.283 & 0.270 \\
& 720 & \textbf{0.2670 ± 0.005} & \underline{0.2775 ± 0.005} & \multicolumn{3}{c|}{Not Reported} & 0.319 & 0.297 & 0.350 & 0.286 & 0.302 & 0.281 \\
& Avg & \underline{0.2508} & \textbf{0.2481} & \multicolumn{3}{c|}{Not Reported} & 0.286 & 0.291 & 0.336 & 0.264 & 0.282 & 0.263 \\
\midrule
\multirow{5}{*}{Weather} 
& 96  & \underline{0.1865 ± 0.004} & \textbf{0.1827 ± 0.002} & \multicolumn{3}{c|}{Not Reported} & 0.221 & 0.219 & 0.220 & 0.198 & 0.214 & 0.197 \\
& 192 & \underline{0.2281 ± 0.007} & \textbf{0.2234 ± 0.001} & \multicolumn{3}{c|}{Not Reported} & 0.261 & 0.260 & 0.261 & 0.241 & 0.254 & 0.239 \\
& 336 & \underline{0.2696 ± 0.006} & \textbf{0.2686 ± 0.003} & \multicolumn{3}{c|}{Not Reported} & 0.300 & 0.297 & 0.306 & 0.282 & 0.296 & 0.280 \\
& 720 & \textbf{0.3126 ± 0.010} & \underline{0.3198 ± 0.001} & \multicolumn{3}{c|}{Not Reported} & 0.351 & 0.349 & 0.359 & 0.334 & 0.347 & 0.330 \\
& Avg & \underline{0.2492} & \textbf{0.2486} & \multicolumn{3}{c|}{Not Reported} & 0.283 & 0.281 & 0.286 & 0.264 & 0.278 & 0.262 \\
\midrule
\multirow{5}{*}{Electricity} 
& 96  & 0.2275 ± 0.012 & \underline{0.2222 ± 0.005} & \multicolumn{3}{c|}{Not Reported} & 0.235 & 0.253 & 0.272 & \textbf{0.222} & 0.240 & 0.224 \\
& 192 & 0.2350 ± 0.009 & \underline{0.2339 ± 0.004} & \multicolumn{3}{c|}{Not Reported} & 0.250 & 0.262 & 0.289 & 0.240 & 0.253 & \textbf{0.220} \\
& 336 & \underline{0.2522 ± 0.004} & \textbf{0.2476 ± 0.005} & \multicolumn{3}{c|}{Not Reported} & 0.270 & 0.277 & 0.300 & 0.259 & 0.269 & 0.255 \\
& 720 & \underline{0.2867 ± 0.002} & \textbf{0.2777 ± 0.003} & \multicolumn{3}{c|}{Not Reported} & 0.315 & 0.305 & 0.320 & 0.290 & 0.317 & 0.287 \\
& Avg & 0.2504 & \textbf{0.2454} & \multicolumn{3}{c|}{Not Reported} & 0.268 & 0.274 & 0.295 & 0.253 & 0.270 & \underline{0.246} \\
\bottomrule
\end{tabular}
\end{sidewaystable}

Complete results of the fine-tuning study are reported in Table \ref{tab:full-finetuning-results-rehighlighted-avg}.

% Add this subsection to your chapter on Fine-Tuning studies.
% Make sure you have the necessary packages like {graphicx}, {booktabs}, {subcaption}, {cleveref}.

\subsubsection{Analysis of Fine-Tuning on Limited and Noisy Data}
\label{sec:ft_analysis_limited_data}

An intriguing observation from our fine-tuning experiments is that, under certain conditions, zero-shot (ZS) performance can surpass that of models fine-tuned on small data subsets. For instance, on the ETTh2 dataset, the ZS MAE for ViTime is 0.344, which is superior to the 0.370 MAE achieved after fine-tuning on 10\% of the data. This counterintuitive phenomenon warrants a deeper investigation.

We hypothesize that this performance degradation is attributed to a combination of data-specific challenges and the behavior of full-parameter fine-tuning.
\begin{itemize}
    \item \textbf{Data Characteristics}: The ETT datasets, particularly ETTh2, present significant challenges with their low signal-to-noise ratio. In long-horizon forecasting tasks (e.g., 720 steps), while the data shows clear seasonal cycles, it lacks other predictable patterns that models can reliably use. Most variations outside of seasonality appear random or too noisy for effective learning.
    \item \textbf{Overfitting and Forgetting}: When fine-tuning the entire model on a small (e.g., 10\%) and potentially non-representative slice of the data, the model is prone to overfitting to short-term noise and atypical seasonal variations presenting in that specific subset. This process can lead to a form of catastrophic forgetting, where the robust and generalized seasonal priors learned during large-scale pre-training are partially overwritten.
\end{itemize}

To validate this hypothesis, we conducted an extended analysis on the ETTh2 dataset, comparing three different fine-tuning strategies across varying data proportions, full fine-tuning (Full FT), as well as parameter-efficient fine-tuning (PEFT) using LoRA with 10\% and 1\% of trainable parameters. The results are visualized in \Cref{fig:ft_strategy_comparison}.

%==================================================================================
% NEW FIGURE FOR FINE-TUNING ANALYSIS
%==================================================================================
\begin{figure}[!htbp]
    \centering
    % Replace the placeholder with your actual figure using \includegraphics
  
    \includegraphics[width=0.7\linewidth]{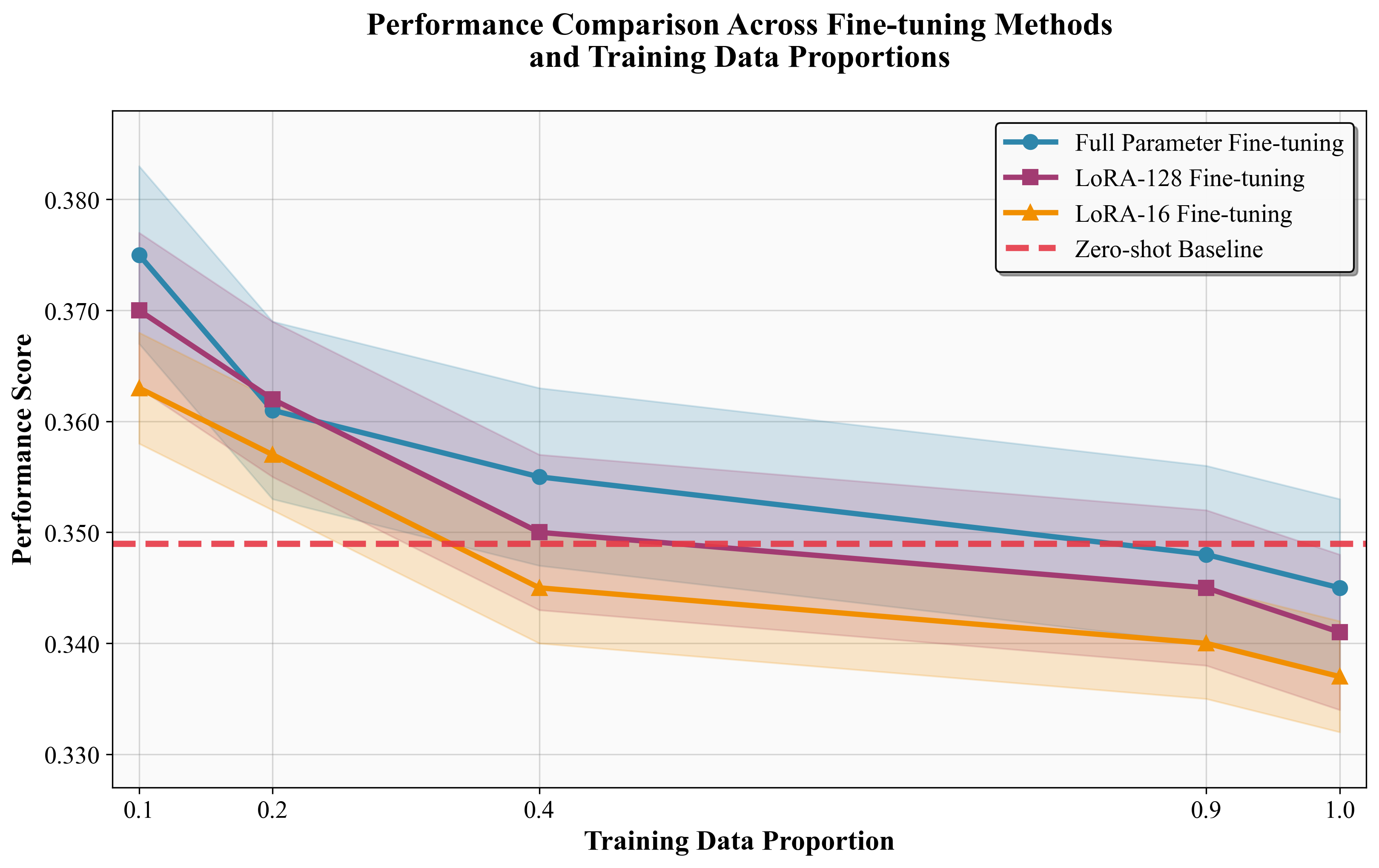}
    \caption{Performance comparison of different fine-tuning strategies on the ETTh2 dataset with varying data ratios. The zero-shot (ZS) performance is shown as a dashed line for reference. Full fine-tuning (Full FT) initially degrades performance on small data subsets before improving. In contrast, parameter-efficient LoRA methods, especially with fewer parameters (1\% LoRA), effectively mitigate this degradation by preserving pre-trained knowledge.}
    \label{fig:ft_strategy_comparison}
\end{figure}
%==================================================================================

As depicted in \Cref{fig:ft_strategy_comparison}, the performance of full fine-tuning exhibits a "U-shaped" trajectory: the MAE first increases, surpassing the ZS baseline, and then gradually decreases as more data becomes available, eventually outperforming the ZS model only when fine-tuned on the full dataset. This confirms our hypothesis that full fine-tuning on limited data is detrimental.

In stark contrast, the LoRA-based methods significantly alleviate this issue. The 10\% LoRA strategy shows a much smaller initial performance dip, while the 1\% LoRA strategy consistently improves upon or matches the ZS baseline across all data ratios. This demonstrates that by updating only a small fraction of the parameters, PEFT methods are less susceptible to overfitting on noisy - small-scale data and are more effective at preserving the valuable priors acquired during pre-training.

This analysis not only explains the initially counterintuitive results but also yields a crucial insight: for adapting large pre-trained models to downstream tasks with limited or noisy data, parameter-efficient fine-tuning methods like LoRA represent a more robust and reliable strategy than conventional full-model fine-tuning.

\subsection{Robust Inference Study}
\label{sec:RobustInference}

Complete results of the robust inference study are reported in \Cref{tab:model-comparison-noise-01-03}-\Cref{tab:model-comparison-noise-10-dm}.

\begin{sidewaystable}
    \centering
    \caption{Performance comparison of TimesFM and ViTime: Gaussian noise levels 0.1 and 0.3}
    \label{tab:model-comparison-noise-01-03}
    \setlength{\tabcolsep}{1.5mm}
    \small
    \begin{tabular}{l|c|rrrrrrrr}
    \toprule
    Dataset & H & \multicolumn{2}{c}{TimesFM GN(0.1)} & \multicolumn{2}{c}{ViTime GN(0.1)} & \multicolumn{2}{c}{TimesFM GN(0.3)} & \multicolumn{2}{c}{ViTime GN(0.3)} \\
    \cmidrule(lr){3-4} \cmidrule(lr){5-6} \cmidrule(lr){7-8} \cmidrule(lr){9-10}
    & & MSE & MAE & MSE & MAE & MSE & MAE & MSE & MAE \\
    \midrule
    \multirow{5}{*}{ETTh1} & 96 & \textbf{0.411±0.003} & 0.415±0.002 & 0.423±0.005 & \textbf{0.411±0.002} & \textbf{0.419±0.007} & \textbf{0.419±0.004} & 0.427±0.009 & 0.419±0.006 \\
 & 192 & 0.461±0.003 & 0.445±0.002 & \textbf{0.452±0.006} & \textbf{0.429±0.003} & 0.464±0.015 & 0.447±0.005 & \textbf{0.453±0.011} & \textbf{0.435±0.006} \\
 & 336 & 0.513±0.004 & 0.478±0.003 & \textbf{0.484±0.003} & \textbf{0.452±0.002} & 0.515±0.028 & 0.481±0.007 & \textbf{0.485±0.008} & \textbf{0.458±0.005} \\
 & 720 & 0.617±0.008 & 0.544±0.002 & \textbf{0.563±0.005} & \textbf{0.503±0.001} & 0.618±0.066 & 0.546±0.009 & \textbf{0.568±0.007} & \textbf{0.509±0.002} \\
    \cmidrule(lr){2-10}
     & Mean & 0.500 & 0.471 & \textbf{0.480} & \textbf{0.449} & 0.504 & 0.473 & \textbf{0.483} & \textbf{0.455} \\
    \cmidrule{1-10}
    \multirow{5}{*}{ETTh2} & 96 & 0.268±0.000 & 0.326±0.001 & \textbf{0.238±0.001} & \textbf{0.304±0.001} & 0.261±0.001 & 0.325±0.002 & \textbf{0.238±0.002} & \textbf{0.307±0.002} \\
 & 192 & 0.326±0.000 & 0.367±0.001 & \textbf{0.277±0.001} & \textbf{0.336±0.000} & 0.312±0.001 & 0.364±0.002 & \textbf{0.279±0.001} & \textbf{0.340±0.001} \\
 & 336 & 0.377±0.000 & 0.405±0.001 & \textbf{0.318±0.001} & \textbf{0.368±0.001} & 0.363±0.001 & 0.402±0.001 & \textbf{0.321±0.002} & \textbf{0.372±0.001} \\
 & 720 & 0.480±0.000 & 0.475±0.001 & \textbf{0.414±0.001} & \textbf{0.434±0.001} & 0.464±0.001 & 0.470±0.001 & \textbf{0.411±0.001} & \textbf{0.433±0.001} \\
    \cmidrule(lr){2-10}
     & Mean & 0.363 & 0.393 & \textbf{0.312} & \textbf{0.360} & 0.350 & 0.390 & \textbf{0.312} & \textbf{0.363} \\
    \cmidrule{1-10}
    \multirow{5}{*}{ETTm1} & 96 & 0.519±0.025 & 0.444±0.009 & \textbf{0.407±0.003} & \textbf{0.394±0.001} & 0.519±0.027 & 0.439±0.007 & \textbf{0.404±0.004} & \textbf{0.400±0.003} \\
 & 192 & 0.569±0.051 & 0.475±0.011 & \textbf{0.442±0.002} & \textbf{0.412±0.001} & 0.557±0.021 & 0.466±0.008 & \textbf{0.436±0.003} & \textbf{0.416±0.002} \\
 & 336 & 0.641±0.046 & 0.509±0.008 & \textbf{0.489±0.002} & \textbf{0.437±0.001} & 0.619±0.071 & 0.497±0.015 & \textbf{0.482±0.003} & \textbf{0.440±0.002} \\
 & 720 & 0.720±0.123 & 0.551±0.017 & \textbf{0.557±0.003} & \textbf{0.474±0.001} & 0.680±0.044 & 0.536±0.008 & \textbf{0.557±0.004} & \textbf{0.479±0.002} \\
    \cmidrule(lr){2-10}
     & Mean & 0.612 & 0.495 & \textbf{0.474} & \textbf{0.429} & 0.594 & 0.485 & \textbf{0.470} & \textbf{0.434} \\
    \cmidrule{1-10}
    \multirow{5}{*}{ETTm2} & 96 & 0.211±0.005 & 0.281±0.004 & \textbf{0.194±0.003} & \textbf{0.270±0.003} & 0.199±0.004 & 0.278±0.003 & \textbf{0.197±0.005} & \textbf{0.276±0.004} \\
 & 192 & 0.289±0.013 & 0.328±0.005 & \textbf{0.251±0.002} & \textbf{0.309±0.002} & 0.264±0.006 & 0.320±0.004 & \textbf{0.255±0.004} & \textbf{0.315±0.004} \\
 & 336 & 0.357±0.015 & 0.369±0.005 & \textbf{0.306±0.001} & \textbf{0.345±0.001} & 0.324±0.008 & 0.359±0.005 & \textbf{0.307±0.005} & \textbf{0.348±0.003} \\
 & 720 & 0.459±0.017 & 0.430±0.006 & \textbf{0.380±0.001} & \textbf{0.393±0.001} & 0.425±0.007 & 0.419±0.004 & \textbf{0.375±0.004} & \textbf{0.392±0.002} \\
    \cmidrule(lr){2-10}
     & Mean & 0.329 & 0.352 & \textbf{0.283} & \textbf{0.329} & 0.303 & 0.344 & \textbf{0.283} & \textbf{0.333} \\
    \cmidrule{1-10}
    \multirow{5}{*}{Electricity} & 96 & 0.267±0.010 & 0.338±0.006 & \textbf{0.217±0.014} & \textbf{0.309±0.007} & 0.273±0.012 & 0.348±0.007 & \textbf{0.225±0.012} & \textbf{0.326±0.008} \\
 & 192 & 0.322±0.008 & 0.376±0.004 & \textbf{0.229±0.013} & \textbf{0.319±0.007} & 0.325±0.010 & 0.384±0.006 & \textbf{0.238±0.010} & \textbf{0.336±0.007} \\
 & 336 & 0.378±0.006 & 0.414±0.003 & \textbf{0.254±0.007} & \textbf{0.337±0.004} & 0.378±0.008 & 0.423±0.006 & \textbf{0.260±0.011} & \textbf{0.352±0.006} \\
 & 720 & 0.486±0.009 & 0.486±0.004 & \textbf{0.337±0.006} & \textbf{0.393±0.004} & 0.495±0.005 & 0.502±0.004 & \textbf{0.341±0.012} & \textbf{0.406±0.007} \\
    \cmidrule(lr){2-10}
     & Mean & 0.363 & 0.403 & \textbf{0.259} & \textbf{0.340} & 0.368 & 0.414 & \textbf{0.266} & \textbf{0.355} \\
    \cmidrule{1-10}
    \multirow{5}{*}{Traffic} & 96 & 0.757±0.430 & 0.457±0.021 & \textbf{0.723±0.286} & \textbf{0.389±0.017} & 0.766±0.262 & 0.467±0.013 & \textbf{0.694±0.395} & \textbf{0.412±0.017} \\
 & 192 & 0.815±0.305 & 0.489±0.017 & \textbf{0.719±0.313} & \textbf{0.385±0.012} & 0.812±0.132 & 0.495±0.009 & \textbf{0.700±0.342} & \textbf{0.410±0.018} \\
 & 336 & 0.861±0.189 & 0.517±0.013 & \textbf{0.741±0.314} & \textbf{0.394±0.012} & 0.854±0.185 & 0.522±0.010 & \textbf{0.716±0.319} & \textbf{0.419±0.015} \\
 & 720 & 0.980±0.159 & 0.583±0.010 & \textbf{0.835±0.351} & \textbf{0.438±0.013} & 0.981±0.195 & 0.587±0.012 & \textbf{0.801±0.315} & \textbf{0.462±0.014} \\
    \cmidrule(lr){2-10}
     & Mean & 0.853 & 0.512 & \textbf{0.754} & \textbf{0.402} & 0.853 & 0.518 & \textbf{0.728} & \textbf{0.426} \\
    \cmidrule{1-10}
    \multirow{5}{*}{Weather} & 96 & \textbf{0.156±0.002} & \textbf{0.202±0.004} & 0.176±0.008 & 0.209±0.006 & \textbf{0.164±0.005} & \textbf{0.213±0.006} & 0.184±0.012 & 0.223±0.007 \\
 & 192 & \textbf{0.213±0.007} & \textbf{0.253±0.005} & 0.227±0.023 & 0.256±0.009 & \textbf{0.220±0.009} & \textbf{0.263±0.005} & 0.238±0.025 & 0.270±0.011 \\
 & 336 & \textbf{0.270±0.007} & \textbf{0.298±0.007} & 0.288±0.024 & 0.300±0.009 & \textbf{0.272±0.007} & \textbf{0.305±0.005} & 0.289±0.023 & 0.306±0.011 \\
 & 720 & 0.375±0.006 & 0.367±0.003 & \textbf{0.371±0.020} & \textbf{0.353±0.001} & 0.372±0.006 & 0.370±0.005 & \textbf{0.365±0.018} & \textbf{0.355±0.008} \\
    \cmidrule(lr){2-10}
     & Mean & \textbf{0.254} & 0.280 & 0.266 & \textbf{0.279} & \textbf{0.257} & \textbf{0.288} & 0.269 & 0.288 \\
    \bottomrule
    \end{tabular}
\end{sidewaystable}

\begin{sidewaystable}
    \centering
    \caption{Performance comparison of TimesFM and ViTime: Gaussian noise levels 0.5 and 0.7}
    \label{tab:model-comparison-noise-05-07}
    \setlength{\tabcolsep}{1.5mm}
    \small
    \begin{tabular}{l|c|rrrrrrrr}
    \toprule
    Dataset & H & \multicolumn{2}{c}{TimesFM GN(0.5)} & \multicolumn{2}{c}{ViTime GN(0.5)} & \multicolumn{2}{c}{TimesFM GN(0.7)} & \multicolumn{2}{c}{ViTime GN(0.7)} \\
    \cmidrule(lr){3-4} \cmidrule(lr){5-6} \cmidrule(lr){7-8} \cmidrule(lr){9-10}
    & & MSE & MAE & MSE & MAE & MSE & MAE & MSE & MAE \\
    \midrule
    \multirow{5}{*}{ETTh1} & 96 & \textbf{0.428±0.010} & \textbf{0.424±0.005} & 0.441±0.006 & 0.432±0.004 & \textbf{0.434±0.044} & \textbf{0.428±0.009} & 0.471±0.006 & 0.456±0.005 \\
 & 192 & 0.473±0.020 & 0.451±0.007 & \textbf{0.465±0.006} & \textbf{0.447±0.003} & 0.482±0.041 & \textbf{0.456±0.009} & \textbf{0.477±0.005} & 0.462±0.004 \\
 & 336 & 0.525±0.014 & 0.486±0.006 & \textbf{0.494±0.005} & \textbf{0.468±0.002} & 0.531±0.053 & 0.490±0.009 & \textbf{0.506±0.005} & \textbf{0.480±0.004} \\
 & 720 & 0.634±0.014 & 0.553±0.005 & \textbf{0.576±0.009} & \textbf{0.518±0.003} & 0.638±0.045 & 0.556±0.009 & \textbf{0.606±0.006} & \textbf{0.539±0.003} \\
    \cmidrule(lr){2-10}
     & Mean & 0.515 & 0.479 & \textbf{0.494} & \textbf{0.466} & 0.521 & \textbf{0.483} & \textbf{0.515} & 0.484 \\
    \cmidrule{1-10}
    \multirow{5}{*}{ETTh2} & 96 & 0.262±0.002 & 0.327±0.003 & \textbf{0.244±0.002} & \textbf{0.315±0.003} & \textbf{0.268±0.002} & \textbf{0.332±0.003} & 0.275±0.004 & 0.342±0.003 \\
 & 192 & 0.314±0.002 & 0.366±0.003 & \textbf{0.283±0.002} & \textbf{0.346±0.002} & 0.319±0.003 & \textbf{0.369±0.003} & \textbf{0.310±0.003} & 0.369±0.003 \\
 & 336 & 0.365±0.002 & 0.403±0.002 & \textbf{0.326±0.002} & \textbf{0.378±0.002} & 0.368±0.004 & 0.405±0.003 & \textbf{0.355±0.002} & \textbf{0.402±0.002} \\
 & 720 & 0.464±0.002 & 0.470±0.002 & \textbf{0.416±0.001} & \textbf{0.440±0.001} & 0.467±0.002 & 0.471±0.002 & \textbf{0.441±0.002} & \textbf{0.461±0.002} \\
    \cmidrule(lr){2-10}
     & Mean & 0.351 & 0.392 & \textbf{0.317} & \textbf{0.370} & 0.355 & \textbf{0.394} & \textbf{0.345} & 0.394 \\
    \cmidrule{1-10}
    \multirow{5}{*}{ETTm1} & 96 & 0.534±0.035 & 0.442±0.012 & \textbf{0.419±0.005} & \textbf{0.415±0.003} & 0.546±0.031 & 0.448±0.007 & \textbf{0.433±0.006} & \textbf{0.418±0.003} \\
 & 192 & 0.576±0.023 & 0.469±0.008 & \textbf{0.448±0.003} & \textbf{0.429±0.002} & 0.580±0.018 & 0.473±0.005 & \textbf{0.445±0.004} & \textbf{0.425±0.003} \\
 & 336 & 0.628±0.022 & 0.500±0.007 & \textbf{0.486±0.005} & \textbf{0.449±0.002} & 0.645±0.010 & 0.505±0.003 & \textbf{0.480±0.004} & \textbf{0.444±0.003} \\
 & 720 & 0.695±0.028 & 0.539±0.007 & \textbf{0.559±0.007} & \textbf{0.486±0.003} & 0.693±0.022 & 0.542±0.007 & \textbf{0.550±0.006} & \textbf{0.484±0.003} \\
    \cmidrule(lr){2-10}
     & Mean & 0.608 & 0.488 & \textbf{0.478} & \textbf{0.445} & 0.616 & 0.492 & \textbf{0.477} & \textbf{0.443} \\
    \cmidrule{1-10}
    \multirow{5}{*}{ETTm2} & 96 & \textbf{0.199±0.005} & \textbf{0.279±0.004} & 0.202±0.006 & 0.283±0.005 & \textbf{0.205±0.006} & \textbf{0.285±0.006} & 0.211±0.008 & 0.292±0.007 \\
 & 192 & 0.263±0.007 & 0.321±0.004 & \textbf{0.257±0.005} & \textbf{0.319±0.003} & 0.270±0.006 & 0.327±0.004 & \textbf{0.262±0.007} & \textbf{0.326±0.006} \\
 & 336 & 0.324±0.010 & 0.360±0.007 & \textbf{0.308±0.004} & \textbf{0.351±0.003} & 0.331±0.008 & 0.364±0.006 & \textbf{0.311±0.005} & \textbf{0.358±0.004} \\
 & 720 & 0.419±0.007 & 0.418±0.004 & \textbf{0.377±0.004} & \textbf{0.395±0.002} & 0.426±0.006 & 0.421±0.004 & \textbf{0.389±0.004} & \textbf{0.407±0.002} \\
    \cmidrule(lr){2-10}
     & Mean & 0.301 & 0.345 & \textbf{0.286} & \textbf{0.337} & 0.308 & 0.349 & \textbf{0.293} & \textbf{0.346} \\
    \cmidrule{1-10}
    \multirow{5}{*}{Electricity} & 96 & 0.283±0.008 & 0.362±0.004 & \textbf{0.242±0.015} & \textbf{0.344±0.007} & 0.297±0.010 & 0.377±0.004 & \textbf{0.257±0.012} & \textbf{0.357±0.006} \\
 & 192 & 0.334±0.011 & 0.398±0.006 & \textbf{0.252±0.015} & \textbf{0.352±0.007} & 0.347±0.008 & 0.413±0.005 & \textbf{0.260±0.013} & \textbf{0.359±0.006} \\
 & 336 & 0.390±0.010 & 0.440±0.005 & \textbf{0.273±0.013} & \textbf{0.368±0.006} & 0.405±0.009 & 0.457±0.004 & \textbf{0.277±0.013} & \textbf{0.371±0.006} \\
 & 720 & 0.528±0.013 & 0.532±0.006 & \textbf{0.357±0.014} & \textbf{0.422±0.008} & 0.556±0.010 & 0.554±0.005 & \textbf{0.355±0.012} & \textbf{0.419±0.006} \\
    \cmidrule(lr){2-10}
     & Mean & 0.384 & 0.433 & \textbf{0.281} & \textbf{0.371} & 0.401 & 0.450 & \textbf{0.287} & \textbf{0.377} \\
    \cmidrule{1-10}
    \multirow{5}{*}{Traffic} & 96 & 0.761±0.268 & 0.478±0.016 & \textbf{0.747±0.348} & \textbf{0.447±0.013} & \textbf{0.800±0.217} & \textbf{0.495±0.011} & 0.833±0.654 & 0.503±0.026 \\
 & 192 & 0.825±0.139 & 0.507±0.009 & \textbf{0.755±0.256} & \textbf{0.446±0.012} & 0.831±0.279 & 0.519±0.020 & \textbf{0.815±0.486} & \textbf{0.498±0.021} \\
 & 336 & 0.863±0.095 & 0.533±0.009 & \textbf{0.776±0.222} & \textbf{0.455±0.012} & 0.871±0.168 & 0.547±0.012 & \textbf{0.829±0.393} & \textbf{0.502±0.019} \\
 & 720 & 0.993±0.148 & 0.600±0.012 & \textbf{0.862±0.212} & \textbf{0.496±0.013} & 1.008±0.182 & 0.613±0.014 & \textbf{0.918±0.361} & \textbf{0.538±0.018} \\
    \cmidrule(lr){2-10}
     & Mean & 0.860 & 0.529 & \textbf{0.785} & \textbf{0.461} & 0.877 & 0.543 & \textbf{0.849} & \textbf{0.510} \\
    \cmidrule{1-10}
    \multirow{5}{*}{Weather} & 96 & \textbf{0.171±0.005} & \textbf{0.222±0.004} & 0.189±0.014 & 0.233±0.010 & \textbf{0.177±0.005} & \textbf{0.230±0.005} & 0.188±0.016 & 0.240±0.010 \\
 & 192 & \textbf{0.225±0.006} & \textbf{0.271±0.005} & 0.238±0.029 & 0.276±0.016 & \textbf{0.234±0.008} & \textbf{0.278±0.006} & 0.237±0.028 & 0.280±0.017 \\
 & 336 & \textbf{0.282±0.010} & 0.313±0.005 & 0.286±0.024 & \textbf{0.310±0.014} & \textbf{0.290±0.005} & 0.320±0.005 & 0.291±0.024 & \textbf{0.317±0.016} \\
 & 720 & 0.379±0.009 & 0.376±0.005 & \textbf{0.358±0.021} & \textbf{0.356±0.012} & 0.386±0.009 & 0.381±0.005 & \textbf{0.372±0.019} & \textbf{0.366±0.011} \\
    \cmidrule(lr){2-10}
     & Mean & \textbf{0.264} & 0.295 & 0.268 & \textbf{0.294} & \textbf{0.272} & 0.302 & 0.272 & \textbf{0.301} \\
    \bottomrule
    \end{tabular}
\end{sidewaystable}

\begin{sidewaystable}
    \centering
    \caption{Performance comparison of TimesFM and ViTime: Gaussian noise level 1.0 and data missing (DM 0.3)}
    \label{tab:model-comparison-noise-10-dm}
    \setlength{\tabcolsep}{1.8mm}
    \small
    \begin{tabular}{l|c|rrrrrrrr}
    \toprule
    Dataset & H & \multicolumn{2}{c}{TimesFM GN(1.0)} & \multicolumn{2}{c}{ViTime GN(1.0)} & \multicolumn{2}{c}{ViTime DM(0.3)} \\
    \cmidrule(lr){3-4} \cmidrule(lr){5-6} \cmidrule(lr){7-8}
    & & MSE & MAE & MSE & MAE & MSE & MAE \\
    \midrule
    \multirow{5}{*}{ETTh1} & 96 & \textbf{0.436±0.056} & \textbf{0.436±0.008} & 0.481±0.012 & 0.459±0.008 & 0.433±0.005 & 0.414±0.002 \\
 & 192 & \textbf{0.483±0.042} & \textbf{0.462±0.010} & 0.502±0.013 & 0.472±0.007 & 0.463±0.006 & 0.433±0.003 \\
 & 336 & 0.537±0.096 & 0.493±0.011 & \textbf{0.522±0.004} & \textbf{0.487±0.003} & 0.496±0.005 & 0.456±0.002 \\
 & 720 & 0.636±0.227 & 0.555±0.019 & \textbf{0.599±0.007} & \textbf{0.530±0.003} & 0.574±0.009 & 0.507±0.002 \\
    \cmidrule(lr){2-8}
     & Mean & \textbf{0.523} & \textbf{0.487} & 0.526 & 0.487 & 0.491 & 0.453 \\
    \cmidrule{1-8}
    \multirow{5}{*}{ETTh2} & 96 & \textbf{0.280±0.002} & \textbf{0.340±0.002} & 0.292±0.008 & 0.356±0.007 & 0.255±0.001 & 0.319±0.002 \\
 & 192 & \textbf{0.328±0.003} & \textbf{0.375±0.002} & 0.330±0.007 & 0.385±0.006 & 0.297±0.001 & 0.352±0.001 \\
 & 336 & 0.379±0.004 & \textbf{0.410±0.003} & \textbf{0.370±0.005} & 0.414±0.004 & 0.343±0.001 & 0.387±0.001 \\
 & 720 & 0.474±0.006 & \textbf{0.473±0.004} & \textbf{0.466±0.003} & 0.478±0.003 & 0.441±0.001 & 0.455±0.001 \\
    \cmidrule(lr){2-8}
     & Mean & 0.365 & \textbf{0.399} & \textbf{0.364} & 0.408 & 0.334 & 0.378 \\
    \cmidrule{1-8}
    \multirow{5}{*}{ETTm1} & 96 & 0.550±0.053 & 0.459±0.016 & \textbf{0.470±0.016} & \textbf{0.449±0.005} & 0.408±0.004 & 0.392±0.001 \\
 & 192 & 0.589±0.027 & 0.484±0.006 & \textbf{0.491±0.007} & \textbf{0.459±0.003} & 0.451±0.002 & 0.413±0.001 \\
 & 336 & 0.635±0.035 & 0.510±0.006 & \textbf{0.524±0.007} & \textbf{0.474±0.003} & 0.507±0.003 & 0.440±0.001 \\
 & 720 & 0.695±0.020 & 0.545±0.004 & \textbf{0.579±0.008} & \textbf{0.500±0.003} & 0.595±0.003 & 0.484±0.001 \\
    \cmidrule(lr){2-8}
     & Mean & 0.617 & 0.500 & \textbf{0.516} & \textbf{0.471} & 0.490 & 0.432 \\
    \cmidrule{1-8}
    \multirow{5}{*}{ETTm2} & 96 & \textbf{0.217±0.006} & \textbf{0.295±0.005} & 0.228±0.009 & 0.307±0.007 & 0.199±0.005 & 0.274±0.004 \\
 & 192 & 0.283±0.009 & \textbf{0.336±0.007} & \textbf{0.279±0.007} & 0.340±0.006 & 0.262±0.004 & 0.317±0.004 \\
 & 336 & 0.343±0.010 & 0.372±0.006 & \textbf{0.327±0.004} & \textbf{0.369±0.003} & 0.320±0.005 & 0.354±0.003 \\
 & 720 & 0.442±0.009 & 0.431±0.006 & \textbf{0.404±0.003} & \textbf{0.415±0.002} & 0.398±0.004 & 0.403±0.002 \\
    \cmidrule(lr){2-8}
     & Mean & 0.321 & 0.359 & \textbf{0.309} & \textbf{0.358} & 0.295 & 0.337 \\
    \cmidrule{1-8}
    \multirow{5}{*}{Electricity} & 96 & 0.319±0.009 & 0.398±0.006 & \textbf{0.296±0.021} & \textbf{0.389±0.010} & 0.222±0.014 & 0.311±0.007 \\
 & 192 & 0.385±0.009 & 0.446±0.006 & \textbf{0.306±0.021} & \textbf{0.397±0.010} & 0.235±0.013 & 0.323±0.007 \\
 & 336 & 0.431±0.012 & 0.480±0.006 & \textbf{0.326±0.021} & \textbf{0.410±0.010} & 0.260±0.007 & 0.341±0.004 \\
 & 720 & 0.583±0.012 & 0.575±0.005 & \textbf{0.417±0.020} & \textbf{0.465±0.010} & 0.343±0.006 & 0.397±0.004 \\
    \cmidrule(lr){2-8}
     & Mean & 0.429 & 0.475 & \textbf{0.336} & \textbf{0.415} & 0.265 & 0.343 \\
    \cmidrule{1-8}
    \multirow{5}{*}{Traffic} & 96 & \textbf{0.814±0.123} & \textbf{0.519±0.006} & 0.846±0.416 & 0.536±0.025 & 0.782±0.348 & 0.406±0.017 \\
 & 192 & 0.862±0.095 & 0.544±0.006 & \textbf{0.858±0.354} & \textbf{0.536±0.022} & 0.772±0.256 & 0.400±0.012 \\
 & 336 & 0.907±0.093 & 0.571±0.006 & \textbf{0.869±0.309} & \textbf{0.538±0.020} & 0.792±0.222 & 0.409±0.012 \\
 & 720 & 1.037±0.149 & 0.633±0.009 & \textbf{0.953±0.302} & \textbf{0.574±0.019} & 0.886±0.212 & 0.454±0.013 \\
    \cmidrule(lr){2-8}
     & Mean & 0.905 & 0.567 & \textbf{0.881} & \textbf{0.546} & 0.808 & 0.417 \\
    \cmidrule{1-8}
    \multirow{5}{*}{Weather} & 96 & \textbf{0.188±0.006} & \textbf{0.241±0.005} & 0.197±0.011 & 0.250±0.008 & 0.170±0.014 & 0.209±0.010 \\
 & 192 & \textbf{0.244±0.009} & \textbf{0.288±0.006} & 0.244±0.019 & 0.288±0.011 & 0.231±0.029 & 0.261±0.016 \\
 & 336 & 0.302±0.011 & 0.330±0.008 & \textbf{0.291±0.018} & \textbf{0.320±0.012} & 0.288±0.024 & 0.302±0.014 \\
 & 720 & 0.399±0.010 & 0.390±0.007 & \textbf{0.361±0.014} & \textbf{0.364±0.009} & 0.366±0.021 & 0.353±0.012 \\
    \cmidrule(lr){2-8}
     & Mean & 0.283 & 0.312 & \textbf{0.273} & \textbf{0.305} & 0.264 & 0.281 \\
    \bottomrule
    \end{tabular}
\end{sidewaystable}

\subsubsection{Analysis on Pre-Denoising Methods}

To examine the effectiveness of pre-denoising algorithms in improving numerical model accuracy under moderate noise interference, we conducted a series of experiments evaluating standard denoising techniques applied to time series data before model input. The experiments were performed on the Electricity dataset, to which we added Gaussian noise with a variance of $\sigma^2=0.5$ to simulate moderate noise conditions. We then applied three common pre-processing denoising methods before feeding the data into the baseline TimesFM model. The methods evaluated are:

\begin{itemize}
    \item \textbf{3-Sigma Rule}: A simple outlier detection and removal method.
    \item \textbf{Moving Average}: A smoothing filter with a window size of 5.
    \item \textbf{Savitzky-Golay Filter}: A polynomial-based smoothing filter (polyorder=3).
\end{itemize}

The results as shown in Figure~\ref{fig:denoising_comparison}, illustrate the performance improvements and trade-offs associated with these pre-processing denoising steps under moderate noise conditions.

\begin{figure}[h!]
    \centering
    \includegraphics[width=0.9\textwidth]{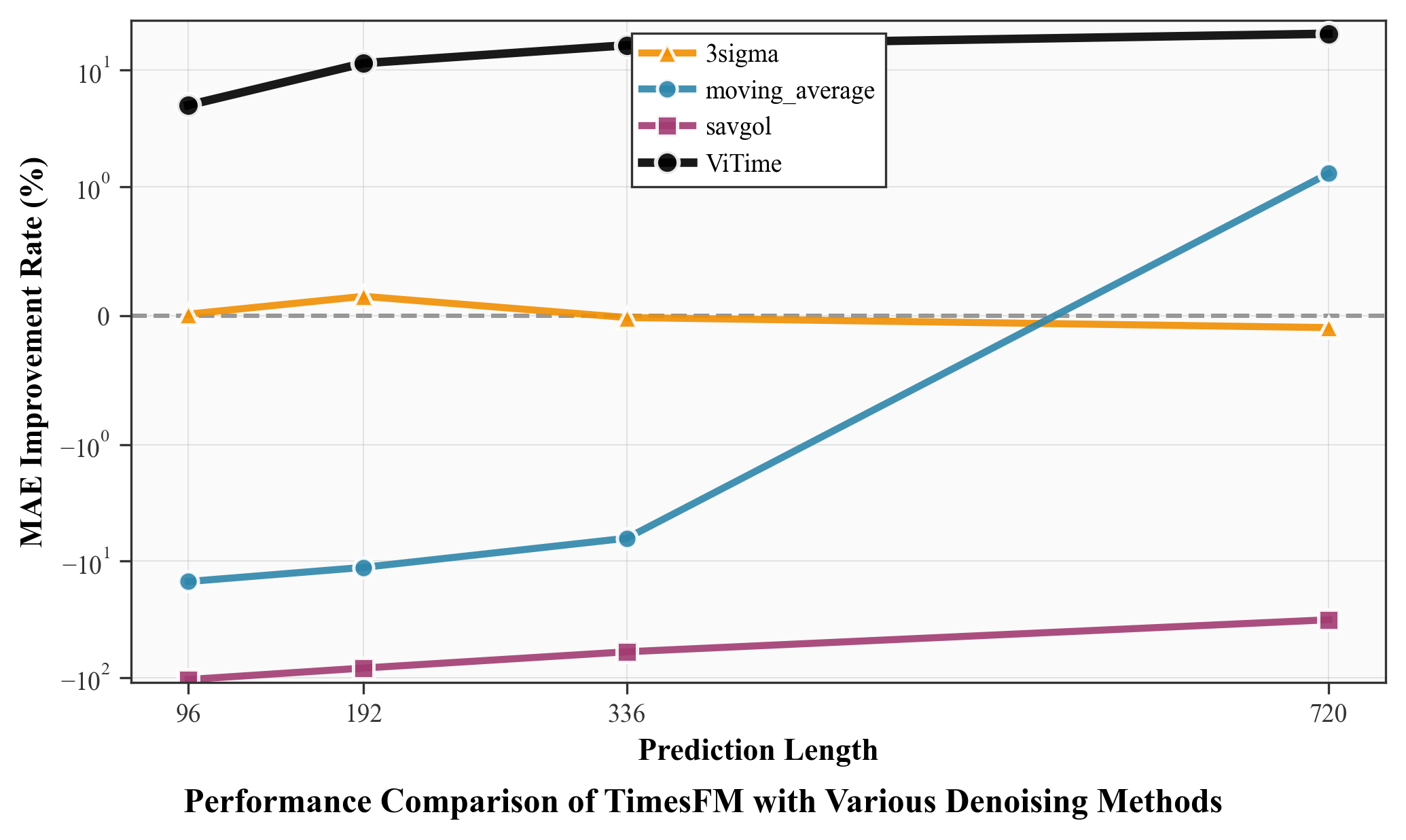}
    \caption{Performance comparison of TimesFM combined with various pre-denoising methods on the Electricity dataset under moderate Gaussian noise ($\sigma^2=0.5$). The y-axis represents the ReMAE improvement rate (\%) compared to the baseline TimesFM without any denoising. Our proposed method, ViTime, is included as a reference to showcase its superior performance.}
    \label{fig:denoising_comparison}
\end{figure}

As depicted in the figure, under moderate noise interference, the effectiveness of pre-denoising algorithms in improving model accuracy varies significantly depending on both the method and the prediction horizon. The \textbf{3-sigma} method demonstrates modest accuracy improvements for short prediction lengths (96 and 192), achieving positive ReMAE improvement rates. However, its performance degrades and even becomes detrimental for longer horizons. This limitation arises because the method only removes extreme outliers but fails to address the underlying noise distribution that significantly impacts long-term trend forecasting accuracy.

Smoothing-based denoising methods like \textbf{moving average} and \textbf{Savitzky-Golay (savgol)} show contrasting results for numerical model accuracy under moderate noise. These methods exhibit poor performance for short-term predictions, showing negative improvement rates and actually reducing model accuracy. This degradation occurs because smoothing techniques inherently average out recent fluctuations and introduce temporal lag in the data. While this characteristic helps reveal long-term trends and improves accuracy for extended horizons (as evidenced by the moving average's positive performance at the 720-step horizon), it corrupts the fine-grained, high-frequency patterns essential for accurate short-term forecasting.

In stark contrast, our proposed \textbf{ViTime} method, serving as a reference line in the chart, consistently achieves significant ReMAE improvements across all prediction lengths under moderate noise conditions. This superior performance demonstrates that ViTime's intrinsic architecture for handling time series data provides inherent robustness against moderate noise interference. Unlike traditional pre-processing denoising pipelines that often involve accuracy trade-offs between different prediction horizons, ViTime successfully denoises and extracts meaningful features while maintaining high numerical precision, thereby establishing its \textbf{superior robustness} in moderately noisy environments.

\subsection{Computational Complexity Analysis}

We conduct extensive experiments to analyze the computational complexity and prediction accuracy of our proposed models. All experiments are performed with batch size 4, input sequence length 512, and prediction horizon 720 on a single Nvidia 3090 GPU.

\begin{table}[H]
\centering
\small
\caption{Model Performance and Computational Resource Requirements}
\label{tab:complexity}
\begin{tabular}{lccccc}
\toprule
Model & GPU Mem. & \multicolumn{2}{c}{Inference Time (s/batch)} & Params & Avg. ReMAE \\
\cmidrule(lr){3-4}
 & (MB) & Total & Map. \& Inv. Map.  & (M) & \\
\midrule
TimesFM & 18,154 & 0.130 & - & 200 & 0.423 \\
ViTime w/ Refining & 3,120 & 2.890 & 0.0068 & 95 & \textbf{0.376} \\
ViTime w/o Refining & \textbf{667} & \textbf{0.082} & 0.0068 & \textbf{74} & 0.381 \\
\bottomrule
\end{tabular}
\end{table}

The results are reported in \Cref{tab:complexity}. The baseline TimesFM model requires substantial computational resources with 18.1GB GPU memory and 200M parameters, while achieving an average ReMAE of 0.423. In contrast, our proposed ViTime architecture demonstrates remarkable improvements in both efficiency and accuracy. The basic version without the refining module strikes an optimal balance between computational efficiency and performance - it requires only 667MB GPU memory (27$\times$ reduction), achieves faster inference at 0.082s per batch, uses 63\% fewer parameters (74M), while maintaining competitive accuracy with an average ReMAE of 0.381.

For applications prioritizing prediction accuracy, the ViTime variant with refining module achieves the best performance with an average ReMAE of 0.376, representing an 11.1\% improvement over TimesFM. This comes at the cost of increased computational overhead - 3.1GB GPU memory and 2.89s inference time per batch, though still maintaining a 5.8$\times$ reduction in memory compared to TimesFM. Notably, the mapping \& inverse mapping between image space and numerical space in ViTime variants consume only 0.0068s, representing 8.3\% and 0.24\% of the total inference time for the basic and refined versions, respectively.

These results demonstrate that our proposed architecture offers flexible deployment options: the basic version for resource-constrained scenarios requiring good accuracy and computational efficiency, and the refined version for applications where prediction accuracy is paramount. Both variants significantly outperform the baseline in terms of the computation-accuracy trade-off.

\begin{figure*}[htbp]
    \centering
    \begin{subfigure}[b]{\textwidth}
        \centering
        \includegraphics[width=0.5\textwidth]{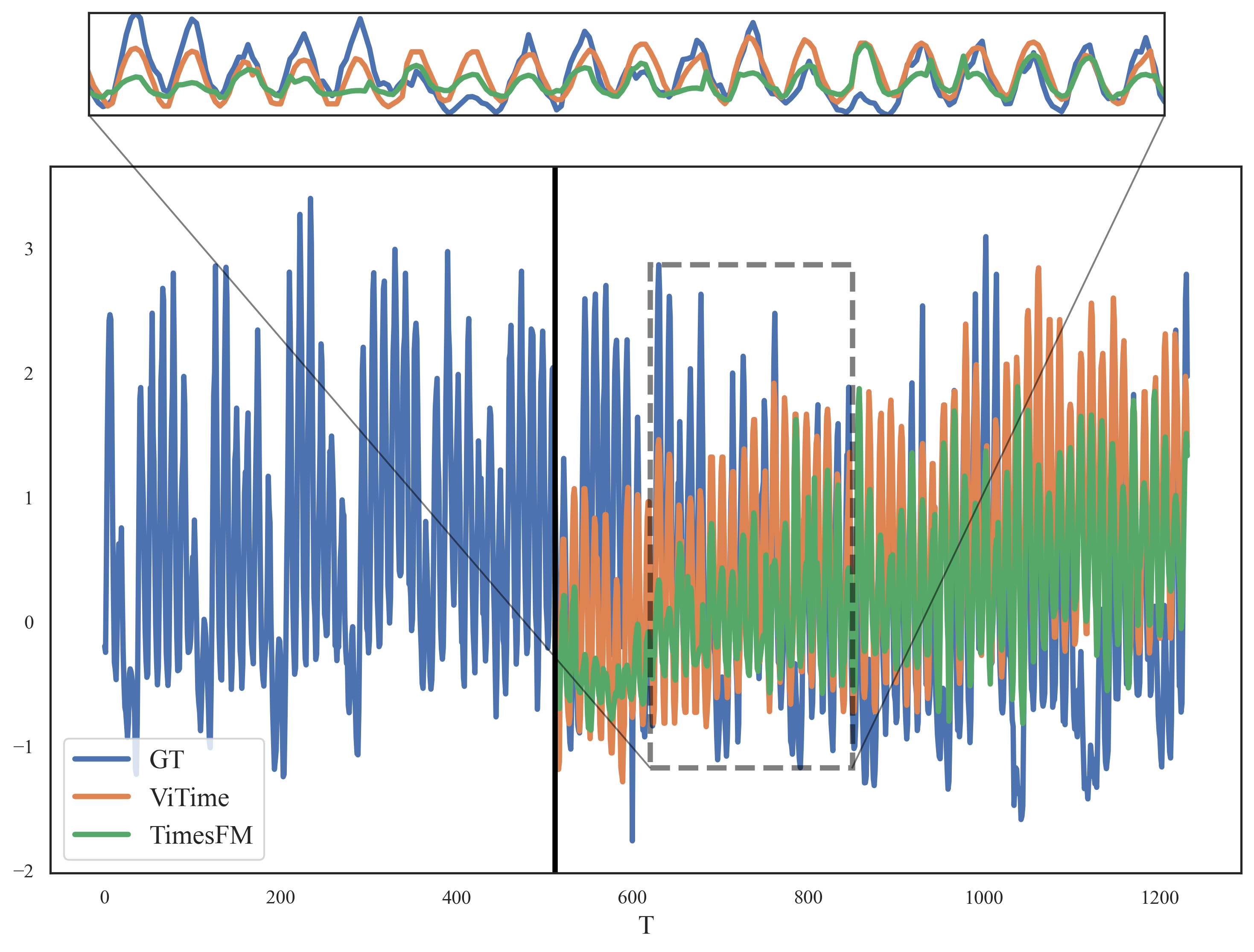}
        \caption{Rescale factor=0.5}
    \end{subfigure}
    \hfill
    \begin{subfigure}[b]{0.5\textwidth}
        \centering
        \includegraphics[width=\textwidth]{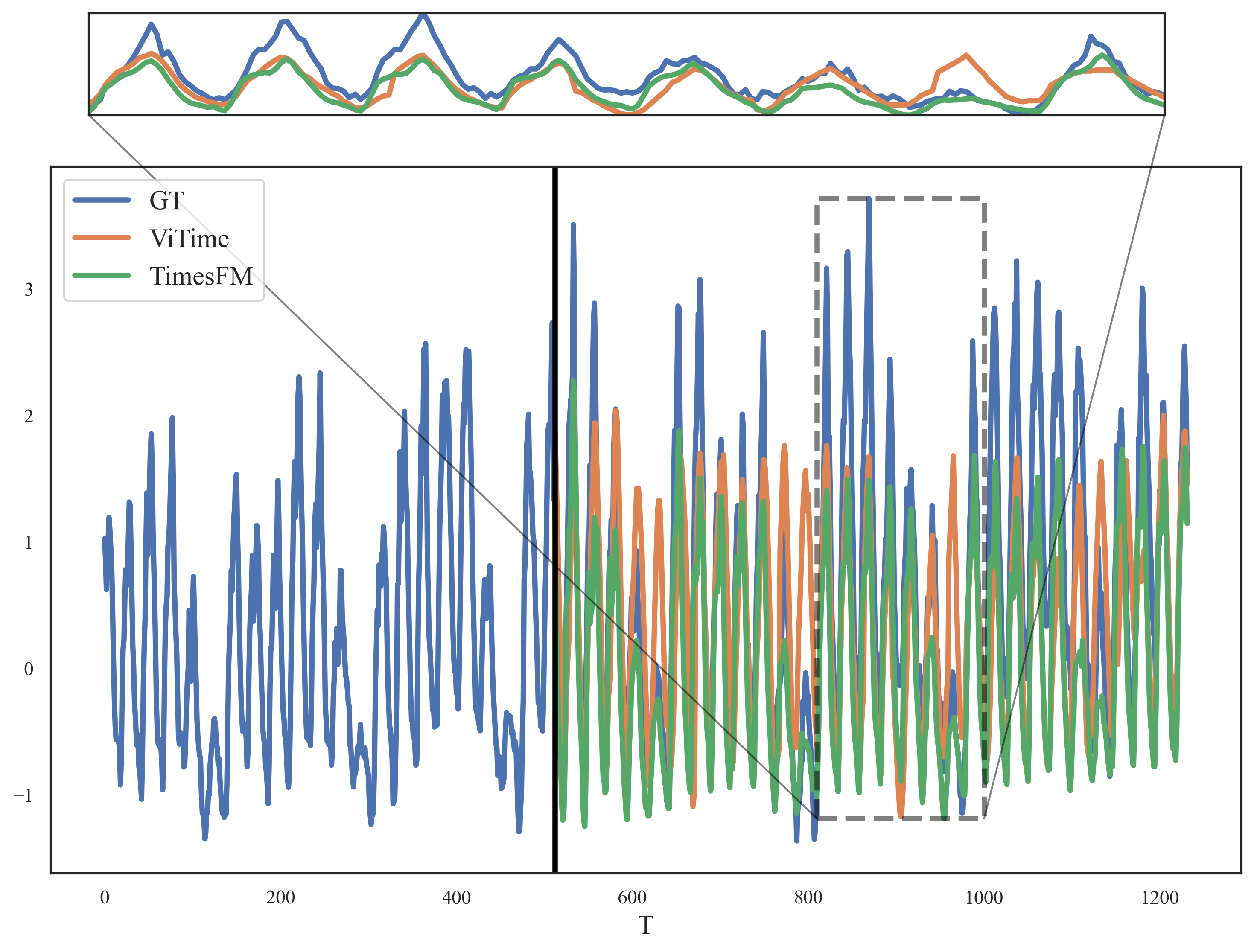}
        \caption{Rescale factor=1}
    \end{subfigure}
    \hfill
    \begin{subfigure}[b]{0.5\textwidth}
        \centering
        \includegraphics[width=\textwidth]{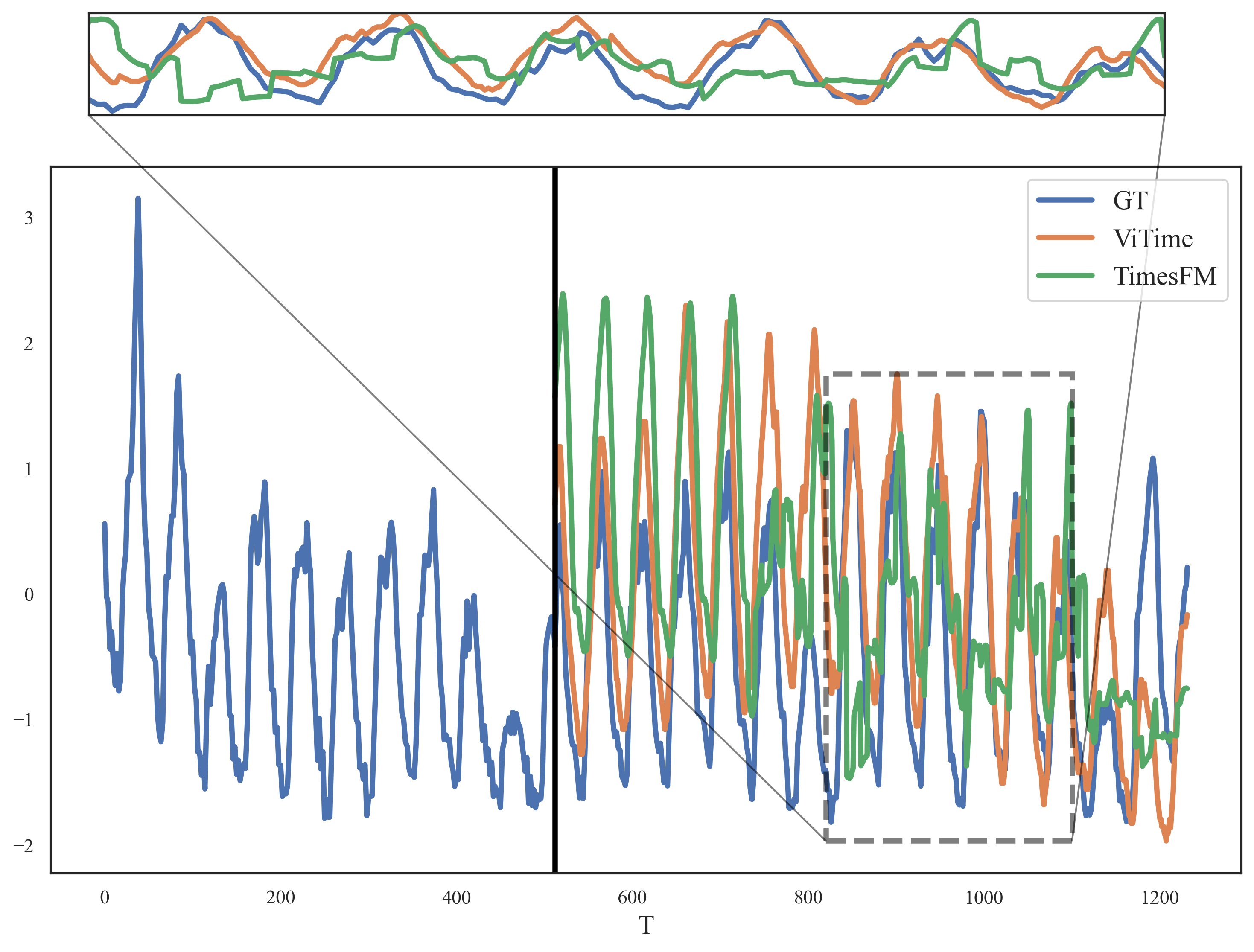}
        \caption{Rescale factor=2}
    \end{subfigure}
    \caption{Illustrative example of Electricity dataset.}
    \label{fig:electricity_appendix} % Changed label
\end{figure*}

\begin{figure*}[htbp]
    \centering
    \begin{subfigure}[b]{0.5\textwidth}
        \centering
        \includegraphics[width=\textwidth]{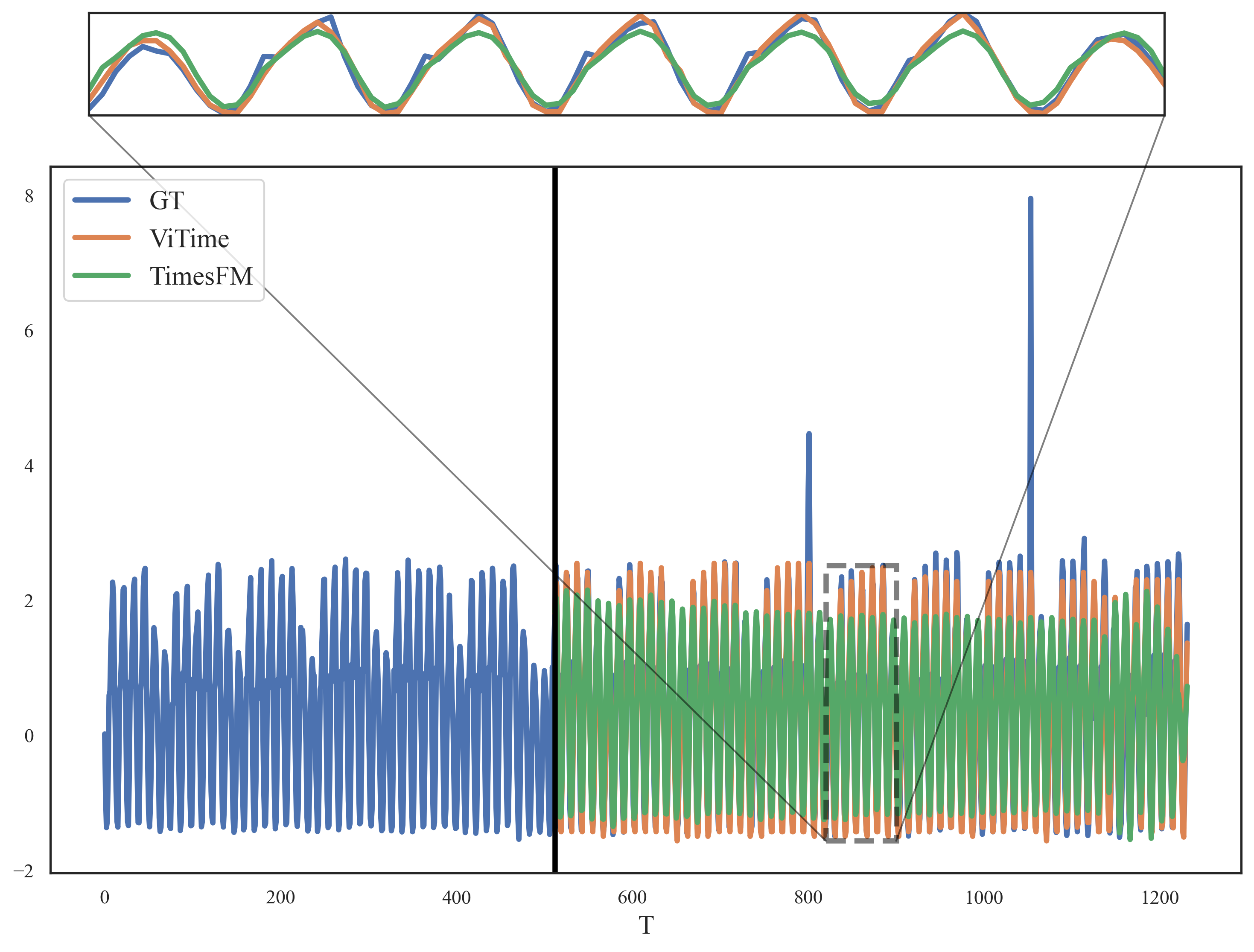}
        \caption{Rescale factor=0.5}
    \end{subfigure}
    \hfill
    \begin{subfigure}[b]{0.5\textwidth}
        \centering
        \includegraphics[width=\textwidth]{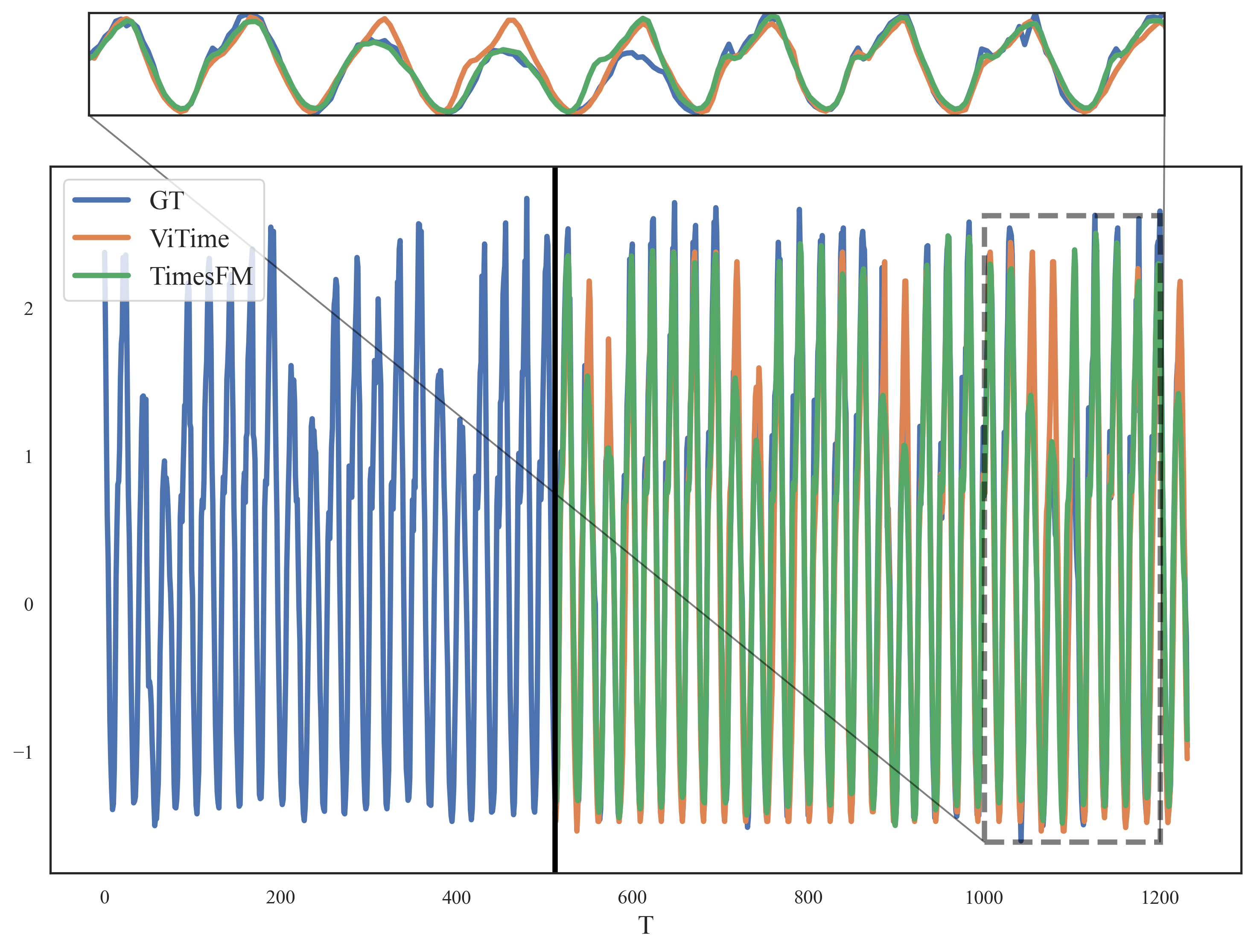}
        \caption{Rescale factor=1}
    \end{subfigure}
    \hfill
    \begin{subfigure}[b]{0.5\textwidth}
        \centering
        \includegraphics[width=\textwidth]{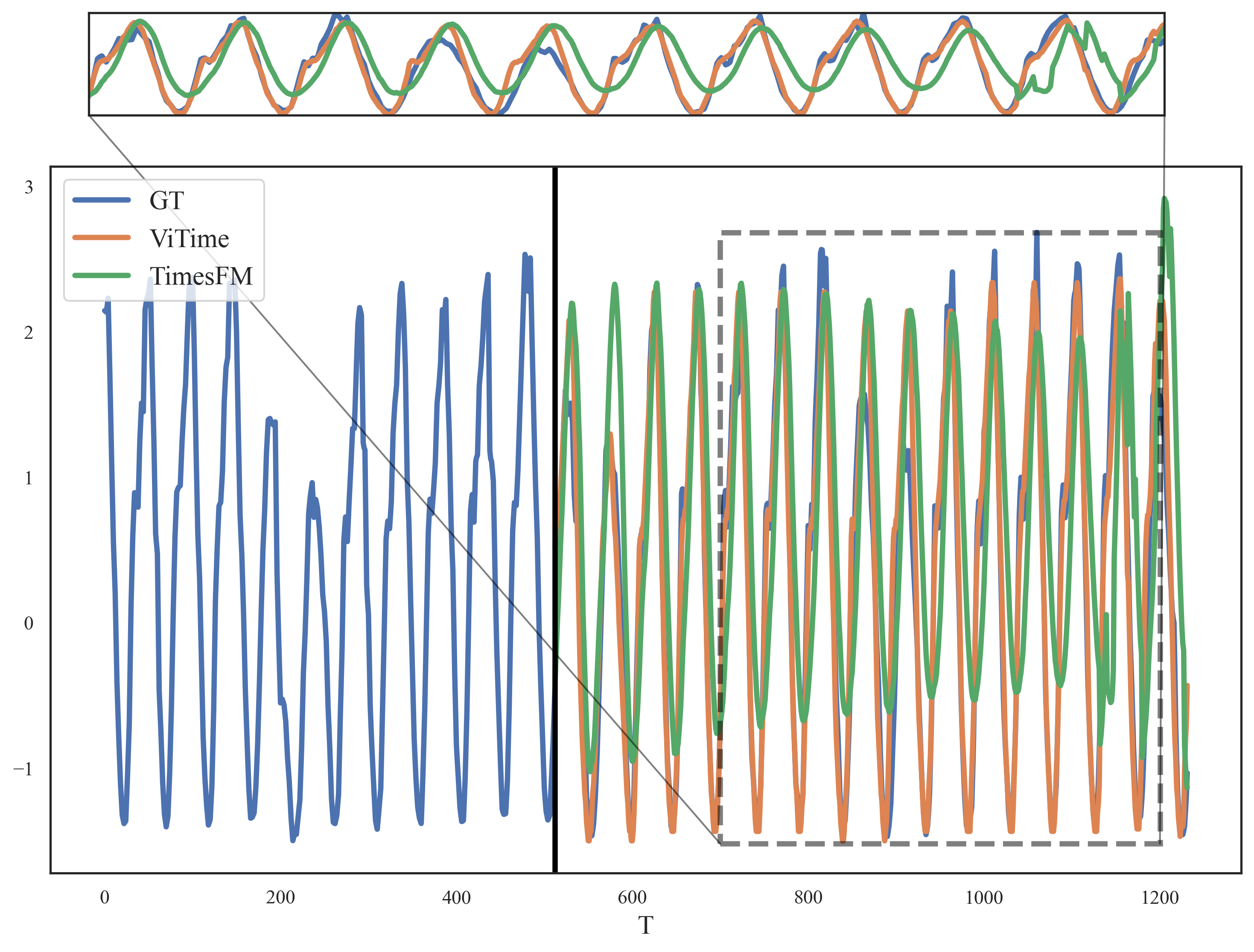}
        \caption{Rescale factor=2}
    \end{subfigure}
    \caption{Illustrative example of Traffic dataset.}
    \label{fig:traffic_appendix} % Changed label
\end{figure*}

\begin{figure*}[htbp]
    \centering
    \begin{subfigure}[b]{0.5\textwidth}
        \centering
        \includegraphics[width=\textwidth]{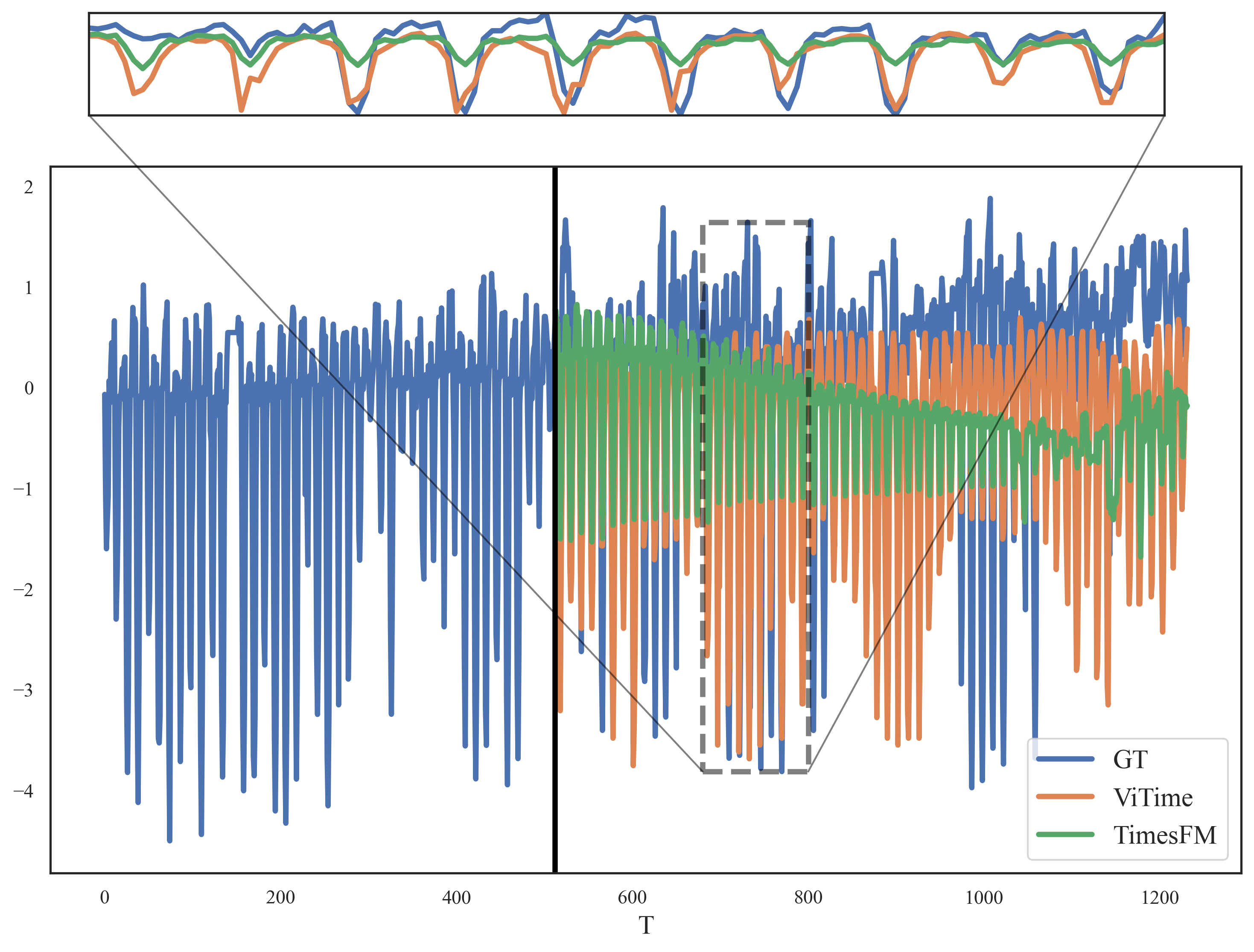}
        \caption{Rescale factor=0.5}
    \end{subfigure}
    \hfill
    \begin{subfigure}[b]{0.5\textwidth}
        \centering
        \includegraphics[width=\textwidth]{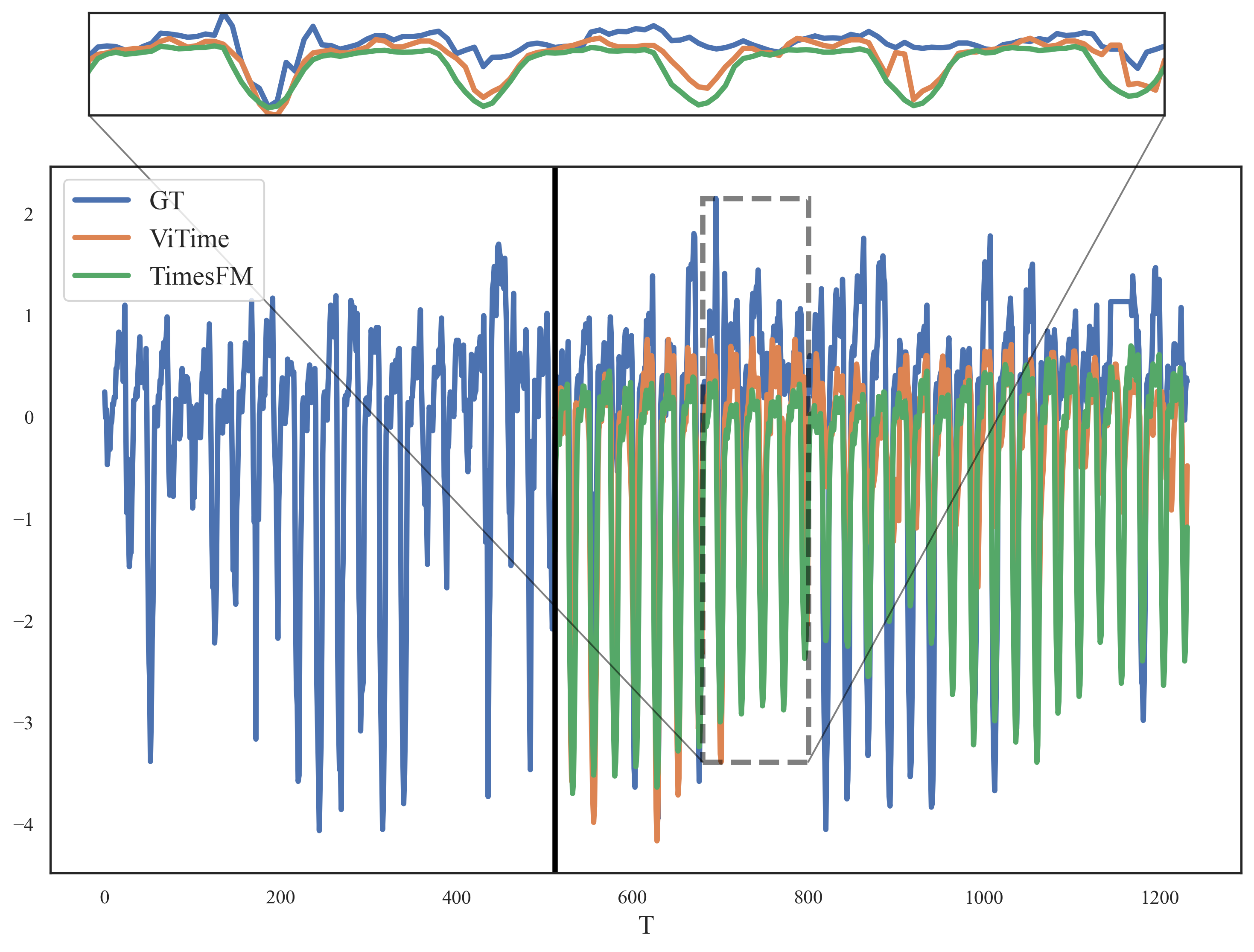}
        \caption{Rescale factor=1}
    \end{subfigure}
    \hfill
    \begin{subfigure}[b]{0.5\textwidth}
        \centering
        \includegraphics[width=\textwidth]{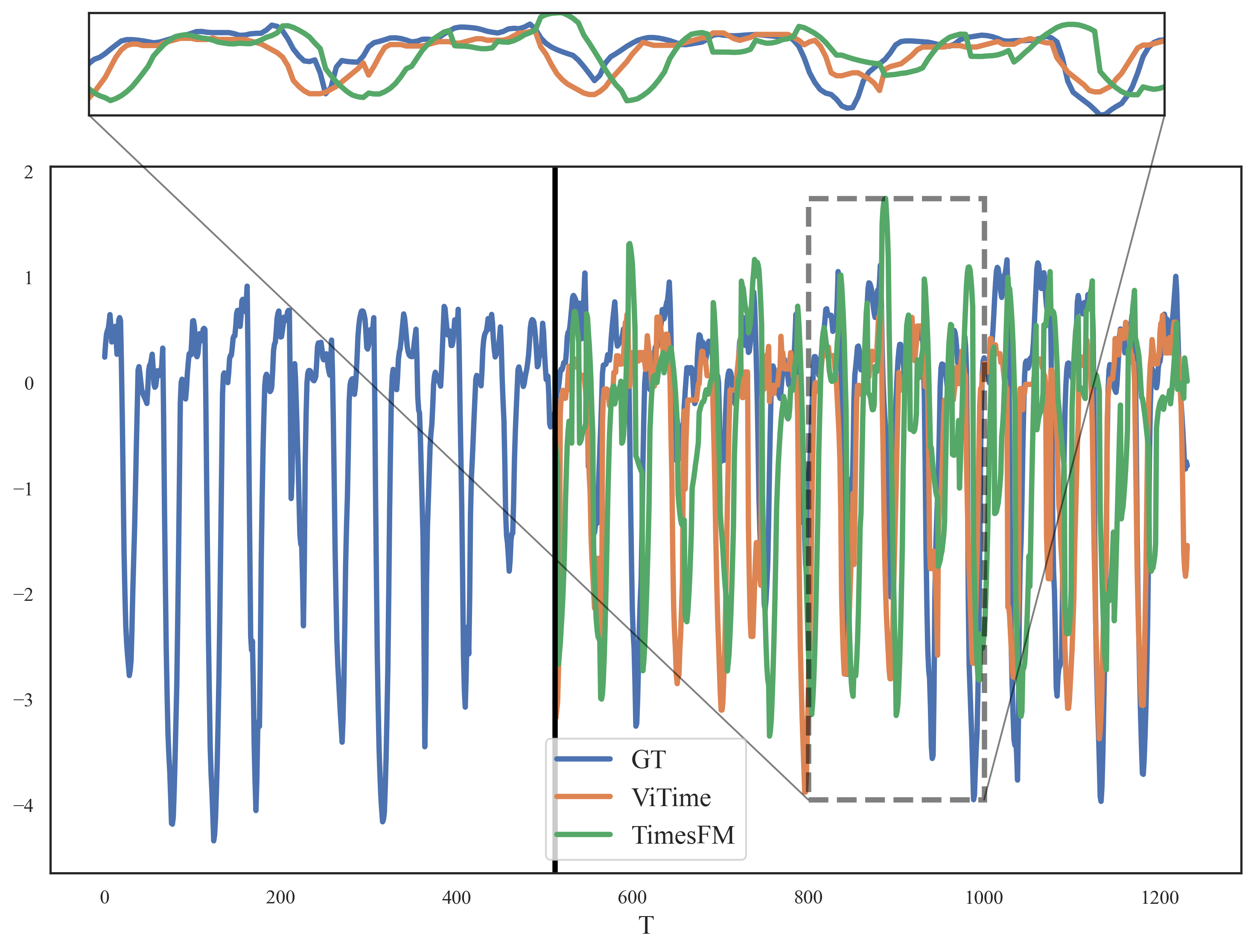}
        \caption{Rescale factor=2}
    \end{subfigure}
    \caption{Illustrative example of ETTh1 dataset.}
    \label{fig:ETTh1_appendix} % Changed label
\end{figure*}

\begin{figure*}[htbp]
    \centering
    \begin{subfigure}[b]{0.5\textwidth}
        \centering
        \includegraphics[width=\textwidth]{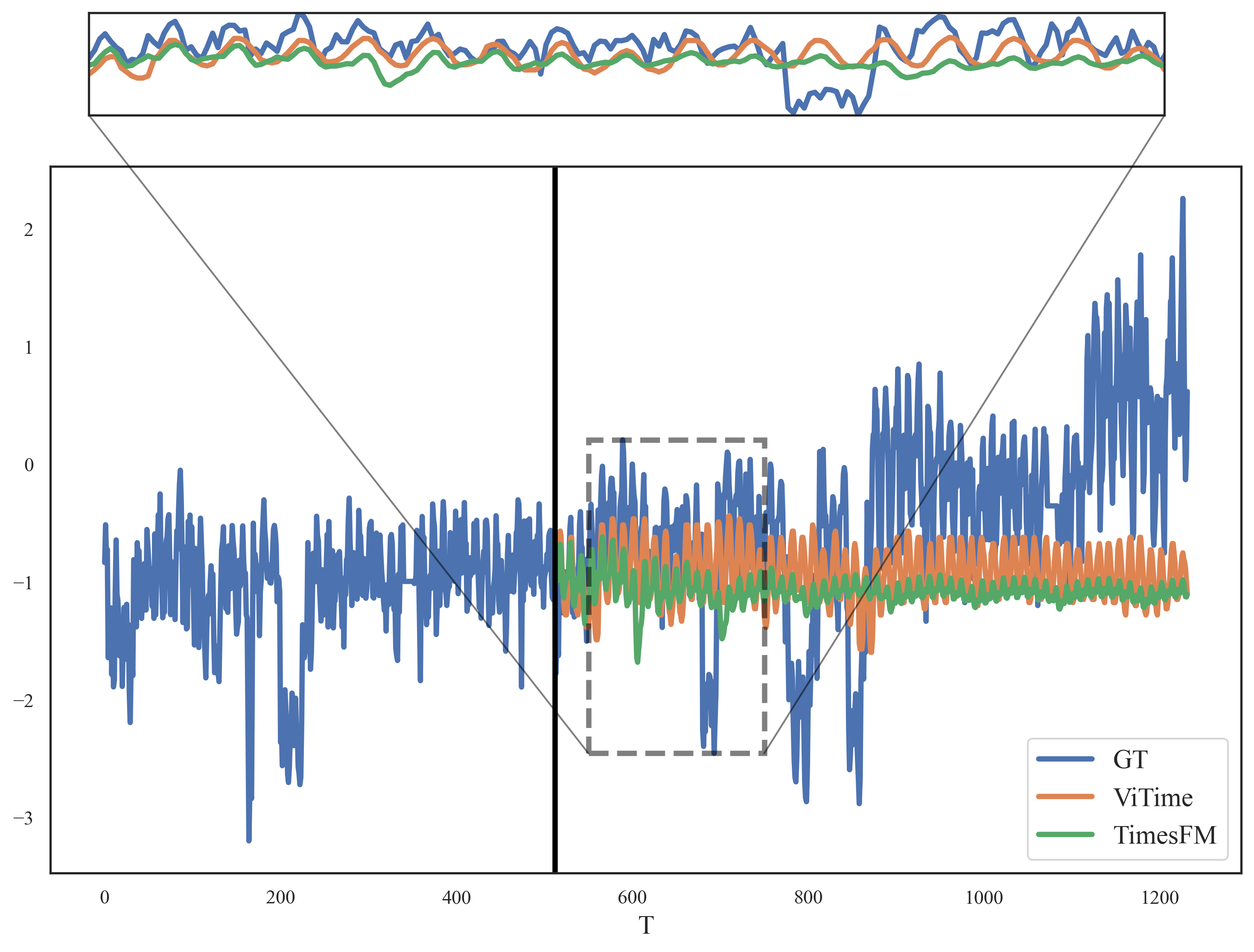}
        \caption{Rescale factor=0.5}
    \end{subfigure}
    \hfill
    \begin{subfigure}[b]{0.5\textwidth}
        \centering
        \includegraphics[width=\textwidth]{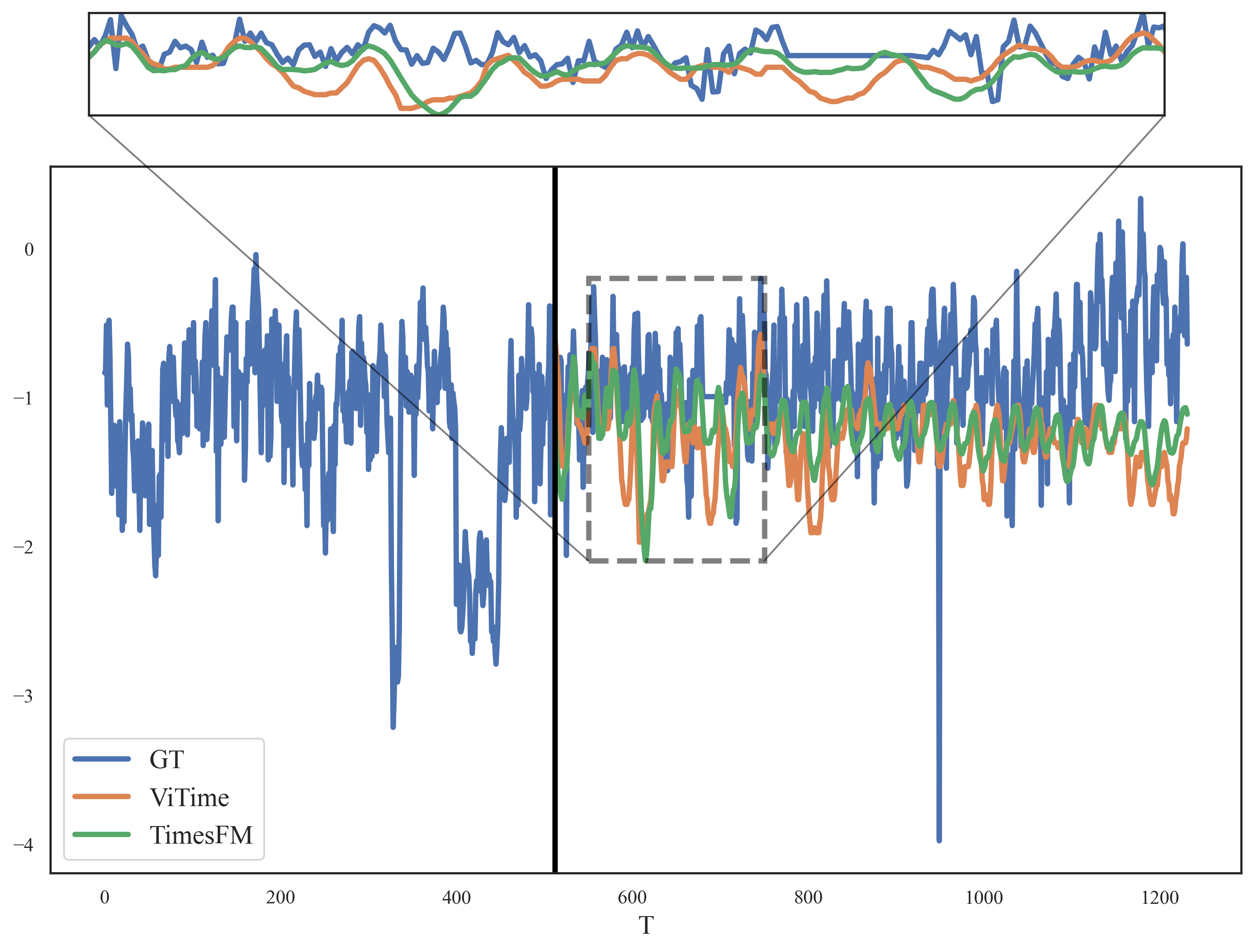}
        \caption{Rescale factor=1}
    \end{subfigure}
    \hfill
    \begin{subfigure}[b]{0.5\textwidth}
        \centering
        \includegraphics[width=\textwidth]{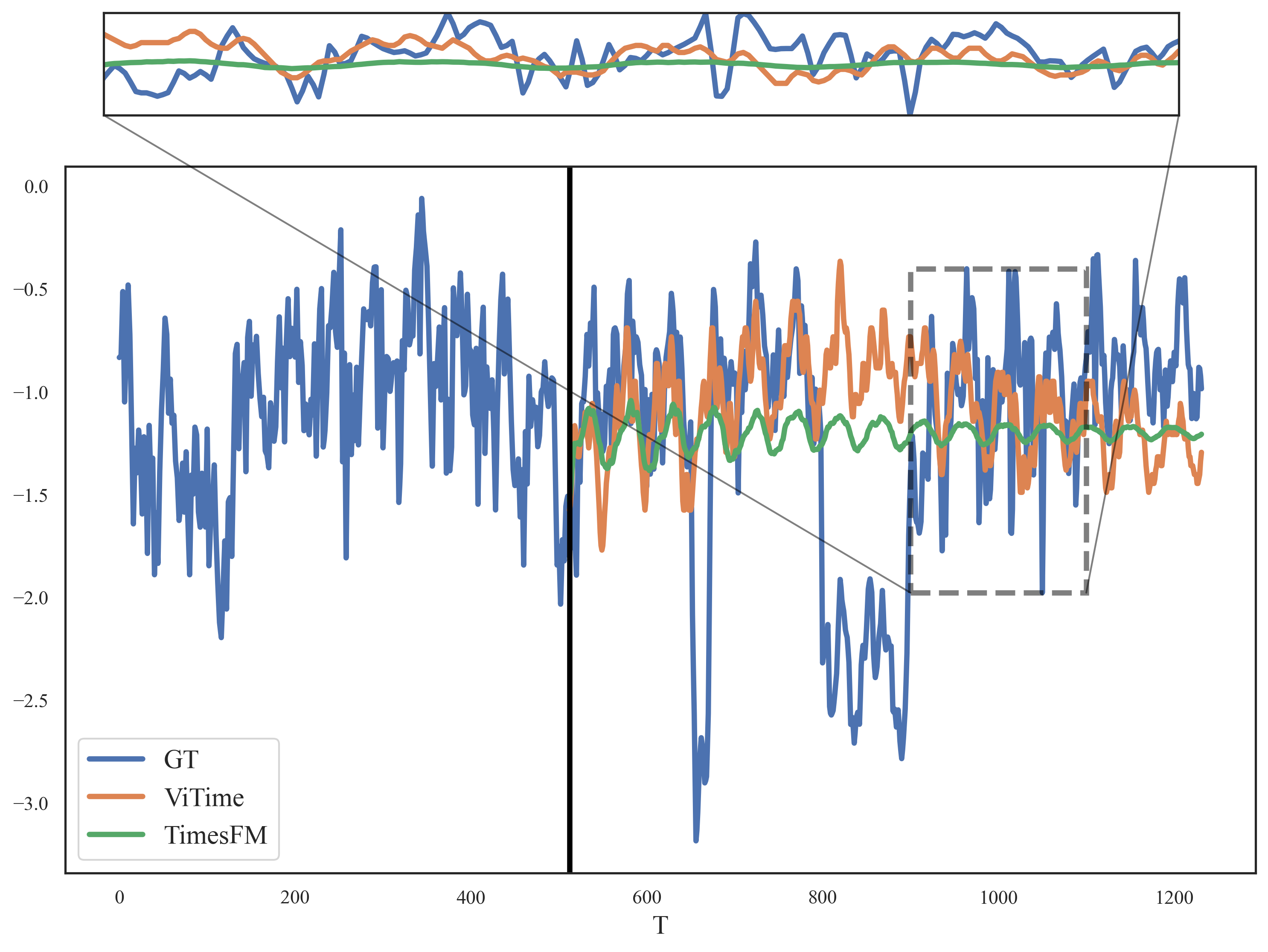}
        \caption{Rescale factor=2}
    \end{subfigure}
    \caption{Illustrative example of ETTh2 dataset.}
    \label{fig:ETTh2_appendix} % Changed label
\end{figure*}

\begin{figure*}[htbp]
    \centering
    \begin{subfigure}[b]{0.5\textwidth}
        \centering
        \includegraphics[width=\textwidth]{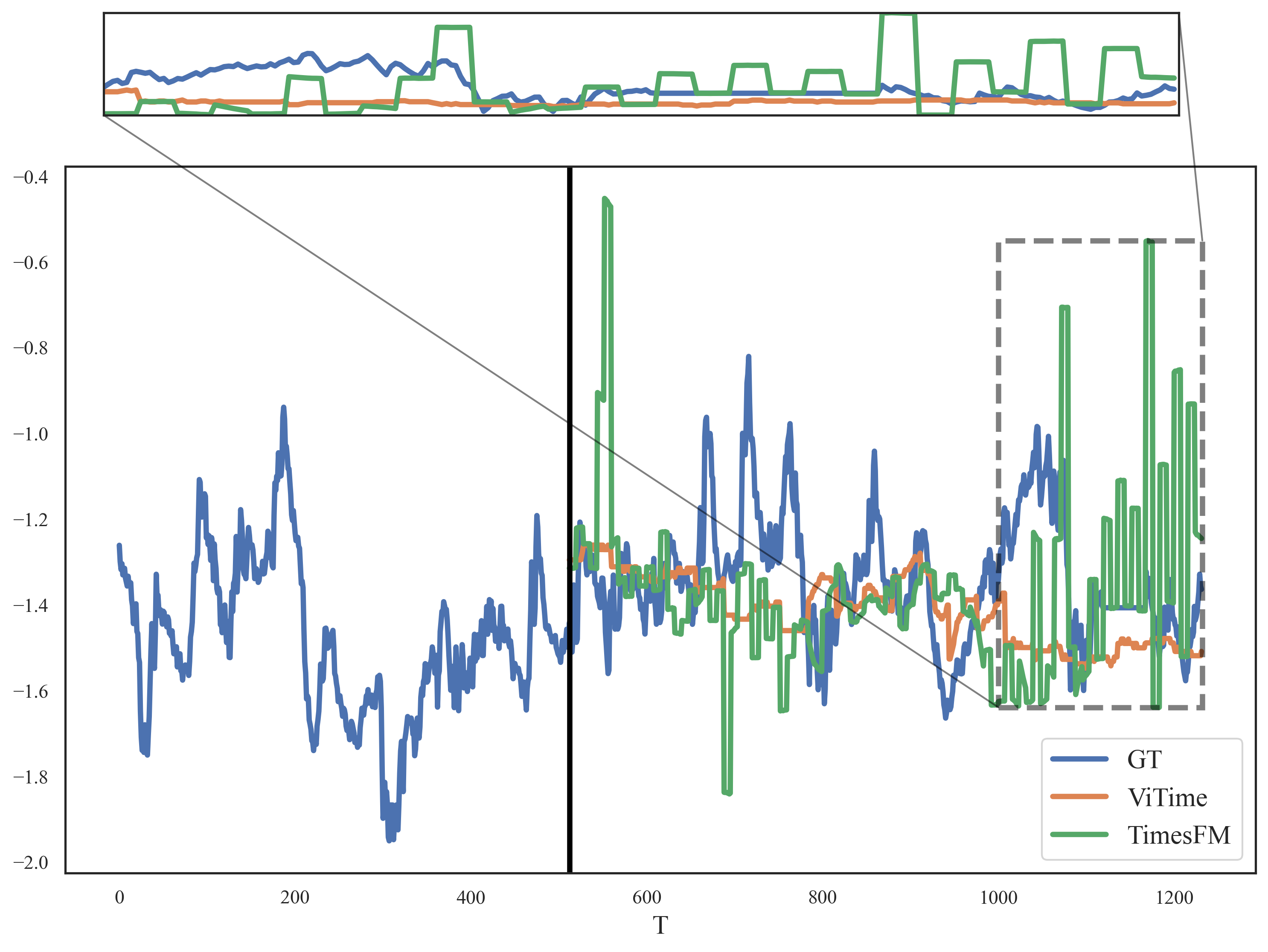}
        \caption{Rescale factor=0.5}
    \end{subfigure}
    \hfill
    \begin{subfigure}[b]{0.5\textwidth}
        \centering
        \includegraphics[width=\textwidth]{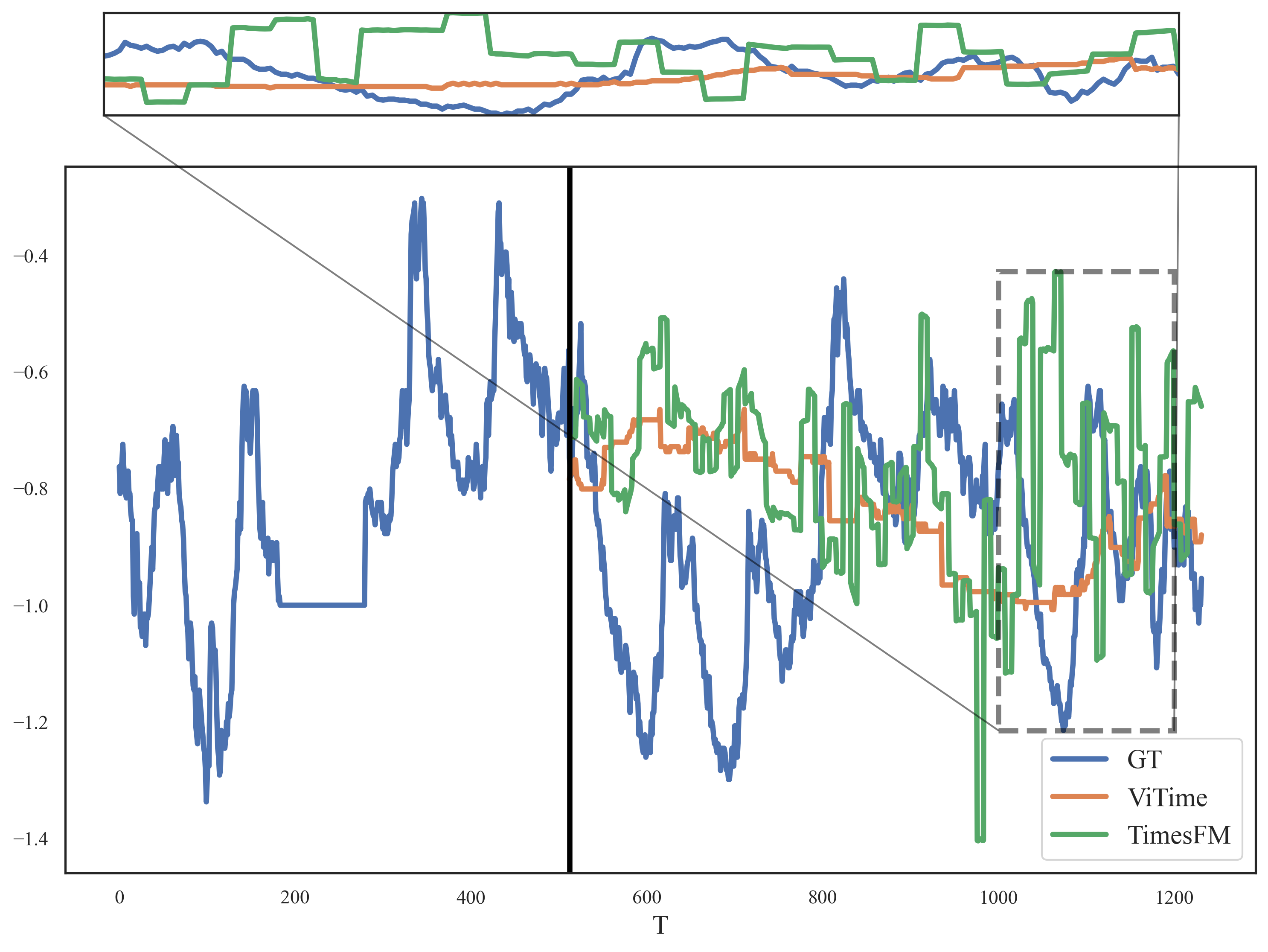}
        \caption{Rescale factor=1}
    \end{subfigure}
    \hfill
    \begin{subfigure}[b]{0.5\textwidth}
        \centering
        \includegraphics[width=\textwidth]{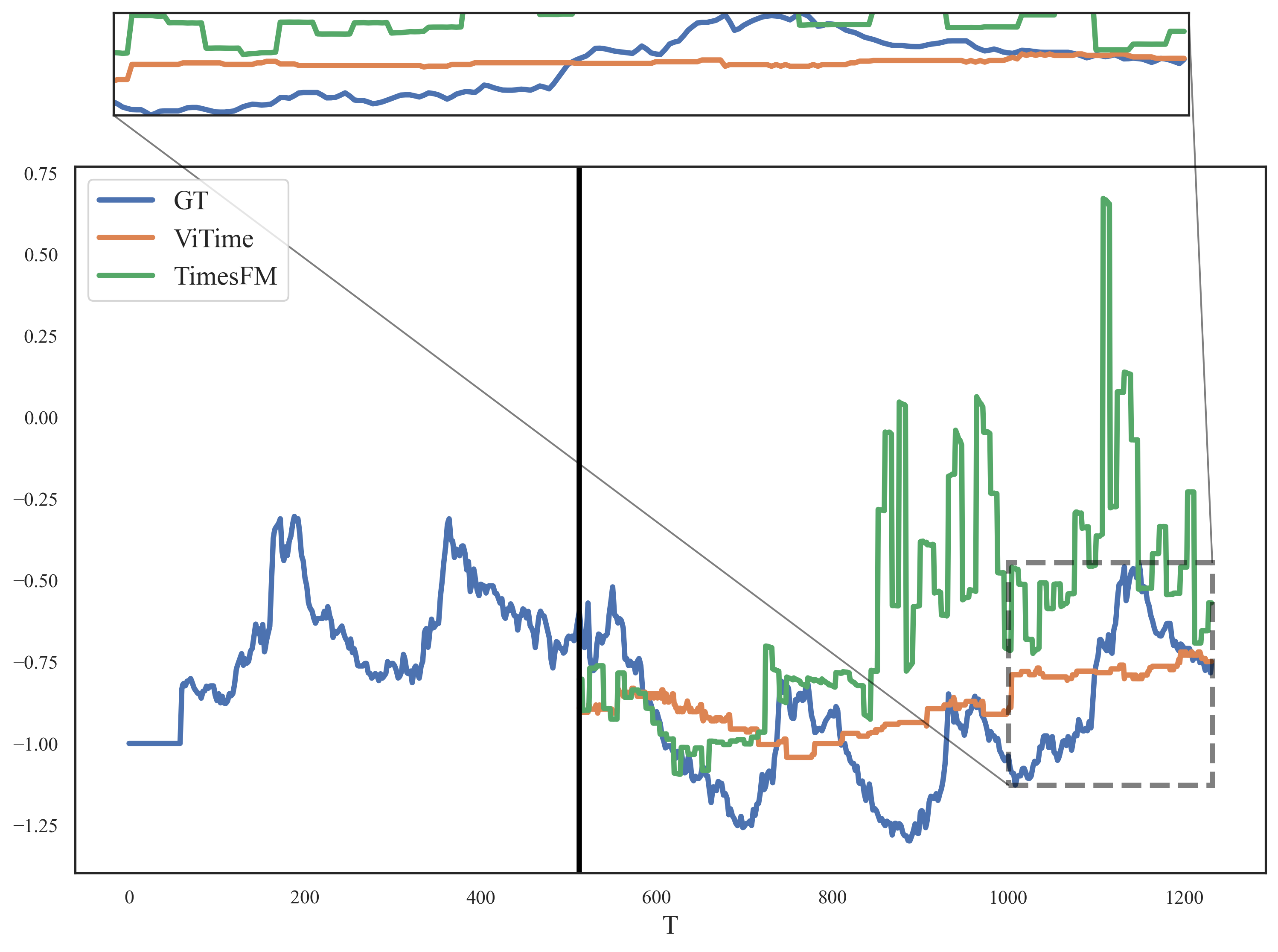}
        \caption{Rescale factor=2}
    \end{subfigure}
    \caption{Illustrative example of ETTm1 dataset.}
    \label{fig:ETTm1_appendix} % Changed label
\end{figure*}

\begin{figure*}[htbp]
    \centering
    \begin{subfigure}[b]{0.5\textwidth}
        \centering
        \includegraphics[width=\textwidth]{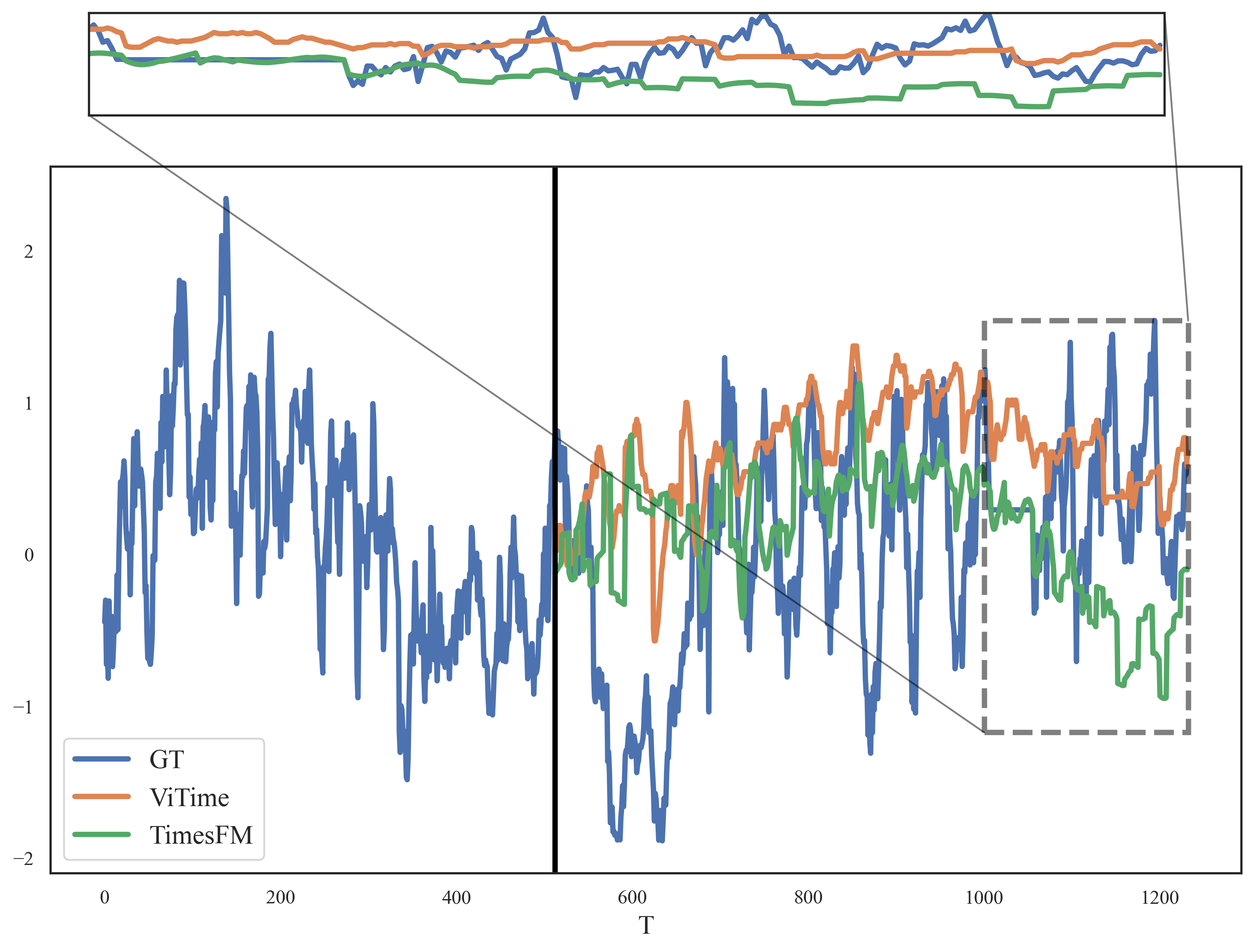}
        \caption{Rescale factor=0.5}
    \end{subfigure}
    \hfill
    \begin{subfigure}[b]{0.5\textwidth}
        \centering
        \includegraphics[width=\textwidth]{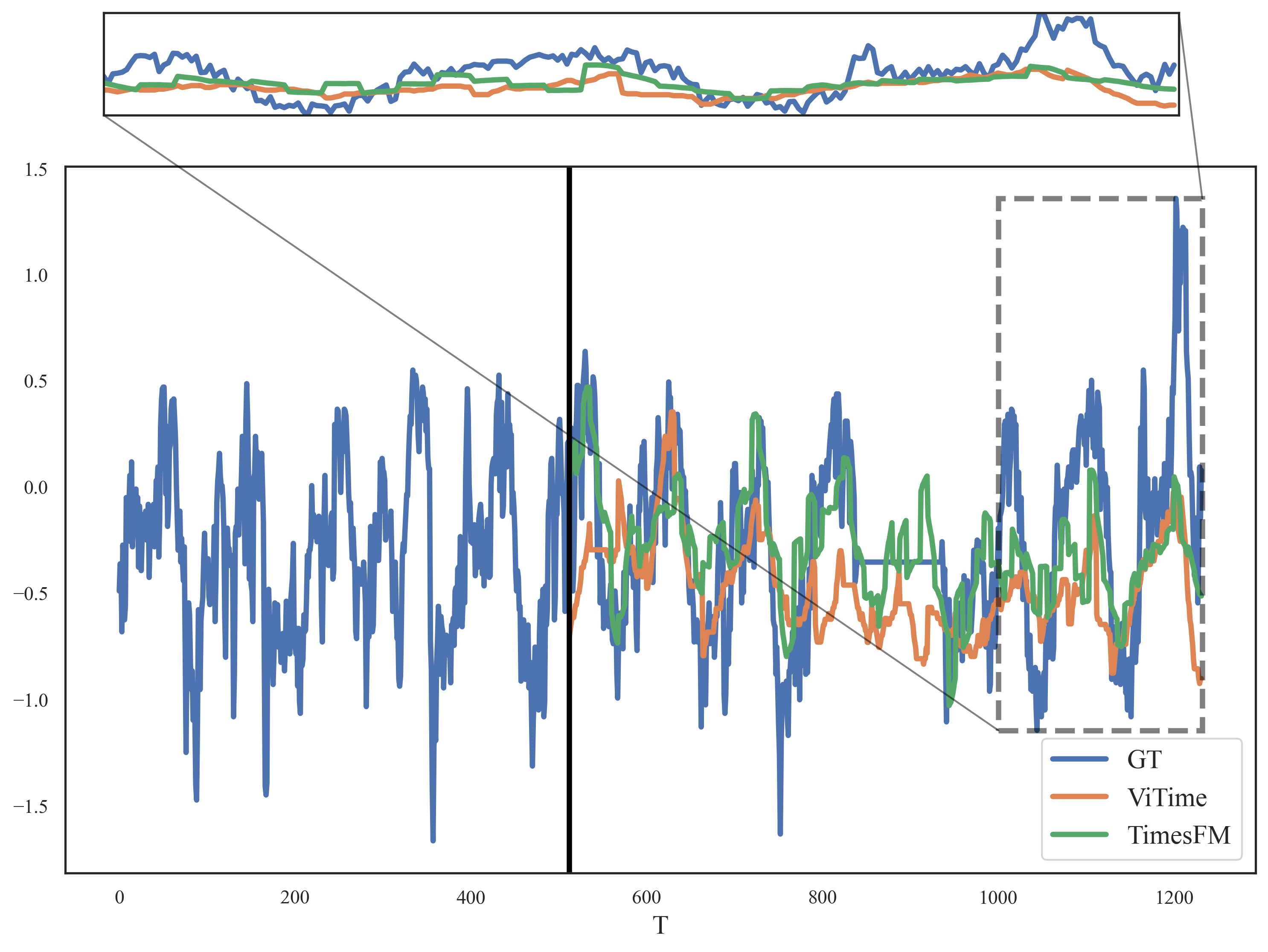}
        \caption{Rescale factor=1}
    \end{subfigure}
    \hfill
    \begin{subfigure}[b]{0.5\textwidth}
        \centering
        \includegraphics[width=\textwidth]{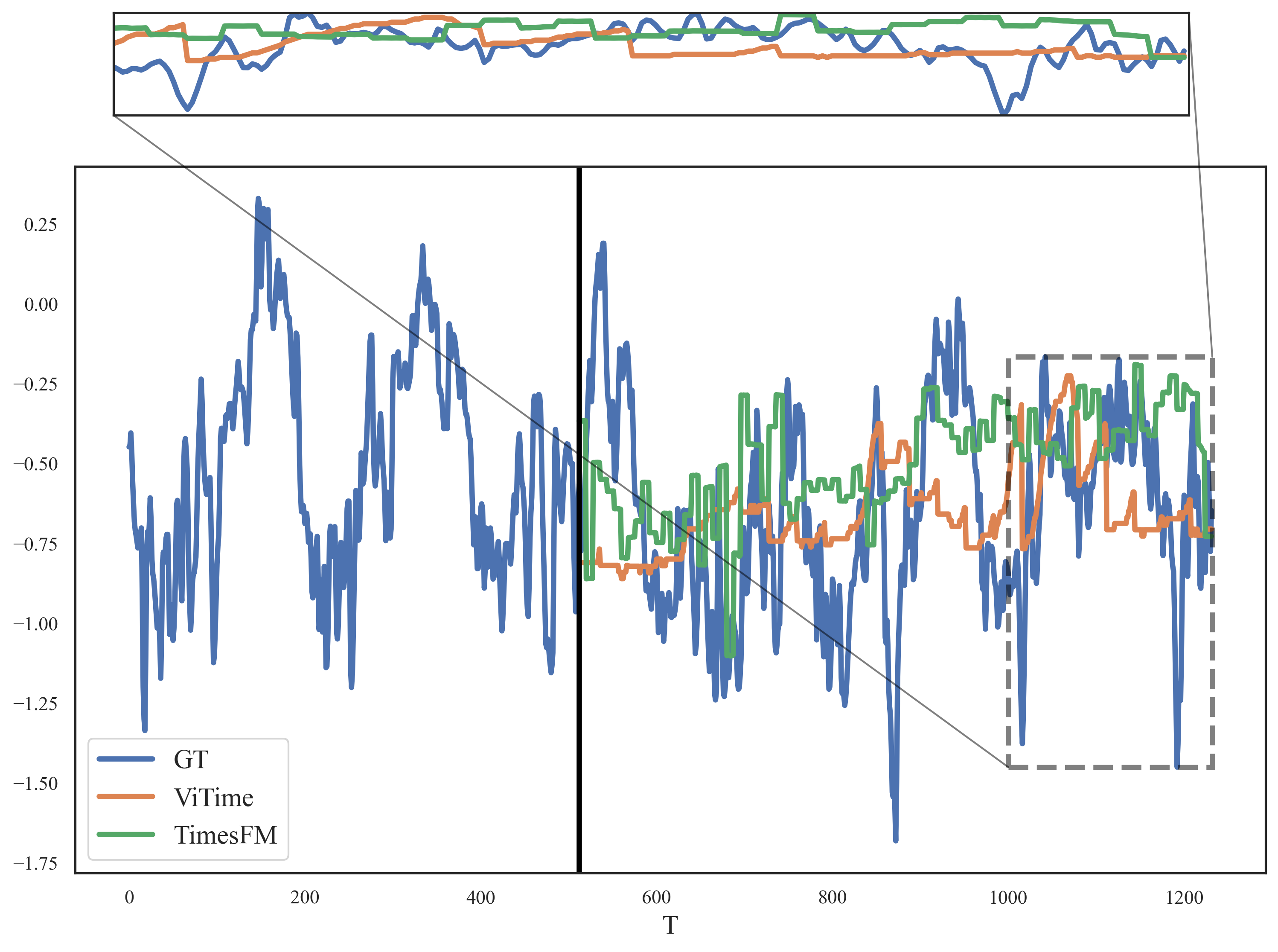}
        \caption{Rescale factor=2}
    \end{subfigure}
    \caption{Illustrative example of ETTm2 dataset.}
    \label{fig:ETTm2_appendix} % Changed label
\end{figure*}

%CRPS results
\begin{figure*}[htbp]
    \centering
    \begin{subfigure}[b]{0.8\textwidth}
        \centering
        \includegraphics[width=\textwidth]{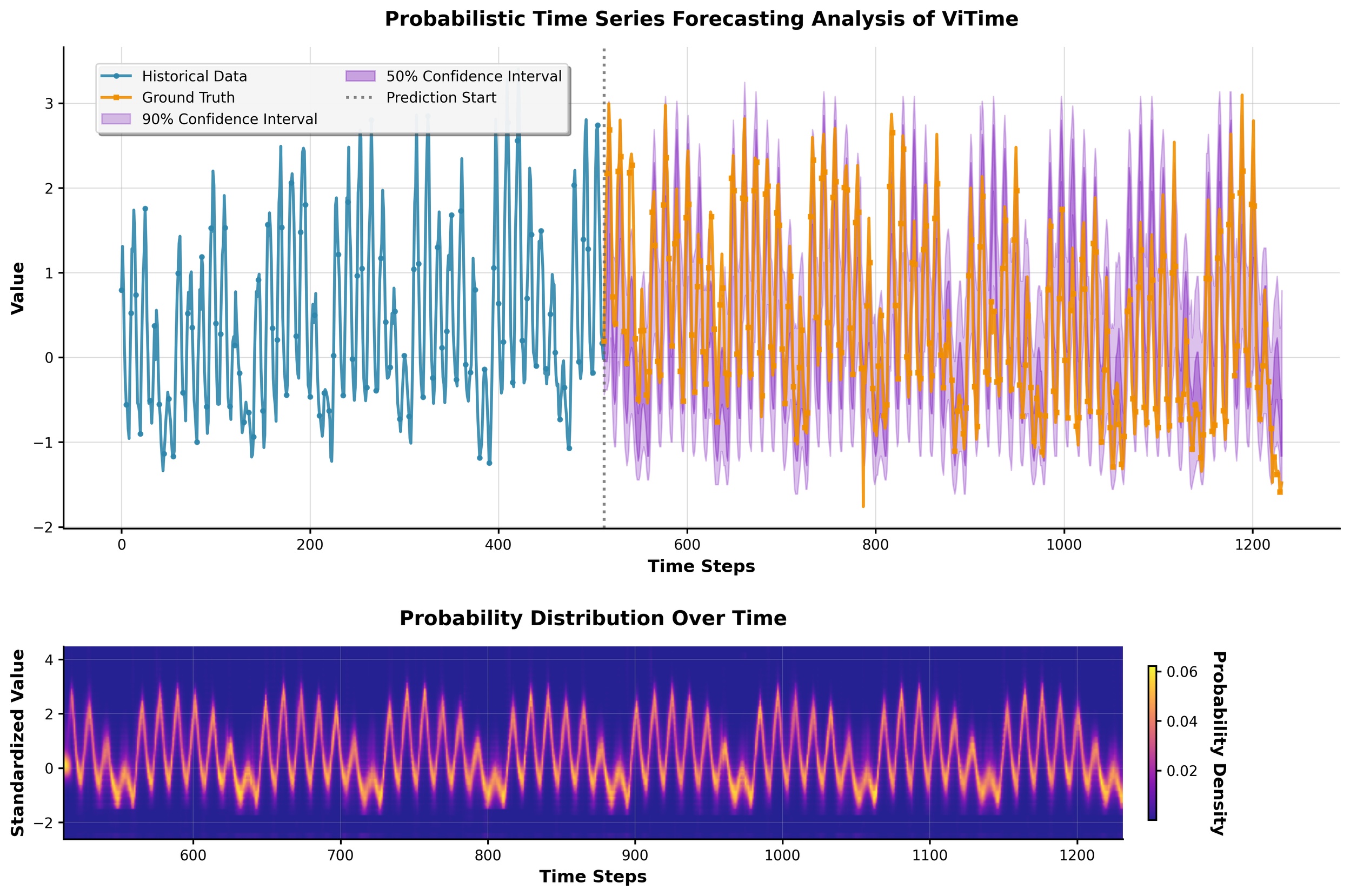}
        \caption{Rescale factor=0.5}
    \end{subfigure}
    \hfill
    \begin{subfigure}[b]{0.8\textwidth}
        \centering
        \includegraphics[width=\textwidth]{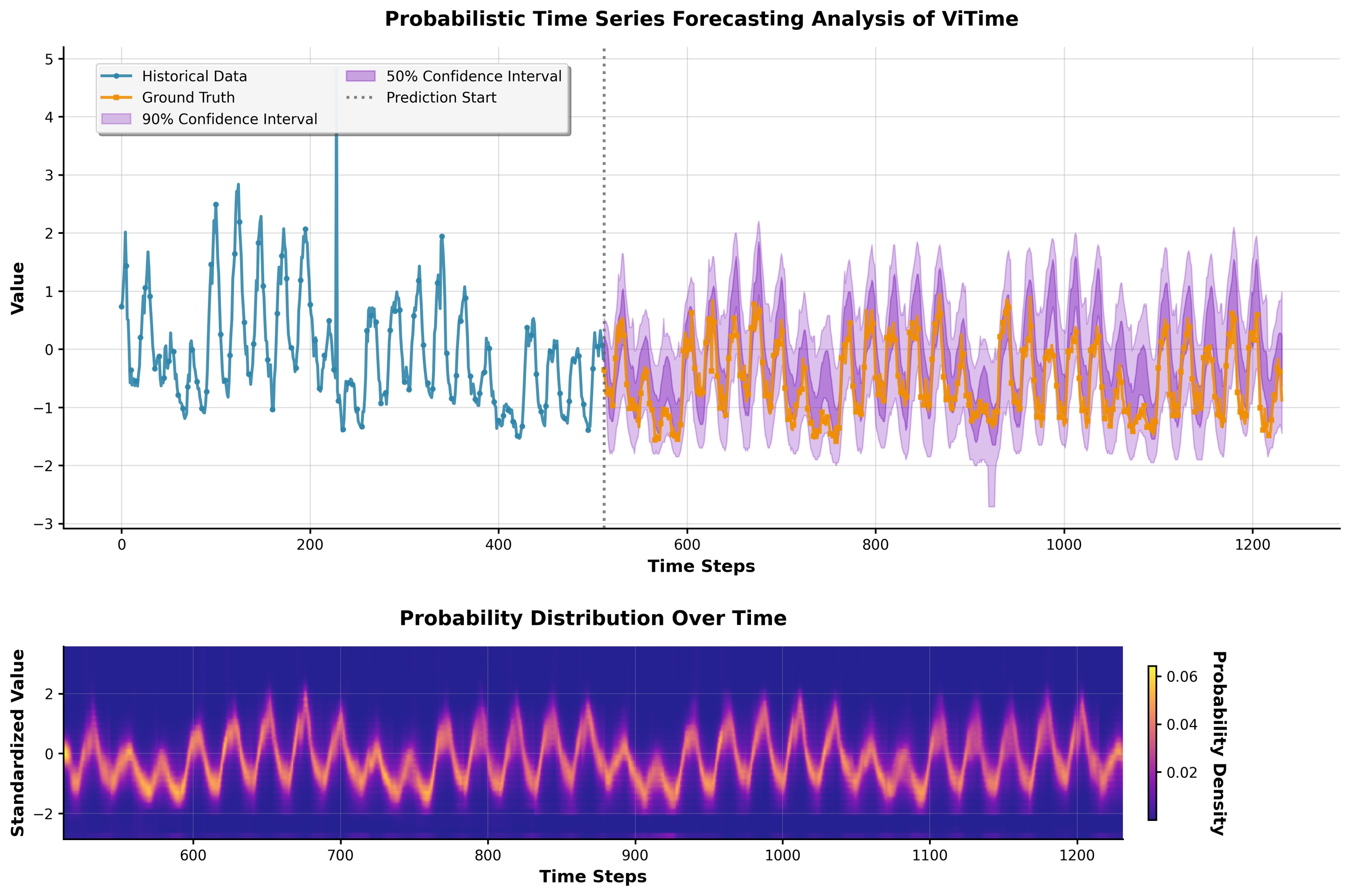}
        \caption{Rescale factor=1}
    \end{subfigure}
    \caption{Illustrative example of dataset electricity (Part 1).}
    \label{fig:Crps_elec_appendix1}
\end{figure*}

\begin{figure*}[htbp]
    \centering
    \begin{subfigure}[b]{0.8\textwidth}
        \centering
        \includegraphics[width=\textwidth]{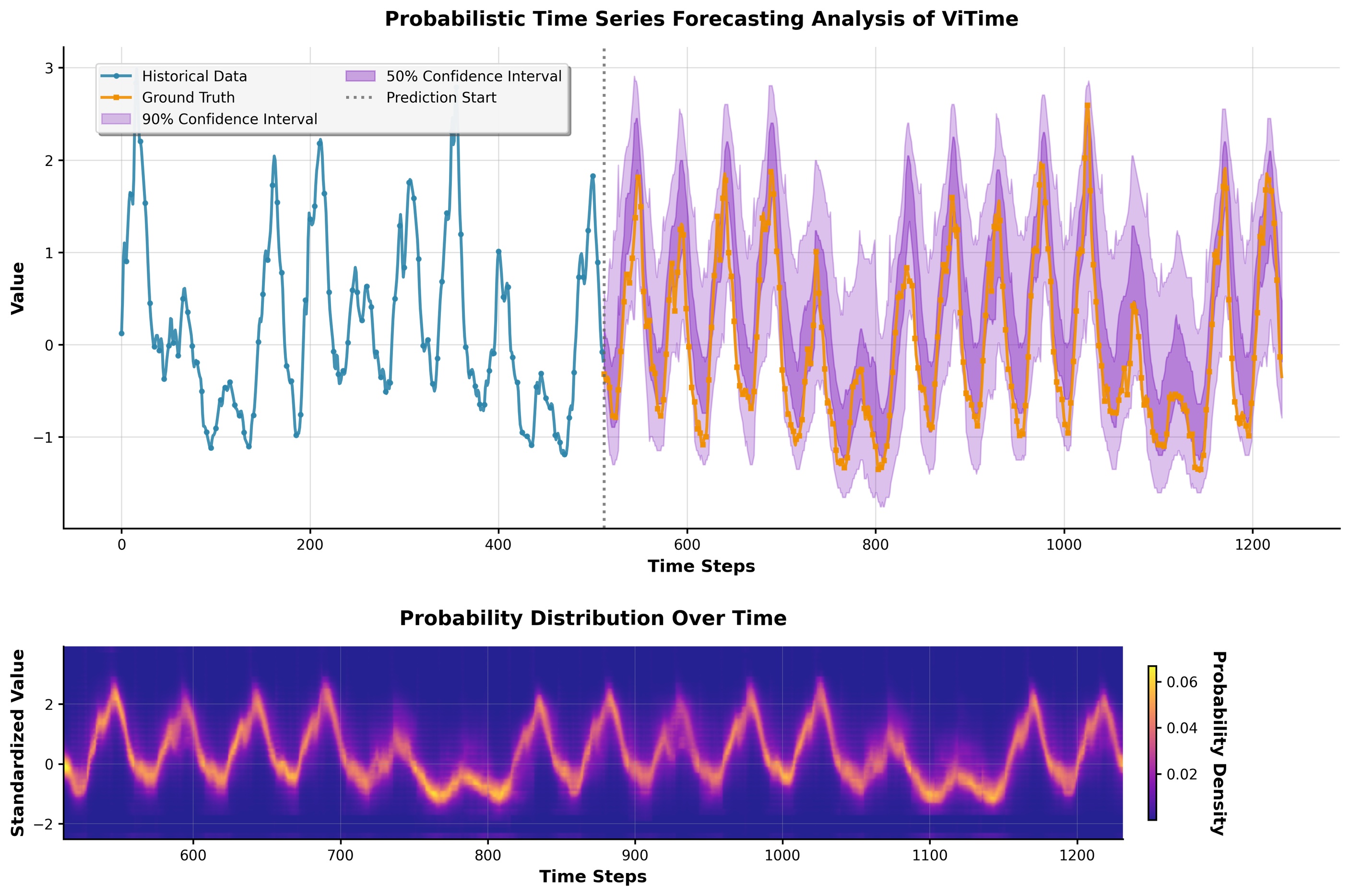}
        \caption{Rescale factor=2}
    \end{subfigure}
    \caption{Illustrative example of dataset electricity (Part 2).}
    \label{fig:Crps_elec_appendix2}
\end{figure*}

\begin{figure*}[htbp]
    \centering
    \begin{subfigure}[b]{0.8\textwidth}
        \centering
        \includegraphics[width=\textwidth]{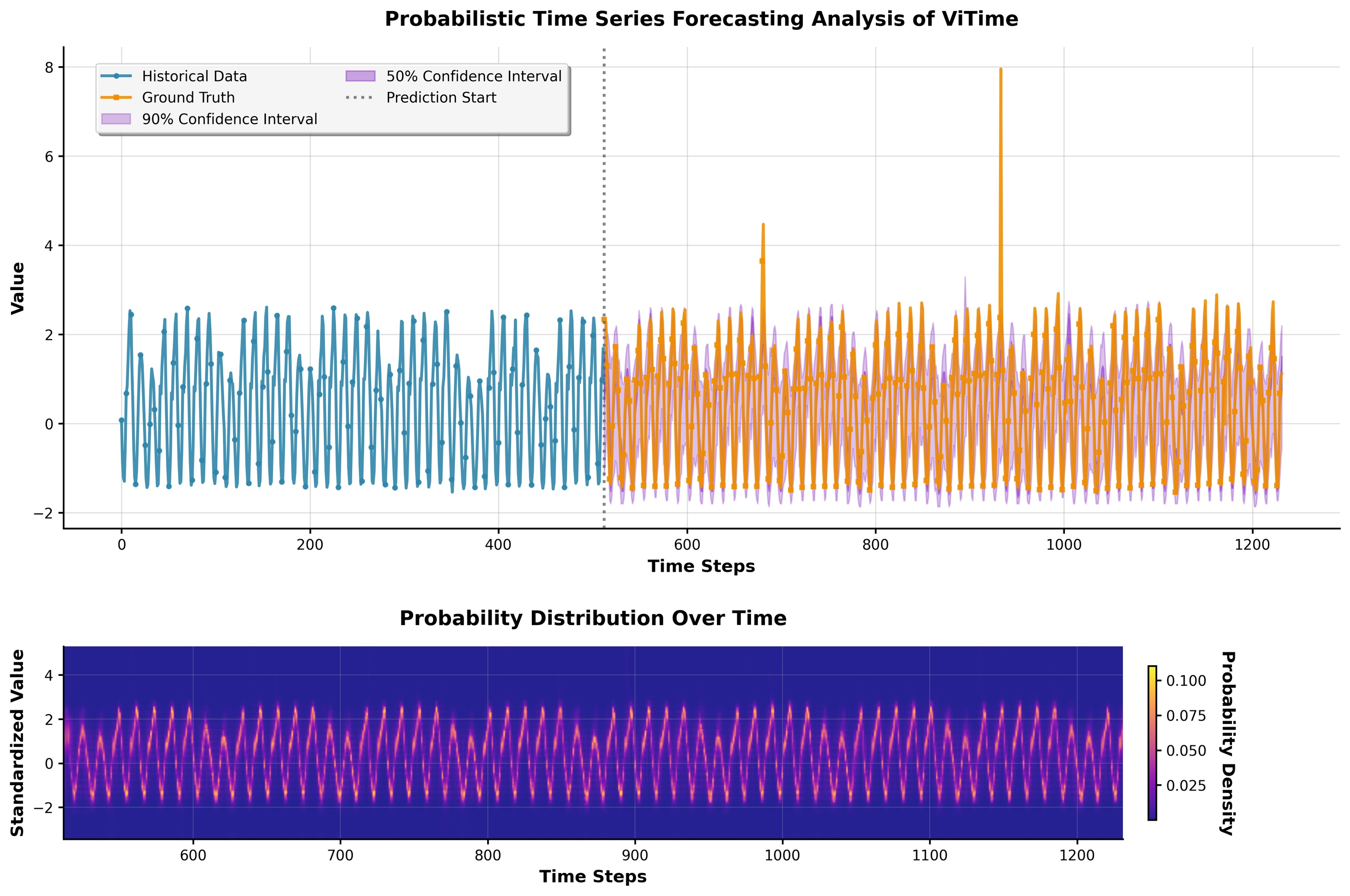}
        \caption{Rescale factor=0.5}
    \end{subfigure}
    \hfill
    \begin{subfigure}[b]{0.8\textwidth}
        \centering
        \includegraphics[width=\textwidth]{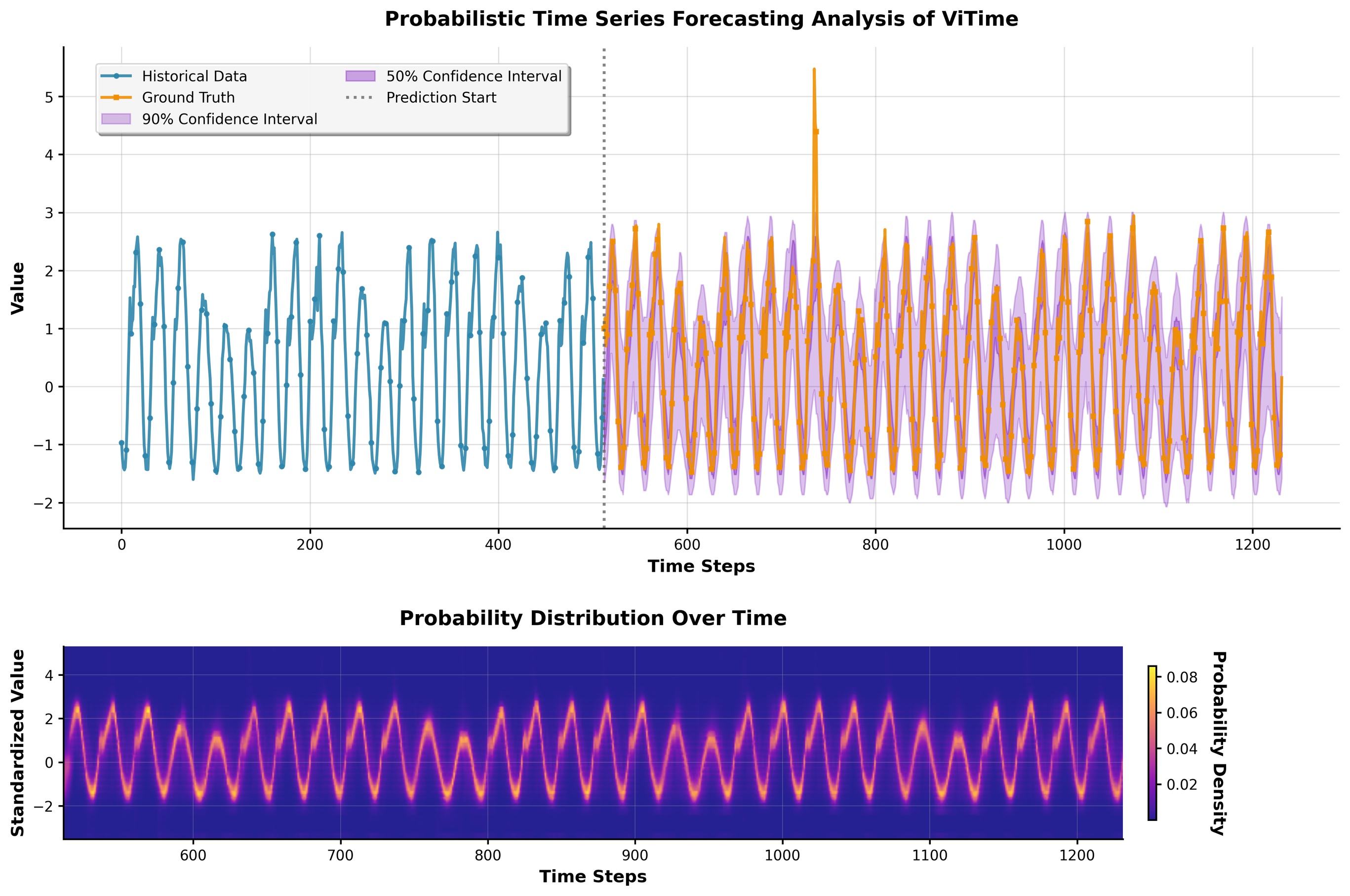}
        \caption{Rescale factor=1}
    \end{subfigure}
    \caption{Illustrative example of dataset traffic (Part 1).}
    \label{fig:Crps_appendixtraffic1}
\end{figure*}

\begin{figure*}[htbp]
    \centering
    \begin{subfigure}[b]{0.8\textwidth}
        \centering
        \includegraphics[width=\textwidth]{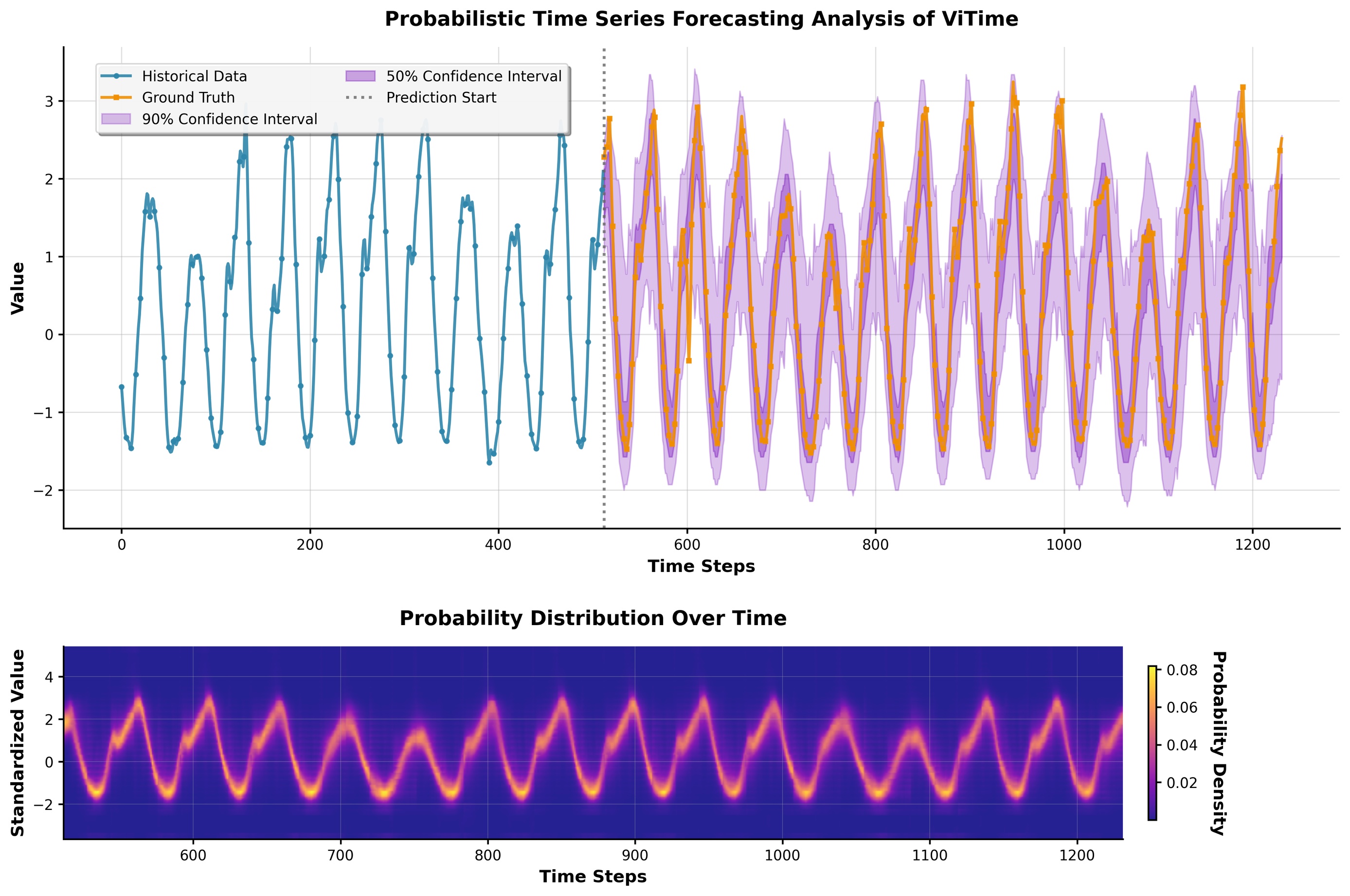}
        \caption{Rescale factor=2}
    \end{subfigure}
    \caption{Illustrative example of dataset traffic (Part 2).}
    \label{fig:Crps_appendixtraffic2}
\end{figure*}

\begin{figure*}[htbp]
    \centering
    \begin{subfigure}[b]{0.8\textwidth}
        \centering
        \includegraphics[width=\textwidth]{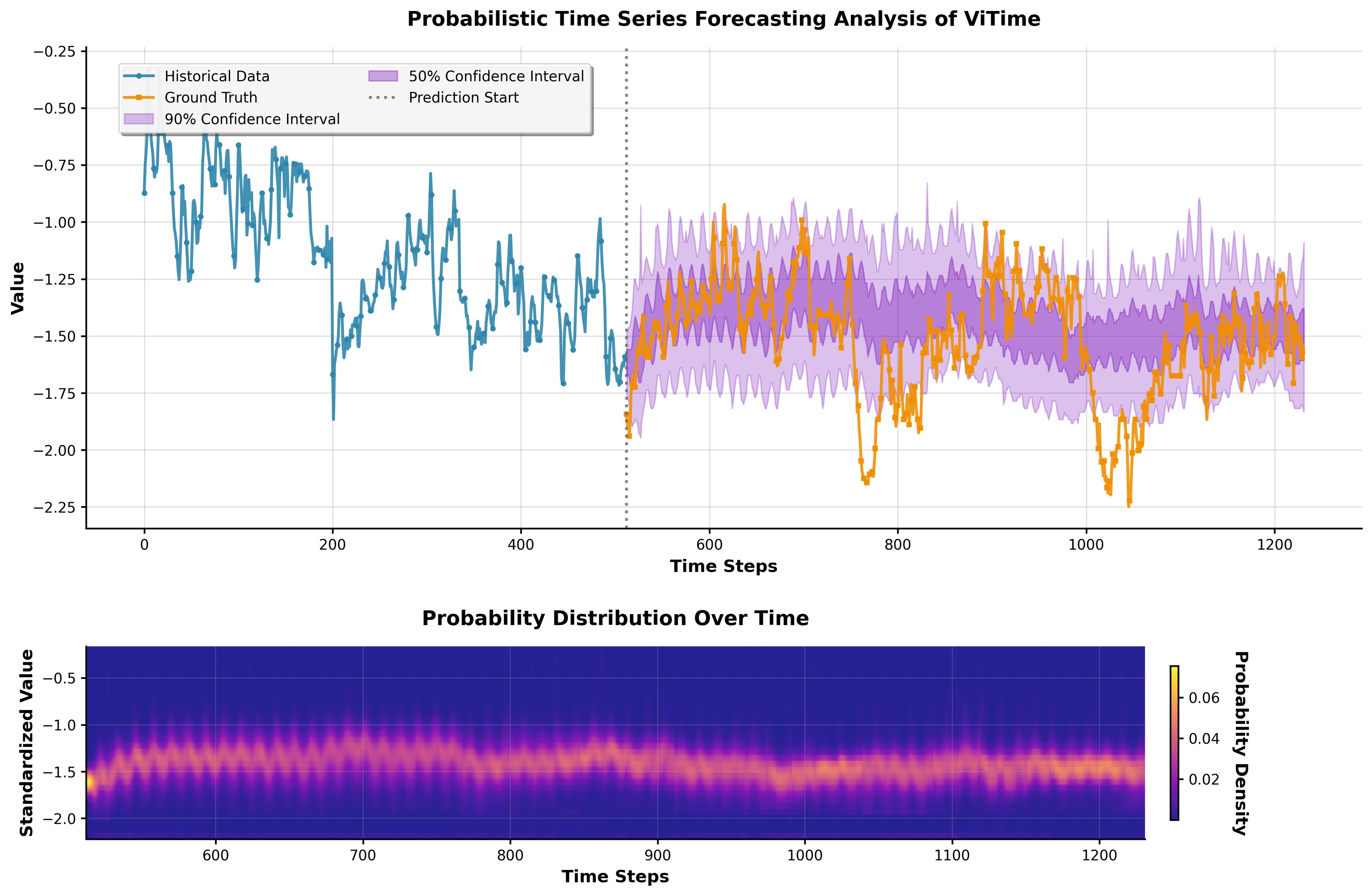}
        \caption{Rescale factor=0.5}
    \end{subfigure}
    \hfill
    \begin{subfigure}[b]{0.8\textwidth}
        \centering
        \includegraphics[width=\textwidth]{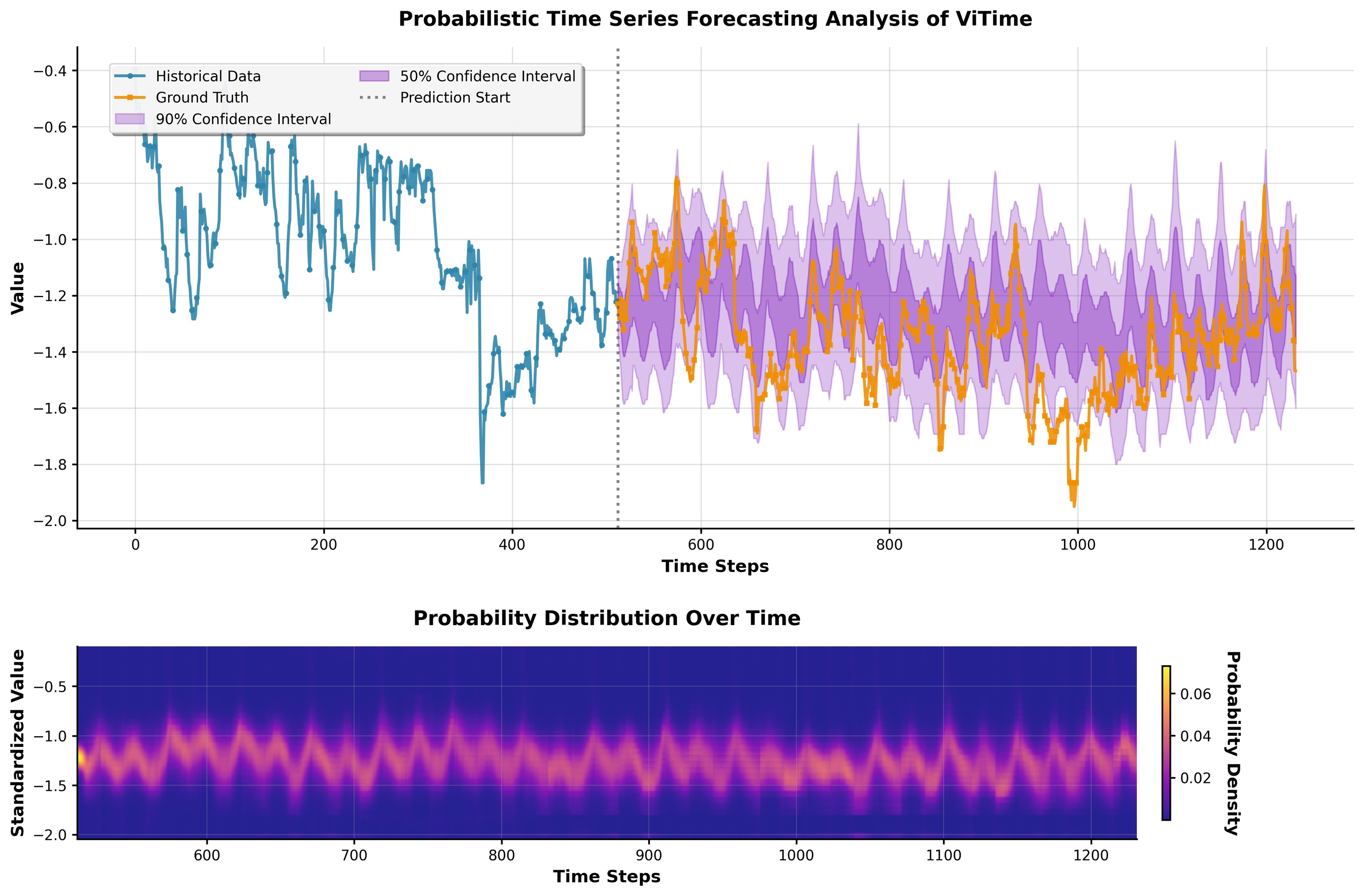}
        \caption{Rescale factor=1}
    \end{subfigure}
    \caption{Illustrative example of dataset ETTh1 (Part 1).}
    \label{fig:Crps_appendixETTh1_1}
\end{figure*}

\begin{figure*}[htbp]
    \centering
    \begin{subfigure}[b]{0.8\textwidth}
        \centering
        \includegraphics[width=\textwidth]{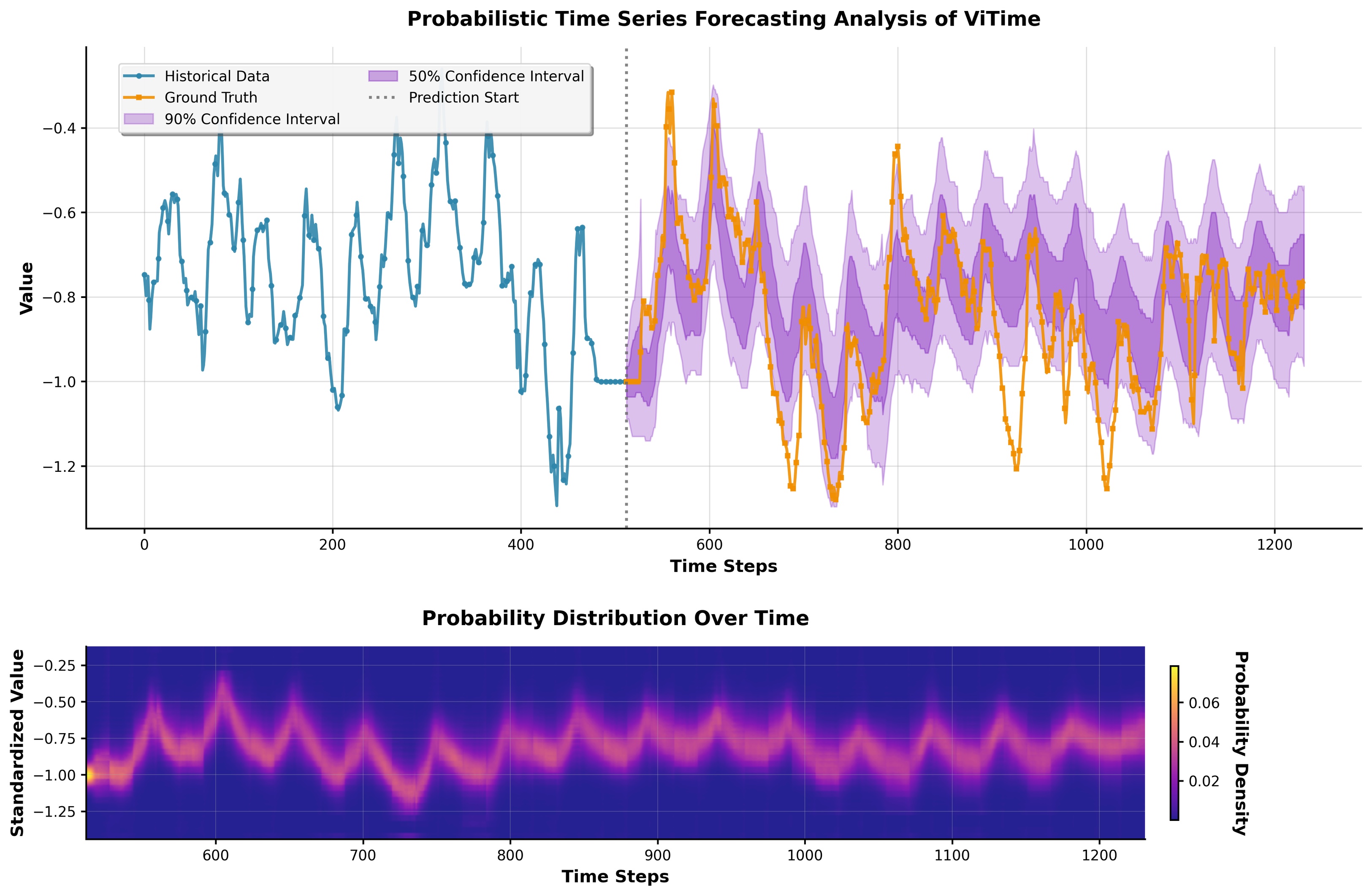}
        \caption{Rescale factor=2}
    \end{subfigure}
    \caption{Illustrative example of dataset ETTh1 (Part 2).}
    \label{fig:Crps_appendixETTh1_2}
\end{figure*}

\begin{figure*}[htbp]
    \centering
    \begin{subfigure}[b]{0.8\textwidth}
        \centering
        \includegraphics[width=\textwidth]{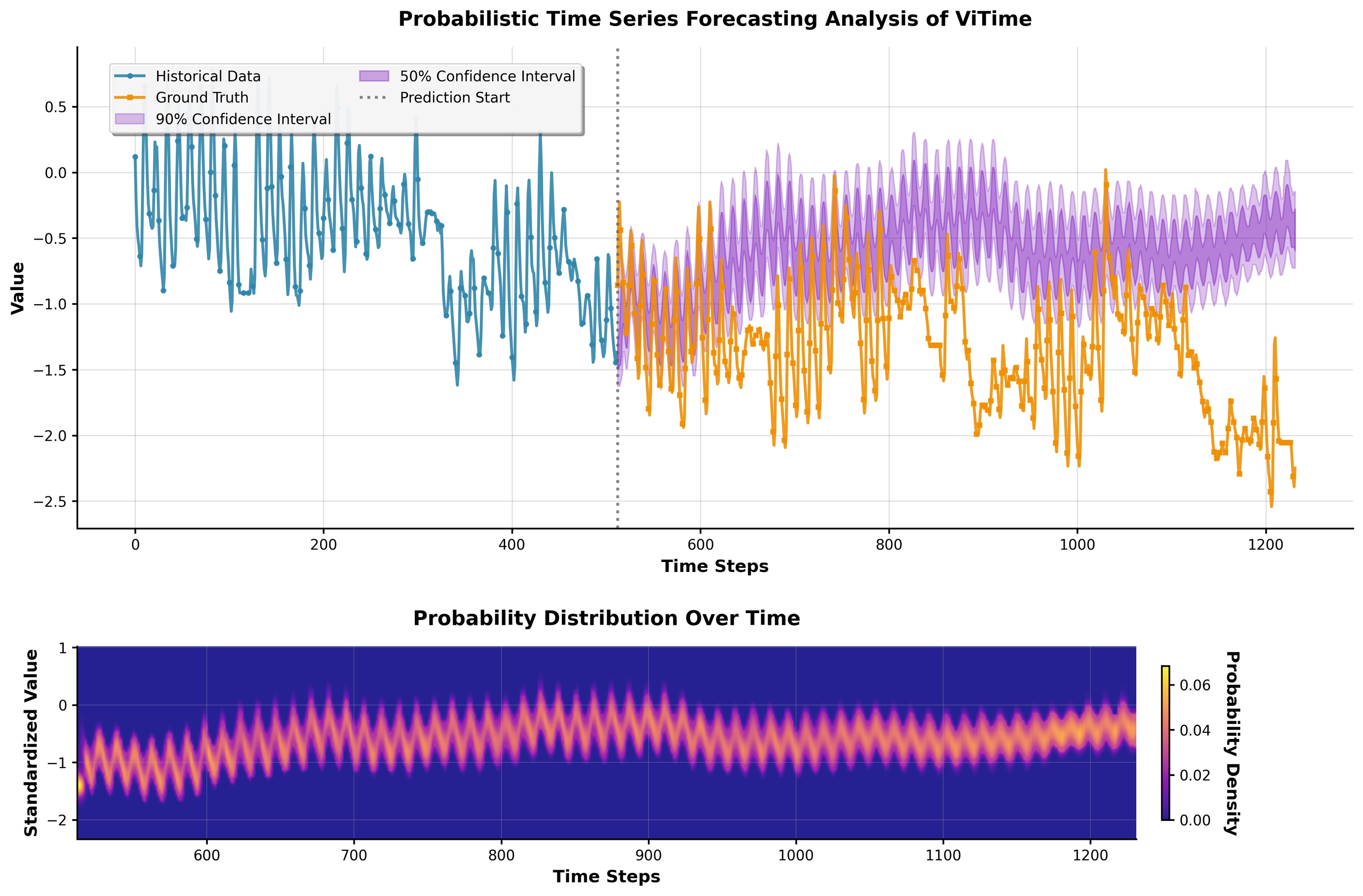}
        \caption{Rescale factor=0.5}
    \end{subfigure}
    \hfill
    \begin{subfigure}[b]{0.8\textwidth}
        \centering
        \includegraphics[width=\textwidth]{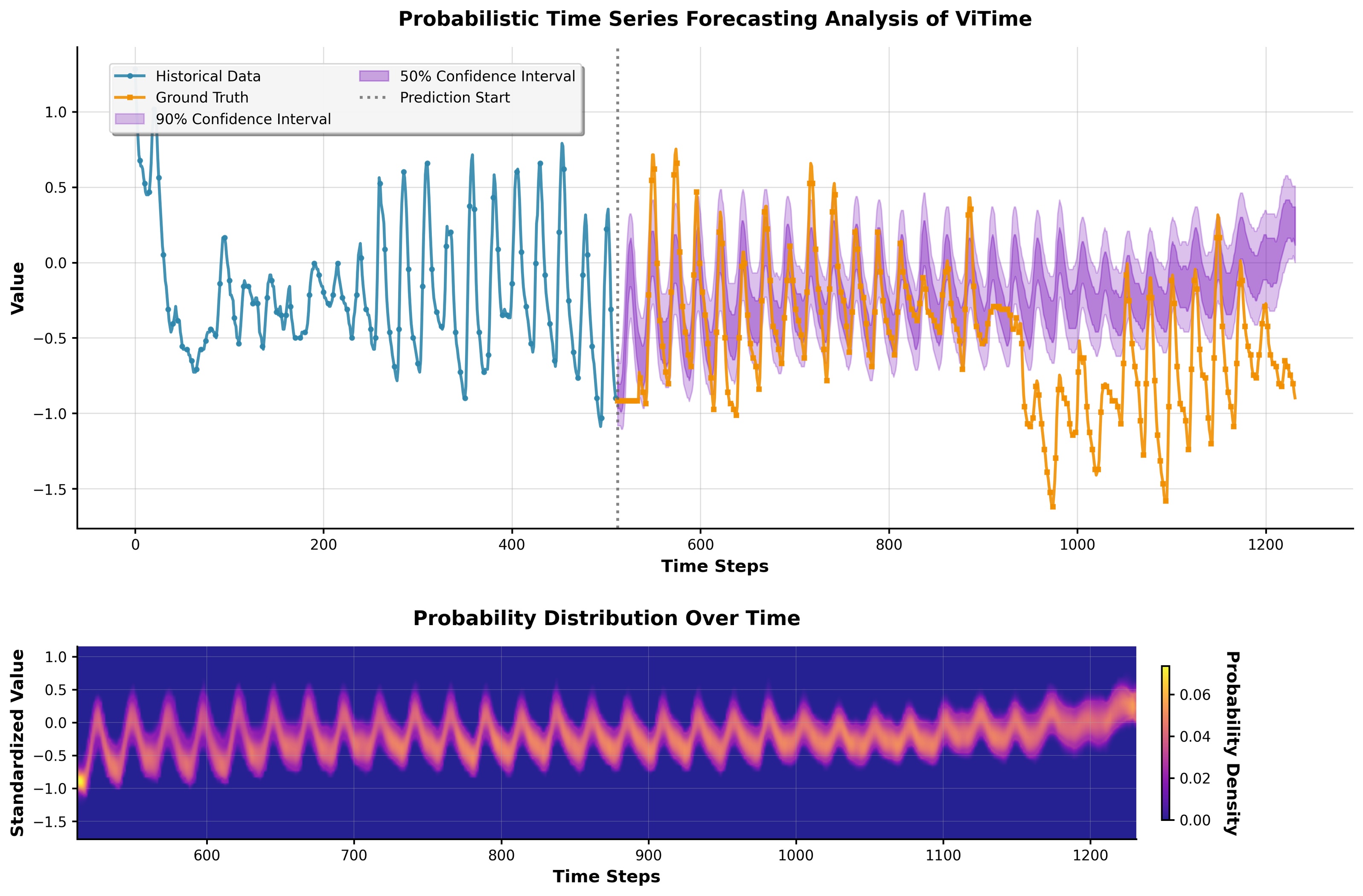}
        \caption{Rescale factor=1}
    \end{subfigure}
    \caption{Illustrative example of dataset ETTh2 (Part 1).}
    \label{fig:Crps_appendixETTh2_1}
\end{figure*}

\begin{figure*}[htbp]
    \centering
    \begin{subfigure}[b]{0.8\textwidth}
        \centering
        \includegraphics[width=\textwidth]{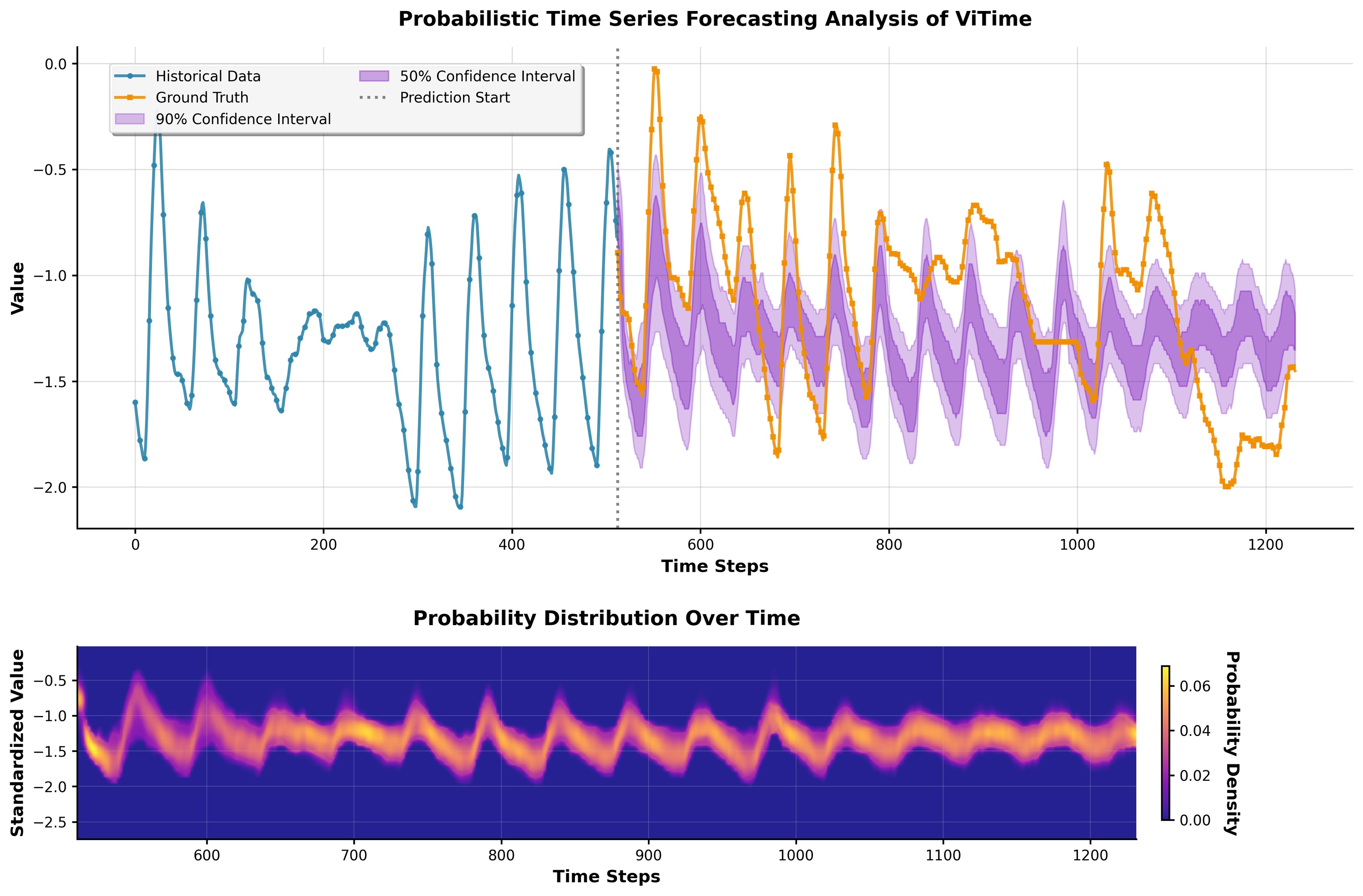}
        \caption{Rescale factor=2}
    \end{subfigure}
    \caption{Illustrative example of dataset ETTh2 (Part 2).}
    \label{fig:Crps_appendixETTh2_2}
\end{figure*}

\begin{figure*}[htbp]
    \centering
    \begin{subfigure}[b]{0.8\textwidth}
        \centering
        \includegraphics[width=\textwidth]{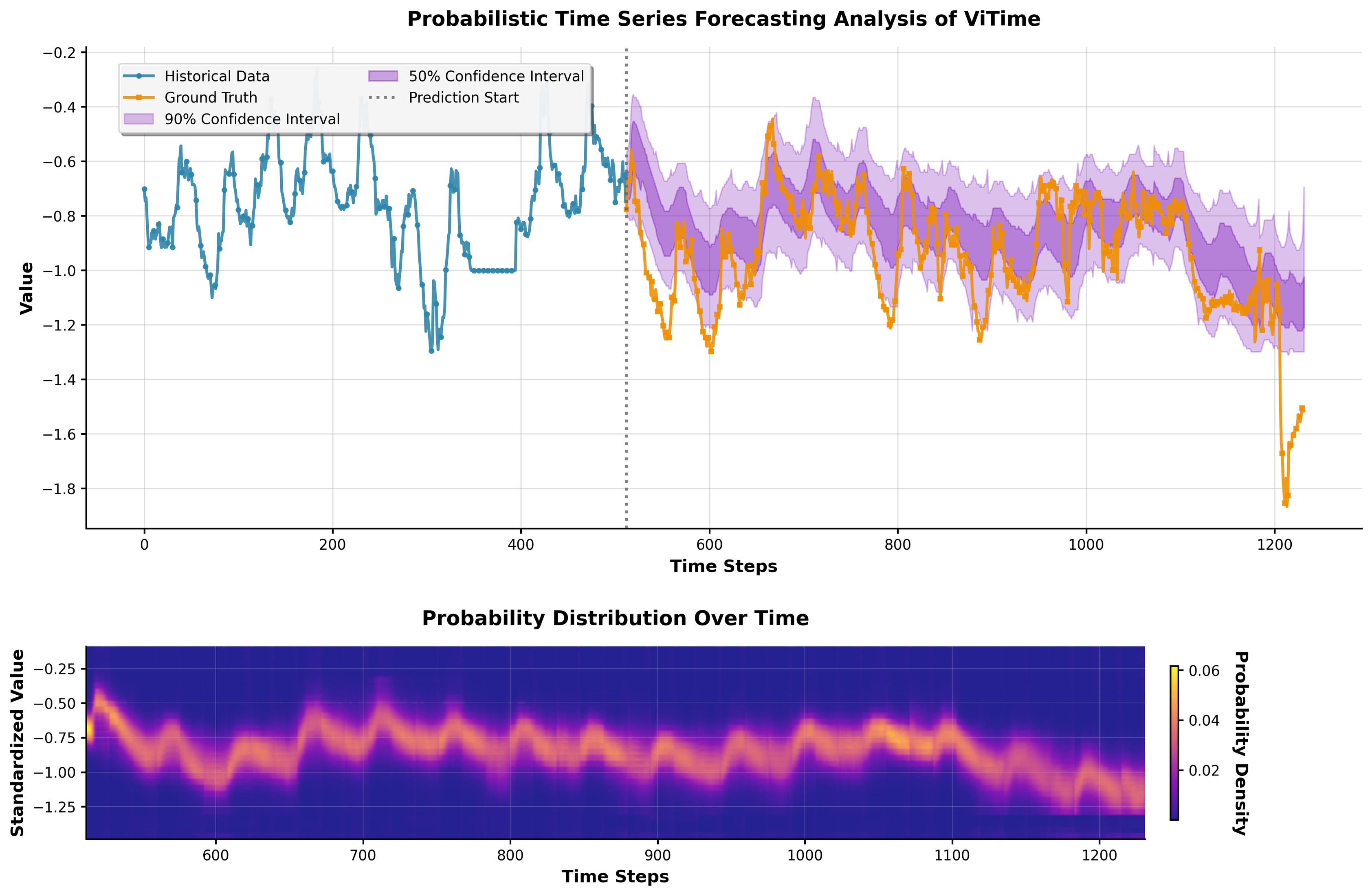}
        \caption{Rescale factor=0.5}
    \end{subfigure}
    \hfill
    \begin{subfigure}[b]{0.8\textwidth}
        \centering
        \includegraphics[width=\textwidth]{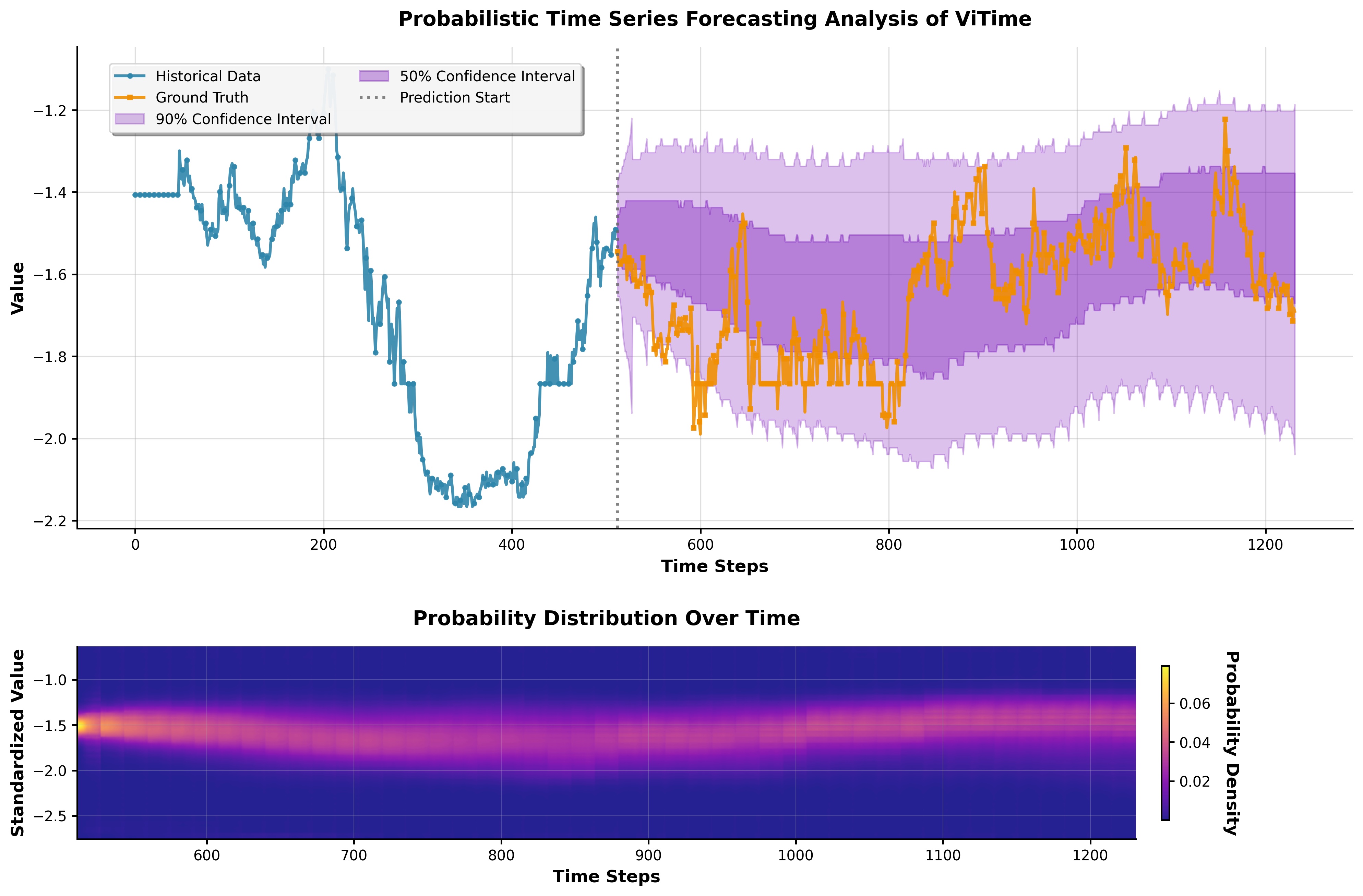}
        \caption{Rescale factor=1}
    \end{subfigure}
    \caption{Illustrative example of dataset ETTm1 (Part 1).}
    \label{fig:Crps_appendixETTm1_1}
\end{figure*}

\begin{figure*}[htbp]
    \centering
    \begin{subfigure}[b]{0.8\textwidth}
        \centering
        \includegraphics[width=\textwidth]{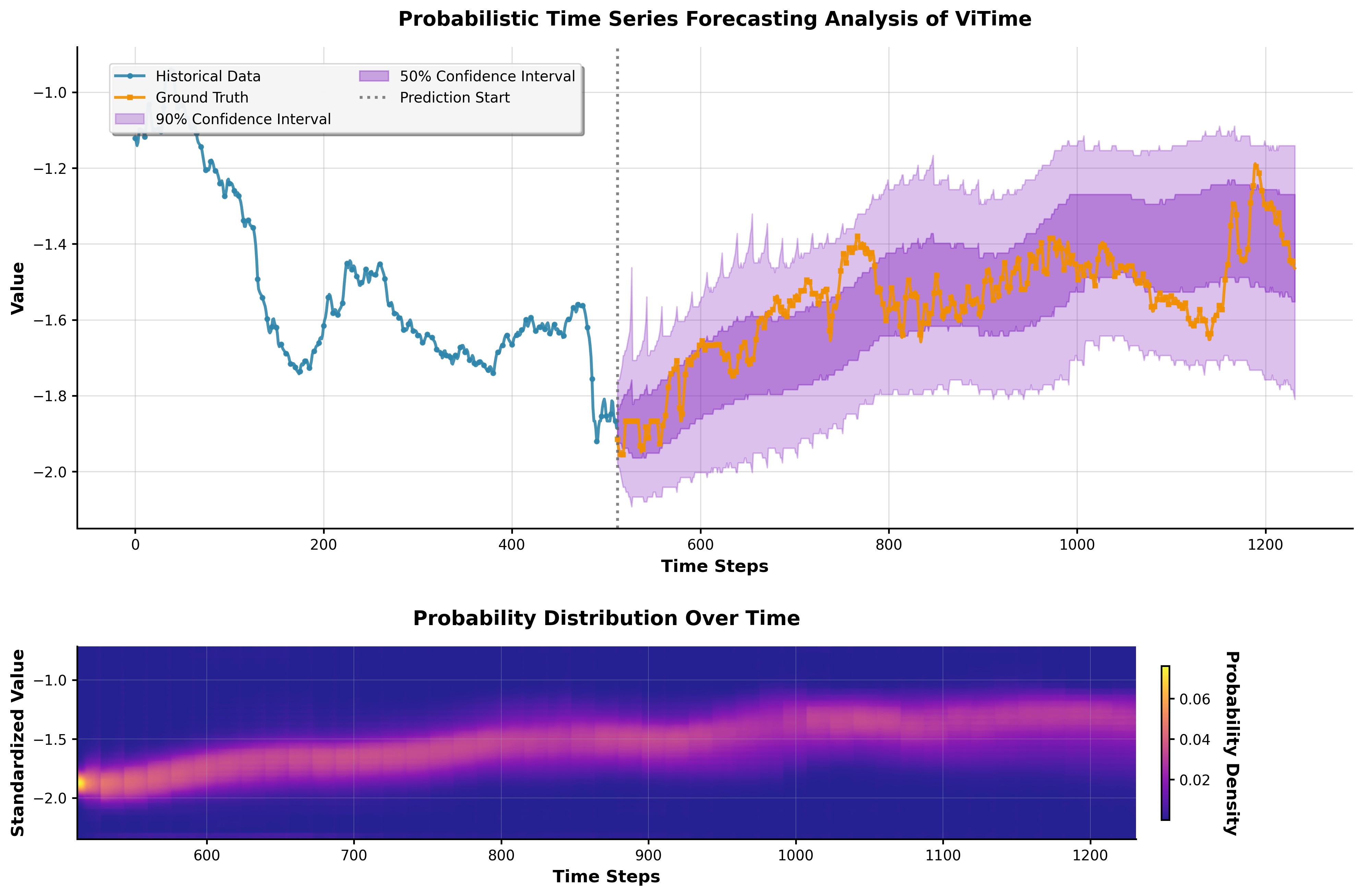}
        \caption{Rescale factor=2}
    \end{subfigure}
    \caption{Illustrative example of dataset ETTm1 (Part 2).}
    \label{fig:Crps_appendixETTm1_2}
\end{figure*}

\begin{figure*}[htbp]
    \centering
    \begin{subfigure}[b]{0.8\textwidth}
        \centering
        \includegraphics[width=\textwidth]{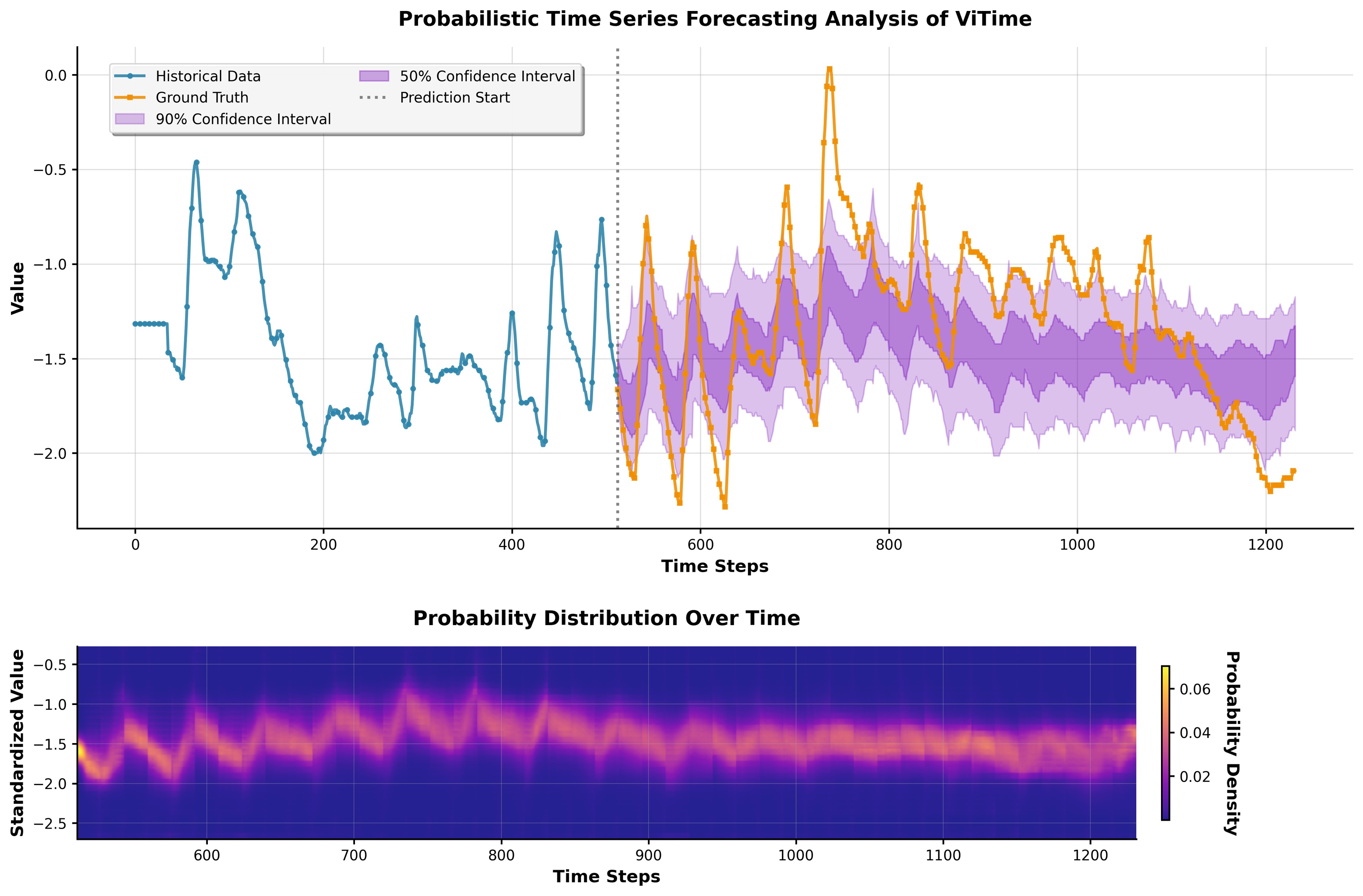}
        \caption{Rescale factor=0.5}
    \end{subfigure}
    \hfill
    \begin{subfigure}[b]{0.8\textwidth}
        \centering
        \includegraphics[width=\textwidth]{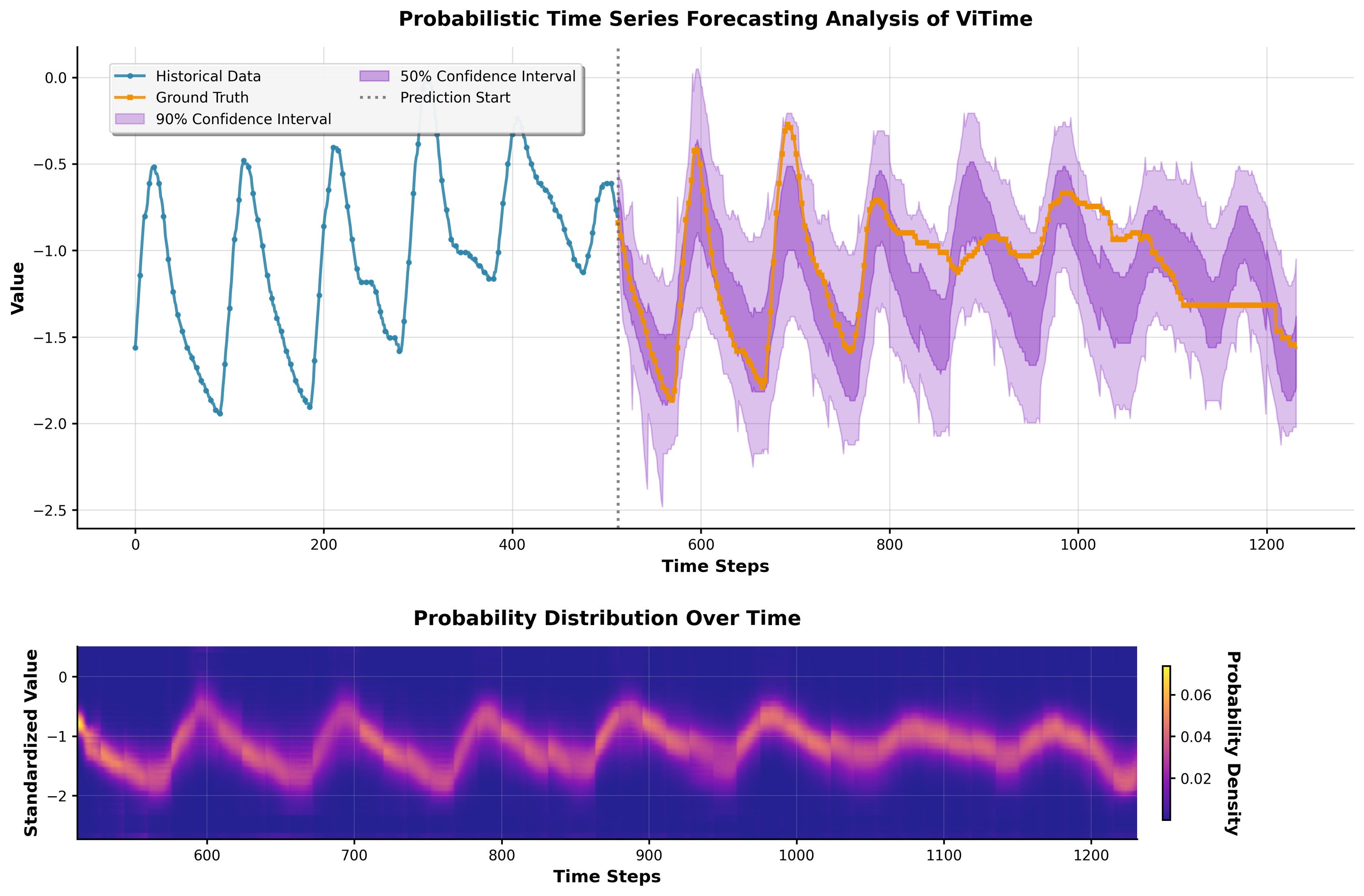}
        \caption{Rescale factor=1}
    \end{subfigure}
    \caption{Illustrative example of dataset ETTm2 (Part 1).}
    \label{fig:Crps_appendixETTm2_1}
\end{figure*}

\begin{figure*}[htbp]
    \centering
    \begin{subfigure}[b]{0.8\textwidth}
        \centering
        \includegraphics[width=\textwidth]{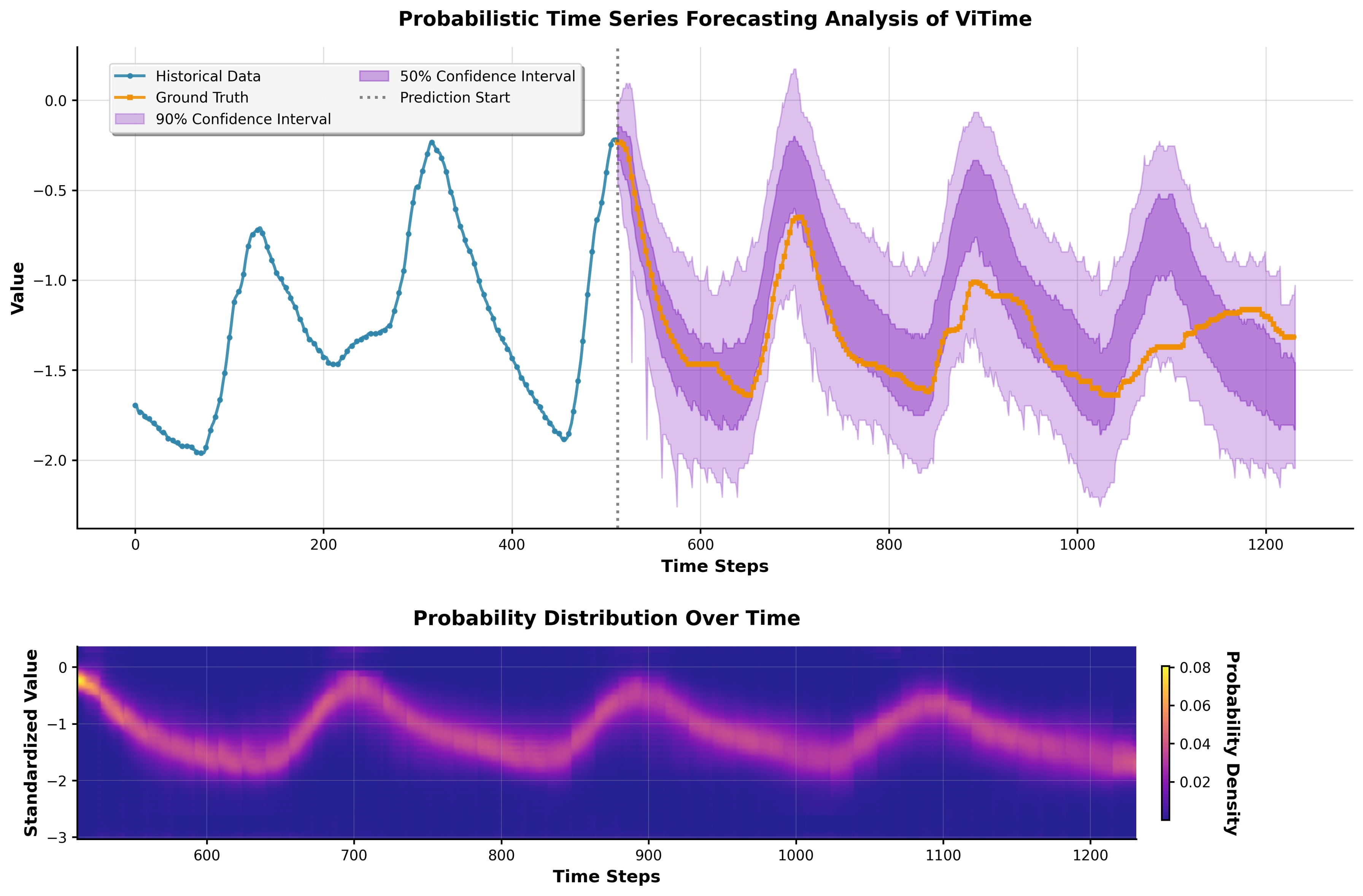}
        \caption{Rescale factor=2}
    \end{subfigure}
    \caption{Illustrative example of dataset ETTm2 (Part 2).}
    \label{fig:Crps_appendixETTm2_2}
\end{figure*}

\end{document}